\documentclass[11pt]{article} %
\usepackage[T1]{fontenc}

\usepackage{longtable}
\usepackage{booktabs}
\usepackage{array}
\usepackage[margin=1in]{geometry}

\usepackage[natbib=true,style=alphabetic]{biblatex}
\addbibresource{bibliography.bib}

\usepackage{wrapfig}
\usepackage{sidecap}
\usepackage{soul}
\setuldepth{hi}

\usepackage[T1]{fontenc}

\usepackage[colorlinks,linkcolor=blue]{hyperref}
\hypersetup{
    colorlinks,
    linkcolor=blue,
    citecolor={green!50!black},
}
\usepackage{url}

\usepackage{amsmath}
\usepackage{amsthm}
\usepackage{amssymb}
\usepackage{color}
\usepackage{mathtools}
\usepackage{makecell}
\usepackage{todonotes}
\usepackage{diagbox}
\usepackage{tablefootnote}
\usepackage{layouts}

\usepackage{todonotes}

\DeclareMathOperator{\E}{\mathbb{E}}

\DeclareMathOperator{\sgn}{sgn}

\DeclareMathOperator{\trace}{trace}

\newcommand{\pd}[2]{\frac{\partial #1}{\partial #2}} 

\let\baraccent=\= %
\renewcommand{\=}[1]{\stackrel{#1}{=}} %

\providecommand{\R}{\mathbb{R}}

\providecommand{\cD}{\mathcal{D}}

\providecommand{\cI}{\mathcal{I}}
\providecommand{\cL}{\mathcal{L}}

\providecommand{\cN}{\mathcal{N}}

\providecommand{\cR}{\mathcal{R}}

\providecommand{\cW}{\mathcal{W}}
\providecommand{\cX}{\mathcal{X}}
\providecommand{\cY}{\mathcal{Y}}

\renewcommand{\P}{\mathbb{P}}

\mathchardef\mhyphen="2D %

\definecolor{darkblue}{rgb}{0.0, 0.0, 0.55}

\providecommand{\sm}{\setminus}

\newcommand{\interior}[1]{%
  {\kern0pt#1}^{\mathrm{o}}%
}

\providecommand{\supp}{\mathrm{supp}}

\newcommand{\smax}{\mathrm{smax}}

\newtheorem{theorem}{Theorem}[section]
\newtheorem{lemma}[theorem]{Lemma}

\newtheorem{claim}[theorem]{Claim}
\newtheorem{proposition}[theorem]{Proposition}

\newtheorem{observation}[theorem]{Observation}

\theoremstyle{definition}
\newtheorem{definition}[theorem]{Definition}

\newtheorem{remark}[theorem]{Remark}

\newcommand{\diag}{\mathrm{diag}}

\def\bA{{\boldsymbol A}}

\def\bI{{\boldsymbol I}}

\def\bK{{\boldsymbol K}}

\def\bM{{\boldsymbol M}}
\def\bN{{\boldsymbol N}}

\def\bP{{\boldsymbol P}}

\def\bW{{\boldsymbol W}}
\def\bX{{\boldsymbol X}}
\def\bY{{\boldsymbol Y}}
\def\bZ{{\boldsymbol Z}}

\def\be{{\boldsymbol e}}

\def\bk{{\boldsymbol k}}
\def\bm{{\boldsymbol m}}
\def\bp{{\boldsymbol p}}

\def\bs{{\boldsymbol s}}

\def\bu{{\boldsymbol u}}
\def\bv{{\boldsymbol v}}
\def\bw{{\boldsymbol w}}
\def\bx{{\boldsymbol x}}
\def\by{{\boldsymbol y}}
\def\bz{{\boldsymbol z}}

\def\btheta{{\boldsymbol \theta}}

\def\bzeta{{\boldsymbol \zeta}}

\def\bfzero{{\boldsymbol 0}}
\def\bzero{\bfzero}
\def\bfone{{\boldsymbol 1}}
\def\bone{{\bfone}}
\def\bones{{\bfone}}

\newcommand{\fMLP}{f_{\mathsf{MLP}}}
\newcommand{\ftrans}{f_{\mathsf{trans}}}
\newcommand{\fattn}{f_{\mathsf{attn}}}
\newcommand{\Ktrans}{K_{\mathsf{trans}}}
\newcommand{\hatKtrans}{\hat{K}_{\mathsf{trans}}}
\newcommand{\Kattn}{K_{\mathsf{attn}}}
\newcommand{\hatKattn}{\hat{K}_{\mathsf{attn}}}
\newcommand{\KMLP}{K_{\mathsf{MLP}}}

\newcommand{\mutempl}{\mu_{\mathsf{tmplt}}}

\newcommand{\tbm}{\tilde{\bm}}
\newcommand{\tbs}{\tilde{\bs}}

\newcommand{\tbx}{\tilde{\bx}}

\newcommand{\tbX}{\tilde{\bX}}

\newcommand{\tbzeta}{\tilde{\bzeta}}

\newcommand{\sub}{\mathrm{sub}}

\author{
Enric Boix-Adser\`a*$^{1,2}$ \quad Omid Saremi$^{1}$ \quad Emmanuel Abbe$^{1,3}$ \\ Samy Bengio$^{1}$\quad Etai Littwin$^{1}$ \quad Joshua Susskind$^{1}$ \\
$^1$Apple \quad $^2$MIT \quad $^3$EPFL \\\texttt{eboix@mit.edu,emmanuel.abbe@epfl.ch} \\
\texttt{\{osaremi,bengio,elittwin,jsusskind\}@apple.com}
}

\title{When can transformers reason with abstract symbols?}

\begin{document}

\maketitle

\begin{abstract}

We investigate the capabilities of transformer models on \textit{relational reasoning} tasks. In these tasks, models are trained on a set of strings encoding abstract relations, and are then tested out-of-distribution on data that contains symbols that did not appear in the training dataset. We prove that for any relational reasoning task in a large family of tasks, transformers learn the abstract relations and generalize to the test set when trained by gradient descent on sufficiently large quantities of training data. This is in contrast to classical fully-connected networks, which we prove fail to learn to reason. Our results inspire modifications of the transformer architecture that add only two trainable parameters per head, and that we empirically demonstrate improve data efficiency for learning to reason.
\end{abstract}

\section{Introduction}

As large language models (LLMs) are trained with increasing quantities of data, they begin to exhibit the ability to reason mathematically \citep{kaplan2020scaling,yuan2023scaling}. Why does more data help an LLM learn to reason? And can we make LLMs more data-efficient at learning to reason?

In this paper, we study \textit{relational reasoning} with abstract symbols, which is a basic capability that has been hypothesized to underlie more complex abilities in human cognition \citep{fodor1975language,newell1980physical,snow1984topography,marcus1998rethinking,holyoak2012analogy,kriete2013indirection,webb2020emergent}. One example is in mathematics or computer science, where relational reasoning is necessary to parse a proof or a program: variable names are abstract symbols and the functionality of the proof or program only depends on how they relate to each other and not on the variable names themselves.

Our contributions are threefold: (i) we formalize relational reasoning through ``template tasks''; (ii) we conduct an analysis of when transformers can learn template tasks when trained by gradient descent and show a separation with classical fully-connected neural network architectures; (iii) we propose modifications to transformers that improve data efficiency for learning to reason.

\subsection{Capturing relational reasoning with template tasks} 
Building on a line of work in neuroscience \citep{marcus1998rethinking,martinho2016ducklings,kim2018not,webb2020emergent,kerg2022neural,altabaa2023abstractors,webb2023emergent,geiger2023relational}, we formalize a framework of reasoning tasks called \textit{template tasks}.

\begin{SCfigure}[][h]
\centering
\begin{tabular}{c@{}cc@{}cc@{}c}
{\small (a)\ } & 
\begin{tabular}{@{}c@{}} \includegraphics[scale=0.10]{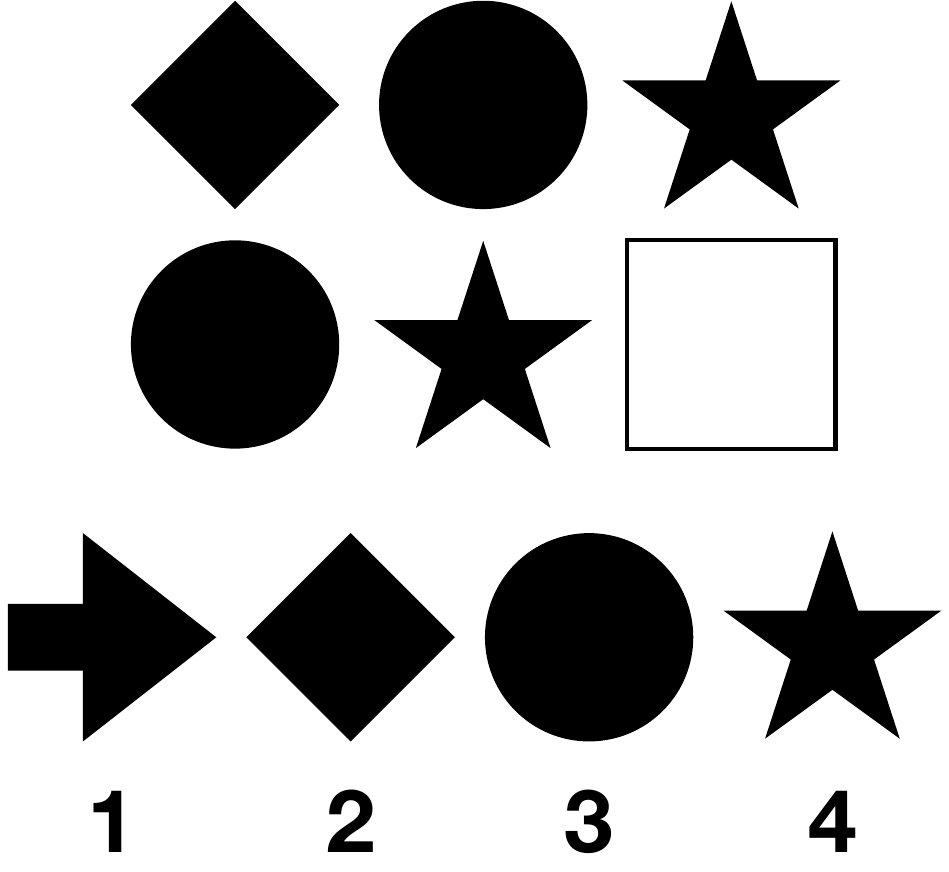} \end{tabular} & 
{\small (b)\ } &  \begin{tabular}{@{}c@{}} \includegraphics[scale=0.10]{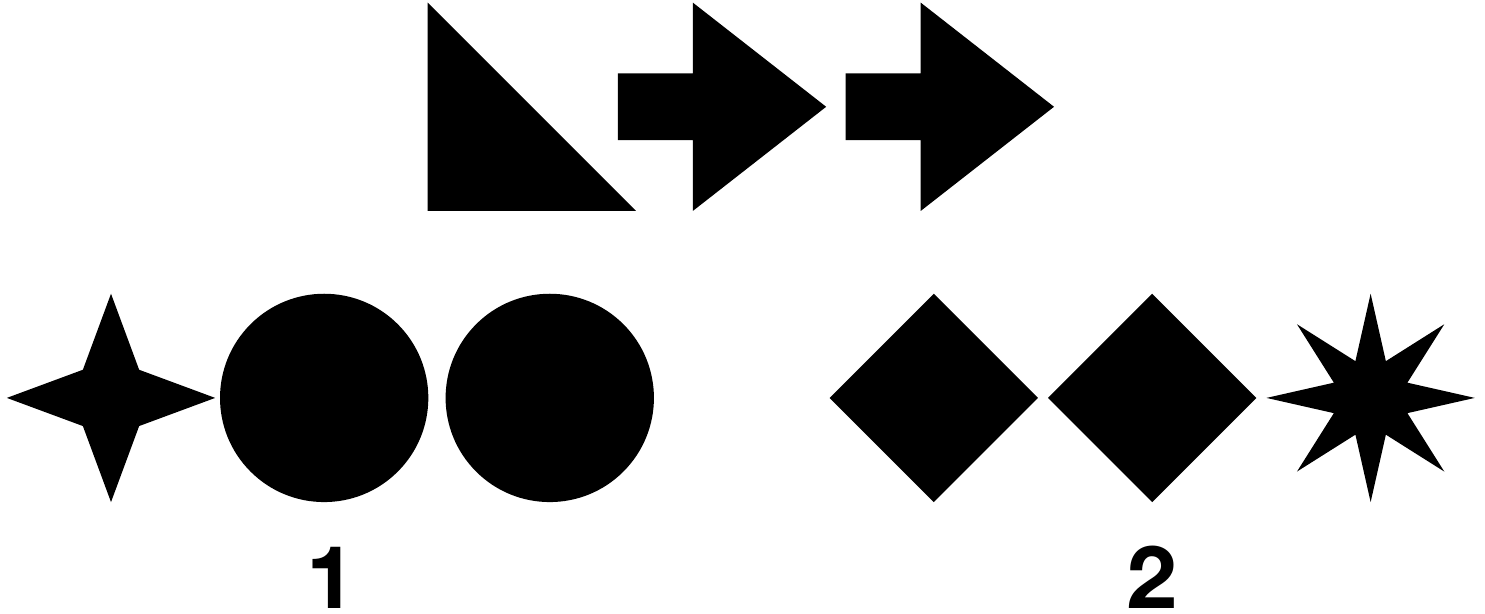} \end{tabular} & {\small (c)\ }
 & \begin{tabular}{@{}c@{}} \includegraphics[scale=0.10]{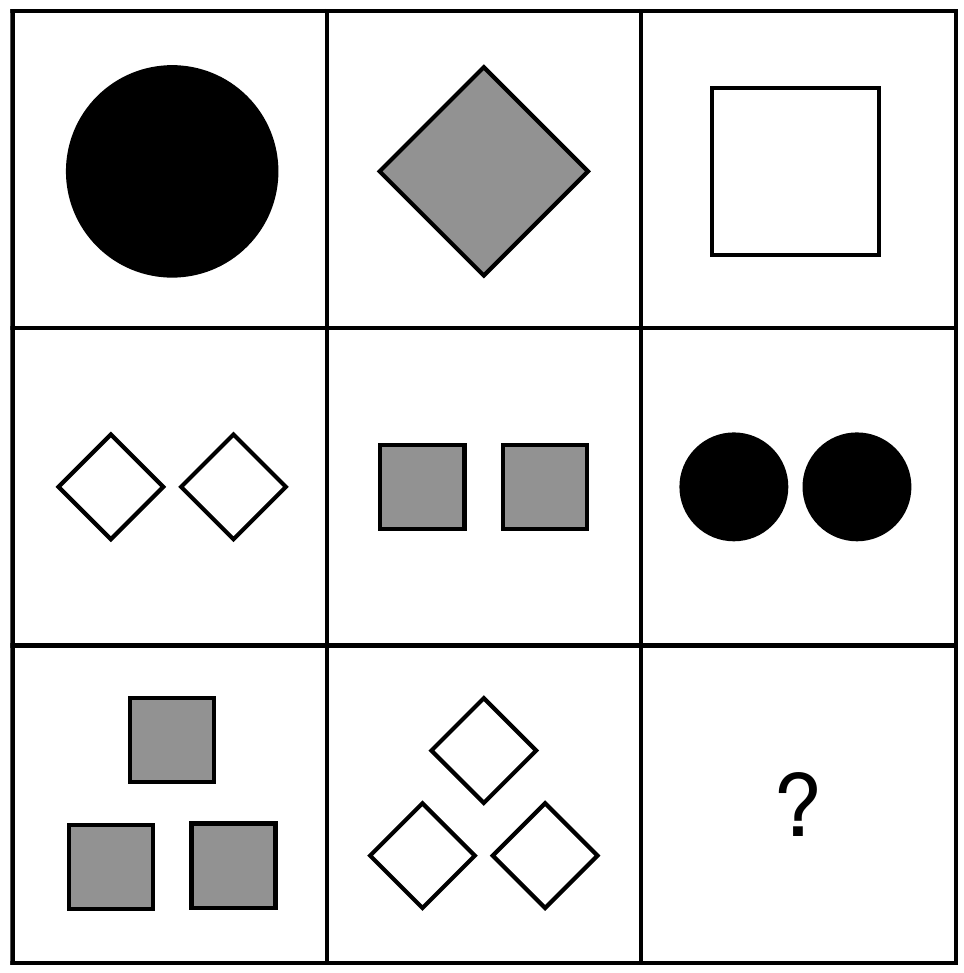}\end{tabular}
\end{tabular}
\caption{\small{Tasks from \cite{raven1938progressive,webb2020emergent} which fall under our theory. Networks are trained with one alphabet of symbols and then tested on held-out symbols. Details in Appendix~\ref{app:teaser-details}.}}\label{fig:psychometric}
\end{SCfigure}

\paragraph{Regression setting} In the regression setting, a template task is specified by a collection of ``template'' strings labeled by real numbers, which are used to generate the train and test data. The simplest way to describe these is through an example. Consider, for instance, the templates
\begin{align}\label{eq:assoc-memory-templates} \mbox{``}\texttt{$\alpha$=1;$\beta$=-1;print($\alpha$)}\mbox{''}\to \mbox{label=+1}\quad \mbox{ and }\quad \mbox{``}\texttt{$\alpha$=1;$\beta$=-1;print($\beta$)}\mbox{''}\to \mbox{label=-1}\,.
\end{align}
These are used to generate the datasets in Figure~\ref{fig:assoc-memory}, where every sample $(\bx_i,y_i) \in \cX^k \times \cY$ is formed by picking a template and replacing the placeholders $\alpha, \beta$ (which we call ``wildcards'') with variable names. Memorizing the training data is easy \citep{zhang2021understanding}, but we wish to measure reasoning: will the model learn to treat the variable names as abstract symbols, enabling generalization beyond its training distribution? To evaluate this, we adopt an out-of-distribution setting, where the train and test data distributions differ \citep{marcus1998rethinking,abbe2023generalization}. The test dataset consists of the same programs, but with \textit{new variable names never seen during training}. By \textit{testing on symbols unseen in the train set}, we measure the ability of an LLM to learn logical rules on the relations between symbols. To succeed, the LLM must effectively infer the templates from training data, and at test time match samples to the corresponding templates to derive their labels.

\renewcommand{\ttdefault}{lmtt}
\begin{figure}[h]
\centering
\begin{tabular}{@{}c@{}cc@{}}
(a) Train data & (b) Test data & (c) Transformer performance \\
\begin{tabular}{c|c} $\bx_i$ & $y_i$ \\
\hline 
{\small \texttt{a=1;b=-1;print(a)}} & +1 \\
{\small \texttt{c=1;a=-1;print(a)}} & -1 \\
{\small \texttt{f=1;c=-1;print(f)}} & +1 \\
{\small \texttt{h=1;q=-1;print(q)}} & -1 \\
$\dots$ & $\dots$
\end{tabular} & \begin{tabular}{c|c} $\bx_i^{test}$ & $y_i^{test}$ \\
\hline 
{\small \texttt{R=1;A=-1;print(R)}} & +1 \\
{\small \texttt{Q=1;V=-1;print(V)}} & -1 \\
$\dots$ & $\dots$
\end{tabular}
& \begin{tabular}{@{}c@{}}\includegraphics[scale=0.2]
{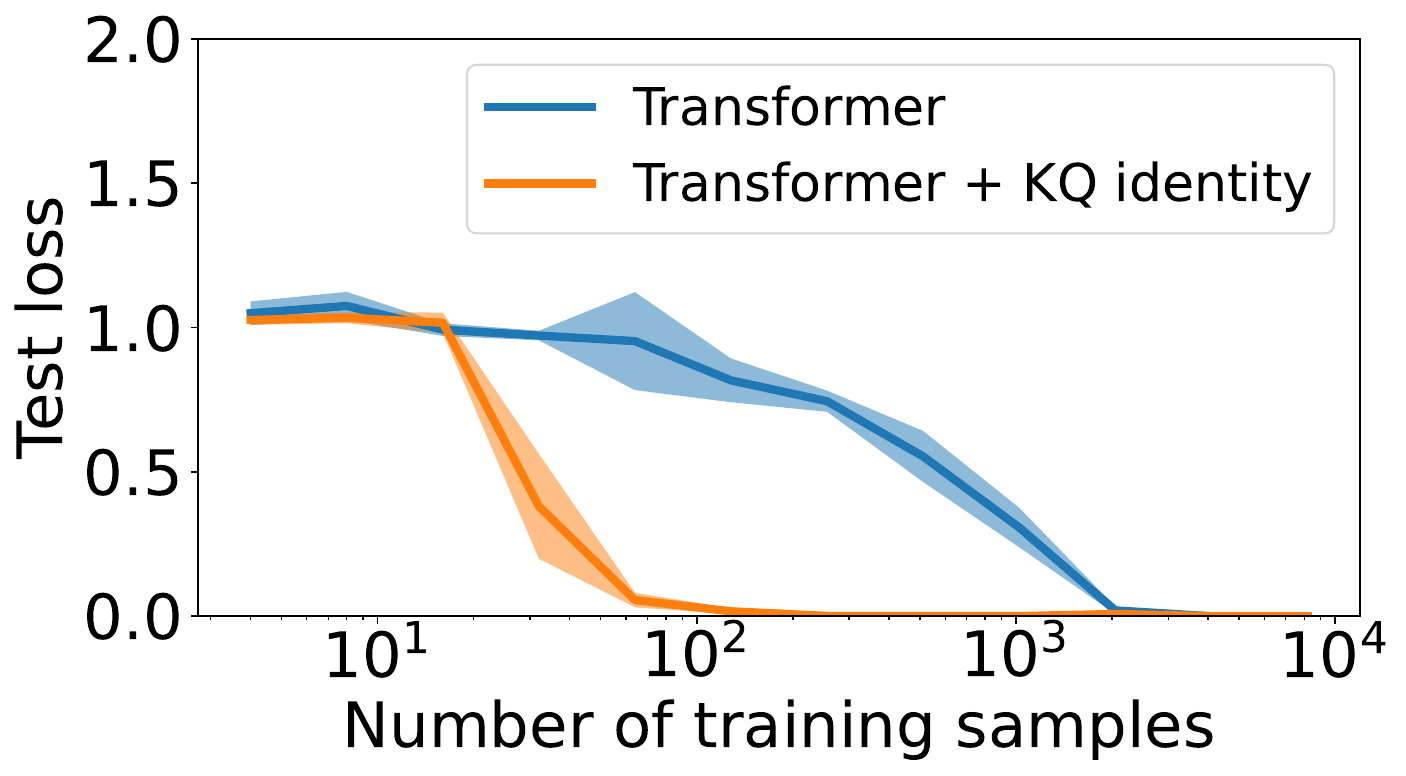}
\end{tabular}
\end{tabular}
\caption{{\small (a,b) Variable names in the test data never appear in the train data (indicated by lower/upper-case names). (c) Remarkably, as the training set size increases, the LLM's ability to reason outside of its training data improves, as it learns to use the relations between the variable names to classify, instead of simply memorizing the training data. Our theory motivates a modified transformer architecture (see Observation~\ref{obs:wkwq-mod}), which solves the reasoning task with less training data.
Details in Appendix~\ref{app:teaser-details}.}}\label{fig:assoc-memory}
\end{figure}

Apart from programming tasks as in Figure~\ref{fig:assoc-memory}, this framework captures several natural problems:
\begin{itemize}
    \item \textit{Same/different task}. The simplest relational reasoning task is when the templates are ``$\alpha\alpha$'' and ``$\alpha\beta$'' labeled by $+1$ and $-1$. This encodes learning to classify two symbols as equal (e.g., $AA$, $BB$) or as distinct (e.g., $AB$, $BC$), even when the symbols were unseen in the training data. This task has been studied empirically in animal behavior \citep{martinho2016ducklings} and in neural networks \citep{kim2018not,webb2020emergent}.

    \item \textit{Word problems}. Word problems often have building blocks that follow simple templates. For example, the template ``If $\alpha$ gives $\beta$ 5 $\gamma$, how many $\gamma$ does $\beta$ have?'' labeled by +5, could generate the data ``If Alice gives Bob 5 oranges, how many oranges does Bob have?'' or the data ``If Rob gives Ada 5 apples, how many apples does Ada have?''

    \item \textit{Psychometric tests}. Psychometric tests of relational reasoning, which have recently been used to probe LLMs \citep{raven1938progressive,webb2020emergent,altabaa2023abstractors,kerg2022neural,webb2023emergent,webb2023relational}, are often template tasks. Figure~\ref{fig:psychometric} illustrates some examples.
\end{itemize}

\paragraph{Next-token-prediction setting} In the next-token-prediction setting, there is one extra layer of complexity: each sample is labeled with a symbol. For the LLM to generalize to symbols unseen at train time, not only must it learn to track the value stored in a variable, but it also must learn to predict 
labels at test time that might not occur in its training data. For example, the train and test datasets in Figure~\ref{fig:assoc-memory-wildcard} are generated by:
\begin{align}
\mbox{``}\texttt{$\alpha$="$\gamma$";$\beta$="$\delta$";print($\alpha$)}\mbox{''}\to \mbox{label=$\gamma$}\quad \mbox{ and }\quad \mbox{``}\texttt{$\alpha$="$\gamma$";$\beta$="$\delta$";print($\beta$)}\mbox{''}\to \mbox{label=$\delta$}\,,
\end{align}
where $\alpha,\beta,\gamma,\delta$ are wildcards. Other problems covered by these tasks include:
\begin{itemize}
\item \textit{Programming}. The template ``{\small \texttt{print("$\alpha$")}}'' labeled with $\alpha$ generates $({\small \texttt{print("A")}}, {\small \texttt{A}})$ or $({\small \texttt{print("dog")}}, {\small \texttt{dog}})$, and so an LLM that learns on the corresponding task can robustly evaluating print statements on symbols not seen in the training data.
\item \textit{Mathematical functions}. For example, the set of templates $\{\alpha \alpha \alpha, \alpha \beta \alpha, \alpha \alpha \beta, \beta \alpha \alpha\}$ labeled by $\alpha$ encode the task of outputting the majority token in a length-3 string with a vocabulary of two symbols. Similarly, for length-$k$ strings, the task of outputting the majority element can be encoded with $2^{k-1}$ templates.
\end{itemize}

\begin{figure}[h]
\centering
\begin{tabular}{@{}c@{}c@{}c@{}}
(a) Train data & (b) Test data & (c) Transformer performance \\
\begin{tabular}{c|c} $\bx_i$ & $y_i$ \\
\hline 
{\small \texttt{a="d";b="q";print(a)}} & \texttt{d} \\
{\small \texttt{c="r";a="w";print(a)}} & \texttt{w} \\
{\small \texttt{f="y";c="u";print(f)}} & \texttt{y} \\
{\small \texttt{h="o";q="s";print(q)}} & \texttt{s} \\
$\dots$ & $\dots$
\end{tabular} & \begin{tabular}{c|c} $\bx_i^{test}$ & $y_i^{test}$ \\
\hline 
{\small \texttt{R="F";A="Z";print(R)}} & \texttt{F} \\
{\small \texttt{Q="B";V="A";print(V)}} & \texttt{A} \\
$\dots$ & $\dots$
\end{tabular}
& \begin{tabular}{@{}c@{}}\includegraphics[scale=0.2]{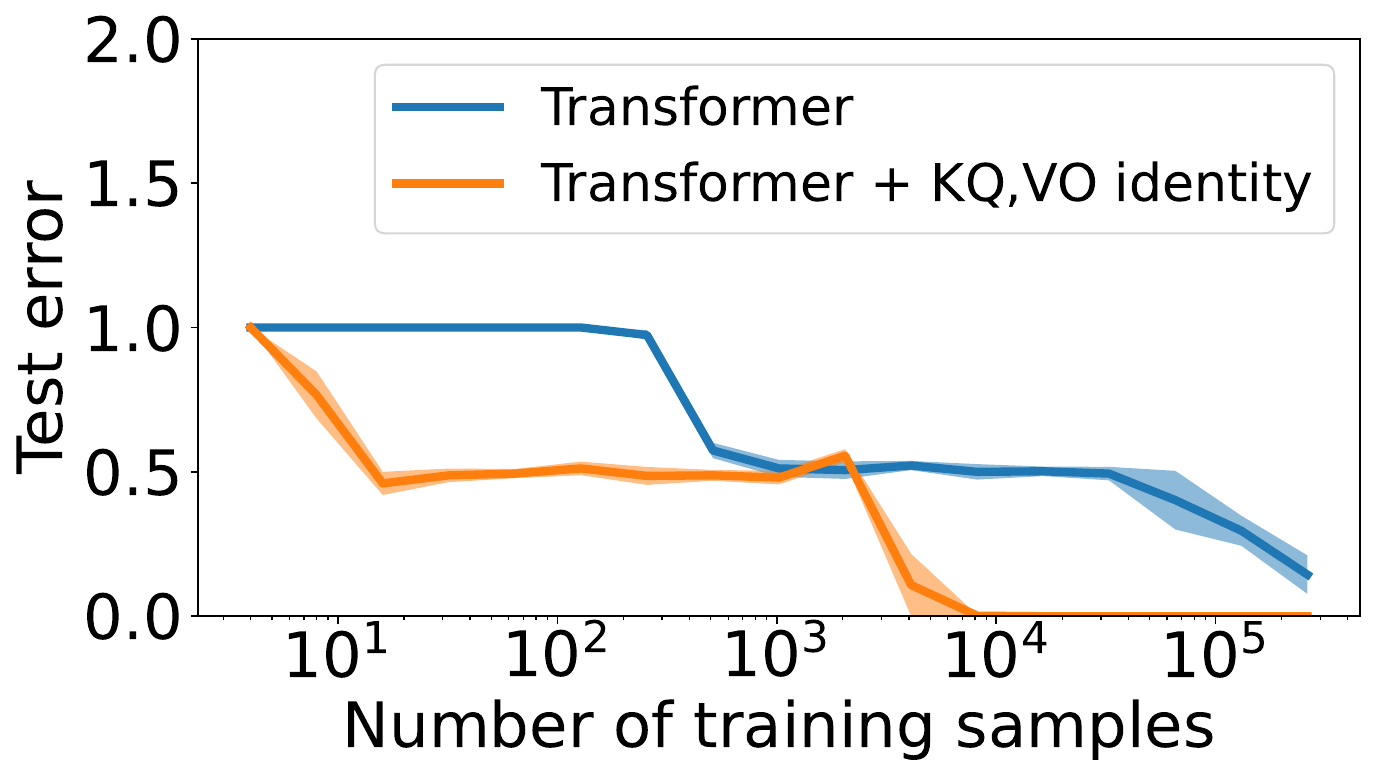}
\end{tabular}
\end{tabular}
\caption{{\small (a,b) The labels are symbols. (c) We propose a modified that transformer learns the reasoning task with less data (see Observation~\ref{obs:wkwq-mod} and Theorem~\ref{thm:informal-copy-wvwo-success}). Details in Appendix~\ref{app:teaser-details}.}}\label{fig:assoc-memory-wildcard}
\end{figure}

\subsection{Main results}
The phenomenon from Figures~\ref{fig:assoc-memory} and \ref{fig:assoc-memory-wildcard} that we seek to understand is: why does the out-of-distribution performance of the transformer architecture improve as the number of training samples increases? We analyze the regression and next-token-prediction settings separately.

\paragraph{(1) MLPs fail to generalize to unseen symbols} 
A classical criticism of connectionism by \cite{marcus1998rethinking} is that neural networks do not learn relational reasoning when trained. We support this criticism in Appendix~\ref{app:mlp-failure} by proving that classical MLP architectures (a.k.a. fully-connected networks) trained by SGD or Adam will not generalize in template tasks on symbols unseen during training, even in the regression setting. This failure to reason relationally occurs regardless of the training data size. The proof uses a permutation equivariance property of MLP training \citep{ng2004feature,shamir2018distribution,li2020convolutional,abbe2022initial,abbe2022non}.

\paragraph{(2) Transformers generalize to unseen symbols, but require large data diversity} Nevertheless, we prove that he criticism of \cite{marcus1998rethinking} is not valid for modern transformer architectures \citep{vaswani2017attention}. We analyze the training dynamics of a transformer model and establish that it can learn to reason relationally:
\begin{theorem}[Informal Theorem~\ref{thm:transformers-succeed-at-template}]\label{thm:informal-main-success}
For any regression template task, a wide-enough transformer architecture trained by gradient flow on sufficiently many samples  generalizes on unseen symbols.
\end{theorem}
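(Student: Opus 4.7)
The plan is to exploit the symmetries of template tasks to reduce OOD generalization to in-distribution generalization. Concretely, a template task is invariant under the action of the symmetric group $S_\cX$ permuting the vocabulary $\cX$: both the template-generating distribution and the labels commute with simultaneously relabeling symbols in the input and in the symbol embeddings. I would take the transformer's symbol-embedding layer to be a random (or equivariantly parametrized) map $\cX \to \RR^d$ whose distribution is $S_\cX$-invariant, so that the induced training loss, together with a symmetric initialization, make the gradient-flow trajectory equivariant under $S_\cX$. The test distribution differs from the train distribution only by replacing training symbols with held-out ones; if I can show that the learned function depends on its input symbols only through their \emph{equality pattern} (which positions carry equal vs.\ distinct symbols), then test symbols behave exactly like train symbols and the train error pushes down the test error.

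The argument would proceed in three steps. \textbf{Step 1 (symmetry of the minimizer).} Using $S_\cX$-equivariance of gradient flow and of the loss, I would show that any limit point of training has the property that the attention logits $\langle W_q \phi(x_i), W_k \phi(x_j)\rangle$ between two token positions $i,j$ depend on the input only through whether the underlying symbols at positions in the receptive field coincide. In high embedding dimension $d$, this essentially reduces to a dichotomy: same-symbol pairs produce one value and distinct-symbol pairs produce another, up to $O(1/\sqrt{d})$ concentration error. \textbf{Step 2 (realization).} I would construct explicitly a transformer whose weights realize a template-matching algorithm: attention heads compute the equality pattern of the input, and an MLP head maps each equality pattern to the label of its template. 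Having a realizer certifies that the equivariant global minimum of the population loss achieves zero risk on the template task, and in particular is consistent with the symmetry constraint from Step~1. \textbf{Step 3 (sample complexity and width).} I would argue that with enough training samples the empirical and population losses are uniformly close on the equivariant class of transformers (so the trained network inherits Step~1/Step~2 structure), and that sufficient width guarantees that the MLP component can express the template-to-label lookup. Generalization to unseen symbols then follows by noting that the equality pattern of a test input is distributionally identical to that of some train input under the $S_\cX$-invariance.

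The main obstacle is \textbf{Step 1}: connecting the gradient-flow dynamics to the symmetric minimizer. Equivariance of the loss and initialization gives that the trajectory stays in the equivariant subspace, but one still has to rule out degenerate stationary points within that subspace where the attention fails to separate same-symbol from distinct-symbol pairs. I expect to handle this by combining (i) a non-degeneracy property of the equivariant population loss landscape --- sufficient template/data diversity implies any equivariant stationary point with vanishing loss must resolve the equality pattern --- with (ii) a width-based genericity argument ensuring that gradient flow from a generic symmetric initialization escapes saddles and reaches a zero-loss minimum. Quantifying the "sufficiently many samples" in Theorem~\ref{thm:informal-main-success} amounts to tracking the concentration of inner products of random embeddings (for the $O(1/\sqrt{d})$ term above) together with the standard Rademacher-style uniform convergence over the equivariant transformer class.

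Once Step~1 is in place, Steps~2 and~3 are relatively routine: the construction of Step~2 is essentially the "lookup table over equality patterns" idea sketched above, and Step~3 is a standard concentration/uniform-convergence argument specialized to the equivariant function class. The novelty is therefore concentrated in showing that gradient flow on a transformer, unlike on an MLP (which fails by the equivariance argument in Appendix~\ref{app:mlp-failure}), actually produces an equality-pattern-sensitive representation --- a property made possible precisely by the multiplicative query-key interaction in attention, which breaks the coordinate-wise permutation symmetry that dooms the MLP.
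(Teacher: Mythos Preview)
Your approach differs substantially from the paper's. The paper does not analyze full gradient flow: it freezes all weights except the final unembedding at random initialization, takes the infinite-width limit, and studies kernel ridge regression with the resulting random-features kernel $\Ktrans$. The key step (Lemma~\ref{lem:informal-kernel-symbolic-gotu-suff-cond}) shows that this kernel method generalizes to unseen symbols whenever an $r\times r$ matrix $\bN$---built from $\Ktrans$ on pairs of template instantiations with disjoint substitutions---is nonsingular; nonsingularity is then verified by a reduction through the MLP layer (Lemma~\ref{lem:n-trans-nonsingularity}) and a Taylor expansion of $\Kattn$ in $\beta,\gamma$ to order-$10$ derivatives (Lemma~\ref{lem:kattn-nondegenerate}). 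There is no trajectory-equivariance argument, no realizer construction, and no uniform convergence.

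Your Step~1 has a genuine gap. The symmetry you invoke---that permuting unseen symbols leaves the law of the trained weights unchanged---is exactly the equivariance used in Appendix~\ref{app:mlp-failure} to show MLPs \emph{fail}, so by itself it cannot show transformers succeed. It yields only that on fully-unseen inputs the output distribution depends on the equality pattern alone; it does \emph{not} establish that this dependence is nontrivial (for MLPs it is constant), nor that it matches the network's behavior on \emph{seen}-symbol inputs with the same pattern. The latter is the crux: if all parameters are trained, the embeddings of seen tokens move under gradient flow while those of unseen tokens stay at initialization, so at any limit point the attention logits on seen versus unseen inputs are computed from different embedding distributions, and the downstream layers---trained to decode the former---have no reason to correctly decode the latter. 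Your $O(1/\sqrt{d})$ concentration describes initialization, not the trained network, and your saddle-escape sketch does not address this seen/unseen asymmetry. The paper sidesteps the issue entirely by never moving the embeddings. Two smaller points: the empirical loss on a finite sample is invariant only under $S_{\cX_{uns}}$, not all of $S_\cX$, so the stronger equivariance claimed in your first paragraph does not hold; and the uniform convergence of Step~3 controls in-distribution risk, not the OOD quantity in Definition~\ref{def:generalizing-on-unseen-symbols}, so it cannot close the loop without Step~1 already established.
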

Here the key points are: \textit{(a) Universality}. The transformer architecture generalizes on symbols unseen in train data regardless of which and how many templates are used to define the reasoning task. \textit{(b) Large enough number of samples}. Our theoretical guarantees require the training dataset size to be large, and even for very basic tasks like the two-template task in Figure~\ref{fig:assoc-memory}, good generalization begins to occur only at a very large number of training samples considering the simplicity of the task. This raises the question of how the inductive bias of the transformer can be improved.

The proof of Theorem~\ref{thm:informal-main-success} inspires a parametrization modification that empirically lowers the quantity of data needed by an order of magnitude. A standard transformer attention head that takes in an input $\bX \in \R^{k \times d_{emb}}$ is given by 
\begin{align}\label{eq:attn-mechanism}\smax(\bX \bW_{K} \bW_Q^T \bX^T) \bX \bW_V \bW_O^T,
\end{align} where $\bW_K,\bW_Q,\bW_V,\bW_O$ are trainable parameters. Our modification makes it easier for the transformer to access the incidence matrix $\bX\bX^T \in \R^{k \times k}$ of the input, which is invariant to permutations of the symbol alphabet and can be used to solve the relational reasoning task:
\begin{observation}\label{obs:wkwq-mod}
Adding one trainable parameter $a$ to each attention head so that $\bW_K\bW_Q^T$ is replaced by $\bW_K\bW_Q^T + a\bI$ improves transformers' data-efficiency on template tasks.
\end{observation}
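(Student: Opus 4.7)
Because the statement is labeled as an \emph{observation} rather than a theorem, I would support it with a mechanistic theoretical argument that explains \emph{why} the modification helps, together with controlled experiments that quantify the data-efficiency gain. The plan has three parts.

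First, I would formalize the heuristic that the \emph{incidence matrix} $\bX \bX^T$ is the right sufficient statistic for template tasks. Writing each input as a sequence of symbol embeddings, two positions $i,j$ share a wildcard in a template if and only if the embeddings at those positions are equal, i.e.\ iff $(\bX \bX^T)_{ij}$ is maximal. Thus the minimal computation a transformer must implement to match an input to a template is essentially a thresholding of $\bX\bX^T$ (followed by matching the resulting ``equality pattern'' to a finite list of templates). I would make this precise by showing that the optimal Bayes classifier for a real-label template task factors through the pair $(\bX\bX^T,\text{template label map})$, reusing the setup underlying Theorem~\ref{thm:informal-main-success}.

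Second, I would decompose the attention logits under the modified parametrization. With symbol embeddings $E$, the modified logits become
\begin{equation*}
\bX E\,(W_K W_Q^T + aI)\,E^T \bX^T \;=\; \bX E W_K W_Q^T E^T \bX^T \;+\; a\,\bX E E^T \bX^T.
\end{equation*}
When $E$ has approximately orthonormal rows (true at standard initializations with sufficient width), $EE^T \approx I$ and the second term is proportional to $\bX \bX^T$. Consequently, the modified architecture has access to the incidence matrix \emph{at initialization}, via the single scalar $a$, whereas the standard architecture must assemble it by learning a rank-$d$ structure inside $W_K W_Q^T$ that effectively inverts the embedding $E$. I would then revisit the gradient-flow analysis behind Theorem~\ref{thm:informal-main-success} to argue that once $a$ becomes positive, the attention pattern concentrates on matching positions after only a handful of updates, so the downstream layers need only learn the (finite) template-to-label map, reducing sample complexity from a quantity scaling with the embedding dimension to one essentially independent of it.

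Third, I would verify the claim experimentally by sweeping the number of training samples on the tasks of Figures~\ref{fig:assoc-memory} and~\ref{fig:assoc-memory-wildcard}, comparing the standard transformer to the $+aI$ variant with identical hyperparameters and measuring test accuracy on held-out symbols; this is what Figures~\ref{fig:assoc-memory}(c) and~\ref{fig:assoc-memory-wildcard}(c) already display in aggregate, and the appendix experiments can be cited to make the gap quantitative. \textbf{The main obstacle} is the theoretical step: making the comparison between the two architectures rigorous at the level of \emph{training dynamics} rather than just expressivity. The proof of Theorem~\ref{thm:informal-main-success} yields a (large) upper bound on the sample complexity for the standard transformer, but showing that this bound is essentially tight -- and hence that the $+aI$ modification truly reduces it -- requires either a matching lower bound for the unmodified architecture or a direct trajectory-based comparison showing that the scalar $a$ grows before $W_K W_Q^T$ can approximate $E^{-1} E^{-T}$. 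This is why I would present the claim as an observation backed by a mechanistic explanation and experiments, rather than as a formal theorem.
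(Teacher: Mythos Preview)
Your proposal is correct and lands on the same core mechanism the paper uses: the $+aI$ term injects a scaled copy of the incidence matrix $\bX\bX^T$ into the attention logits (your part~2 is essentially the paper's one-line argument, with the $EE^T\approx I$ step made explicit), and the quantitative claim is then supported experimentally. The difference is in how the theoretical motivation is framed. The paper ties the modification directly to its own proof machinery: generalization is controlled by the conditioning of the template matrix $\bN$ from Lemma~\ref{lem:informal-kernel-symbolic-gotu-suff-cond}, and it observes that inner-product kernels give a singular $\bN$ (all entries equal to $\kappa(0)$) precisely because they ignore $\bX\bX^T$, whereas the transformer kernel does not; hence emphasizing $\bX\bX^T$ via $+aI$ should improve $\bN$'s conditioning and therefore sample complexity. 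Your route --- a Bayes sufficient-statistic argument for $\bX\bX^T$ plus a trajectory-level claim that the scalar $a$ grows before $W_KW_Q^T$ can synthesize the same structure --- is a reasonable alternative lens, but it is less tightly coupled to Theorem~\ref{thm:transformers-succeed-at-template} and, as you yourself note, the dynamics comparison is hard to make rigorous. The paper sidesteps that obstacle by keeping the theory at the level of ``better-conditioned $\bN$'' and leaving the data-efficiency gain as an empirical observation, which is also why it is stated as an Observation rather than a theorem.
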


\paragraph{(3) Transformers fail at copying unseen symbols} The story is slightly different for next-token-prediction tasks, because of the bottleneck of learning to output a symbol that was never seen in the training dataset. Transformers' performance degrades as the model grows (an ``inverse scaling'' law \citep{mckenzie2023inverse}). Large transformers fail even for the task of copying the input.
\begin{theorem}[Informal Theorem~\ref{thm:copy-failure}]\label{thm:informal-copy-failure}
Transformers with large embedding dimension fail to generalize on unseen symbols for the copy-task outputting label ``{\small $\alpha$}'' on template ``{\small $\alpha$}''.
\end{theorem}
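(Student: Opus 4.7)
The plan is to exploit a permutation symmetry of the copy-task training dynamics together with the near-orthogonality of randomly initialized vectors in large dimension. Let $V_{\mathrm{tr}}, V_{\mathrm{te}} \subseteq \cX$ be disjoint training and test vocabularies, with training pairs $(\gamma,\gamma)_{\gamma \in V_{\mathrm{tr}}}$ and test evaluation on $(\alpha^*, \alpha^*)$ for $\alpha^* \in V_{\mathrm{te}}$. Write $E, U \in \R^{|\cX| \times d}$ for the embedding and unembedding matrices and $W$ for the remaining (attention/MLP) parameters; the forward pass first maps the input to a hidden state $h = h(e_{\alpha^*}; W)$ that does not touch $U$, and only then produces logits $\ell_\beta = \langle u_\beta, h\rangle$. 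I will show that for large $d$, the logits $(\ell_\beta)_{\beta \in V_{\mathrm{te}}}$ are asymptotically exchangeable in distribution, so $\Pr[\arg\max_{\beta \in V_{\mathrm{te}}} \ell_\beta = \alpha^*] \to 1/|V_{\mathrm{te}}|$.

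The first step is to establish permutation equivariance of gradient flow. For any permutation $\sigma$ of $\cX$ fixing $V_{\mathrm{tr}}$ pointwise, the cross-entropy loss is invariant under the induced action $P_\sigma$ that permutes the rows of $E$ and $U$ indexed by $V_{\mathrm{te}}$, since all training inputs and labels lie in $V_{\mathrm{tr}}$ and $P_\sigma$ only reorders summands in the softmax denominator. Consequently, gradient flow commutes with $P_\sigma$, so the trained parameters satisfy $\theta^*(P_\sigma \theta_0) = P_\sigma \theta^*(\theta_0)$. For standard IID initialization of the rows of $E$ and $U$, we have $\theta_0 \overset{d}{=} P_\sigma \theta_0$, and equivariance upgrades this to $\theta^* \overset{d}{=} P_\sigma \theta^*$ jointly with the entire training trajectory.

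Next I would use large $d$ to control the logits at test time. For a transposition $\sigma = (\beta^* \leftrightarrow \gamma^*)$ fixing $\alpha^*$ and all training tokens, the architecture's equivariance combined with $\theta^* \overset{d}{=} P_\sigma \theta^*$ yields $\ell_{\beta^*} \overset{d}{=} \ell_{\gamma^*}$ for every pair $\beta^*, \gamma^* \in V_{\mathrm{te}} \setminus \{\alpha^*\}$. The more delicate step is to rule out that the diagonal logit $\ell_{\alpha^*}$ is preferentially larger. Since the coupling of $W^*$ to $\{u_\beta\}_{\beta \in V_{\mathrm{te}}}$ is symmetric in the test-token indices by the same permutation argument, conditional on $(W^*, e_{\alpha^*}, h)$ the rows $\{u_\beta\}_{\beta \in V_{\mathrm{te}}}$ remain exchangeable. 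For IID Gaussian initialization with variance $1/d$ per coordinate, the differences $u_{\alpha^*} - u_{\beta^*}$ have norm $O(1/\sqrt{d})$ at initialization and, provided the softmax mass on $V_{\mathrm{te}}$ stays approximately uniform across test tokens along training (which I intend to control using the symmetry plus concentration of the softmax denominator), the differences are preserved up to $o(1)$ corrections. Thus $\ell_{\alpha^*} - \ell_{\beta^*} = \langle u_{\alpha^*} - u_{\beta^*}, h\rangle$ is mean-zero with variance $O(\|h\|^2/d)$, which vanishes as $d \to \infty$ so long as $\|h\|$ is at most polynomial in $d$.

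The main obstacle is making quantitative the coupling of the unembedding rows $\{u_\beta\}_{\beta \in V_{\mathrm{te}}}$ to the hidden weights: these rows do receive nonzero gradient updates through the softmax denominator, so one has to argue that the pairwise differences $u_\beta - u_\gamma$ stay near their initial $O(1/\sqrt{d})$ scale throughout training rather than being amplified by the softmax nonlinearity. I plan to handle this by combining the exact permutation symmetry (which forces symmetric updates in distribution) with a trajectory-wise bound showing that test-token softmax probabilities remain approximately equal across $V_{\mathrm{te}}$, so that each $u_\beta$ receives approximately the same update and the $O(\|h\|^2/d)$ variance estimate remains valid. Combined with exchangeability, this yields $\Pr[\arg\max_{\beta \in V_{\mathrm{te}}} \ell_\beta = \alpha^*] \to 1/|V_{\mathrm{te}}|$ as $d \to \infty$, proving the failure claim.
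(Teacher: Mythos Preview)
Your approach diverges from the paper's in both the statement being targeted and the method. The paper's formal theorem (Theorem~\ref{thm:copy-failure}) is a claim about the \emph{time-zero derivative} of the test loss under gradient flow: it computes the neural tangent kernels $K_{ij,O}, K_{ij,V}, K_{ij,P}, K_{ij,E}$ of the simplified architecture \eqref{eq:main-simplified-attention} at initialization, takes the $d_{emb}\to\infty$ limit of each (where they collapse to expressions of the form $\delta_{ij}(\delta_{x_1,x_1'}+\gamma^2)$), and then directly evaluates $\partial\cL_{test}/\partial t\mid_{t=0}$ to show it is non-negative. There is no permutation-symmetry argument and no analysis of the full trajectory; the ``failure'' is established only at early times.

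Your proposal instead attempts the harder end-of-training statement via exchangeability, and here there is a genuine gap. The paper \emph{ties} the embedding and unembedding matrices ($U=W_E$), so your decomposition ``$h=h(e_{\alpha^*};W)$ does not touch $U$'' is false in the paper's setting: the hidden state is computed from $W_E$, which \emph{is} $U$. Consequently, once you condition on $h$ (a function of $e_{\alpha^*}=u_{\alpha^*}$), the row $u_{\alpha^*}$ is no longer exchangeable with the other $u_\beta$, and the transposition $(\alpha^*\leftrightarrow\beta^*)$ acting on $W_E$ swaps both the input embedding and the output row simultaneously. That symmetry only tells you $\ell_{\alpha^*}(\alpha^*)\overset{d}{=}\ell_{\beta^*}(\beta^*)$, not $\ell_{\alpha^*}(\alpha^*)\overset{d}{=}\ell_{\beta^*}(\alpha^*)$, which is what you need. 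Your large-$d$ patch (``$u_{\alpha^*}-u_{\beta^*}$ stays $O(1/\sqrt d)$ along the trajectory'') is precisely the obstacle you flag at the end, and the proposed resolution via ``trajectory-wise bounds on softmax probabilities'' is not developed enough to close it; in the tied-weight setting the relevant difference is between $u_{\alpha^*}$ and the direction of $h$, and there is no reason to expect $\langle u_{\alpha^*},h\rangle - \langle u_{\beta^*},h\rangle$ to vanish when $h$ was built from $u_{\alpha^*}$ itself. The paper avoids all of this by restricting to $t=0$ and doing a direct kernel calculation.
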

However, we propose adding an \textit{attention-modulated skip connection}, which corrects this failure, making it easy for the transformer to learn to copy data between its residual streams:
\begin{theorem}[Informal Theorem~\ref{thm:copy-success}]\label{thm:informal-copy-wvwo-success}
Adding one trainable parameter $b$ to each head so that $\bW_V\bW_O^T$ is replaced by $\bW_V\bW_O^T + b\bI$ makes transformers generalize on the task of Theorem~\ref{thm:informal-copy-failure}.
\end{theorem}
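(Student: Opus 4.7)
The approach is to show that the $+bI$ modification creates an explicit identity pathway from input embedding to self-attention output, and that gradient flow drives this pathway to dominate the logits on every symbol in the vocabulary, including symbols held out from training. First I would fix the copy-task setup formally: a single input token $\alpha \in \Sigma$ with random embedding $\be_\alpha$, disjoint train and test vocabularies $\Sigma_{\mathrm{tr}}$ and $\Sigma_{\mathrm{te}}$, and cross-entropy loss for predicting $\alpha$. For a one-layer transformer on a single-token input the softmax attention collapses onto the single input position, and the self-attention output becomes $(W_V W_O^T + bI)\be_\alpha$ up to positional and residual terms. The logits are $U(W_V W_O^T + bI)\be_\alpha$, where $U$ is the unembedding, so the identity path contributes $b\,U\be_\alpha$. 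The heart of the theorem is that this identity contribution is symbol-agnostic: it correctly selects $\alpha$ whenever $b>0$ and $U$ is aligned with $E^T$, regardless of whether $\alpha$ was ever trained on.

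The next step is a gradient-flow analysis leveraging the same permutation-equivariance machinery used for Theorem~\ref{thm:informal-main-success}. Because training embeddings are drawn i.i.d.\ symmetrically and initialization is symbol-symmetric, the flow of $W_V, W_O, U$ remains in a subspace equivariant under permutations of $\Sigma_{\mathrm{tr}}$. Consequently the learned component $W_V W_O^T$ acts essentially inside the span of training embeddings, so on a test embedding $\be_\gamma$ with $\gamma \in \Sigma_{\mathrm{te}}$ its contribution $U W_V W_O^T \be_\gamma$ is $o(1)$ in the relevant norm. In parallel, the scalar $b$ is permutation-\emph{invariant}: its gradient is not killed by the symmetry, and a short calculation of $\partial_b \mathcal{L}$ at initialization shows it has a definite sign, so that the coupled dynamics of $b$ and $U$ through the cross-entropy loss drive $b\,U$ towards a positive multiple of the identity on the embedding span. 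Combining these two facts, the logit on $\gamma$ is dominated by the identity term $b\,U\be_\gamma$, which is maximized at the correct symbol, giving test-time generalization on $\Sigma_{\mathrm{te}}$.

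The main obstacle I foresee is controlling the joint dynamics of $b$, the low-rank $W_V W_O^T$, and $U$ under a classification (rather than regression) objective. In particular I need to rule out the bad scenario in which $W_V W_O^T$ grows on the training subspace in a way that cancels the $bI$ pathway there, which would also undo the positive feedback driving $b$ upward. I would address this with a two-timescale argument: because cross-entropy is shift-invariant in the logits, the $b\,U\be_\alpha$ pathway alone already fits the training data once a positive margin is achieved, so the gradient driving $W_V W_O^T$ decays exponentially in the margin. Together with the equivariance constraint restricting any residual motion of $W_V W_O^T$ to the training subspace, this keeps the test-embedding logits dominated by the identity term throughout training. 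Assembling these pieces and appealing to the symmetry lemmas from the proof of Theorem~\ref{thm:informal-main-success} then yields the generalization claim on the copy task that failed in Theorem~\ref{thm:informal-copy-failure}.
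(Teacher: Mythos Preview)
Your proposal aims at a substantially stronger statement than what the paper actually proves, and by a very different route. The formal version of this theorem (Theorem~\ref{thm:copy-success}) only asserts that, in the infinite-width limit, there exist learning rates with $-\partial_t \cL_{train}|_{t=0}=O(1)$ and $-\partial_t \cL_{test}|_{t=0}=\Omega(1)$; it is a statement about the \emph{time-zero derivative}, not about the full trajectory or the final margin. The paper's proof is correspondingly short: set all learning rates to zero except $\eta_b=\Theta(1/H)$, compute the extra NTK block coming from the $b_h$ parameters, obtain $K_{ij,b}(\bX,\bX')\to \delta_{i,x_1}\delta_{j,x_1'}$ as $d_{emb}\to\infty$, and plug into the cross-entropy gradient at initialization to get $\partial_t\cL_{test}|_{t=0}\to -H\eta_b(1-1/m)^2=-\Omega(1)$. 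No permutation-equivariance, no two-timescale argument, no control of $W_VW_O^T$ along the flow.

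Your outline, by contrast, tries to track the joint dynamics of $b$, $W_VW_O^T$, and $U$ throughout training and to argue that the identity pathway eventually dominates on held-out symbols. That would be a genuinely stronger and more interesting result, but two parts of the plan are shaky. First, the ``symmetry lemmas from the proof of Theorem~\ref{thm:informal-main-success}'' you invoke do not exist in that proof: the transformer success result is established by a kernel-ridge-regression analysis of $\Ktrans$, and the permutation-equivariance argument appears only in the MLP \emph{failure} proof (Appendix~\ref{app:mlp-failure}). Second, the two-timescale step---``once a positive margin is achieved the gradient driving $W_VW_O^T$ decays exponentially''---needs a quantitative lower bound on the margin growth from the $b$-pathway that beats the possible growth of $W_VW_O^T$ on the training span \emph{before} the margin is large; without that, the feedback loop you describe is not self-starting. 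If your goal is to match the paper, the direct NTK-at-$t=0$ computation with only $b_h$ trained is all that is needed; if your goal is the stronger trajectory-level claim, those two gaps are where the real work lies.
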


\paragraph{(4) Experiments} We conclude with experimental validation of our architecture modifications, and find that they improve data efficiency on relational reasoning tasks by an order of magnitude, and improve language-modeling performance when training the GPT-2 architecture on Wikitext.

\subsection{Related literature}\label{sec:prior-work}

A spate of recent work studies whether and how LLMs perform various reasoning tasks, each focusing on one component of reasoning: these include recognizing context-free grammars \citep{zhao2023transformers,allen2023physics}, learning sparse functions \citep{edelman2022inductive}, learning compositionally \citep{hupkes2020compositionality}, generalizing out-of-distribution when learning Boolean functions \citep{abbe2023generalization}, performing arithmetic \citep{nanda2023progress}, learning in context \citep{garg2022can,ahn2023transformers,zhang2023trained}, and evaluating indexing \citep{zhang2021pointer}. Our setting is closest 
to that of empirical work studying neural networks on relational reasoning tasks \citep{geiger2023relational,webb2023relational}. For example, the four tasks in \cite{webb2020emergent}, the matrix digits task in \cite{webb2023emergent}, the SET game task in \cite{altabaa2023abstractors}, and 
most of the tasks in \cite{kerg2022neural} (with the exception of the relational games tasks), are examples of regression template tasks that fall under our theory. Furthermore, \cite{kim2018not} shows experimentally that MLPs fail on the same/different template task, and we provide a proof for this in Appendix~\ref{app:mlp-failure}. There is also a literature on modifying training to improve relational reasoning: 
\citep{webb2020learning}
proposes applying Temporal Context Normalization during training, and \cite{santoro2017simple,santoro2018relational,palm2018recurrent,shanahan2020explicitly,webb2020emergent,kerg2022neural,altabaa2023abstractors} propose new architectures. Finally, some recent works in mechanistic interpretability look for subnetworks within trained networks that are responsible for tasks such as variable binding \citep{olsson2022context,davies2023discovering}. In contrast, our focus is on proving when the transformer architecture learns or fails to learn, and on applying this theoretical understanding to improve its data efficiency for relational reasoning.

\section{Formal definition of template tasks}\label{sec:template-def}

We formally define \textit{regression} template tasks. For next-token prediction, see Appendix~\ref{app:symbolic-label}.
\begin{definition}
A \textbf{template} is a string $\bz \in (\cX \cup \cW)^k$, where $\cX$ is an alphabet of tokens, and $\cW$ is an alphabet of ``wildcards''. A \textbf{substitution map} is an injective function $s : \cW \to \cX$. We write $\sub(\bz,s) \in \cX^k$ for the string where each wildcard is substituted with the corresponding token: $\sub(\bz,s)_i = z_i$  if $z_i \in \cX$, and $\sub(\bz,s)_i = s(z_i)$ if $z_i \in \cW$. The string $\bx \in \cX^k$ \textbf{matches} the template $\bz$ if $\bx = \sub(\bz,s)$ for some substitution map $s$ and also $s(\cW) \cap \{z_i\}_{i \in [k]} = \emptyset$: i.e., the substituted tokens did not already appear in the template $\bz$.
\end{definition}

\paragraph{Example} Using Greek letters to denote the wildcards and Latin letters to denote regular tokens, the template ``$\alpha \alpha \beta S T$'' matches the string ``QQRST'', but \textit{not} ``QQQST'' (because the substitution map is not injective) and \textit{not} ``QQSST'' (because $\beta$ is replaced by S which is already in the template).

A template task's training data distribution is generated by picking a template randomly from a distribution, and substituting its wildcards with a random substitution map.
\begin{definition}
A template data distribution $\cD = \cD(\mutempl,\{\mu_{sub,\bz}\}_{\bz},f_*,\sigma)$ is given by
\begin{itemize}
\item a template distribution $\mutempl$ supported on templates in $(\cX \cup \cW)^k$,
\item for each $\bz \in \supp(\mutempl)$, a distribution $\mu_{sub,\bz}$ over substitution maps $s : \cW \to \cX$\,,
\item template labelling function $f_* : \supp(\mutempl) \to \R$\,, and a label-noise parameter $\sigma \geq 0$.
\end{itemize}
 We draw a sample $(\bx,y) = (\sub(\bz,s), f_*(\bz) + \xi) \sim \cD$, by drawing a template $\bz \sim \mutempl$, a substitution map $s \sim \mu_{sub,\bz}$, and label noise $\xi \sim \cN(0,\sigma^2)$.
\end{definition}
Finally, we define what it means for a model to solve the template task and generalize on unseen symbols; namely, the model should output the the correct label for any string $\bx \in \cX^k$ matching a template, regardless of whether the string is in the support of the training distribution.

\begin{definition}\label{def:generalizing-on-unseen-symbols}
A (random) estimator $\hat{f} : \cX^k \to \R$ \textbf{generalizes on unseen symbols} with $(\epsilon,\delta)$-error if the following is true. For any $\bx \in \cX^k$ that matches a template $\bz \in \supp(\mutempl)$, we have
\begin{align*}
(\hat{f}(\bx) - f_*(\bz))^2 \leq \epsilon\,,
\end{align*}
with probability at least $1-\delta$ over the randomness of the estimator $\hat{f}$.
\end{definition}

\paragraph{Example} If the training data is generated from a uniform distribution on templates ``$\alpha \alpha$'' with label 1 and ``$\alpha \beta$'' for label -1, then it might consist of the data samples $\{(AA,1), (BB, 1), $ $(AB, -1), (BA, -1)\}$.
 An estimator that generalizes to unseen symbols must correctly label string $CC$ with $+1$ and string $CD$ with $-1$, even though these strings consist of symbols that do not appear in the training set. This is a nontrivial reasoning task since it requires learning to use the \textit{relations} between the symbols to classify rather than the identities of the symbols.

\section{Analysis for template tasks in the regression setting}

We establish that one-layer transformers of large enough width generalize to unseen symbols, when trained with enough data on regression template tasks. It is important to note that this is not true for all architectures, as we prove in Appendix~\ref{app:mlp-failure} that MLPs trained by SGD or Adam will not succeed.

\subsection{Transformer random features kernel}\label{ssec:transformer-kernel}

The one-layer transformer architecture that we analyze consists of an embedding layer, a multihead attention mechanism, an MLP layer, and an unembedding layer $\bw_U$. This is written mathematically in Appendix~\ref{app:transformer-kernel-deriv}. We analyze training only the final $\bw_U$ layer of the transformer, keeping the other weights fixed at their random Gaussian initialization. Surprisingly, even though we only train the final layer of the transformer, this is enough to guarantee generalization on unseen symbols.
Taking the width and embedding and head dimensions to infinity, and the step size to 0, the SGD training algorithm with weight decay converges to kernel gradient flow with the following kernel $\Ktrans$ in the infinitely-wide, infinitely-small-step-size limit. Here and throughout the remainder of the paper, we interchangeably denote an input by a string $\bx \in \cX^{k}$ or a matrix $\bX \in \R^{k \times m}$ constructed by stacking the one-hot vectors $\bX = [\be_{x_1},\ldots,\be_{x_{k}}]^T$ of the string's tokens. $\phi : \R \to \R$ is the MLP activation layer, $\beta,\gamma \in \R$ are hyperparameters controlling the temperature and magnitude of positional activations.
\begin{align}\label{eq:transformer-rf-kernel}
\Ktrans(\bX,\bY) &= \E_{u,v}[\phi(u)\phi(v)] \mbox{ for } u,v \sim N(\bzero,\begin{bmatrix} \Kattn(\bX,\bX) & \Kattn(\bX,\bY) \\ \Kattn(\bY,\bX) & \Kattn(\bY,\bY) \end{bmatrix}) \\
\mbox{ where } \Kattn(\bX,\bY) &= \E_{\bm(\bX),\bm(\bY)}[\smax(\beta \bm(\bX))^T (\bX \bY^T + \gamma^2 \bI) \smax(\beta \bm(\bY))] \nonumber \\
[\bm(\bX),\bm(\bY)] &\sim  N(\bzero, \begin{bmatrix} \bX \bX^T + \gamma^2 \bI & \bX \bY^T + \gamma^2 \bI \\ \bY \bX^T + \gamma^2 \bI & \bY \bY^T + \gamma^2 \bI \end{bmatrix})\,. \nonumber
\end{align}

The function outputted by kernel gradient flow is known to have a closed-form solution in terms of the samples, the kernel, and the weight-decay parameter $\lambda$, which we recall in Proposition~\ref{prop:kgf-generalization}.
\begin{proposition}[How kernel gradient flow generalizes; see e.g., \citep{welling2013kernel}.]\label{prop:kgf-generalization}
Let $(\bX_1,y_1),\ldots,(\bX_n,y_n)$ be training samples.
With the square loss and ridge-regularization of magnitude $\lambda$, kernel gradient flow with kernel $K$ converges to the following solution
\begin{align}\label{eq:kernel-gradient-flow-solution}
\hat{f}(\bX) = \by^T (\hat{\bK} + \lambda \bI)^{-1} \bk(\bX)\,,
\end{align}
where $\by = [y_1,\ldots,y_n] \in \R^n$ are the train labels, $\hat{\bK} \in \R^{n \times n}$ is the empirical kernel matrix and has entries $\hat{K}_{ij} = K(\bX_i,\bX_j)$, and $\bk(\bX) \in \R^n$ has entries $k_i(\bX) = K(\bX_i,\bX)$.
\end{proposition}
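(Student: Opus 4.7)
The plan is to prove Proposition~\ref{prop:kgf-generalization} via the representer theorem applied to the regularized empirical risk minimization problem that kernel gradient flow solves.

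First I would set up the optimization problem. Let $\cH$ denote the RKHS associated to $K$, and consider the regularized empirical risk
\begin{equation*}
L(f) = \tfrac{1}{2}\sum_{i=1}^n (f(\bX_i) - y_i)^2 + \tfrac{\lambda}{2}\|f\|_{\cH}^2\,.
\end{equation*}
This functional is strongly convex on $\cH$ (with modulus $\lambda$), so it has a unique minimizer $f^*$. Kernel gradient flow $\dot f_t = -\nabla_{\cH} L(f_t)$ initialized at $f_0 = 0$ converges to this unique minimizer. So it suffices to identify $f^*$ with the closed form in \eqref{eq:kernel-gradient-flow-solution}.

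Next I would invoke the representer theorem: since the loss depends on $f$ only through its evaluations at $\bX_1, \ldots, \bX_n$ (plus the RKHS penalty), the minimizer admits the representation $f^*(\bX) = \sum_{i=1}^n \alpha_i K(\bX_i, \bX) = \balpha^T \bk(\bX)$ for some coefficient vector $\balpha \in \R^n$. This step can be justified in the usual way by decomposing any candidate $f$ as $f = f_\parallel + f_\perp$ with $f_\parallel \in \Span\{K(\bX_i,\cdot)\}_{i \in [n]}$ and $f_\perp$ orthogonal in $\cH$, noting that $f_\perp(\bX_i) = \langle f_\perp, K(\bX_i,\cdot)\rangle_{\cH} = 0$ while $\|f\|_{\cH}^2 = \|f_\parallel\|_{\cH}^2 + \|f_\perp\|_{\cH}^2$, so setting $f_\perp = 0$ can only decrease $L$.

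Substituting this parametrization into $L$ and using $\langle K(\bX_i,\cdot), K(\bX_j,\cdot)\rangle_{\cH} = K(\bX_i,\bX_j)$, the loss becomes
\begin{equation*}
L(\balpha) = \tfrac{1}{2}\|\hat{\bK}\balpha - \by\|_2^2 + \tfrac{\lambda}{2}\balpha^T \hat{\bK}\balpha\,.
\end{equation*}
Setting $\nabla_{\balpha} L = \hat{\bK}(\hat{\bK}\balpha - \by) + \lambda \hat{\bK}\balpha = \hat{\bK}((\hat{\bK} + \lambda \bI)\balpha - \by) = 0$, any solution of $(\hat{\bK} + \lambda \bI)\balpha = \by$ is optimal. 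Since $\lambda > 0$ and $\hat{\bK} \succeq 0$, the matrix $\hat{\bK}+\lambda\bI$ is invertible, so $\balpha = (\hat{\bK} + \lambda \bI)^{-1}\by$, giving $f^*(\bX) = \by^T(\hat{\bK}+\lambda\bI)^{-1}\bk(\bX)$, which is the claimed formula.

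Honestly, this result is textbook and there is no real obstacle; the only subtlety worth mentioning is to justify why kernel gradient flow, as opposed to merely the variational minimizer, reaches $f^*$. This follows because $L$ is $\lambda$-strongly convex in $\cH$, so $\|f_t - f^*\|_{\cH}^2$ decays exponentially under the flow; alternatively one can observe that $\nabla_{\cH} L(f_t)$ is always a linear combination of $\{K(\bX_i,\cdot)\}_{i \in [n]}$ and of $f_t$ itself, so the flow stays in the finite-dimensional span $\Span\{K(\bX_i,\cdot)\}_{i\in[n]}$, reducing the problem to an ordinary ODE on $\balpha$ whose unique equilibrium is $(\hat{\bK}+\lambda\bI)^{-1}\by$.
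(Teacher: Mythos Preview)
Your proof is correct and is the standard textbook derivation via the representer theorem. Note that the paper does not actually prove this proposition: it is stated as a known result with a citation to \cite{welling2013kernel}, so there is no ``paper's own proof'' to compare against. Your argument is exactly the kind of derivation one would find in the cited reference.
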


\subsection{Transformers generalize on unseen symbols} 
We prove that transformers will generalize out-of-distribution on unseen symbols when trained on template tasks. We require the templates in the distribution $\mutempl$ to be ``disjoint'', since otherwise the correct label for a string $\bx$ is not uniquely defined, as $\bx$ could match more than one template:
\begin{definition}
Two templates $\bz,\bz' \in (\cX \cup \cW)^k$ are \textbf{disjoint} if no $\bx \in \cX^k$ matches both $\bz$ and $\bz'$.
\end{definition}

Furthermore, in order to ensure that the samples are not all copies of each other (which would not help generalization), we have to impose a diversity condition on the data.
\begin{definition}\label{def:overlap-parameter} The \textbf{data diversity} is measured by $\rho = \min_{\bz \in \supp(\mutempl)} \min_{t \in \cX} \frac{1}{\P_{s \sim \mu_{sub,\bz}}[t \in s(\cW)]}.$
\end{definition}
When the data diversity $\rho$ is large, then no token is much more likely than others to be substituted. If $\rho$ is on the order of the number of samples $n$, then most pairs of data samples will not be equal.
\begin{theorem}[Transformers generalize on unseen symbols]\label{thm:transformers-succeed-at-template}
Let $\mutempl$ be supported on a finite set of pairwise-disjoint templates ending with [CLS] tokens. Then, for almost any $\beta,\gamma,b_1,b_2$ parameters (except for a Lebesgue-measure-zero set), the transformer random features with $\phi(t) = \cos(b_1t + b_2)$ generalizes on unseen symbols.\footnote{We analyze the shifted and rescaled cosine activation function $\phi(t) = \cos(b_1t + b_2)$ out of technical convenience, but conjecture that most non-polynomial activation functions should succeed.} Formally, there are constants $c,C > 0$ and ridge regularization parameter $\lambda > 0$ that depend only $\beta,\gamma,b_1,b_2,\mutempl,f_*,\sigma$, such that for any $\bx$ matching a template $\bz \in \supp(\mutempl)$ the kernel ridge regression estimator $\hat{f}$ in \eqref{eq:kernel-gradient-flow-solution} with kernel $\Ktrans$ satisfies
\begin{align*}
|\hat{f}(\bx) - f_*(\bz)| \leq C\sqrt{\log(1/\delta) /n} + C \sqrt{1/\rho}\,,
\end{align*}
with probability at least $1 - \delta - \exp(-cn)$ over the random samples.
\end{theorem}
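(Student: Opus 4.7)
The plan is to exploit the fact that $\Ktrans(\bX,\bY)$ depends on its inputs only through the $0/1$ incidence matrices $\bX\bX^T,\bY\bY^T,\bX\bY^T$, which record \emph{which} positions contain matching tokens but not the identities of those tokens. Enumerate $\supp(\mutempl)=\{\bz_1,\ldots,\bz_T\}$ and write $t(a)\in[T]$ for the template matched by training sample $a$. On the ``clean'' event $\cE$ that the substitution maps of the $n$ training samples and of the query $\bx$ have pairwise disjoint images (and are disjoint from the fixed tokens of the templates), one has $\hat K_{ab}=K^{tmplt}_{t(a),t(b)}$ for a canonical $T\times T$ template kernel $\bK^{tmplt}$, and $k_a(\bx)=K^{tmplt}_{t(a),j}$ whenever $\bx$ matches $\bz_j$. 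By Definition~\ref{def:overlap-parameter} and a union bound over pairs, $\P[\cE^c]=O(n^2/\rho)$; after routine matrix-perturbation bookkeeping, this translates into an additive $O(\sqrt{1/\rho})$ contribution to the regression error.

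Conditioning on $\cE$, I reduce the $n$-dimensional kernel ridge regression to a $T$-dimensional one. Writing $\hat{\bK}=\bP\bK^{tmplt}\bP^T$ with $\bP\in\{0,1\}^{n\times T}$ the template-assignment matrix and $\bN=\bP^T\bP=\diag(n_1,\ldots,n_T)$, Woodbury's identity collapses the estimator to
\begin{align*}
\hat{f}(\bx)=\bar{\by}^T\bN\bK^{tmplt}(\lambda\bI+\bN\bK^{tmplt})^{-1}\be_j = \bar{y}_j - \lambda\,\bar{\by}^T(\lambda\bI+\bN\bK^{tmplt})^{-1}\be_j,
\end{align*}
where $\bar{y}_i=n_i^{-1}\sum_{a:t(a)=i}y_a$. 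Hoeffding's inequality gives $|\bar{y}_i-f_*(\bz_i)|=O(\sqrt{\log(T/\delta)/n_i})$, and since $\mutempl$ has finite support, each $n_i\gtrsim n$ with probability at least $1-\exp(-cn)$. Provided $\bK^{tmplt}$ is invertible, the second term above is $O(\lambda/n)$, so $\hat{f}(\bx)$ concentrates around $f_*(\bz_j)$ at the claimed rate $O(\sqrt{\log(1/\delta)/n})$ from label-noise fluctuations alone.

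The crux is invertibility of $\bK^{tmplt}$ for almost every $(\beta,\gamma,b_1,b_2)$. Using $2\cos(A)\cos(B)=\cos(A-B)+\cos(A+B)$ and Gaussian moment generating functions, $\Ktrans$ admits the closed form $\tfrac{1}{2}\exp(-\tfrac{b_1^2}{2}\Var(u-v))+\tfrac{1}{2}\cos(2b_2)\exp(-\tfrac{b_1^2}{2}\Var(u+v))$, which is real-analytic in $(\beta,\gamma,b_1,b_2)$. Hence $\det\bK^{tmplt}$ is real-analytic in these four parameters, and by the identity theorem its zero set is either all of $\RR^4$ or Lebesgue-null. To rule out the former, I would examine a tractable limit---e.g.\ $\beta\to 0$ (uniform attention, so $\Kattn$ reduces to a simple function of $\bone^T\bX\bY^T\bone/k^2$) combined with a generic small $b_1$, reducing the exponentials to low-order Taylor expansions---and directly verify linear independence of the $T$ rows by a combinatorial argument that uses the pairwise-disjointness of the templates.

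The main obstacle is precisely this invertibility verification: exhibiting even one parameter setting at which $\bK^{tmplt}$ is nonsingular for an arbitrary finite collection of pairwise-disjoint templates genuinely requires both the disjointness hypothesis (without which two templates could induce identical incidence statistics and hence identical rows) and the non-polynomial cosine activation (a polynomial would span only finitely many feature directions and would fail for large~$T$). Once invertibility is secured, the remaining pieces---Hoeffding for label-noise concentration, standard matrix perturbation to propagate the $O(\sqrt{1/\rho})$ clean-event error through the inverse, and combining with the $O(\sqrt{\log(1/\delta)/n})$ fluctuation term---are routine.
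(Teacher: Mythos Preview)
Your overall architecture matches the paper's proof closely: the template kernel matrix you call $\bK^{tmplt}$ is exactly the paper's $\bN$; your block-structure reduction and Woodbury computation are equivalent to the paper's idealized-estimator analysis (its Claim in the proof of Lemma~\ref{lem:app-kernel-symbolic-gotu-suff-cond}); and your closed-form for $\Ktrans$ via the cosine identity plus Gaussian MGF is precisely the paper's computation. One expositional wrinkle: you cannot literally ``condition on $\cE$'' since $\P[\cE^c]=O(n^2/\rho)$ is not small in the regime of interest (e.g.\ $\rho=\Theta(n)$). What actually works---and what you gesture at with ``matrix-perturbation bookkeeping''---is the paper's approach: define the idealized estimator using the block kernel $\hat\bK^{ideal}$ unconditionally, bound $\|\hat\bK-\hat\bK^{ideal}\|$ via the \emph{expected number of bad pairs} $O(n^2/\rho)$, and propagate this through $(\hat\bK+\lambda\bI)^{-1}$ with $\lambda=\Theta(n)$. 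This is routine once stated correctly.

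The genuine gap is your invertibility verification. Your proposed limit $\beta\to 0$ does not work: at $\beta=0$ the softmax is uniform, so $\Kattn(\bX,\bY)=\tfrac{1}{k^2}\bone^T\bX\bY^T\bone+\gamma^2/k$, and for disjoint substitutions $\bone^T\bX_i\bY_j^T\bone$ counts only coincidences among \emph{fixed} template tokens. For all-wildcard templates (e.g.\ $\alpha\alpha$ vs.\ $\alpha\beta$) this count is identically $1$ (from the [CLS] token), so $\Kattn$---hence $\Ktrans$---is constant across all template pairs and $\bN$ is singular at $\beta=0$ for every $(\gamma,b_1,b_2)$. The paper's argument is substantially heavier than ``a combinatorial argument'': it separates the MLP and attention layers, using the cosine closed-form and a Wronskian/Vandermonde argument to reduce nonsingularity of $\bN$ to the weaker condition $\sum_i\Kattn(\bX_i,\bY_i)\neq\sum_i\Kattn(\bX_i,\bY_{\tau(i)})$ for every non-identity permutation $\tau$ (this is where the cosine activation earns its keep). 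That condition on $\Kattn$ is then verified by Taylor-expanding in both $\beta$ and $\gamma$ up to order $\partial^6_\beta\partial^4_\gamma$, extracting terms like $\trace(\bX\bX^T\bY\bY^T)$ and $\bone^T\bX\bX^T\bY\bY^T\bone$, and running a five-stage Cauchy--Schwarz argument that uses pairwise disjointness to force $\bX_i\bX_i^T=\bY_{\tau(i)}\bY_{\tau(i)}^T$ and hence $\tau=\mathrm{id}$. You have correctly identified this as the crux, but the specific plan you sketch would not go through.
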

The first term is due to the possible noise in the labels. The second term quantifies the amount of sample diversity in the data. Both the sample diversity and the number of samples must tend to infinity for an arbitrarily small error guarantee.

\paragraph{Proof sketch} (1) In Lemma~\ref{lem:informal-kernel-symbolic-gotu-suff-cond} we establish with a sufficient condition for kernel ridge regression to generalize on unseen symbols. (2) We prove that $\Ktrans$ satisfies 
it.

\textit{(1) Sufficient condition}. Let $\mutempl$ be supported on templates $\bz_1,\ldots,\bz_r$. Let $\cR = \cup_{i \in[k],j\in [r]}\{z_{j,i}\}$ be the tokens that appear in the templates. Let $[n] = \cI_1 \sqcup \cI_2 \sqcup \dots \sqcup \cI_n$ be the partition of the samples such that if $a \in \cI_j$ then sample $(\bx_a,y_a)$ is drawn by substituting the wildcards of template $\bz_j$. Two samples $\bx_a$, $\bx_{b}$ that are drawn from the same template $\bz_j$ may be far apart as measured by the kernel: i.e., the kernel inner product $K(\bx_a,\bx_b)$ may be small. However, these samples will have similar relationship to most other samples:
\begin{align}\label{eq:k-sim}
K(\bx_a,\bx_i) = K(\bx_b,\bx_i) \quad \mbox{for most }i \in [n]\,.
\end{align}
Specifically, if the wildcards of $\bx_a,\bx_b$ and $\bx_i$ are substituted by disjoint sets of tokens that do not appear in the templates, then \eqref{eq:k-sim} holds. Therefore, as the sample diversity $\rho$ increases, the empirical kernel matrix $\hat\bK$ becomes approximately block-structured with blocks $\cI_{j} \times \cI_{j'}$. For most samples $\bx_a,\bx_{b}$ corresponding to template $\bz_j$, and most $\bx_{a'},\bx_{b'}$ corresponding to template $\bz_{j'}$ we have
\begin{align}\label{eq:n-definition}
K(\bx_a,\bx_{a'}) = K(\bx_{b},\bx_{b'}) = K(\sub(\bz_j,s), \sub(\bz_{j'},s')) := N_{j,j'}\,,
\end{align}
where $s,s' : \cW \to \cX$ are substitution maps satisfying
\begin{align}\label{eq:disjoint-substitution maps}
s(\cW) \cap s'(\cW) = 0 \quad \mbox{ and }\quad s(\cW) \cap \cR = s'(\cW) \cap \cR = \emptyset.
\end{align}

One can check that \eqref{eq:n-definition} and \eqref{eq:disjoint-substitution maps} uniquely define a matrix $\bN \in \R^{r \times r}$ which gives the entries in the blocks of $\hat\bK$, with one block for each pair of templates.\footnote{This assumes a ``token-symmetry'' property of $K$ that is satisfied by transformers; details in the full proof.} See Figure~\ref{fig:block-structure}.

\begin{figure}
\centering
{\small
\begin{tabular}{cc}
\begin{tabular}{@{}c@{}c@{}c@{}}
\begin{tabular}{@{}c@{}} $\hat{\bK} = $ \end{tabular} \quad & \begin{tabular}{@{}c@{}} $\cI_1$ ~~~ $\cI_2$ \\ \includegraphics[scale=0.15]{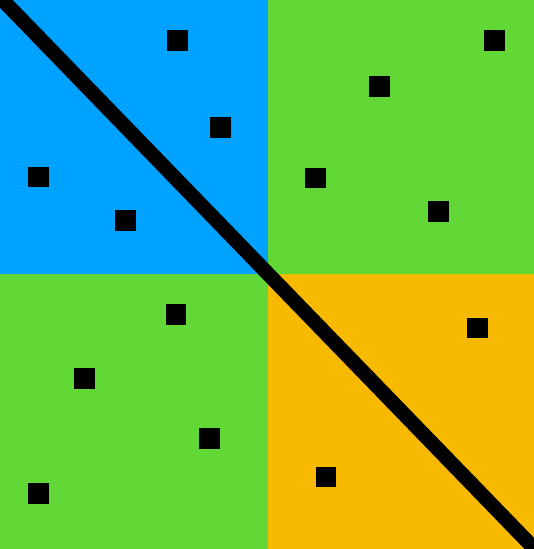} \end{tabular} 
& \begin{tabular}{@{}c@{}} \\ ~$\cI_1$ \\ \\ ~$\cI_2$\end{tabular} $\in \R^{n \times n}$,
\end{tabular}
\qquad\qquad
& $\bN = \begin{bmatrix} K(AA,BB) & K(AA,BC) \\ K(BC,AA) & K(AB,CD) \end{bmatrix} = $ \begin{tabular}{@{}c@{}c@{}}
\includegraphics[scale=0.2]{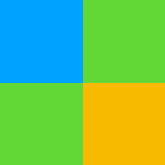}
\end{tabular} $\in \R^{2 \times 2}$
\end{tabular}
}
\caption{{\small Illustration of structure of $\hat\bK$ and $\bN$ for the same/different task, which has $r = 2$ templates $\bz_1 = \alpha\alpha$ and $\bz_2 = \alpha\beta$. As the sample diversity $\rho$ increases and the number of samples $n$ increases, the empirical kernel matrix $\hat\bK \in \R^{n \times n}$ becomes approximately $(r \times r)$-block-structured, and within each block most of the entries are given by $\bN \in \R^{r \times r}$; exceptions where this is not true, including the diagonals, are drawn in black. Furthermore, the spectrum of $\hat\bK$ is increasingly determined by the spectrum of $\bN$, and if $\bN$ is nonsingular then the top eigenspace increasingly aligns with the span of the indicator vectors on $\cI_1,\ldots,\cI_r$.}}\label{fig:block-structure}
\end{figure}

If the matrix $\bN$ is nonsingular and the number of samples is large, then the span of the top $r$ eigenvectors of $\hat\bK$ will align with the span of the indicator vectors on the sets $\cI_1,\ldots,\cI_r$. Furthermore, when testing a string $\bx^{test}$ that matches template $\bz_j$, but might not have appeared in the training set, it holds that for most $a \in \cI_j$, we have
\begin{align*}
\bk(\bx^{test}) = [K(\bx^{test},\bx_1),\ldots,K(\bx^{test},\bx_n)] \approx [K(\bx_a,\bx_1),\ldots,K(\bx_a,\bx_n)] = \hat\bK_{a,:}\,.
\end{align*} In words, the similarity relationship of $\bx^{test}$ to the training samples is approximately the same as the similarity relationship of $\bx_a$ to the training samples. So the kernel ridge regression solution \eqref{eq:kernel-gradient-flow-solution} 
approximately equals the average of the labels of the samples corresponding to template $\bz_j$, which in turn is approximately equal to the template label by a Chernoff bound,
\begin{align}\label{eq:how-kernel-ridge-regression-succeeds}
\by^T(\hat\bK + \lambda \bI)^{-1} \bk(\bx^{test}) \approx \frac{1}{|\cI_j|} \sum_{a \in \cI_j} y_i \approx f_*(\bz_j)\,.
\end{align}
Therefore, kernel ridge regression generalizes on $\bx^{test}$. It is important to note that the number of samples needed until \eqref{eq:how-kernel-ridge-regression-succeeds} is a good approximation depends on the nonsingularity of $\bN$. This yields the sufficient condition for kernel ridge regression to succeed (proof in Appendix~\ref{app:main-proof}). 
\begin{lemma}[Informal Lemma~\ref{lem:app-kernel-symbolic-gotu-suff-cond}]\label{lem:informal-kernel-symbolic-gotu-suff-cond}
If $\bN$ is nonsingular, then \eqref{eq:kernel-gradient-flow-solution} generalizes to unseen symbols. 
\end{lemma}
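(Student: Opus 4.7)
The plan is to show that the empirical kernel matrix $\hat\bK$ is close to an idealized block-structured matrix $\bK^*$, explicitly invert the idealization using that $\bN$ is nonsingular, and then propagate error bounds through Proposition~\ref{prop:kgf-generalization}.

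First, I would define the ideal matrix $\bK^* = \bV \bN \bV^T \in \R^{n \times n}$, where $\bV \in \R^{n \times r}$ has columns $\bV_{:,j} = \bone_{\cI_j}$, so that $K^*_{a,b} = N_{j,j'}$ whenever $a \in \cI_j$ and $b \in \cI_{j'}$. By the argument in the sketch, whenever the wildcards of $\bx_a$ and $\bx_b$ are substituted by tokens disjoint from each other and from $\cR$, the exact identity $\hat K_{a,b} = K^*_{a,b}$ holds. The data-diversity assumption forces the ``bad'' pairs to constitute a fraction $O(1/\rho)$, so a union bound plus concentration gives $\|\hat\bK - \bK^*\|_{\mathrm{op}} \leq O(n/\sqrt{\rho})$ with probability $1 - \exp(-cn)$. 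An analogous bound controls $\|\bk(\bx^{test}) - \bk^*(\bx^{test})\|_2$, where $\bk^*(\bx^{test}) = \bV \bN_{j,:}^T$ is the idealized test-kernel vector for $\bx^{test}$ matching template $\bz_j$ (here I use symmetry of $\bN$, which follows from symmetry of $K$).

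Next, I would explicitly analyze the ridge resolvent $(\bK^* + \lambda \bI)^{-1}$. Since $\bV^T \bV = \diag(|\cI_1|,\ldots,|\cI_r|)$ and each $|\cI_j| = \Theta(n/r)$ with high probability (by Chernoff on the template draws), and since $\bN$ is nonsingular by hypothesis, the Woodbury identity yields a closed form that acts as $\lambda^{-1} \bI$ on $\Span(\bV)^\perp$ and as a controlled $\Theta(r/n)$-scale operator on $\Span(\bV)$. A direct computation collapses the algebra (using that $\bN_{j,:}^T = \bN \be_j$ so $\bN^{-1}\bN_{j,:}^T = \be_j$) to
\begin{align*}
\by^T (\bK^* + \lambda \bI)^{-1} \bk^*(\bx^{test}) \;=\; \frac{1}{|\cI_j|} \sum_{a \in \cI_j} y_a + O(\lambda\, r/n),
\end{align*}
so for $\lambda$ chosen small relative to $n/r$ this is essentially the empirical average of the labels in block $\cI_j$. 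A Hoeffding bound then gives $\frac{1}{|\cI_j|}\sum_{a \in \cI_j} y_a = f_*(\bz_j) + O(\sigma \sqrt{\log(1/\delta)/n})$, producing the two error terms in Theorem~\ref{thm:transformers-succeed-at-template}.

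Finally, I would propagate the perturbations. Writing $\hat\bK = \bK^* + \bE$ with $\|\bE\|_{\mathrm{op}} = O(n/\sqrt\rho)$, the resolvent identity $(\hat\bK + \lambda\bI)^{-1} - (\bK^* + \lambda \bI)^{-1} = -(\bK^* + \lambda \bI)^{-1} \bE (\hat\bK + \lambda \bI)^{-1}$ together with the matching perturbation of $\bk(\bx^{test})$ should contribute an additive correction of order $\sqrt{1/\rho}$ in the final estimator. The main obstacle is precisely this step: because $(\bK^* + \lambda\bI)^{-1}$ has the large eigenvalue $\lambda^{-1}$ on the orthogonal complement of $\Span(\bV)$, a naive operator-norm propagation of $\bE$ through the inverse blows up. I would circumvent this by decomposing $\by = \by^{\parallel} + \by^\perp$ where $\by^\parallel \in \Span(\bV)$ is the template-wise mean vector and $\by^\perp$ carries the within-block label fluctuations, then using that $\bk^*(\bx^{test}) \in \Span(\bV)$ and that $\bN$ is nonsingular to show the $\lambda^{-1}$ contribution is annihilated by projection onto the template subspace; the remaining $\by^\perp$ contribution is controlled by its $\ell_2$ norm, which is $O(\sigma \sqrt{n})$ by Hoeffding, yielding an overall $O(\sqrt{1/\rho}) + O(\sqrt{\log(1/\delta)/n})$ error as claimed.
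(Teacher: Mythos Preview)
Your overall strategy matches the paper's: idealize $\hat\bK$ by the block matrix $\bK^* = \bV\bN\bV^T$, bound $\|\hat\bK - \bK^*\| = O(n/\sqrt\rho)$, compute the idealized ridge estimate via the block structure, then propagate perturbations through the resolvent. The gap is in your choice of $\lambda$ and the perturbation step. You take $\lambda$ small so that the idealized estimate collapses to the block average, then flag the $\lambda^{-1}$ eigenvalue on $\Span(\bV)^\perp$ as an obstacle and try to kill it by splitting $\by = \by^\parallel + \by^\perp$. This does not work as stated: $\bE = \hat\bK - \bK^*$ mixes $\Span(\bV)$ with its complement (even the diagonal of $\hat\bK$ differs from that of $\bK^*$ by $\Theta(1)$ entrywise, independently of $\rho$), so after applying $\bE(\hat\bK+\lambda\bI)^{-1}$ you are back outside the template subspace and the $\lambda^{-1}$ factor reappears. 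Tracing your own $\by^\perp$ term gives order $\lambda^{-1}\|\by^\perp\|\cdot\|\bE\|\cdot\|(\hat\bK+\lambda\bI)^{-1}\bk^*\| \gtrsim \sigma n/(\lambda\sqrt\rho)$ even under optimistic bounds on the last factor, which still forces $\lambda = \Theta(n)$ to land at $O(1/\sqrt\rho)$.

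The paper instead takes $\lambda = \Theta(n)$ from the outset. The observation you are missing is that this is \emph{compatible} with the idealized estimate being accurate: since $\bN$ is nonsingular, the nonzero eigenvalues of $\bK^*$ are at least $\tau = \min_j |\cI_j|/\|\bN^{-1}\| = \Theta(n)$, so any $\lambda < \tau$ (in particular $\lambda = cn$ for a small enough constant $c$) still yields $\|(\bK^*+\lambda\bI)^{-1}\bk^*(\bx^{test}) - \bone_{\cI_j}/|\cI_j|\| = O(1/\sqrt n)$. Once $\lambda = \Theta(n)$, the crude bound $\|(\hat\bK+\lambda\bI)^{-1}\| \le 1/\lambda = O(1/n)$ already gives $|\hat f(\bx) - \hat f^{ideal}(\bx)| = O(1/\sqrt\rho)$ via the resolvent identity, with no decomposition of $\by$ needed.
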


\textit{(2) $\Ktrans$ satisfies the sufficient condition}. We now show that for \textit{any} collection of disjoint templates $\bz_1,\ldots,\bz_r$, the matrix $\bN_{\mathsf{trans}} := \bN \in \R^{r \times r}$ defined with kernel $K = \Ktrans$ is nonsingular. The challenging is that $\Ktrans$ does not have a closed-form solution because of the expectation over softmax terms in its definition \eqref{eq:transformer-rf-kernel}. Therefore, our analysis of the transformer random feature kernel is, to the best of our knowledge, the first theoretical analysis showing that the transformer random features learn a nontrival class of functions of sequences. We proceed by analyzing the MLP layer and the attention layer separately, observing that a``weak'' condition on $\Kattn$ can be lifted into the ``strong'' result that $\bN_{\mathsf{trans}}$ is nonsingular. The intuition is that as long as $\Kattn$ is not a very degenerate kernel, it is unlikely that the MLP layer has the cancellations that to make $\bN_{\mathsf{trans}}$ nonsingular.
\begin{lemma}[Nonsingularity of $\bN_{\mathsf{trans}}$]\label{lem:n-trans-nonsingularity}
Suppose for every non-identity permutation $\tau \in S_r \sm \{\mathrm{id}\}$,
\begin{align}\label{eq:weak-condition-n-attn}
\sum_{i \in [r]} \Kattn(\sub(\bz_i,s),\sub(\bz_i,s')) \neq \sum_{i \in [r]} \Kattn(\sub(\bz_i,s),\sub(\bz_{\tau(i)},s'))\,,
\end{align}
where $s,s'$ are the substitution maps in the definition of $\bN_{\mathsf{trans}}$ in \eqref{eq:disjoint-substitution maps}. Let the MLP layer's activation function be $\phi(t) = \cos(b_1t+b_2)$. Then for almost any choice of $b_1,b_2$ (except for a Lebesgue-measure-zero set), the matrix $\bN_{\mathsf{trans}}$ is nonsingular.
\end{lemma}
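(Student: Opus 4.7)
My plan is to obtain a closed form for $\bN_{\mathsf{trans}}$, peel off invertible diagonal factors, and reduce nonsingularity to a linear-independence-of-exponentials statement whose hypothesis is exactly \eqref{eq:weak-condition-n-attn}. Write $\bX_i = \sub(\bz_i,s)$, $\bY_j = \sub(\bz_j,s')$, $\alpha_i = \Kattn(\bX_i,\bX_i)$, $\beta_j = \Kattn(\bY_j,\bY_j)$, and $a_{ij} = \Kattn(\bX_i,\bY_j)$. Applying the product-to-sum identity $\cos\phi\cos\psi = \tfrac{1}{2}[\cos(\phi-\psi)+\cos(\phi+\psi)]$ together with the Gaussian Fourier formula $\E[\cos(b_1 Z + c)] = \cos(c)\, e^{-b_1^2 \Var(Z)/2}$ inside the definition \eqref{eq:transformer-rf-kernel} of $\Ktrans$ yields
\begin{align*}
\bN_{\mathsf{trans}}[i,j] \;=\; \tfrac{1}{2}\, e^{-b_1^2(\alpha_i + \beta_j)/2} \bigl[\, e^{b_1^2 a_{ij}} + \cos(2b_2)\, e^{-b_1^2 a_{ij}}\,\bigr].
\end{align*}
The outer factors assemble into invertible left/right diagonal conjugation, so $\bN_{\mathsf{trans}}$ is nonsingular iff the matrix $M(b_1,b_2)$ with entries $M_{ij} = e^{b_1^2 a_{ij}} + \cos(2b_2)\, e^{-b_1^2 a_{ij}}$ is nonsingular.

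Because $\det M(b_1,b_2)$ is real-analytic in $(b_1,b_2)$, it suffices to exhibit a single point at which it is nonzero: the vanishing locus of a not-identically-zero real-analytic function on $\R^2$ has Lebesgue measure zero. I choose the slice $b_2 = \pi/4$, along which $\cos(2b_2)=0$ and $M$ collapses to $E$ with $E_{ij} = e^{b_1^2 a_{ij}}$. Leibniz expansion gives
\begin{align*}
\det E \;=\; \sum_{\tau \in S_r} \sgn(\tau)\, e^{b_1^2 S_\tau}, \qquad S_\tau \;:=\; \sum_{i \in [r]} a_{i,\tau(i)} \;=\; \sum_{i \in [r]} \Kattn(\bX_i, \bY_{\tau(i)}).
\end{align*}
Grouping by distinct values of the exponent yields $\det E = \sum_S c_S\, e^{b_1^2 S}$ with $c_S := \sum_{\tau:\, S_\tau = S} \sgn(\tau)$. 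Hypothesis \eqref{eq:weak-condition-n-attn} is precisely the statement $S_\tau \neq S_{\mathrm{id}}$ for every $\tau \in S_r \setminus \{\mathrm{id}\}$, so the only permutation contributing to the coefficient at exponent $S_{\mathrm{id}}$ is the identity, giving $c_{S_{\mathrm{id}}} = 1$.

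To finish, I invoke linear independence of the functions $\{b_1 \mapsto e^{b_1^2 S}\}_S$ over distinct values of $S$ (substitute $u = b_1^2 \geq 0$ and use the standard independence of $\{e^{Su}\}$ for distinct rates). Since at least one coefficient $c_{S_{\mathrm{id}}}$ is nonzero, $\det E$ is a nonzero function of $b_1$; hence $\det M(b_1,b_2)$ is not identically zero and its zero set in $\R^2$ has Lebesgue measure zero, which is the desired conclusion. The main obstacle is really just the first paragraph: identifying the clean closed form that the cosine activation admits, and spotting that the $b_2 = \pi/4$ slice combined with \eqref{eq:weak-condition-n-attn} cleanly isolates the identity-permutation contribution in the Leibniz sum. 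Everything after that coasts on the standard facts that distinct exponential rates are linearly independent and that real-analytic functions on $\R^n$ have measure-zero vanishing loci.
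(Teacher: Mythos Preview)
Your proof is correct and follows essentially the same route as the paper: both compute the same closed form for $\Ktrans$ under the cosine activation, factor off the diagonal $e^{-b_1^2(\alpha_i+\beta_j)/2}$ terms, specialize to $b_2=\pi/4$ to kill the $\cos(2b_2)$ term, expand the determinant via Leibniz, and conclude by linear independence of $\{e^{b_1^2 S_\tau}\}$ together with the measure-zero vanishing locus of a nonzero real-analytic function. The only cosmetic difference is that the paper spells out the linear-independence step via a Wronskian/Vandermonde computation, whereas you invoke it as a standard fact.
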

This is proved in Appendix~\ref{app:mlp-reduce-proof}, by evaluating a Gaussian integral and showing $\bN_{\mathsf{trans}}$ has Vandermonde structure. Although we use the cosine activation function, we conjecture that this result holds for most non-polynomial activation functions. Next, we prove the condition on $\bN_{\mathsf{attn}}$.

\begin{lemma}[Non-degeneracy of $\Kattn$]\label{lem:kattn-nondegenerate}
The condition \eqref{eq:weak-condition-n-attn} holds for Lebesgue-almost any $\beta,\gamma$.
\end{lemma}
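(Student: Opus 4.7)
The plan is to reduce the lemma to a standard real-analyticity argument. Define
\begin{align*}
F_\tau(\beta, \gamma) := \sum_{i \in [r]} \Kattn(\sub(\bz_i,s), \sub(\bz_i,s')) - \sum_{i \in [r]} \Kattn(\sub(\bz_i,s), \sub(\bz_{\tau(i)},s')).
\end{align*}
I would first show that $\Kattn(\bX,\bY)$, and hence $F_\tau$, is real-analytic in $(\beta, \gamma) \in \R^2$. The joint covariance of $(\bm(\bX), \bm(\bY))$ is polynomial in $\gamma$, so we may represent $(\bm(\bX), \bm(\bY)) = L(\gamma)\bg$ with $\bg$ a standard Gaussian and $L(\gamma)$ a polynomial matrix; the softmax is entire in $\beta$; and $\|\smax\|_\infty \le 1$ gives the dominated-convergence bound needed to interchange differentiation with the Gaussian expectation. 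A non-identically-zero real-analytic function on $\R^2$ vanishes on a set of Lebesgue measure zero, so the lemma follows provided each $F_\tau$ (for the finitely many $\tau \in S_r \setminus \{\mathrm{id}\}$) is not identically zero.

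The crux is therefore to verify, for each fixed $\tau \neq \mathrm{id}$, that $F_\tau \not\equiv 0$. My plan is to Taylor-expand $\Kattn$ in $\beta$ around $\beta = 0$, using $\smax(\beta \bm) = \tfrac{1}{k}\bfone + \tfrac{\beta}{k}\bH \bm + \beta^2 \bs_2(\bm) + O(\beta^3)$ with $\bH = \bI - \tfrac{1}{k}\bfone\bfone^T$. The $\beta^0$ term is $\tfrac{1}{k^2}\bfone^T (\bX\bY^T + \gamma^2 \bI)\bfone$, which depends only on the template tokens and so may fail to distinguish $\tau$ from identity (e.g.\ when every position of every template is a wildcard). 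The $\beta^2$ term involves $\trace\bigl(\bH(\bX\bY^T+\gamma^2\bI)\bH(\bY\bX^T+\gamma^2\bI)\bigr)$ together with expectations of the quadratic correction $\bs_2(\bm(\bX))$, which are polynomial in the full Gaussian covariance $\bX\bX^T + \gamma^2\bI$. Since $\bX_i\bX_i^T$ equals the within-template equality-pattern matrix $E_i$ of $\bz_i$ (including wildcard equalities), higher-order Taylor coefficients see the full wildcard structure---exactly the information that distinguishes disjoint templates.

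To conclude non-identical-vanishing of $F_\tau$, I would work in the regime $\gamma \to \infty$ with the effective inverse-temperature $\hat{\beta} = \beta\gamma$ held fixed: a change of variables then reveals $\Kattn(\bX,\bY)/\gamma^2 \to G(E_\bX, E_\bY; \hat{\beta})$ for a universal function depending only on the within-template equality patterns and $\hat\beta$. Under a non-identity $\tau$, the multiset of pairs $\{(E_i, E_{\tau(i)})\}_{i\in [r]}$ differs from $\{(E_i, E_i)\}_{i\in[r]}$ because distinct disjoint templates have distinct $E_i$. The last task is to rule out accidental cancellation among the $r$ summands in $\sum_i [G(E_i, E_i; \hat{\beta}) - G(E_i, E_{\tau(i)}; \hat{\beta})]$ as a function of $\hat\beta$; I would do this via a Jensen-style convexity argument generalizing the toy case $\{\alpha\alpha,\alpha\beta\}$, for which a direct calculation gives $F_\tau = 2\gamma^2 \E_U\bigl[(\sigma(\hat\beta U) - \E_V[\sigma(\hat\beta(V+U))])^2\bigr] > 0$ for any $\hat\beta \neq 0$, where $\sigma$ is the logistic function and $U,V$ are independent centered Gaussians of variance $2$.

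The main obstacle I foresee is precisely this last cancellation-avoidance step: even though term-by-term $G(E_i, E_{\tau(i)};\hat{\beta})$ is different from $G(E_i,E_i;\hat\beta)$, in principle algebraic identities among the $G$-values could wipe out the whole sum at every $\hat\beta$. If the Jensen-style approach is not strong enough to rule this out for arbitrary template collections, my fallback plan is to pass to higher-order moments of $\bm(\bX)$: arbitrarily high moments of the Gaussian completely determine its covariance, hence separate the distinct equality patterns $\{E_i\}$, and one can then invoke a Vandermonde-type argument in the spirit of Lemma~\ref{lem:n-trans-nonsingularity} to pick out a specific order of $\hat\beta$ at which cancellation is impossible.
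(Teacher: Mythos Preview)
Your high-level framework matches the paper's: establish that $\Kattn$ is real-analytic in $(\beta,\gamma)$, invoke the identity theorem, and reduce to showing $F_\tau \not\equiv 0$ for each $\tau \neq \mathrm{id}$. The divergence is entirely in how you propose to exhibit non-vanishing.

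There is a genuine gap in your plan at the step where you pass to the $\gamma\to\infty$ regime. Your claim that ``distinct disjoint templates have distinct $E_i$'' is false. Take $\bz_1 = AB\,[\mathrm{CLS}]$ and $\bz_2 = BA\,[\mathrm{CLS}]$ with $A,B$ regular tokens and no wildcards: these are disjoint (each matches exactly one string), yet $\bX_1\bX_1^T = \bX_2\bX_2^T = I_3$. More generally, any two disjoint templates that differ only in which regular tokens sit at which positions share the same equality pattern. Your limiting function $G(E_\bX,E_\bY;\hat\beta)$ cannot separate such pairs, and neither can your fallback plan (higher Gaussian moments to separate the $E_i$), since that also targets only the equality patterns. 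Relatedly, your description of the limit as depending on $(E_\bX,E_\bY)$ alone drops the cross-term $\bX\bY^T$, which is exactly where the regular-token information between $\bX_i$ and $\bY_{\tau(i)}$ lives; in your toy case $\{\alpha\alpha,\alpha\beta\}$ this term happens to vanish because the substitutions are disjoint and there are no regular tokens, which is why the example works but does not generalize.

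The paper handles this by expanding at $(\beta,\gamma)=(0,0)$ rather than $\gamma\to\infty$, and examining a sequence of specific mixed derivatives $\partial_\beta^i\partial_\gamma^j\kappa_{X,Y}(0,0)$ up to order ten. The argument runs in five stages, each using a Cauchy--Schwarz equality characterization on one particular term of one particular derivative: first $\kappa(0,0)$ forces $[1^T X_i]_\cR = [1^T Y_{\tau(i)}]_\cR$; then $\partial_\beta^2\partial_\gamma^2$ (via the $\trace(XY^T)$ term) forces the regular-token positions $[X_i]_{[k]\times\cR}=[Y_{\tau(i)}]_{[k]\times\cR}$; then $\partial_\beta^4$, $\partial_\beta^4\partial_\gamma^2$, and finally $\partial_\beta^6\partial_\gamma^4$ progressively pin down $1^TX_iX_i^T1$, $1^TX_iX_i^T$, and $X_iX_i^T$. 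Only the combination of \emph{both} the equality pattern $X_iX_i^T$ and the regular-token restriction $[X_i]_{[k]\times\cR}$ uniquely identifies the template; the paper tracks both, and your plan tracks only the first.
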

The proof is in Appendix~\ref{app:attention-analysis-proof}. First, we prove the analyticity of the kernel $\Kattn$ in terms of the hyperparameters $\beta$ and $\gamma$. Because of the identity theorem for analytic functions, it suffices to show at least one choice of hyperparameters $\beta$ and $\gamma$ satisfies \eqref{eq:weak-condition-n-attn} for all non-identity permutations $\tau$. Since $\Kattn$ does not have a closed-form solution, we find such a choice of $\beta$ and $\gamma$ by analyzing the Taylor-series expansion of $\Kattn$ around $\beta = 0$ and $\gamma = 0$ up to order-10 derivatives.

\subsection{Improving transformer data-efficiency with $W_KW_Q^T + aI$ parametrization}\label{sec:wkwq-suggestion}
Can we use these insights to improve transformers' data-efficiency in template tasks? In the proof, the nonsingularity of $\bN$ in Lemma~\ref{lem:informal-kernel-symbolic-gotu-suff-cond} drives the model's generalization on unseen symbols. This suggests that an approach to improve data-efficiency is to make $\bN$ better-conditioned by modifying the transformer parametrization. We consider here the simplest task, with templates ``$\alpha\alpha$'' and ``$\alpha\beta$'' labeled with $+1$ and $-1$, respectively. For tokens $A,B,C,D \in \cX$, the matrix $\bN$ is
\begin{align*}
\bN = \begin{bmatrix} K(AA,BB) & K(AA,BC) \\
K(BC, AA) & K(AB,CD) \end{bmatrix}
\end{align*}
If $K$ is an inner-product kernel, $K(\bx,\bx') = \kappa(\sum_{i \in [k]} 1(x_i = x'_i))$, as from an MLP, then $K(AA,BB) = K(AA,BC) = K(BC,AA) = K(AB,CD) = \kappa(0)$, so $\bN$ is singular and generalization is not achieved. Intuitively, every sample $\bx_i$ has approximately the same ``similarity profile to other data'' $\hat\bK_{i,:} = [K(\bx_i,\bx_1),\ldots,K(\bx_i,\bx_n)]$, so the kernel method cannot identify the samples that come from the same template as $\bx^{test}$. In contrast, the transformer kernel \eqref{eq:transformer-rf-kernel} succeeds by using information about the incidence matrix $\bX\bX^T$, which differs between templates, and does not depend on the symbol substitution. We thus propose to emphasize the incidence matrix $\bX\bX^T$ by reparametrizing each head to $\bW_K\bW_Q^T + a\bI$, where $a$ is a trainable parameter. This adds a scaling of $\bX\bX^T$ in the attention, and can empirically improve data efficiency by an order of magnitude on several template tasks (see Figures~\ref{fig:assoc-memory} and \ref{fig:assoc-memory-wildcard}, as well as additional experiments in Appendix~\ref{app:additional-experiments}).

\section{Analysis for template tasks in next-token-prediction setting}\label{sec:template-multiclass}

We switch gears to the next-token prediction setting with the cross-entropy loss, where the output label may be a token as in the example of Figure~\ref{fig:assoc-memory-wildcard}; formal definition is in Appendix~\ref{app:symbolic-label}. The simplest task consists of template ``$\alpha$'' labeled by ``$\alpha$''. An example train set is $\{(A,A), (B,B), (C,C)\}$, where $A,B,C \in \cX$ are tokens, and then we test with $(x^{test}, y^{test}) = (D,D)$ which is not in the train set. This task captures the ability of a model to learn how to copy a symbol, which is important for LLMs that solve problems with multi-stage intermediate computations and must copy these to later parts of a solution 
\citep{csordas2021neural}. From now on, we only consider this ``copying'' task.

We consider an architecture $\fattn(\bx;\btheta)$ with just a multi-head attention layer, and we tie the embedding and unembedding weights as in practice \citep{brown2020language}. Define the train loss and test loss as follows, where $\ell$ is the cross-entropy loss and $x^{test}$ is a token unseen in the training data: $\cL_{train}(\btheta) = \frac{1}{n} \sum_{i=1}^n \ell(\fattn(x_i; \btheta),y_i)$ and $\cL_{test}(\btheta) = \ell(\fattn(x^{test}), y^{test})$.
We prove this network does not generalize on unseen symbols when trained, as we take the embedding dimension large. Our evidence is from analyzing the early time of training, and showing that the test loss on unseen symbols does not decrease.
\begin{theorem}[Failure of transformers at copying]\label{thm:copy-failure}For any learning rates such that $-\pd{\cL_{train}}{t} \mid_{t = 0} = O(1)$, we must have that $\pd{\cL_{test}}{t} \mid_{t = 0} \to 0$ as $d_{emb} \to \infty$.
\end{theorem}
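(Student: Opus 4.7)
My first step is to specialize $\fattn$ to a single-token input, so that $\bX \in \R^{1 \times m}$ and $k=1$. Then $\bZ_0 \bW_{K,h}^T \bW_{Q,h} \bZ_0^T$ is a $1 \times 1$ scalar, and $\smax$ of a scalar is $1$, so $\bW_{K,h}$ and $\bW_{Q,h}$ drop out entirely. Writing $\bw_x := (\bW_E)_{x,:}^T$ for the tied embedding row and $\bM := \sum_{h} \bW_{O,h}^T \bW_{V,h}$, the network reduces to
\begin{equation*}
\fattn(x;\btheta) \;=\; \bW_E \bM \bigl(\bw_x + \gamma \bp_1\bigr) \,,
\end{equation*}
with trainable parameters $\btheta = (\bW_E, \{\bW_{V,h},\bW_{O,h}\}_{h}, \bp_1)$.

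\textbf{NTK decomposition.} Since gradient flow gives $\pd{\cL_{test}}{t}\mid_{t=0} = -\eta\, \langle \nabla_{\btheta} \cL_{train}, \nabla_\btheta \cL_{test}\rangle$, I plan to control the inner product through the initialization-time NTK $\Theta(x,x')_{ab} := \langle \nabla_\btheta \fattn(x)_a, \nabla_\btheta \fattn(x')_b\rangle$, together with the softmax-minus-one-hot cross-entropy gradients $g^{test}_a$ and $g_{i,b}$. I would compute the Jacobian of $\fattn(x)_a$ block-by-block in closed form; the key computation is the $\bW_E$ block, which carries two indicator terms $\mathbb{1}[c=a]$ (from the outer $\bW_E$) and $\mathbb{1}[c=x]$ (from the inner, tied $\bw_x$). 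Summing the four blocks gives an explicit formula for $\Theta$.

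\textbf{Structural lemma via random-matrix concentration.} The critical observation is that for $x^{test} \neq x_i$, when multiplying out the $\bW_E$-Jacobians of $\fattn(x^{test})_a$ and $\fattn(x_i)_b$, the indicator combination $\mathbb{1}[c=x^{test}]\mathbb{1}[c=x_i]$ vanishes; for the self-pair $(x_i,x_i)$, however, it collapses to $\mathbb{1}[c=x_i]$ and produces an \emph{extra} positive-semidefinite kernel $(\bW_E\bM\bM^T\bW_E^T)_{ab}$. Applying standard concentration for Gaussian random matrix products (so that $\bM^T\bM$, $\bM\bM^T$, $\bW_E\bW_E^T$, and $\bW_{O,h}^T\bW_{O,h}$ all concentrate on scalar multiples of $\bI$, while cross-embedding inner products $\langle \bw_{x^{test}}, \bw_{x_i}\rangle \to 0$), I would derive the limiting form
\begin{equation*}
\Theta(x^{test},x_i)_{ab} \to K_{\times}\,\delta_{ab}, \qquad \Theta(x_i,x_i)_{ab} \to (K_{\times} + K_{\text{self}})\,\delta_{ab},
\end{equation*}
where $K_{\times}$ is driven only by the positional-embedding overlap $\gamma^2\|\bp_1\|^2$ (since the content-embedding cross-term $\bw_{x^{test}}^T(\cdot)\bw_{x_i}$ is asymptotically zero), while $K_{\text{self}}$ carries the additional diagonal contribution $\|\bM^T\bw_a\|^2$ from the self-pair only.

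\textbf{Conclusion and main obstacle.} At initialization the logits are $o(1)$, so the softmaxes concentrate on the uniform distribution and the label-gradients satisfy $g^{test}_a \to \tfrac{1}{m}-\mathbb{1}[a=x^{test}]$ and $g_{i,a}\to \tfrac{1}{m}-\mathbb{1}[a=x_i]$. Substituting the NTK limit gives
\begin{equation*}
\langle \nabla_\btheta\cL_{train},\nabla_\btheta\cL_{test}\rangle \sim -\tfrac{K_{\times}}{m}, \qquad \|\nabla_\btheta\cL_{train}\|^2 \sim \tfrac{K_{\text{self}}}{n}\bigl(1-\tfrac{1}{m}\bigr) + \tfrac{K_{\times}}{n}\bigl(1-\tfrac{n}{m}\bigr),
\end{equation*}
so the constraint $\eta\|\nabla_\btheta\cL_{train}\|^2 = O(1)$ yields $|\pd{\cL_{test}}{t}\mid_{t=0}| \lesssim K_{\times}/K_{\text{self}}$. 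The main obstacle is to verify that $K_{\times}/K_{\text{self}} \to 0$ as $d_{emb}\to\infty$, which requires tracking that the ``self-diagonal'' boost $\bw_{x_i}^T\bM\bM^T\bw_{x_i}$ scales strictly faster in $d_{emb}$ than the cross-embedding contribution $\bu_{x^{test}}^T\bM^T\bM\bu_{x_i}$ (the latter surviving only through the positional-embedding cross-term). A secondary subtlety is confirming that fluctuations of $g_{i,a}$ around its uniform limit, and the off-diagonal $o(1)$ NTK corrections from the $\mathbb{1}[a=x_i]$ and $\mathbb{1}[b=x^{test}]$ terms, contribute only at lower order.
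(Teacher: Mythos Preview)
Your setup, NTK decomposition, and structural lemma closely parallel the paper's argument: it also reduces to $k=1$, computes the limiting NTK block by block (separately for $\bW_O,\bW_V,\bP,\bW_E$, each with its own learning rate), and shows every block concentrates on $\delta_{ab}$ times a scalar of the form $c_1\delta_{x,x'}+c_2\gamma^2$. Your expressions for $\langle\nabla_\btheta\cL_{train},\nabla_\btheta\cL_{test}\rangle\sim -K_\times/m$ and for $\|\nabla_\btheta\cL_{train}\|^2$ are correct. The gap is in your final step.

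The ``main obstacle'' you name---that $K_\times/K_{\text{self}}\to 0$---is \emph{false}. Under the paper's mean-field initialization the embedding rows $\bw_x$ and the positional vector $\bp_1$ are both $d_{emb}$-dimensional Gaussians with per-entry variance $1/d_{emb}$, so $\bw_{x_i}^T\bM\bM^T\bw_{x_i}$ and $\gamma^2\bp_1^T\bM^T\bM\bp_1$ concentrate on quantities of the \emph{same} order in $d_{emb}$. Concretely, every block-NTK limit in the paper is $\delta_{ab}(c_1\delta_{x,x'}+c_2\gamma^2)$ with $c_1,c_2=\Theta(1)$; hence $K_\times=\Theta(\gamma^2)$ and $K_{\text{self}}=\Theta(1)$, and their ratio is a fixed constant independent of $d_{emb}$. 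The scale separation you are hoping for does not exist.

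The paper's mechanism is simpler and is already implicit in your own calculation: it is the \emph{sign}. Because the limiting NTK is $\delta_{ab}$ times a nonnegative scalar and $\sum_a(1/m-\delta_{a,x_i})(1/m-\delta_{a,x^{test}})=-1/m$ whenever $x_i\ne x^{test}$, one obtains directly
\[
\pd{\cL_{test}}{t}\Big|_{t=0}\;\longrightarrow\;+\,\eta\,K_\times/m\;\ge\;0,
\]
so the test loss on the unseen symbol does not decrease at all. Under any learning-rate scaling keeping $-\pd{\cL_{train}}{t}=O(1)$, the paper lands on the bounded nonnegative constant $C/m$. You should stop at the sign argument; the vanishing-ratio route is both unnecessary and unavailable.
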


The proof idea is that since the input string has length $k = 1$, the architecture simplifies: all softmaxes in the attention heads output 1, and the network is a sum of attention heads of the form $\bX \bW_E \bW_V \bW_O^T \bW_E^T$. At early times the evolution of the weights $\bW_V \bW_O^T$ will roughly lie in the span of $\{ \bW_E^T \be_{x_i} \be_{x_i}^T \bW_E\}_{i \in [n]}$, which as the embedding dimension becomes large will be approximately orthogonal to the direction $\bW_E^T \be_{x^{test}} \be_{x^{test}}^T \bW_E$ that would lower the test loss. This suggests the following modification to transformers allows them to copy symbols never seen at training:

\begin{figure}
\centering
\begin{tabular}{@{}ccc@{}}
~\quad (a) Vanilla transformer & & (b) Transformer with $\bW_V\bW_O^T + b\bI$ \\
\includegraphics[trim={0 0 8.5cm 0},clip,scale=0.2]{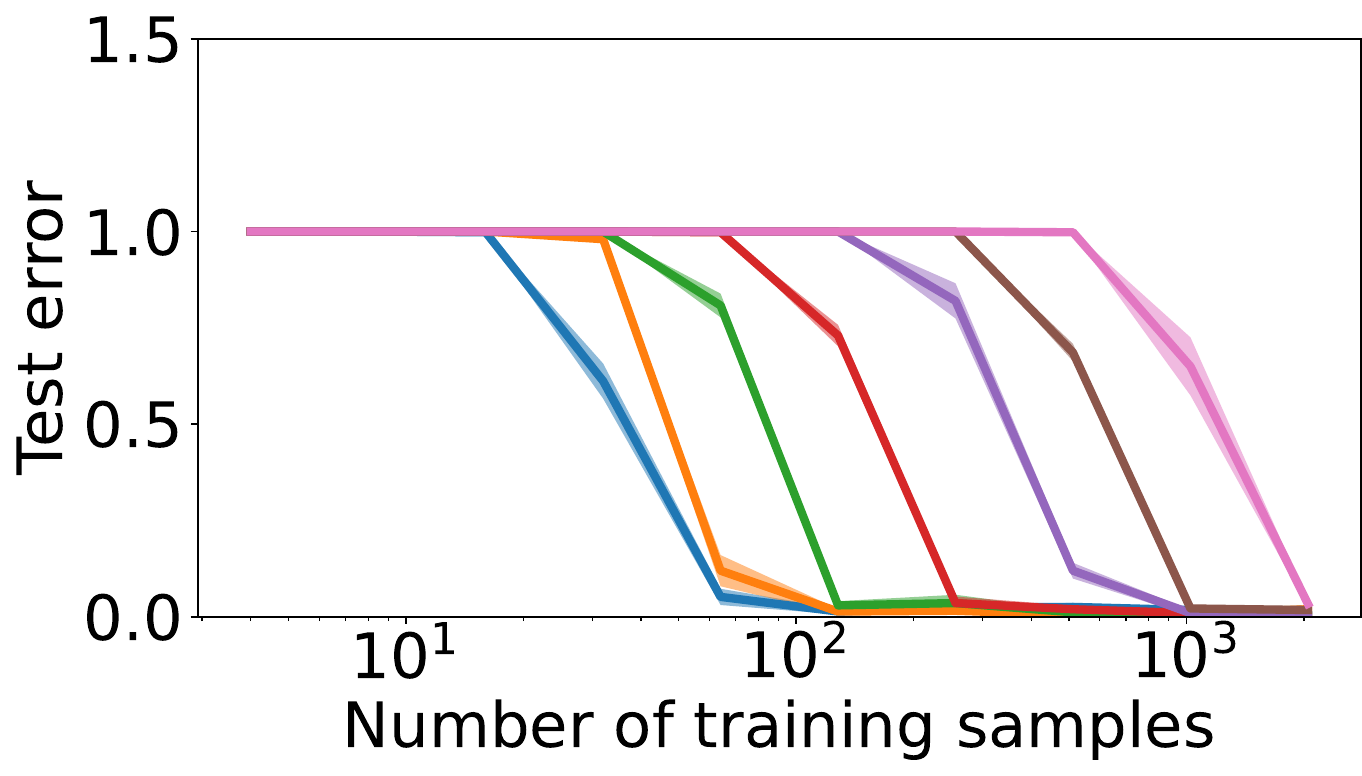} & \includegraphics[trim={23.4cm 0 0 0},clip,scale=0.2]{figs/transformer_copy_fail.pdf} & \includegraphics[scale=0.2]{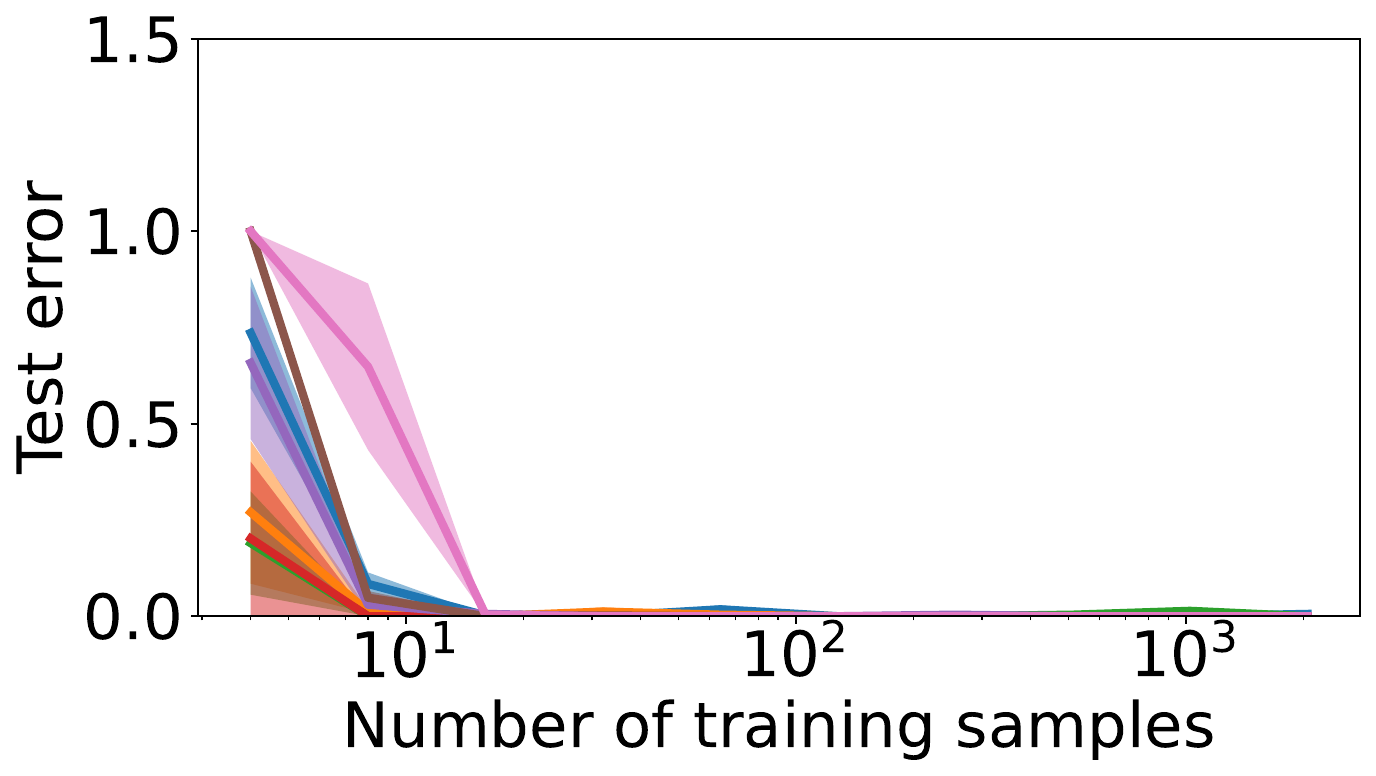}
\end{tabular}
\caption{{\small (a) Transformers fail on the copying task as embedding dimension $d_{emb}$ grows (Theorem~\ref{thm:copy-failure}); \ \ \ \ \ \ \  (b) Success when reparametrizing $\bW_V\bW_O^T$ as $\bW_V\bW_O^T+b\bI$ (Theorem~\ref{thm:copy-success}). Details in Appendix~\ref{app:teaser-details}.}}\label{fig:copy-failure-success}
\end{figure}

\begin{theorem}[Adding one parameter allows copying]\label{thm:copy-success}
After reparametrizing the attention \eqref{eq:attn-mechanism} so that in each head $\bW_V\bW_{O}^T$ is replaced by $\bW_{V} \bW_{O}^T + b \bI$ where $b$ is a trainable parameter, there are learning rates such that $-\pd{\cL_{train}}{t} \mid_{t = 0} = O(1)$ and $-\pd{\cL_{test}}{t} \mid_{t = 0} = \Omega(1)$ as $d_{emb} \to \infty$.
\end{theorem}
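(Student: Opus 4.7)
The plan is to exhibit learning rates for which only the $H$ new parameters $\{b_h\}_{h\in[H]}$ are updated while all other parameters are frozen, and then show that moving these parameters along $-\nabla \cL_{train}$ simultaneously decreases $\cL_{test}$ at rate $\Omega(1)$. Concretely, I would set $\eta_{b_h} = \eta > 0$ for every head and $\eta_\theta = 0$ for every other parameter $\theta$. Under gradient flow this gives the exact identities
\begin{align*}
-\pd{\cL_{train}}{t}\Big|_{t=0} &= \eta \sum_{h\in[H]} \Big(\pd{\cL_{train}}{b_h}\Big|_{0}\Big)^2, \\
-\pd{\cL_{test}}{t}\Big|_{t=0} &= \eta \sum_{h\in[H]} \pd{\cL_{train}}{b_h}\Big|_{0}\cdot \pd{\cL_{test}}{b_h}\Big|_{0},
\end{align*}
so the task reduces to computing two scalar gradients per head at initialization and showing they are of the same sign and the same order of magnitude (and then choosing $\eta$ to balance the constants).

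Next, I would plug the $k=1$ simplified architecture \eqref{eq:main-simplified-attention} with the $+b_h\bI$ modification into the derivative. The crucial observation is that
\begin{align*}
\pd{\fattn(\be_x^T;\btheta)}{b_h} = \bW_E\bW_E^T\be_x + \gamma\bW_E\bP^T,
\end{align*}
which does not depend on the random head matrices $\bW_{V,h},\bW_{O,h}$ at all: it is exactly the same universal vector for every head and every input $x$, shifted by a copy of $\be_x$ in embedding space. By contrast, in the proof of Theorem~\ref{thm:copy-failure} the corresponding per-parameter direction for $\bW_{V,h}$ or $\bW_{O,h}$ is a rank-one product whose image lies in the span $\{\bW_E^T\be_{x_i}\}_i$, which is nearly orthogonal to the test direction $\bW_E^T\be_{x^{test}}$ as $d_{emb}\to\infty$.

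The remaining computation is a short calculation around initialization. Initialize $b_h = 0$ and $\bW_{V,h},\bW_{O,h}$ Gaussian with small scale so that $\bW_{O,h}^T\bW_{V,h}$ has operator norm $o(1)$; then all logits are near zero and the softmax derivatives satisfy $\bp_i - \be_{y_i} = \tfrac{1}{m}\bone - \be_{x_i} + o(1)$ for training samples and similarly for the test point. With $\bW_E \in \R^{m\times d_{emb}}$ having i.i.d. entries of variance $\sigma^2$, standard Gaussian concentration gives $\be_t^T\bW_E\bW_E^T\be_x = d_{emb}\sigma^2 \mathbb{1}[t=x] + O_{\P}(\sqrt{d_{emb}\sigma^4})$, which yields
\begin{align*}
\pd{\cL_{train}}{b_h}\Big|_0 \;\approx\; -\Big(1 - \tfrac{1}{m}\Big) d_{emb}\sigma^2 \;\approx\; \pd{\cL_{test}}{b_h}\Big|_0
\end{align*}
up to mean-zero fluctuations of smaller order, plus a positional-embedding contribution proportional to $\gamma \bW_E\bP^T$ that is input-independent and therefore contributes the same term to both gradients (and vanishes on average for the train gradient when $n$ is not too small). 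Choosing $\eta = \Theta(1/d_{emb}^2\sigma^4)$ then makes $-\pd{\cL_{train}}{t}|_0 = \Theta(H)$ and $-\pd{\cL_{test}}{t}|_0 = \Theta(H)$, yielding both bounds.

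The main obstacle is \emph{not} the arithmetic but the concentration step: one must verify that both $\bW_E\bW_E^T\be_{x_i}$ and $\bW_E\bW_E^T\be_{x^{test}}$ concentrate sharply enough, and that the unwanted cross terms coming from the residual $\bW_{O,h}^T\bW_{V,h}$ at initialization (and from the positional shift $\gamma \bW_E\bP^T$ in cross-entropy) do not flip the sign of the alignment $\pd{\cL_{train}}{b_h}\cdot\pd{\cL_{test}}{b_h}$. Handling these two perturbations rigorously, and verifying that the softmax linearization is valid in a neighborhood of initialization large enough to read off the $t=0$ derivatives, is where I expect the technical work to concentrate; once they are controlled, the sign and magnitude argument above closes the proof.
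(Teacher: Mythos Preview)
Your approach is essentially the paper's: freeze all weights except the new scalars $\{b_h\}$, compute $\partial\fattn/\partial b_h = \bW_E(\bW_E^T\be_x + \gamma\bP^T)$, and use that under the paper's initialization $\bW_E\bW_E^T \to \bI$ so the train and test gradients in $b_h$ coincide to leading order. The paper phrases this via the NTK component $K_{ij,b}(\bX,\bX') \to \delta_{i,x_1}\delta_{j,x_1'}$, but that is exactly your per-parameter gradient alignment written in kernel form.

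Two small corrections. First, with the paper's scaling $\bW_E,\bP$ have i.i.d.\ $N(0,1/d_{emb})$ entries, so $\be_t^T\bW_E\bW_E^T\be_x \to \mathbb{1}[t=x]$ and $\bW_E\bP^T \to 0$ \emph{deterministically} by the law of large numbers; you do not need a concentration argument, and the positional term and the residual $\bW_{O,h}^T\bW_{V,h}$ contribution to the logits both vanish outright in the limit. The ``main obstacle'' you flag is therefore not present. Second, your choice $\eta = \Theta(1/d_{emb}^2\sigma^4)$ gives $-\partial\cL_{train}/\partial t|_0 = \Theta(H)$, not $O(1)$; since the paper also sends $H\to\infty$, you must additionally take $\eta \propto 1/H$ (the paper takes $\eta_b = \Theta(1/H)$), after which both rates are $\Theta(1)$ as required.
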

Figures~\ref{fig:assoc-memory-wildcard} and ~\ref{fig:copy-failure-success} illustrate the benefit of this additional per-head parameter on the copying task. It is not equivalent to adding a trainable skip connection as in ResNet 
\citep{he2016deep}. Instead, the addition of $b_h \bI$ encodes an \textit{attention-modulated} skip connection that allows copying tokens between the transformer's streams. A related modification of adding a head with the hardcoded $\bX\bX^T$ as its attention matrix was proposed in \cite{zhang2022unveiling}.

\section{Experiments}\label{sec:experiments}

Figures~\ref{fig:assoc-memory} and \ref{fig:assoc-memory-wildcard} (and additional experiments in Appendix~\ref{app:additional-experiments}) show that our reparametrizations can give a significant data-efficiency benefit on template tasks. Figure~\ref{fig:gpt2-wikitext} shows they can also give improvements on real data. In Figure~\ref{fig:gpt2-pretrained-vs-from-scratch}, we see that pretraining outperforms random initialization on a template task. This might be explained by several heads of the pretrained model with diagonals stronger from other weights (originally observed in \citep{trockman2023mimetic}). These learned diagonals resemble our proposed transformer modifications and so might be driving the data-efficiency of fine-tuning a pretrained model. Appendix~\ref{app:additional-experiments} provides extensive experiments on the effect of hyperparameters, inductive biases of different models, and varying levels of task difficulty.
\begin{figure}[h]
\centering
\begin{tabular}{c|c|c}
Dataset & GPT-2 & GPT-2 + trainable identity scalings (ours) \\
\hline
Wikitext2 & 64.00 & \textbf{60.46} \\
\hline
Wikitext103 & 16.83 & \textbf{16.40} \textbf{}
\end{tabular}
\caption{{\small Perplexity of GPT-2 trained from random initialization with Adam learning rate 3e-4 for 20 epochs on Wikitext (smaller perplexity is better). GPT-2 has 117M parameters, and we add an extra 288 parameters (2 per head). Interestingly, even though the task is Wikipedia modeling, and therefore is not a pure reasoning task, the transformer modifications still give an improvement.}}\label{fig:gpt2-wikitext}
\end{figure}

\begin{figure}[h]
\centering
\begin{tabular}{@{}ccc@{}}
\begin{tabular}{@{}c@{}} Effect of pretraining \end{tabular} & \begin{tabular}{@{}c@{}}$W_KW_Q^T$ Head 12, Layer 5\end{tabular} & \begin{tabular}{@{}c@{}}$W_VW_O^T$ Head 12, Layer 11\end{tabular} \\
\begin{tabular}{@{}c@{}}\includegraphics[scale=0.2]{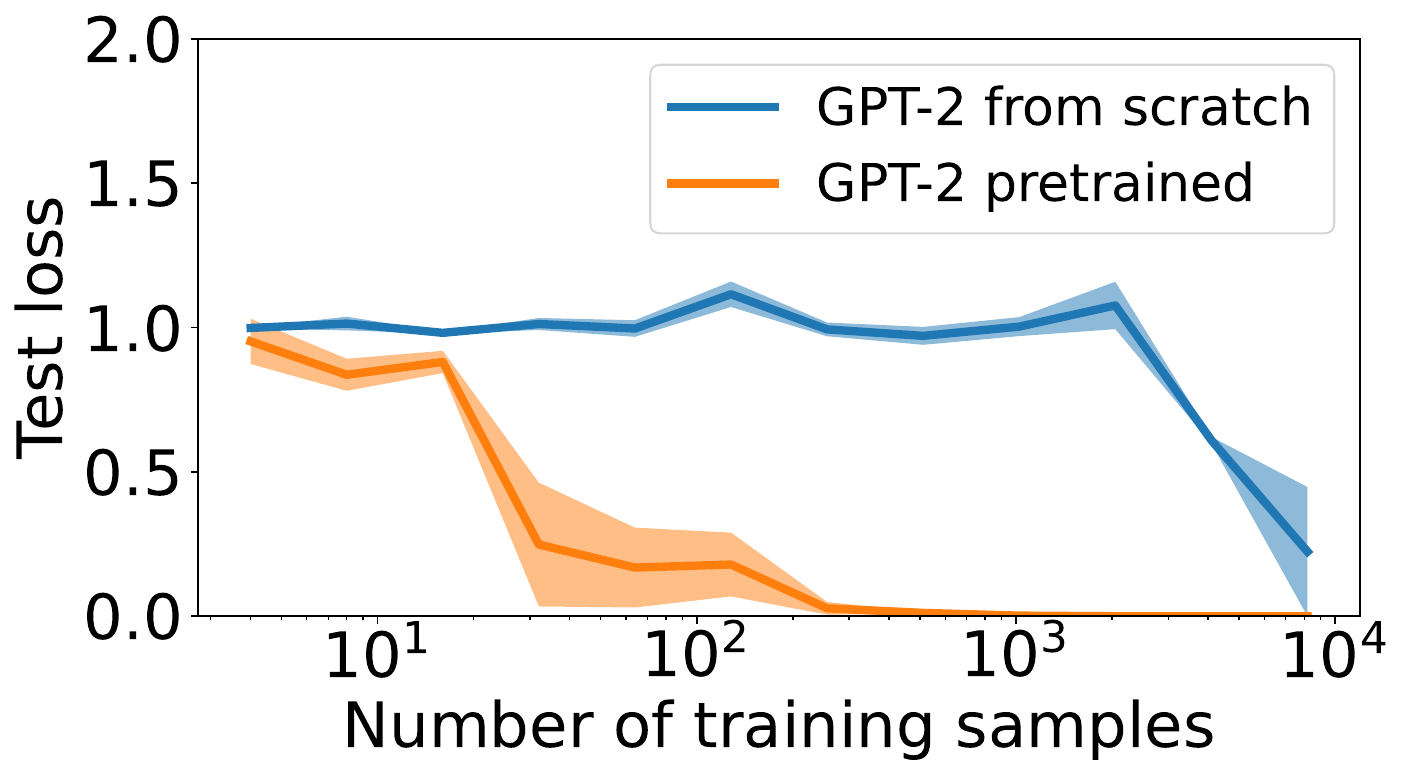}\end{tabular} & \begin{tabular}{@{}c@{}}\includegraphics[scale=0.2]{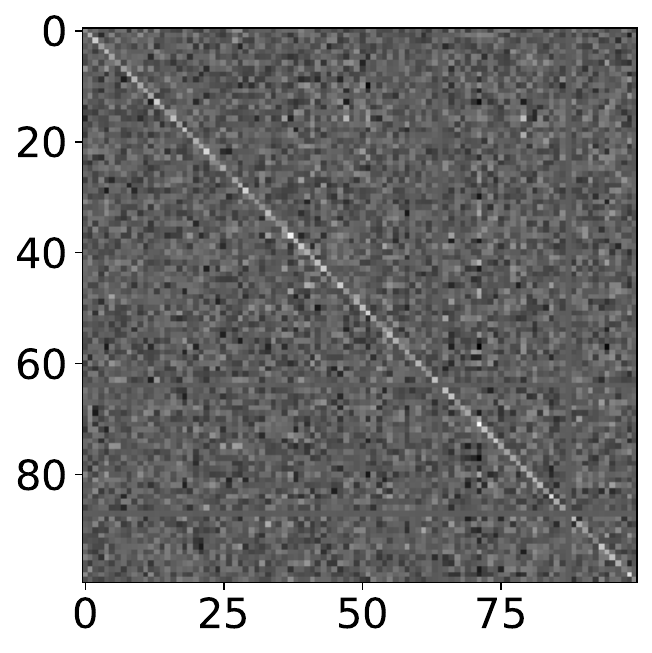}\end{tabular} & \begin{tabular}{@{}c@{}}\includegraphics[scale=0.2]{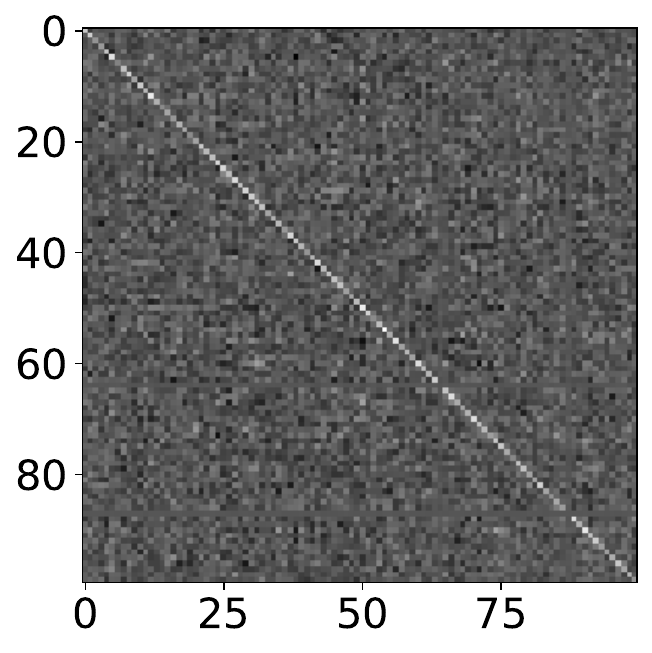}\end{tabular}
\end{tabular}
\caption{{\small Left: Pretrained versus randomly-initialized GPT-2 test loss when fine-tuned on $\alpha \beta \alpha$ vs. $\alpha\beta\beta$ template task. Right: some GPT-2 pretrained heads have strong diagonals (zoomed to 100x100 top-left corner).}}\label{fig:gpt2-pretrained-vs-from-scratch}
\end{figure}

\section{Discussion}
We show that transformers are a universal architecture for template tasks in the regression setting: when trained with gradient descent with enough training data they learn to reason relationally. However, transformers are not optimal -- empirically they require large amounts of data to learn basic tasks, and in the next-token-prediction setting they fail at copying unseen symbols. Thus, we have proposed architectural modifications to improve their inductive bias towards logical reasoning. It seems promising to explore other reasoning tasks (for example, reasoning with syllogisms, reasoning by symmetry, and compositional reasoning). It may also be fruitful to study data augmentation approaches (e.g., concatenating the tensorization $\bX\bX^T$ to the input, so as to encourage use of relational information). Additionally, tight quantitative upper and lower bounds on the data and width of the architecture needed, depending on the template task, are an interesting open direction.

\printbibliography

\clearpage

\tableofcontents

\appendix

\section{Details for figures in main text}\label{app:teaser-details}

Code is available at \url{https://github.com/eboix/relational-reasoning/}.

\paragraph{Psychometric tasks}  We describe how the tasks in Figure~\ref{fig:psychometric} fall under the template framework.
\begin{itemize}
    \item \textit{(a) Distribution of 3}. The task is to complete the bottom row so that the set of elements is the same as in the top row (answer: 2). To input this task into a language model, a token is used to represent each symbol. The example in the figure matches template ``$\alpha \beta \gamma\ \gamma \alpha \square \ \epsilon \alpha \beta \gamma$'', with label +2. There are other templates for this task, corresponding to different arrangements of the objects, such as ``$\alpha \beta \gamma\ \beta \gamma \square \ \alpha \gamma \epsilon \beta$'' with label +1, and ``$\alpha \beta \gamma \ \gamma \beta \square \ \epsilon \beta \alpha \gamma$'' with label +3. In total there are 144 templates, since the first 3 elements of the template are always $\alpha \beta \gamma$, and then there are 6 choices for the permutation in the next row, and finally 24 choices for the permutation in the final row.
    \item \textit{(b) Relational match-to-sample}. The task is to match the first row to one of two alternative patterns (answer: 1). Again, a token is used to represent each symbol. The example in the figure matches ``$\alpha \beta \beta\ \gamma \delta\delta \ \epsilon\epsilon \tau$'' with label +1. A simple combinatorial calculation gives a total of 40 templates (5 possible patterns in the first row, times 2 choices for whether the first option or the second option is correct, times 4 choices for the pattern of alternative option).
    \item \textit{(c) Raven's progressive matrices}. A standard Raven's progressive matrices task \citep{raven1938progressive} (answer: three dark circles). For each of the dimensions of shape, number, and color, we have a ``distribution of 3'' task with a symbolic label. For example, for the shapes in the figure, the task is ``$\alpha \beta \gamma\ \beta \gamma \alpha \ \gamma \beta ?$'' with label $\alpha$. Since another possibility is for each row to be constant (as in, e.g., the case of numbers), another possible template is ``$\alpha \alpha \alpha \ \beta\beta\beta \ \gamma \gamma ?$'' with label $\gamma$, and so there is a total of 36+1 = 37 possible templates per dimension. This discussion assumes that the only patterns in the progressive matrices are distribution of 3, and constant. If progressions are also allowed as in \cite{webb2023emergent}, these can be incorporated by adding corresponding templates.
\end{itemize}

\paragraph{Transformer performance} In all experiments, standard transformer architectures are used. In Figure~\ref{fig:assoc-memory}, The architecture is a 2-layer transformer with 16 heads per layer, embedding dimension 128, head dimension 64, MLP dimension 256, trained with Adam with learning rate 1e-3 and batch-size 1024. The $n$ training samples are chosen by picking the variable names at random from an alphabet of $n$ tokens. The test set is the same two programs but with disjoint variable names. The reported error bars are on average over 5 trials. The learning rate for each curve is picked as the one achieving best generalization in $\{10^{-5},10^{-4},10^{-3},10^{-2}\}$. In Figure~\ref{fig:assoc-memory-wildcard}, the setting is the same except that the transformer is 4-layer transformer and has embedding dimension 512. In Figure~\ref{fig:copy-failure-success} the same hyperparameters as in Figure~\ref{fig:assoc-memory} are used. In order to measure the generalization performance of the learned model on unseen symbols, we evaluate it on a test set and a validation set which each consist of 100 samples drawn in the same way as the training dataset, but each using a disjoint alphabet of size 100. Therefore, there is no overlap in the support of the train, test, and validation distributions. We use the validation loss to select the best epoch of training out of 1000 epochs. We report the test loss on this saved model.

\section{Additional experiments}\label{app:additional-experiments}

We report extensive additional experiments probing the template task framework. In each of these, the training dataset consists of $n$ random training samples. Each sample is drawn according to a template distribution. The following are template tasks on which we test.
\begin{itemize}
    \item \textit{$\alpha\beta\alpha$ vs. $\alpha\beta\beta$ task}. Uniform on two templates $\alpha\beta\alpha$ and $\alpha\beta\beta$ with labels 1, -1 respectively and $\alpha$ and $\beta$ are wildcards.
    \item \textit{$\alpha\beta\alpha\beta$ vs. $\alpha\alpha\beta\beta$ task}. Same as above, except with templates $\alpha\beta\alpha\beta$ and $\alpha\alpha\beta\beta$.
    \item \textit{Length-$k$ majority task}. Uniform on $2^{k-1}$ templates $\alpha \times \{\alpha,\beta\}^{k-1}$ where $\alpha$ and $\beta$ are wildcards. A template $\bz$ has label 1 if its first token occurs in the majority of the rest of the string, and -1 otherwise. Namely, $f_*(\bz) = \begin{cases} 1, & |\{i : z_1 = z_i\}| > (k+1)/2 \\ -1, & \mbox{otherwise} \end{cases}$.
    \item \textit{Random template task}. A certain number $r$ of templates are drawn uniformly from $(\cW \cup \cX)^k$, conditioned on being pairwise distinct. The task is the uniform distribution over these $r$ templates, with random Gaussian labels centered and scaled so that the trivial MSE is 1.
\end{itemize}
For any of these tasks, we generate $n$ training samples as follows. We substitute the wildcards for regular tokens using a randomly chosen injective function $s : \cW \to \cX$ where $\cX$ is an alphabet of size $n$ (which is the same size as the number of samples). For example, if a given sample is generated from template $\alpha\beta\alpha$ with substitution map $s$ mapping $s(A) = 12$, $s(B) = 5$, then the sample will be $[12, 5, 12]$. Error bars are over 5 trials, unless otherwise noted.

\subsection{Effect of transformer hyperparameters}
We test a standard transformer architecture on the $\alpha\beta\alpha$ vs. $\alpha\beta\beta$ task, varying some of the hyperparameters of the transformer to isolate their effect while keeping all other hyperparameters fixed. The base hyperparameters are depth 2, embedding dimension 128, head dimension 64, number of heads per layer 16, trained with Adam with minibatch size 1024 for 1000 epochs. Our experiments are as follows:
\begin{itemize}
    \item \textit{Learning rate and $n$}. In Figure~\ref{fig:lr-vs-n} we vary the learning rate and $n$.
    \item \textit{Learning rate and depth}. In Figure~\ref{fig:lr-vs-depth-512} and Figure~\ref{fig:lr-vs-depth-1024}, we vary the learning rate and the depth, for $n = 512$ and $n = 1024$, respectively.
    \item \textit{Learning rate and number of heads}. In Figure~\ref{fig:lr-vs-numheads-512} and \ref{fig:lr-vs-numheads-1024}, we vary the learning rate and number of heads, for $n = 512$ and $n = 1024$, respectively.
    \item \textit{Learning rate and embedding dimension}. In Figure~\ref{fig:lr-vs-embeddim-1024} we vary the learning rate and embedding dimension for $n = 1024$.
    \item \textit{Learning rate and batch size}. In Figure~\ref{fig:lr-vs-batchsize-512}, we vary the learning rate and batch-size for $n = 512$. In Figure~\ref{fig:n-vs-batchsize-lr0.001} we vary the batch-size and $n$ for learning rate $0.001$.
    \item \textit{Training just the last layer}. In Figure~\ref{fig:random-features}, we train just the last layer, and see that the network does learn to generalize out of distribution, as predicted by our theory. However, the number of samples and number of epochs needed is larger than when all parameters are trained. We train for 10000 epochs  and have 64 heads per layer in this experiment.
\end{itemize}

\subsection{Effect of complexity of task}
We test an out-of-the-box transformer architecture with depth 2, embedding dimension 128, head dimension 64, number of heads 16, trained with Adam with batch-size 1024 for 1000 epochs, on various template tasks.
\begin{itemize}
    \item \textit{Comparing difficulty of various tasks}. Figure~\ref{fig:various-tasks-vs-n} we plot the performance on various simple tasks.
    \item \textit{Random tasks}. In Figures~\ref{fig:randomtemplate-length},~\ref{fig:randomtemplate-Wsize}, \ref{fig:randomtemplate-Rsize}, and \ref{fig:randomtemplate-ntempl}, we test on random template tasks, and investigate the effects of template length, wildcard alphabet size, regular token alphabet size, number of templates.
\end{itemize}

\subsection{Effect of inductive bias of model}

We provide experiments probing the effect of the inductive bias of the model:
\begin{itemize}
\item \textit{Different architectures}. In Figure~\ref{fig:diff-architectures}, we plot the test loss for different architectures on the $\alpha\beta\alpha$ vs. $\alpha\beta\beta$ template task, including transformers with trainable identity perturbations to $W_QW_K^T$, to $W_VW_O^T$, to both $W_QW_K^T$ and $W_VW_O^T$, or to neither. Figure~\ref{fig:parametrization-majority} illustrates on the beneficial effect of the transformer modification for the majority task with different lengths, lowering the amount of data needed by an order of magnitude.
\item \textit{Size of model}. In Figure~\ref{fig:gpt2-small-medium-large} we compare the test loss of fine-tuning small, medium and large pretrained GPT-2 networks on the $\alpha\beta\alpha$ vs. $\alpha\beta\beta$ template task.
\item \textit{MLP with $XX^T$ data augmentation vs. transformer}. In Figure~\ref{fig:mlp-concat-XXT}, we compare the test loss of a transformer with the test loss of an MLP where the input data has been augmented by concatenating $\mathrm{vec}(XX^T)$, which is a data augmentation that improves performance under the NTK criterion similarly to the discussion in Section~\ref{sec:wkwq-suggestion} and the discussion section.
\end{itemize}

\begin{figure}[h]
\begin{tabular}{@{}c@{}c@{}}
\includegraphics[scale=0.5]{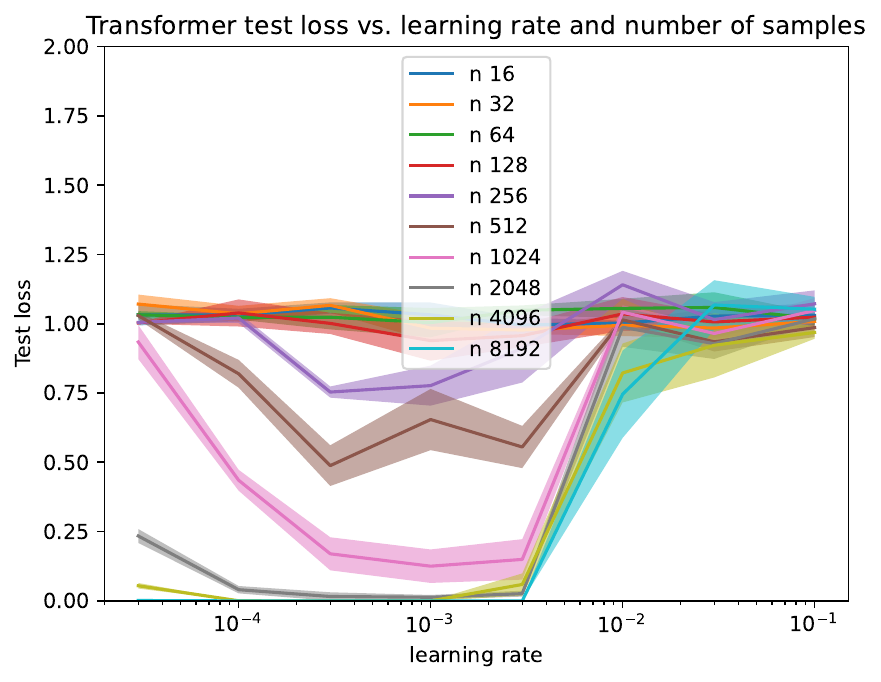} & 
\includegraphics[scale=0.5]{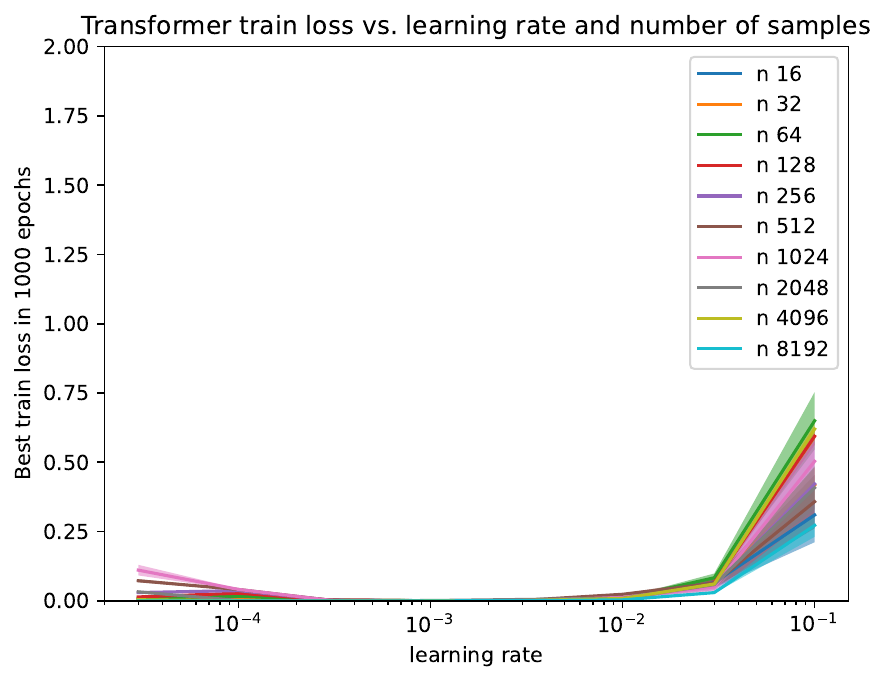}
\end{tabular}
\caption{Learning rate versus $n$ = number of samples = training alphabet size. Taking too large or too small of a learning rate can hurt generalization even when the train loss is close to zero.}\label{fig:lr-vs-n}
\end{figure}

\begin{figure}[h]
\begin{tabular}{@{}c@{}c@{}}
\includegraphics[scale=0.5]{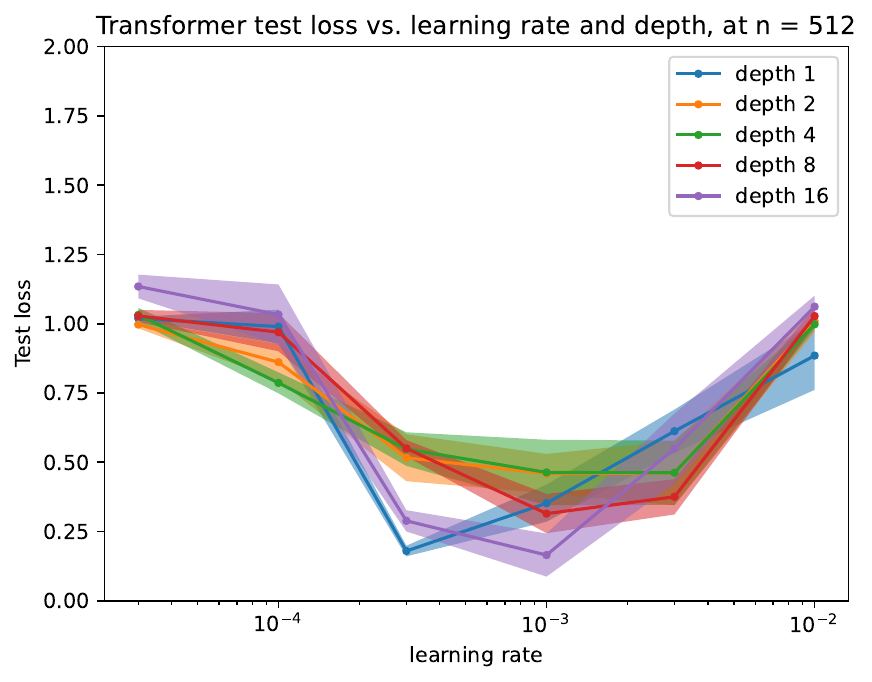} & 
\includegraphics[scale=0.5]{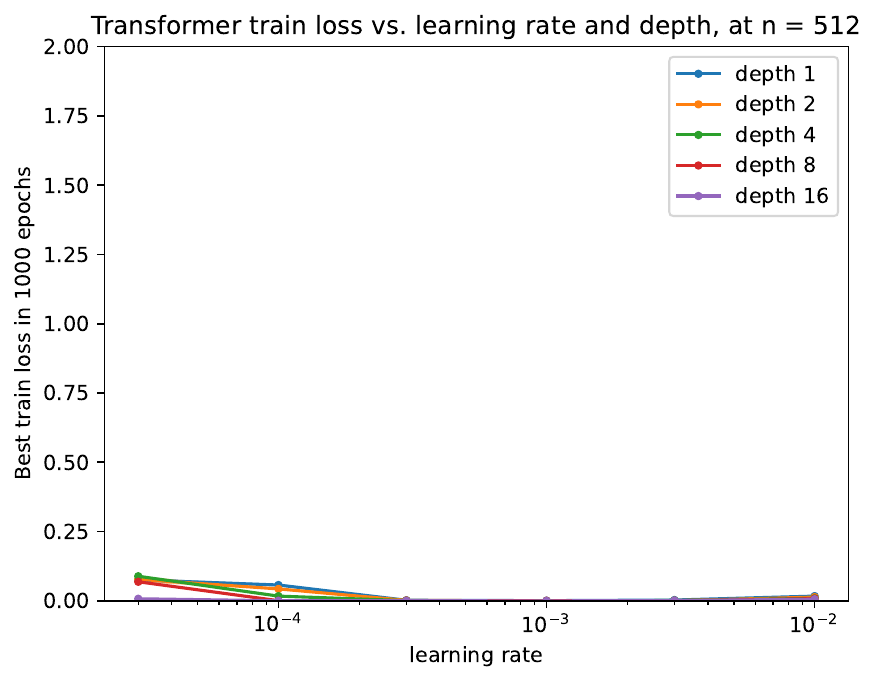} \\
\end{tabular}
\caption{Learning rate vs. depth at $n = 512$. No clear relationship between depth and generalization. Too large or too small of a learning rate can hurt generalization.}\label{fig:lr-vs-depth-512}
\end{figure}

\begin{figure}[h]
\begin{tabular}{@{}c@{}c@{}}
\includegraphics[scale=0.5]{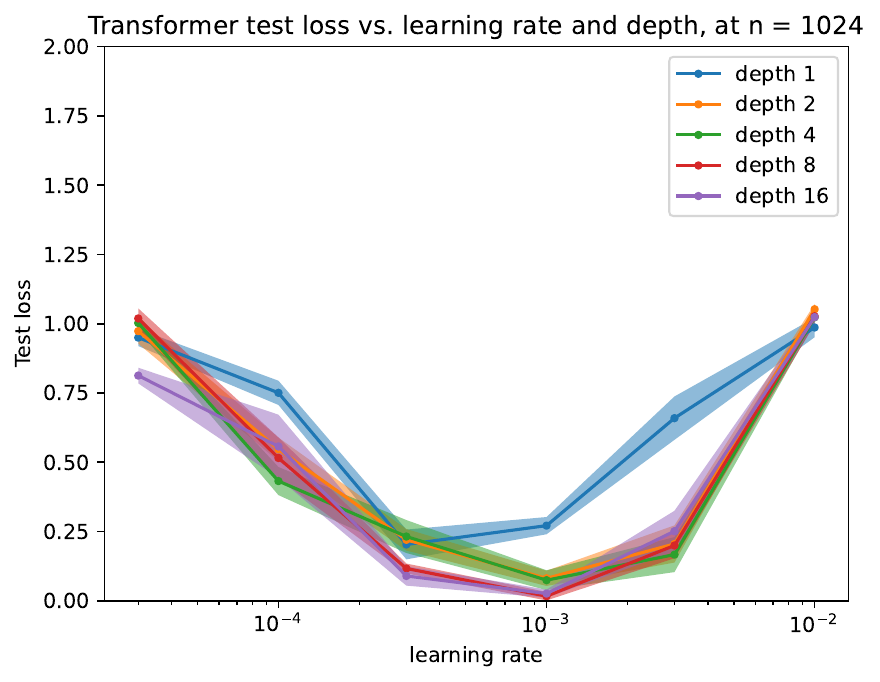} & 
\includegraphics[scale=0.5]{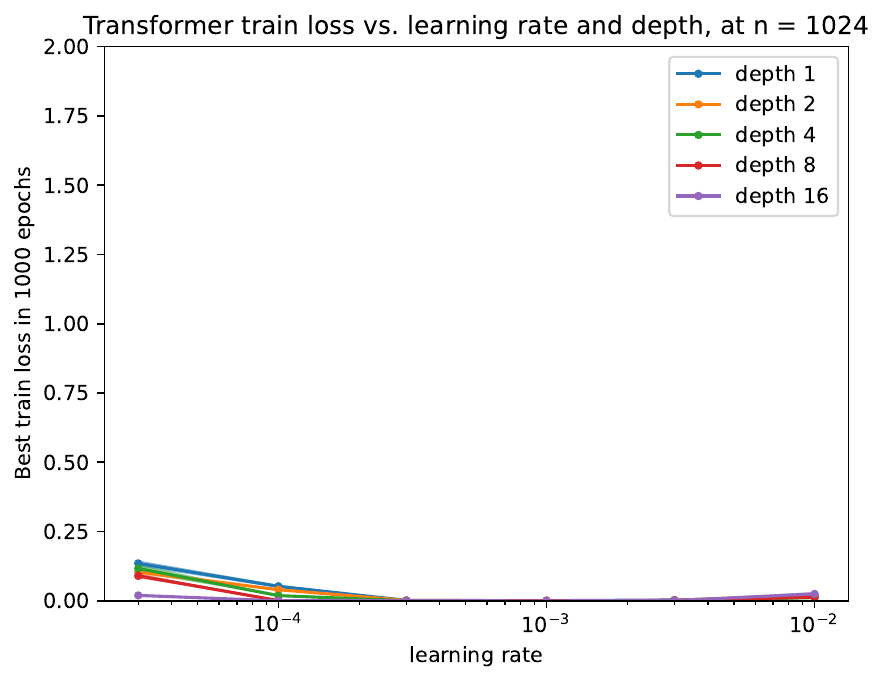} \\
\end{tabular}
\caption{Learning rate vs. depth at $n = 1024$. Unlike $n = 512$ case, in previous figure, larger depth typically performs better.}\label{fig:lr-vs-depth-1024}
\end{figure}

\begin{figure}[h]
\begin{tabular}{@{}c@{}c@{}}
\includegraphics[scale=0.5]{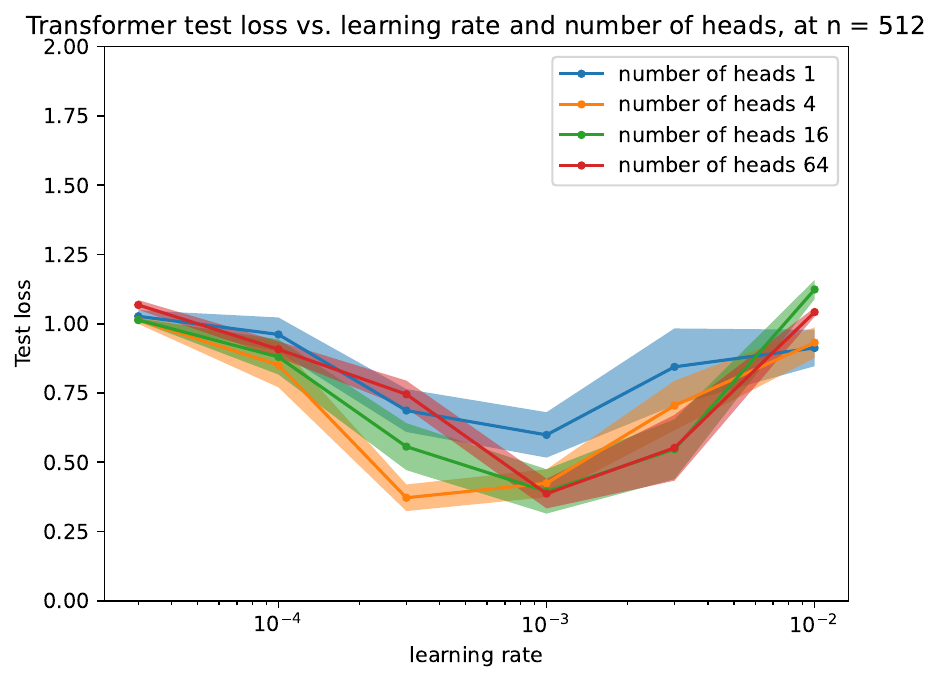} & 
\includegraphics[scale=0.5]{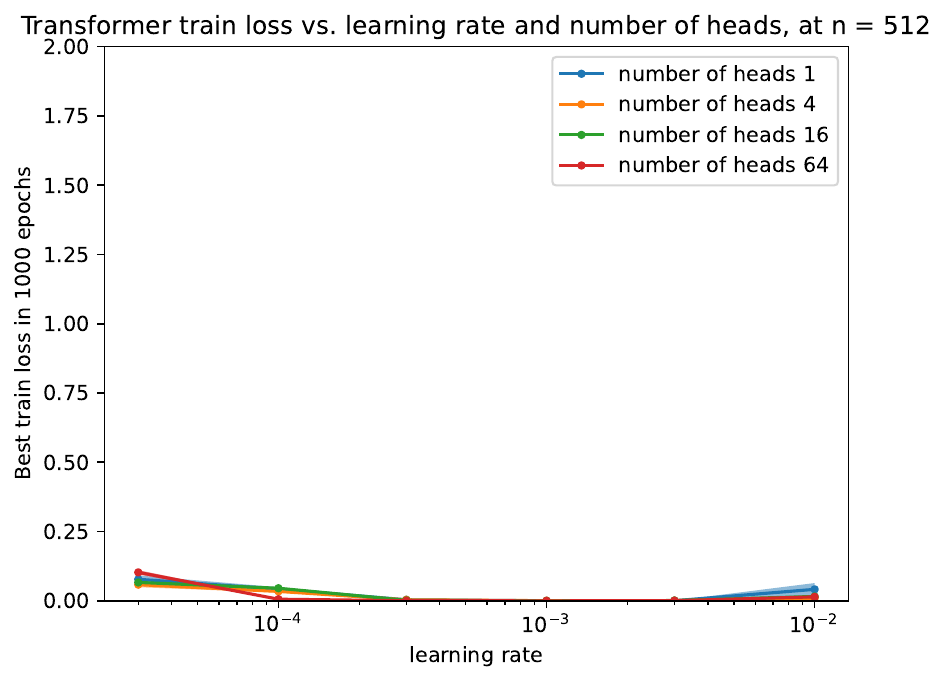} \\
\end{tabular}
\caption{Learning rate vs. number of heads per layer at $n = 512$. More heads are better than one head.}\label{fig:lr-vs-numheads-512}
\end{figure}

\begin{figure}[h]
\begin{tabular}{@{}c@{}c@{}}
\includegraphics[scale=0.5]{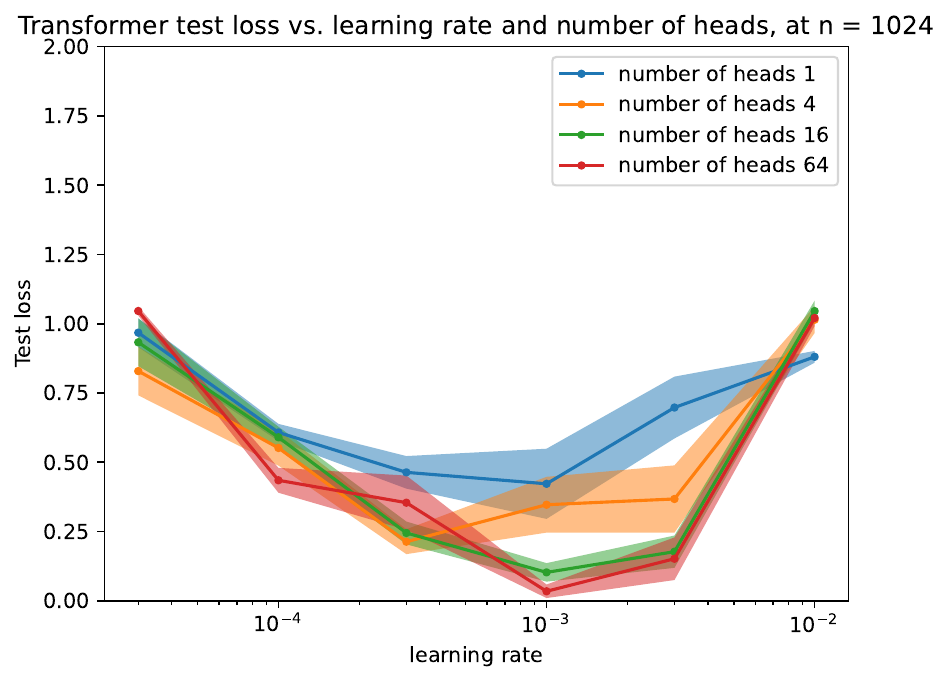} & 
\includegraphics[scale=0.5]{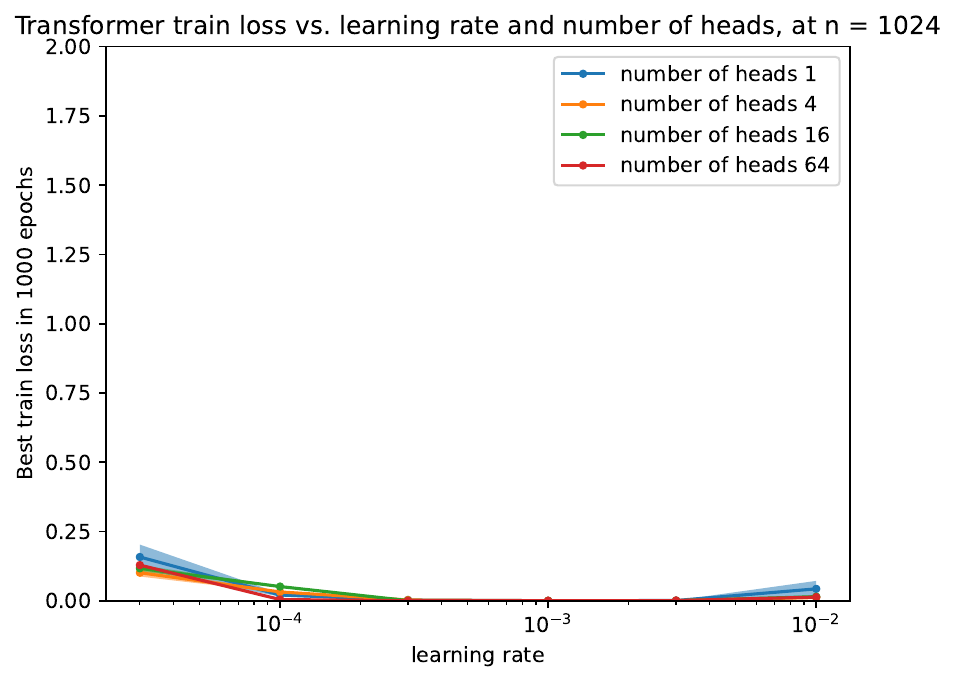} \\
\end{tabular}
\caption{Learning rate vs. number of heads at $n = 1024$. More heads are better.}\label{fig:lr-vs-numheads-1024}
\end{figure}

\begin{figure}[h]
\begin{tabular}{@{}c@{}c@{}}
\includegraphics[scale=0.5]{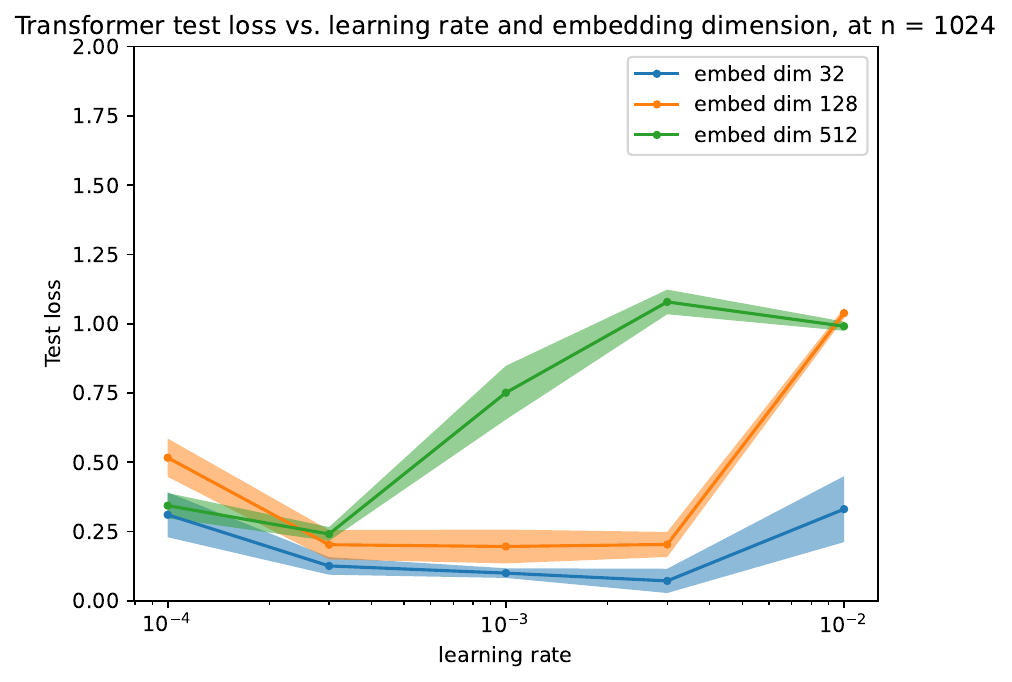} & 
\includegraphics[scale=0.5]{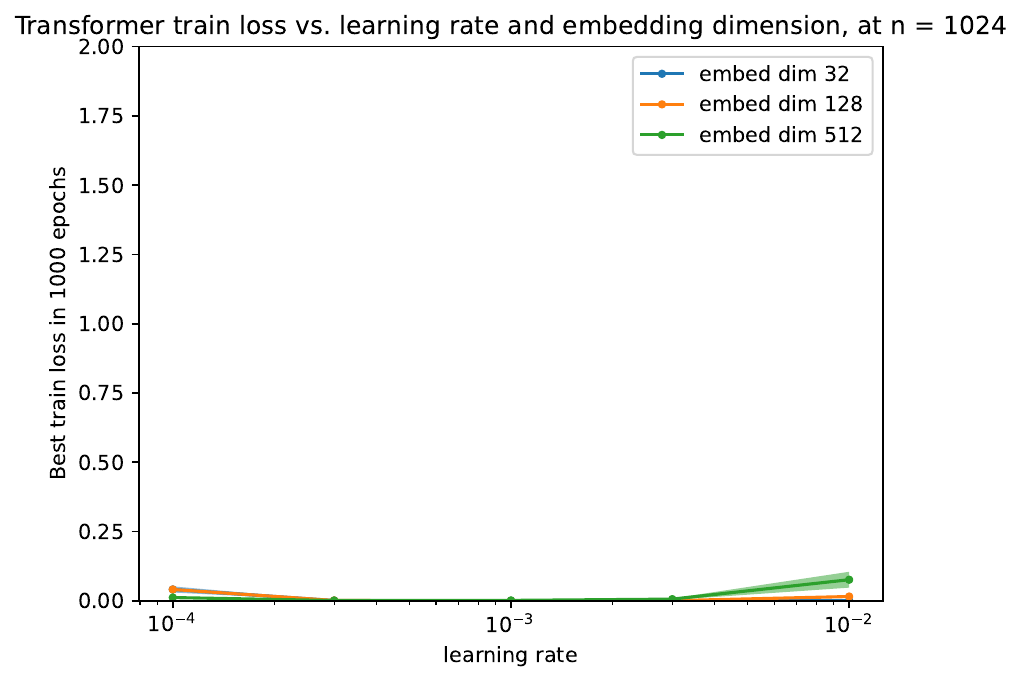} \\
\end{tabular}
\caption{Learning rate vs. embedding dimension at $n = 1024$. Smaller embedding dimension is generally better.}\label{fig:lr-vs-embeddim-1024}
\end{figure}

\begin{figure}[h]
\begin{tabular}{@{}c@{}c@{}}
\includegraphics[scale=0.5]{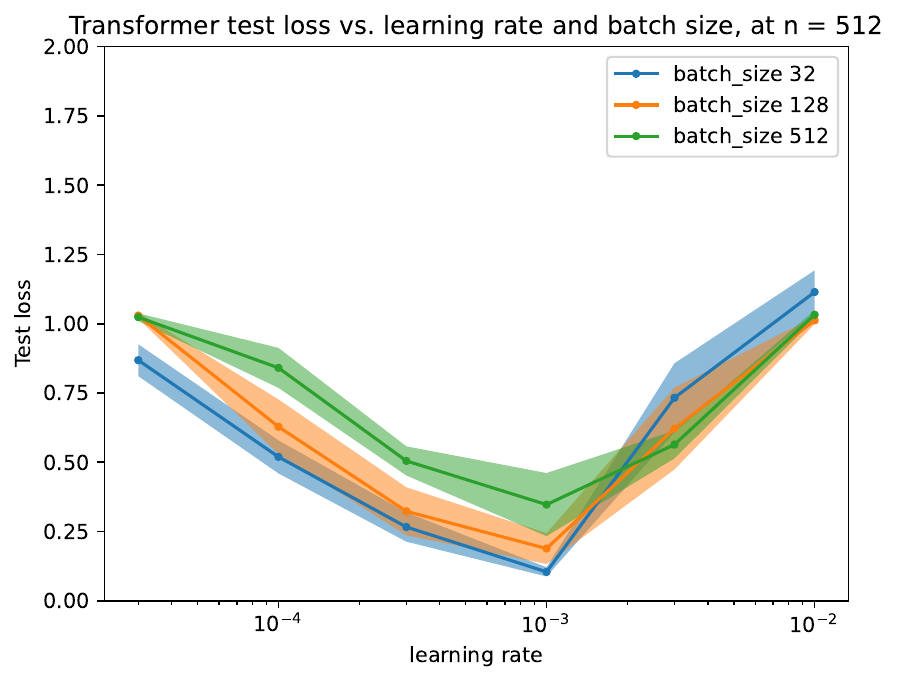} & 
\includegraphics[scale=0.5]{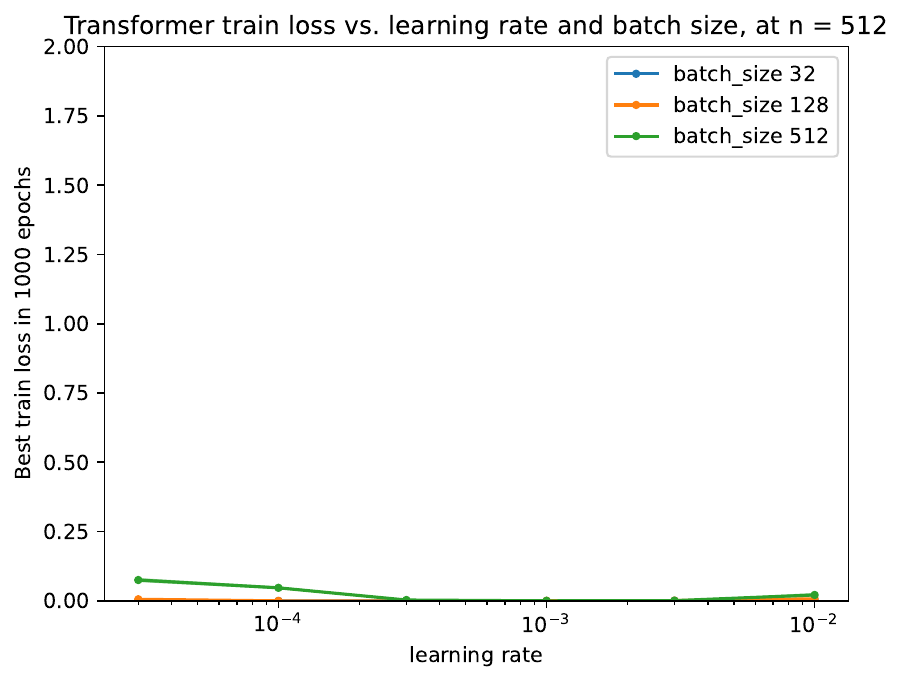} \\
\end{tabular}
\caption{Learning rate vs. batch-size at $n = 512$. Smaller batch size is better.}\label{fig:lr-vs-batchsize-512}
\end{figure}

\begin{figure}[h]
\begin{tabular}{@{}c@{}c@{}}
\includegraphics[scale=0.5]{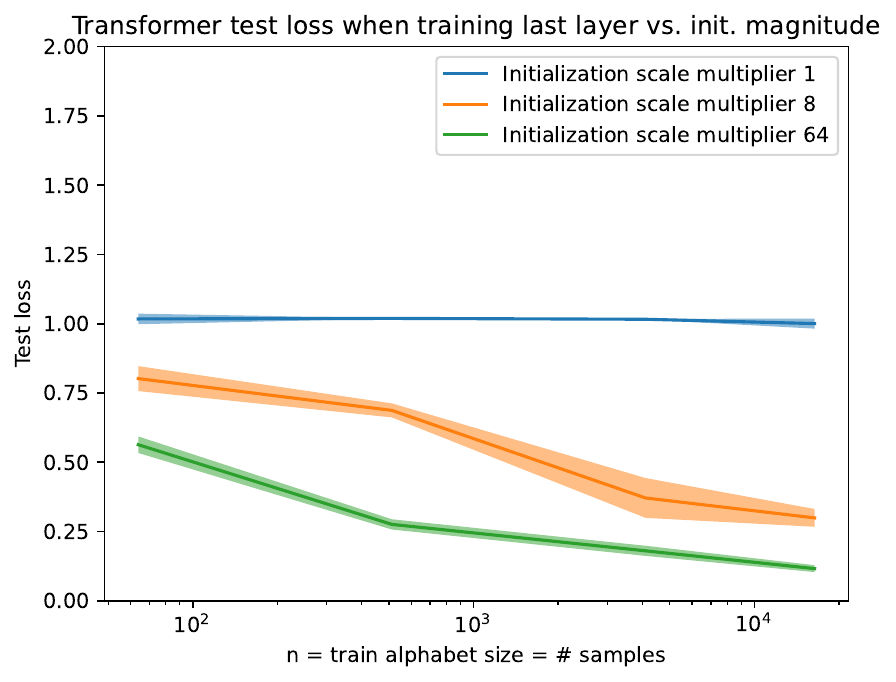} & 
\includegraphics[scale=0.5]{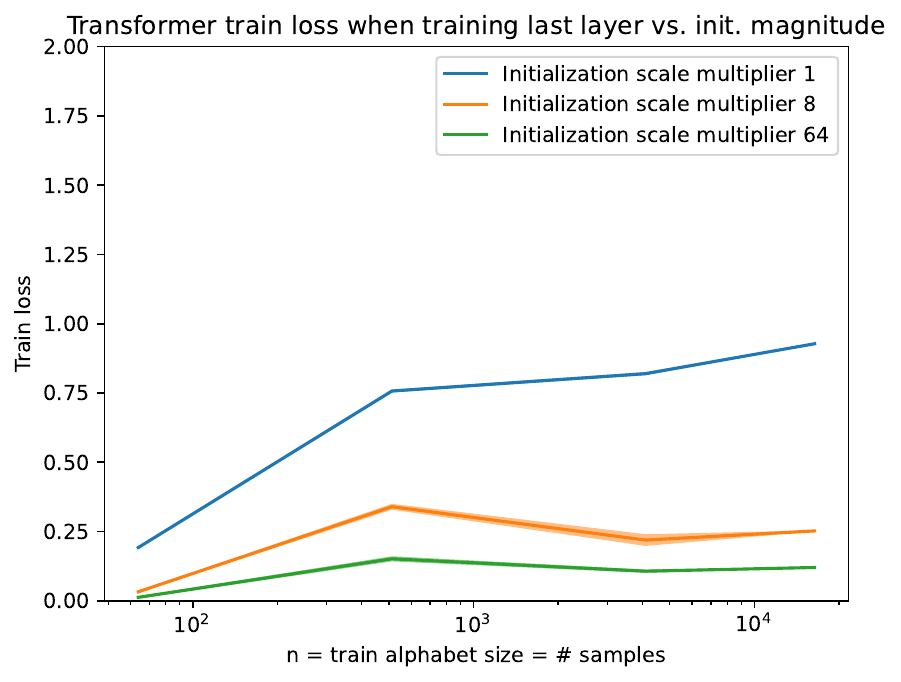} \\
\end{tabular}
\caption{Training just the final unembedding layer suffices for the transformer to generalize out of distribution, as predicted by our theory. However, the number of samples and number of epochs needed is larger than when all parameters of the network are trained. Understanding why training all parameters gives better performance than training just the last layer is an interesting future direction. We report results for 3 different magnitudes of initializing the weights of attention mechanism (1 times, 8 times, and 64 times the standard initialization), and find that larger initialization helps, which we conjecture is due to the softmax being in the saturated regime, which leads to more weight on the relational features.}\label{fig:random-features}
\end{figure}

\begin{figure}[h]
\begin{tabular}{@{}c@{}c@{}}
\includegraphics[scale=0.5]{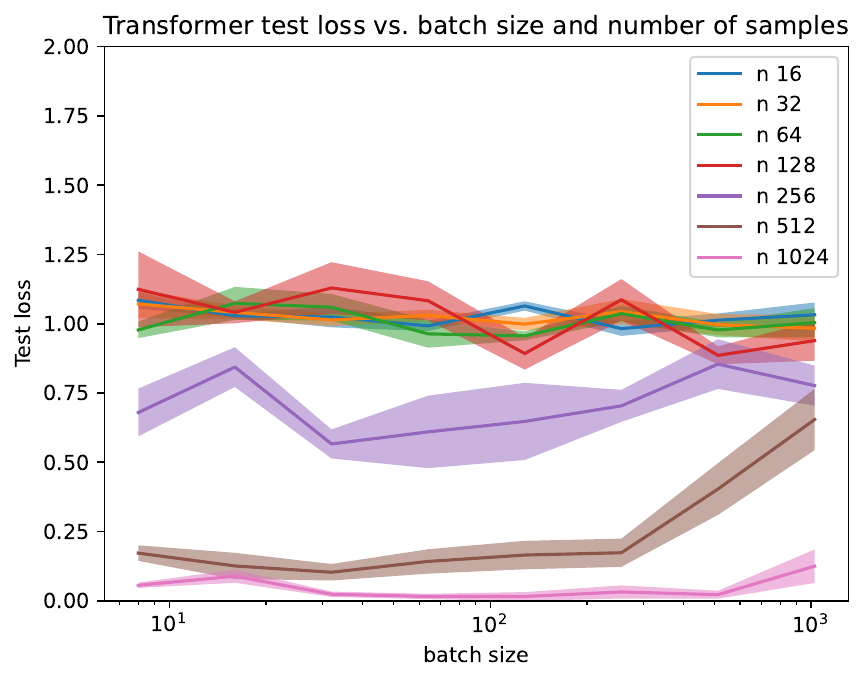} & 
\includegraphics[scale=0.5]{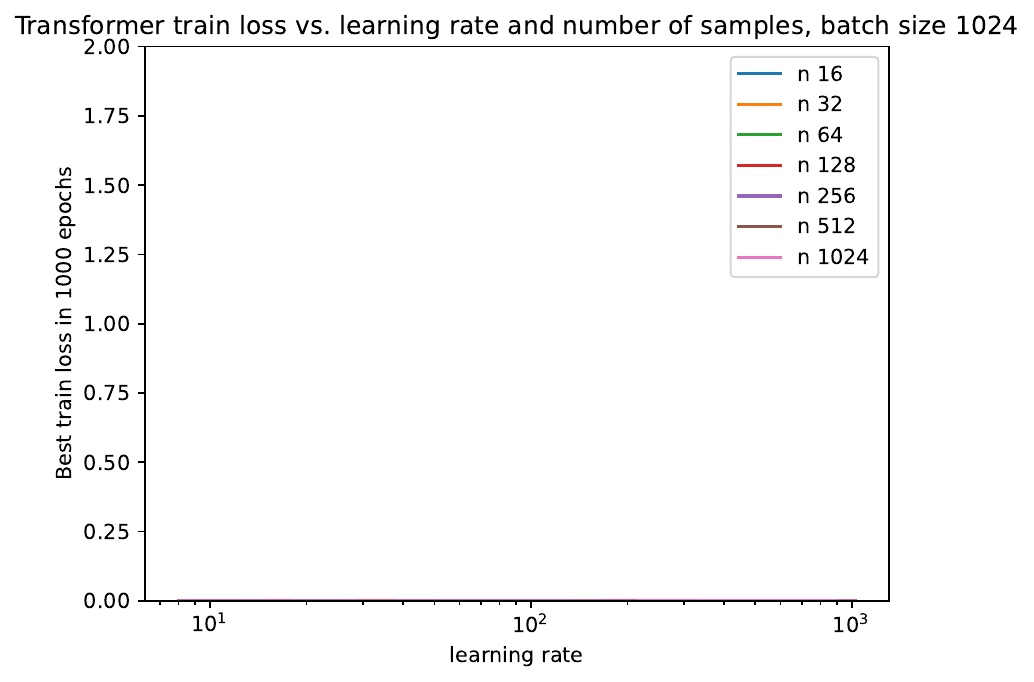}
\end{tabular}
\caption{Batch size vs. $n$ = number of training samples = training alphabet size. Smaller batch size is generally better, which is most visible at $n = 512$.}\label{fig:n-vs-batchsize-lr0.001}
\end{figure}

\begin{figure}[h]
\begin{tabular}{@{}c@{}c@{}}
\includegraphics[scale=0.5]{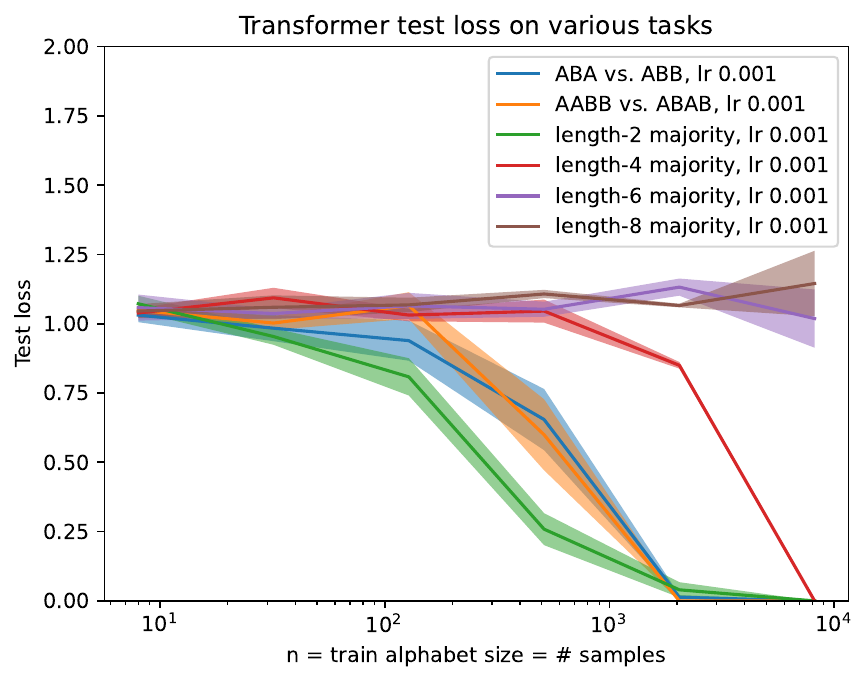} & 
\includegraphics[scale=0.5]{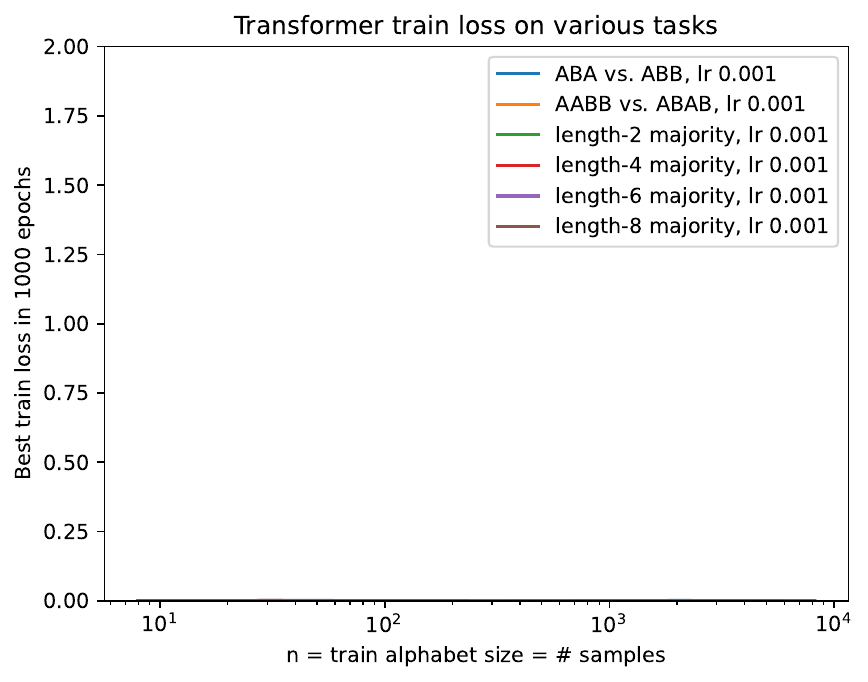}
\end{tabular}
\caption{Test and train loss of transformer for various tasks. The $\alpha\beta\alpha$ vs. $\alpha\beta\beta$ task consists of two templates $\alpha\beta\alpha$ and $\alpha\beta\beta$ with labels +1, -1. The $\alpha\alpha\beta\beta$ vs. $\alpha\beta\alpha\beta$ task has templates +1, -1. For each $k$, the length-$k$ majority task consists of all templates in $\{\alpha\} \times \{\alpha,\beta\}^{k-1}$, where each template has label 1 if $\alpha$ occurs more times in the last $k-1$ entries, and label +1 if $\alpha$ occurs fewer times in the last $k-1$ entries. The trivial model that outputs 0 always will achieve test loss of 1.}\label{fig:various-tasks-vs-n}
\end{figure}

\begin{figure}[h]
\begin{tabular}{@{}c@{}c@{}}
\includegraphics[scale=0.5]{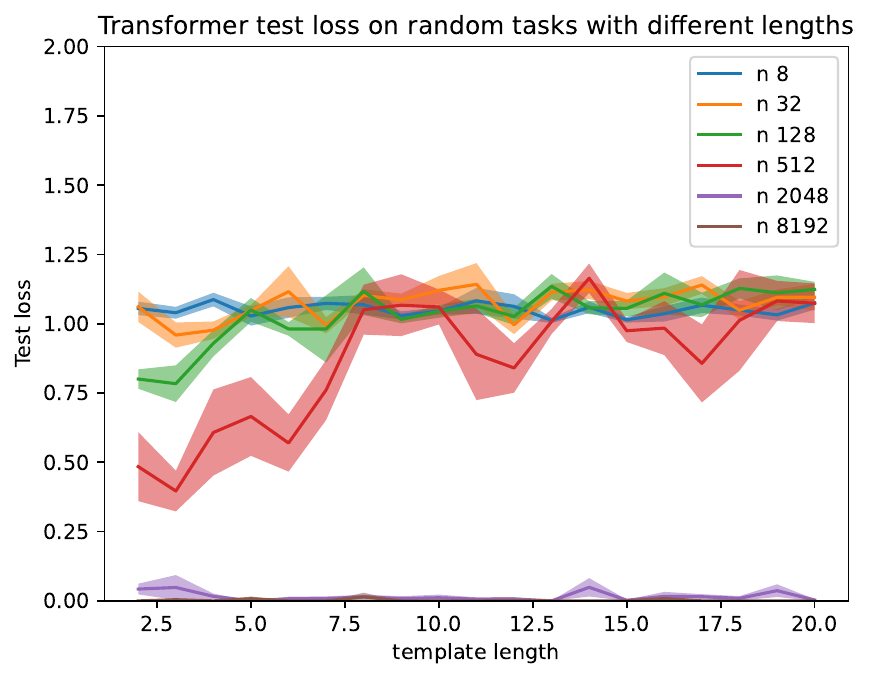} & 
\includegraphics[scale=0.5]{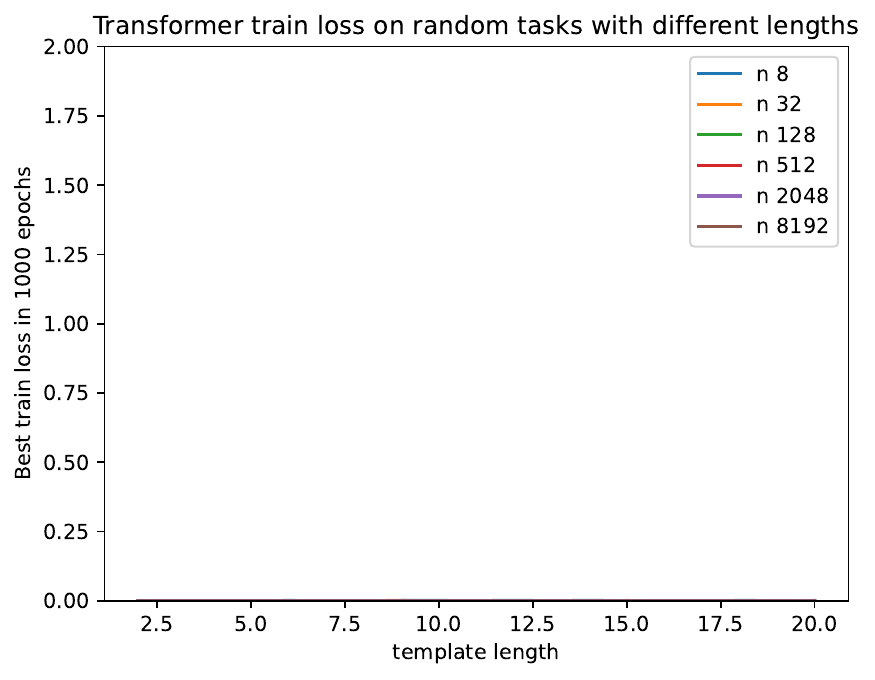}
\end{tabular}
\caption{Performance on tasks corresponding of two, distinct random templates with two wildcards $\alpha,\beta$, and with labels $1,-1$, respectively. Performance degrades as the template length increases.}\label{fig:randomtemplate-length}
\end{figure}

\begin{figure}[h]
\begin{tabular}{@{}c@{}c@{}}
\includegraphics[scale=0.5]{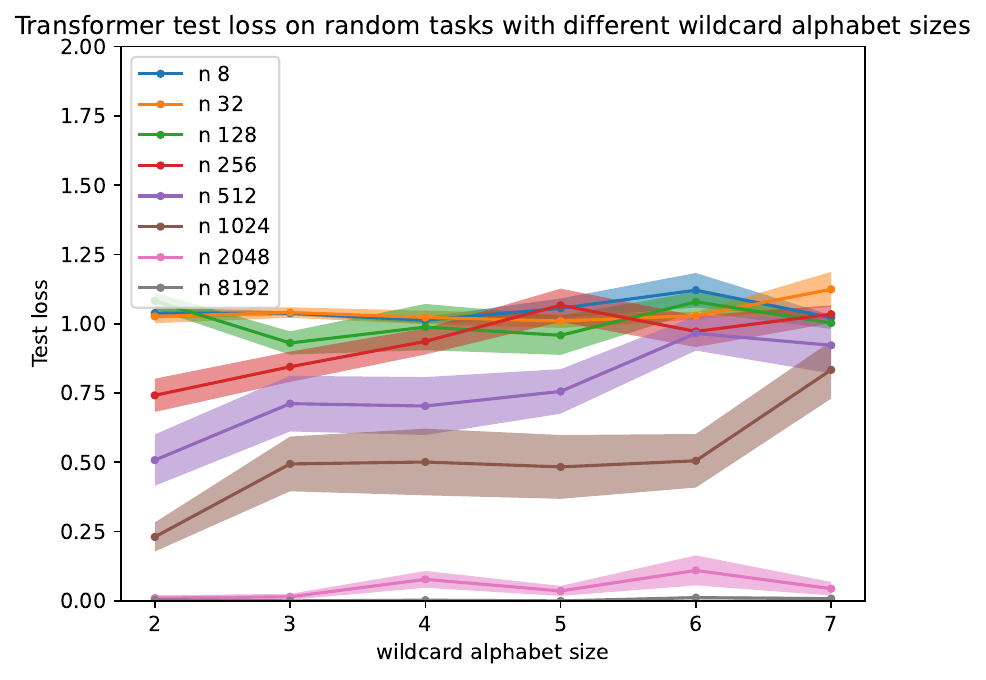} & 
\includegraphics[scale=0.5]{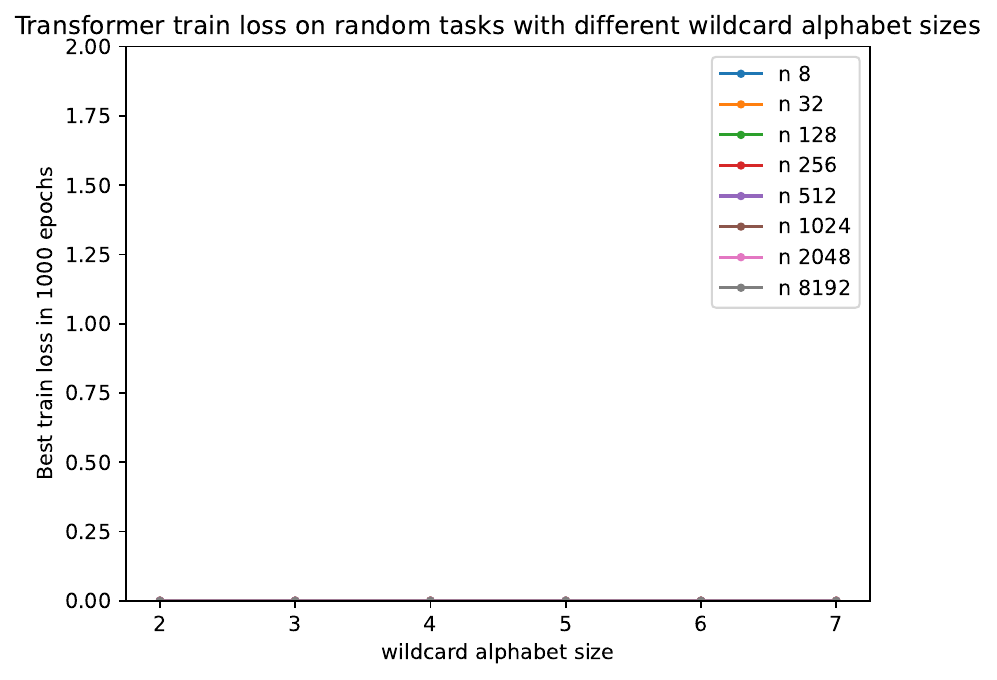}
\end{tabular}
\caption{Performance on tasks corresponding of two random templates of length 5, labeled with $1,-1$, respectively. Each template is sampled randomly from $\cW^5$, conditioned on the two templates being distinct. We vary the wildcard alphabet size $|\cW|$. Performance generally degrades as the wildcard alphabet size increases.}\label{fig:randomtemplate-Wsize}
\end{figure}

\begin{figure}[h]
\begin{tabular}{@{}c@{}c@{}}
\includegraphics[scale=0.5]{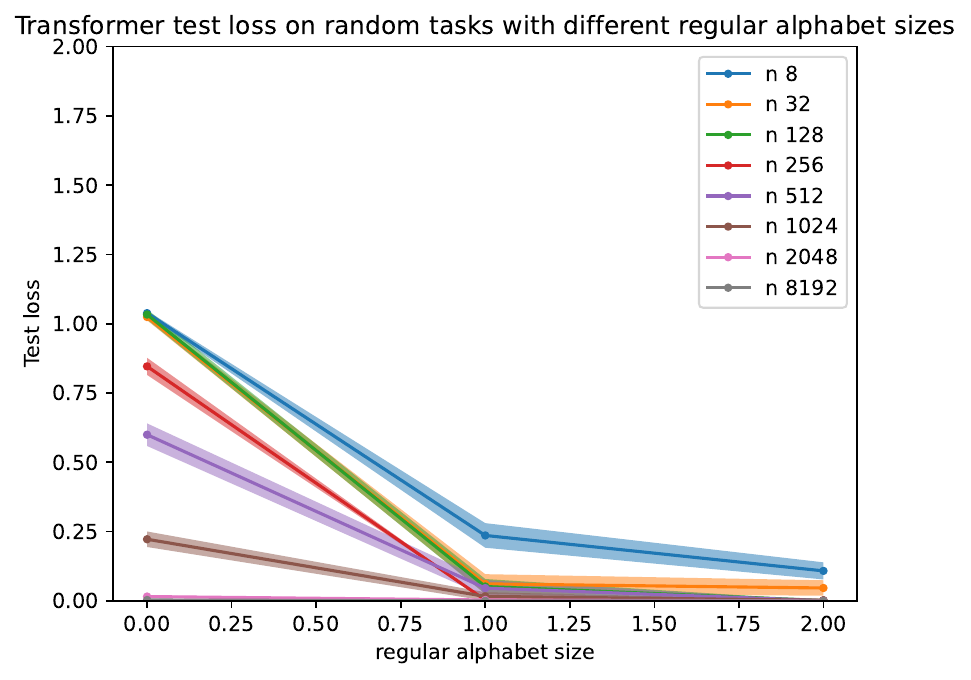} & 
\includegraphics[scale=0.5]{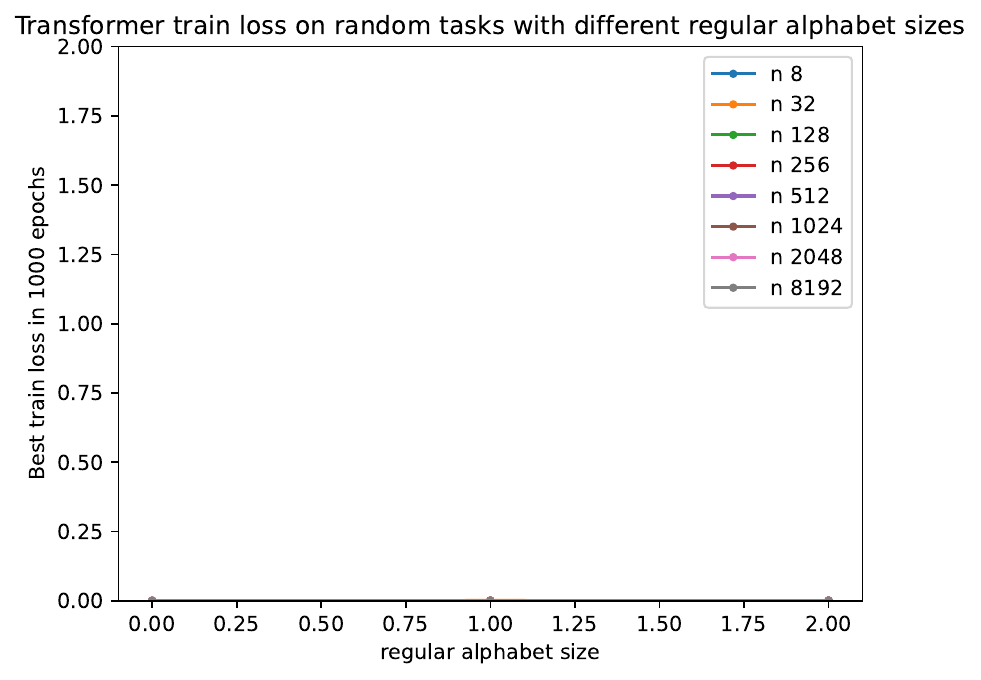}
\end{tabular}
\caption{Performance on tasks corresponding of two random templates of length 5, labeled with $1,-1$, respectively. Each template is sampled randomly from $(\cW \cup \cX)^5$, conditioned on the two templates being distinct. We keep $|\cW| = 2$ and vary the regular token alphabet size $|\cX|$ between 0 and 2. Performance quickly improves as the regular token alphabet size increases.}\label{fig:randomtemplate-Rsize}
\end{figure}

\begin{figure}[h]
\begin{tabular}{@{}c@{}c@{}}
\includegraphics[scale=0.5]{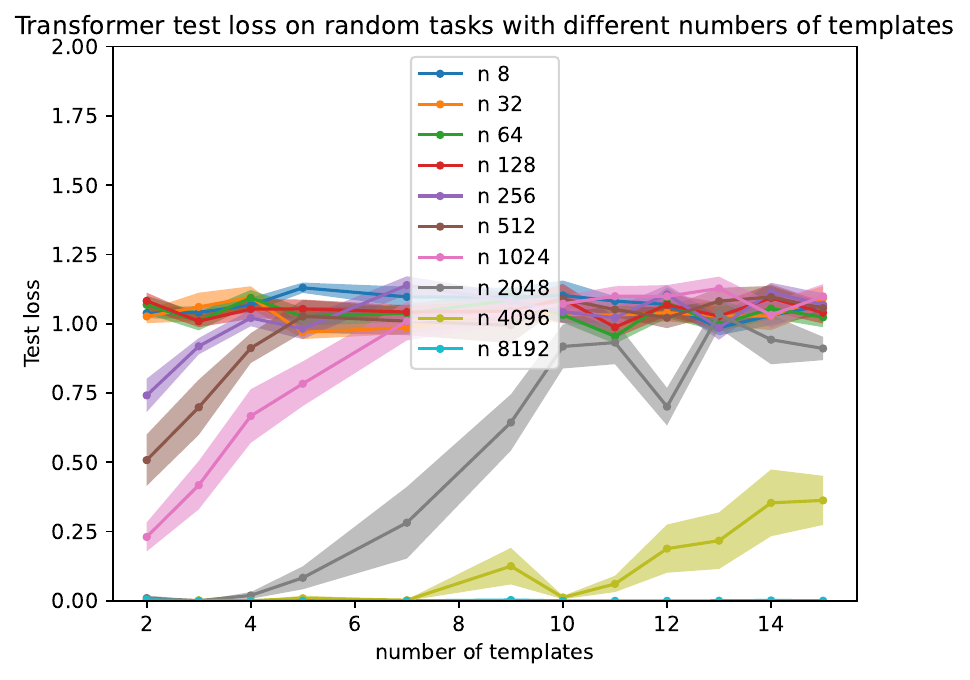} & 
\includegraphics[scale=0.5]{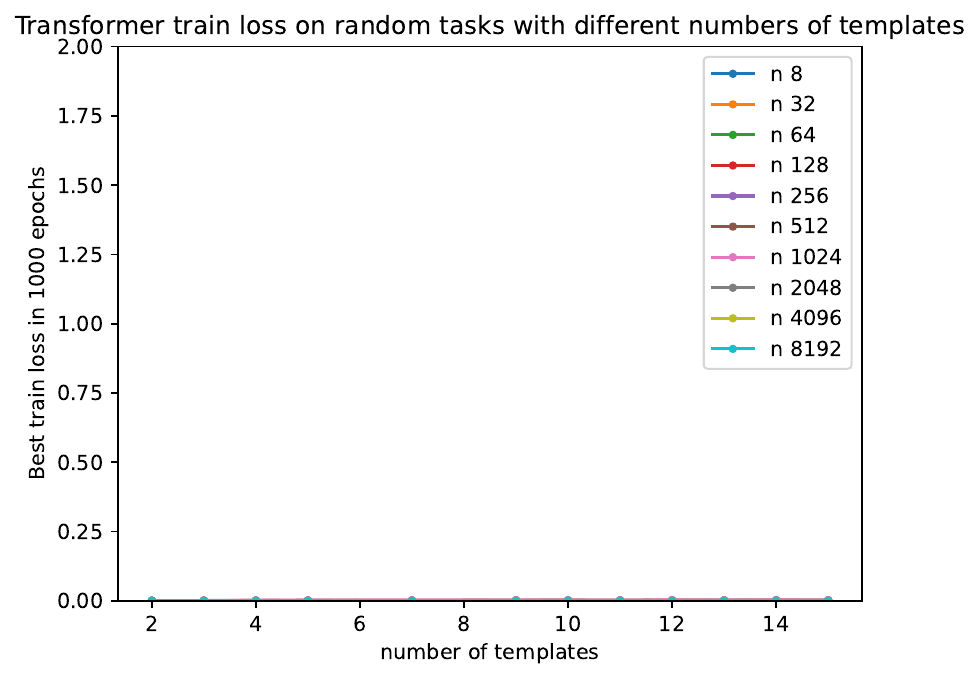}
\end{tabular}
\caption{Performance on tasks corresponding of two random templates of length 5, labeled with $1,-1$, respectively. Each template is sampled randomly from $(\cW \cup \cX)^5$, conditioned on the two templates being distinct. We keep $|\cW| = 2$ and vary the regular token alphabet size $|\cX|$ between 0 and 2. Performance quickly improves as the regular token alphabet size increases.}\label{fig:randomtemplate-ntempl}
\end{figure}

\begin{figure}[h]
\begin{tabular}{@{}c@{}c@{}}
\includegraphics[scale=0.5]{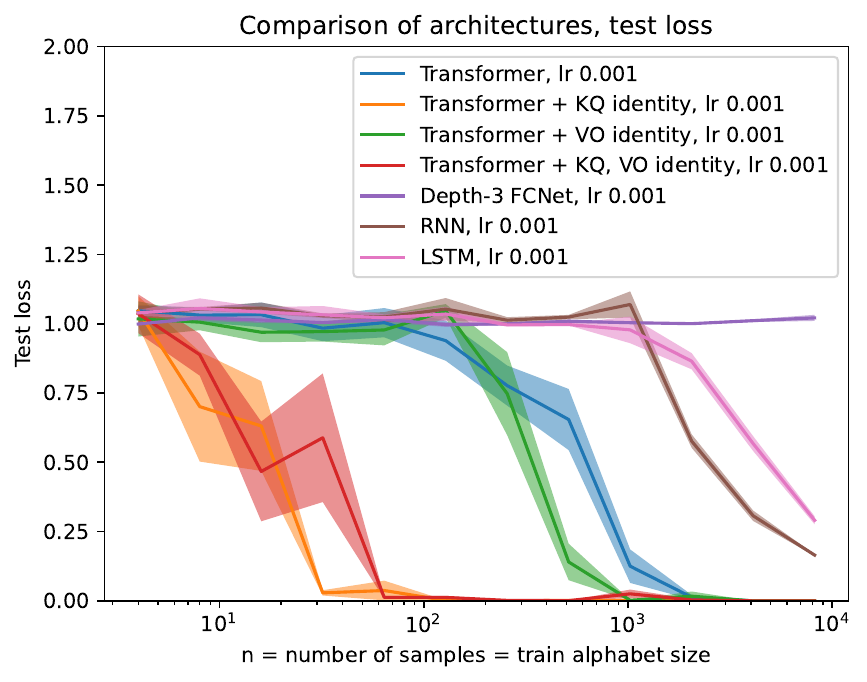} & 
\includegraphics[scale=0.5]{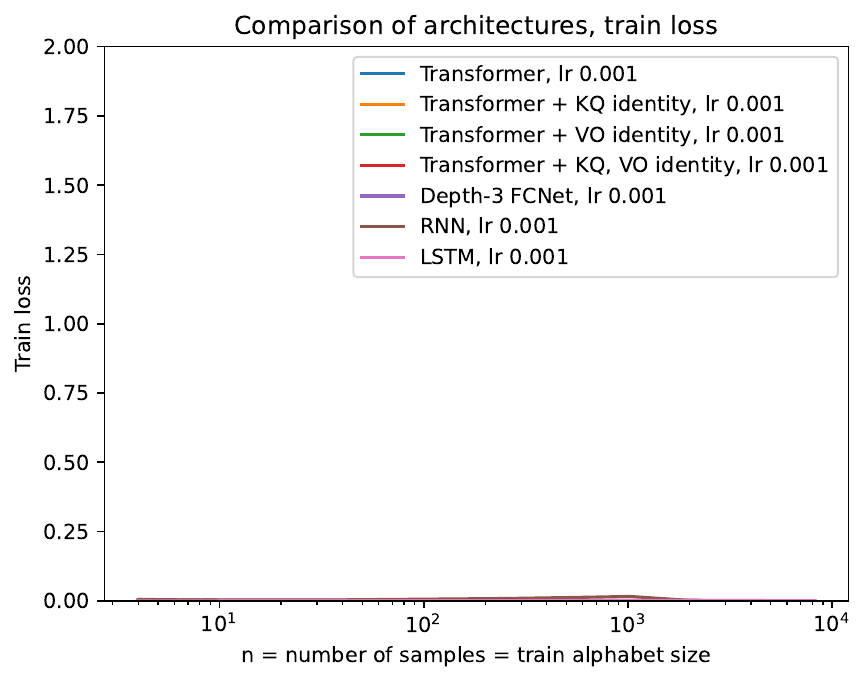}
\end{tabular}
\caption{Different architectures on $\alpha\beta\alpha$ vs. $\alpha\beta\beta$ task. Transformer outperforms the other architectures, especially with the reparametrization that prioritizes identities in heads.}\label{fig:diff-architectures}
\end{figure}

\begin{figure}[h]
\begin{tabular}{@{}c@{}c@{}}
\includegraphics[scale=0.5]{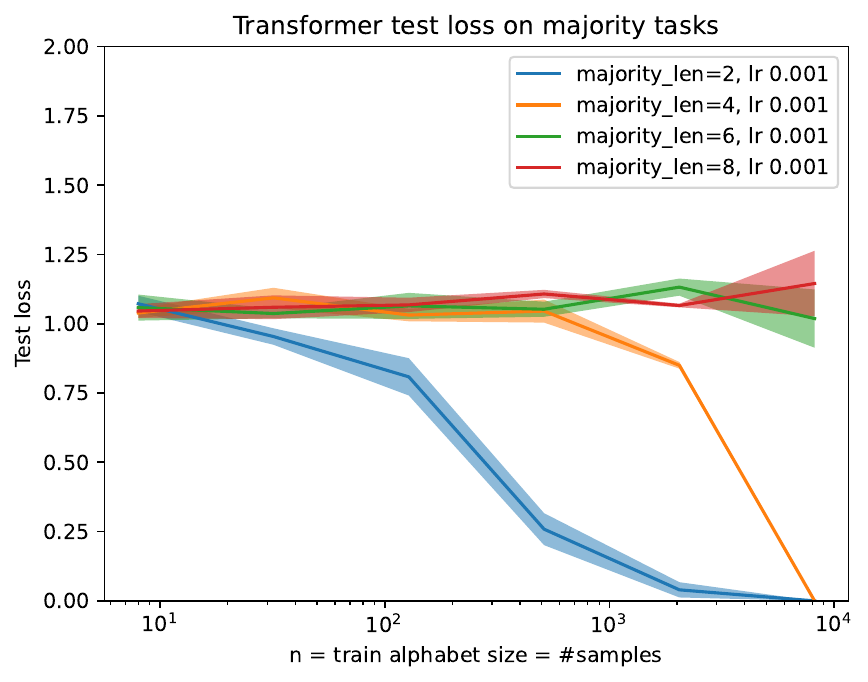} & 
\includegraphics[scale=0.5]{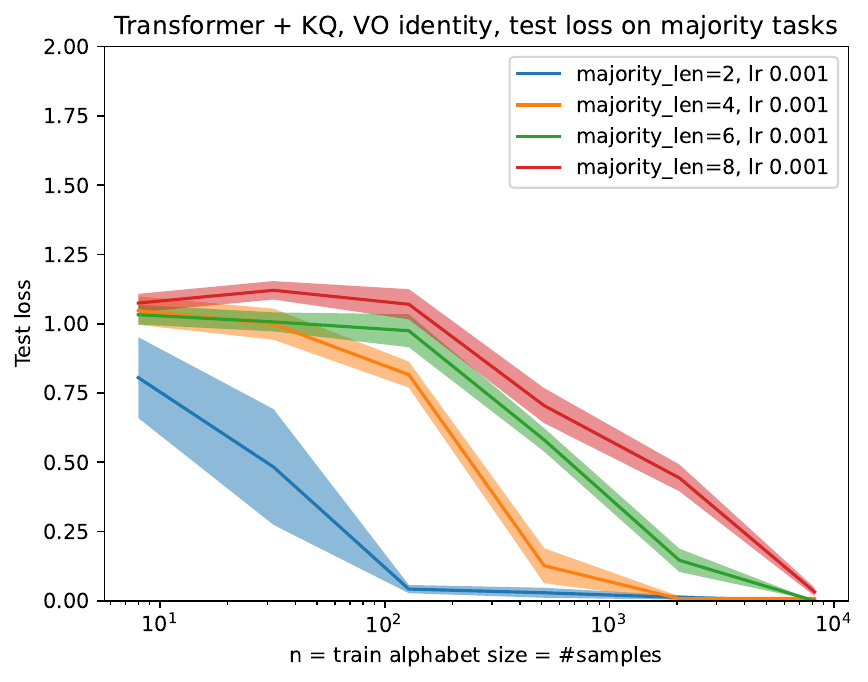}
\end{tabular}
\caption{Comparison of test loss of architectures on length-$k$ majority task with different $k$. Left: vanilla transformer architecture. Right: transformer architecture plus the trainable identity scalings on each attention head's $W_KW_Q^T$ and $W_VW_O^T$ matrices. Notice that again the transformer reparametrization lowers the 
amount of data needed by at least an order of magnitude.}\label{fig:parametrization-majority}
\end{figure}

\begin{figure}[h]
\begin{tabular}{@{}c@{}c@{}}
\includegraphics[scale=0.5]{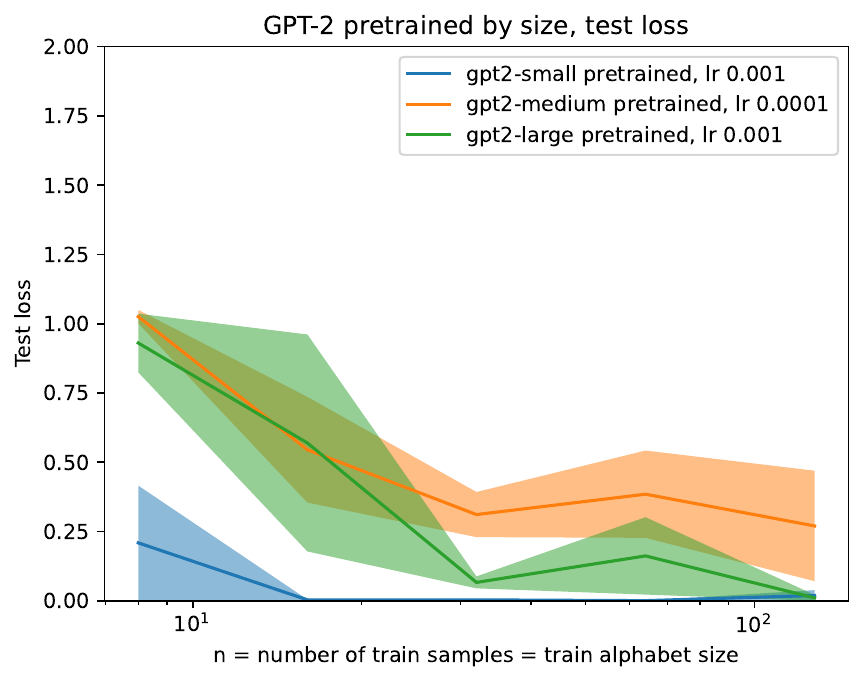} & 
\includegraphics[scale=0.5]{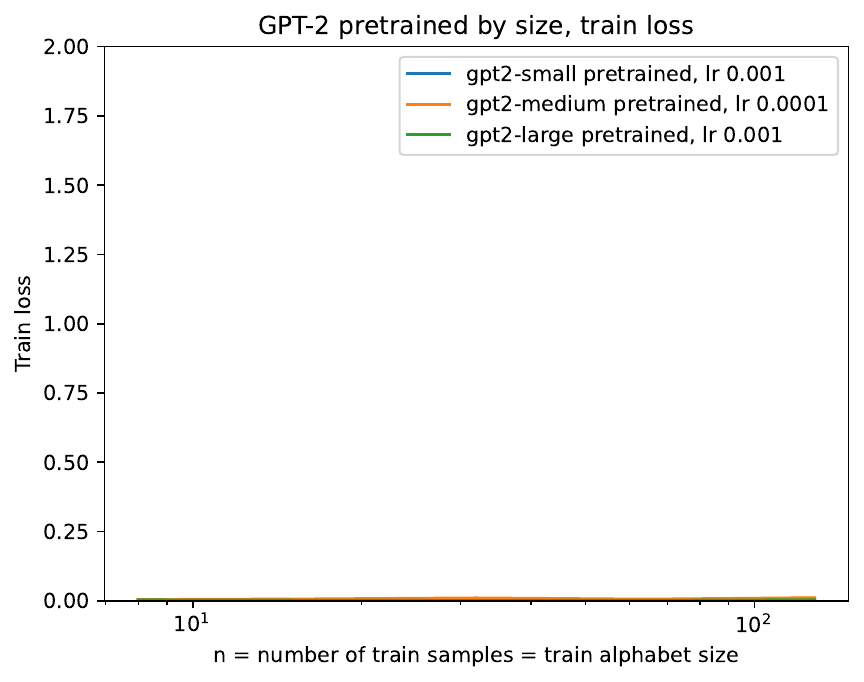}
\end{tabular}
\caption{Pretrained GPT-2 of different sizes fine-tuned on $\alpha\beta\alpha$ vs. $\alpha\beta\beta$ task.}\label{fig:gpt2-small-medium-large}
\end{figure}

\begin{figure}[h]
\begin{tabular}{@{}c@{}c@{}}
\includegraphics[scale=0.5]{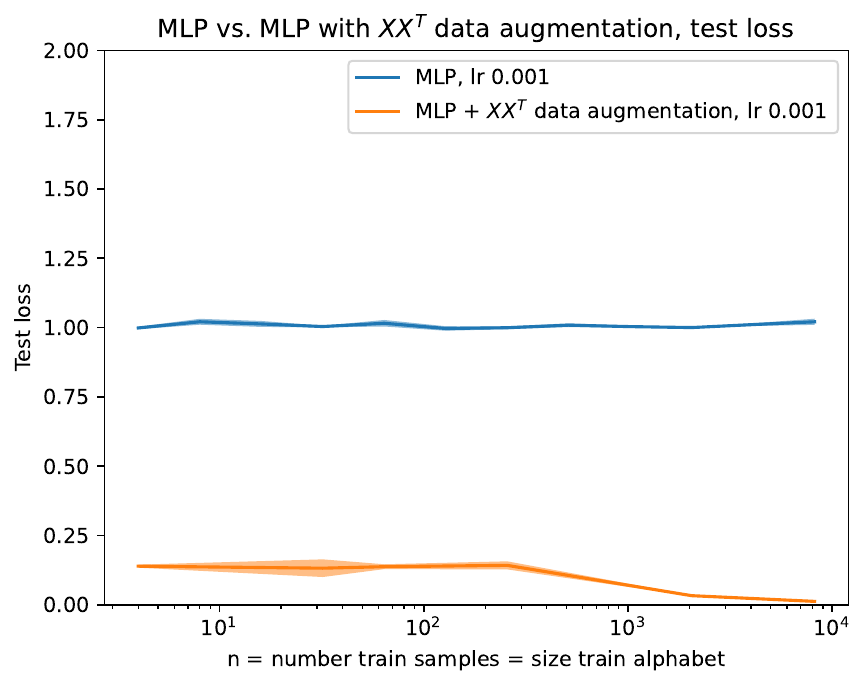} & 
\includegraphics[scale=0.5]{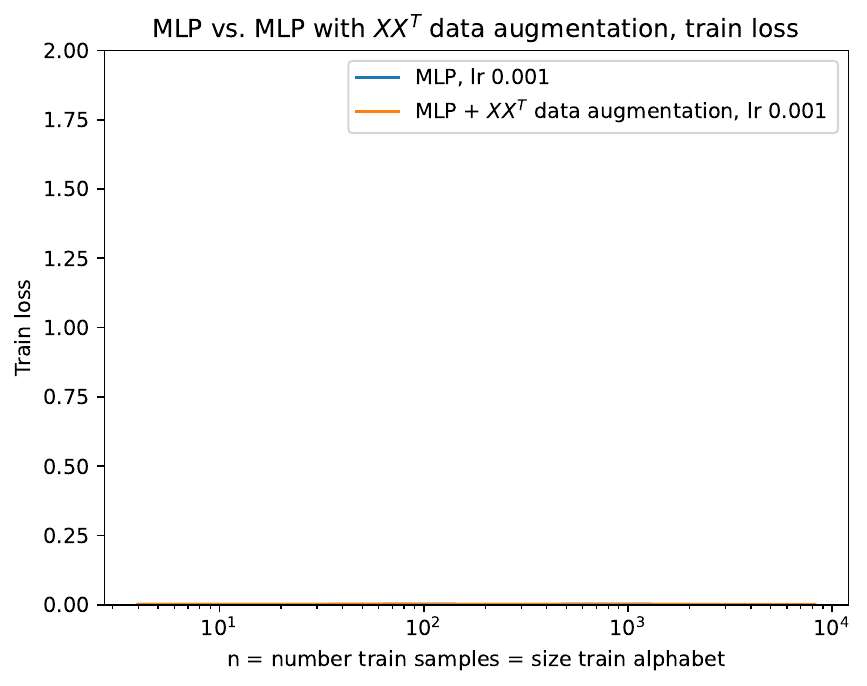}
\end{tabular}
\caption{Test loss of MLP with $XX^T$ data augmentation, where it is concatenated to input, versus MLP without data augmentation, versus transformer.}\label{fig:mlp-concat-XXT}
\end{figure}

\clearpage

\section{Proof of Theorem~\ref{thm:transformers-succeed-at-template}}\label{app:main-proof}

There are two main parts to the proof. First, in Section~\ref{ssec:mlp-suff-cond} we establish a lemma with a sufficient condition for a kernel method to have good test loss. Second, in Section~\ref{ssec:ktrans-satisfies-suff-cond} we prove that the transformer random features kernel $\Ktrans$ satisfies this condition for almost any $\beta,\gamma,b_1,b_2$ parameters. We conclude in Section~\ref{ssec:transformers-succeed-proof}.

\begin{remark}
The reason that we state our result with mean-squared error loss  is that we have the closed-form solution \eqref{eq:kernel-gradient-flow-solution} for the function that the kernel method learns in terms of its kernel and the data. Such an expression is not known for the cross-entropy loss.
\end{remark}

\subsection{Part 1. General sufficient condition for good test loss}\label{ssec:mlp-suff-cond}

We restrict ourselves to token-symmetric kernels, which are kernels whose values are unchanged if the tokens are relabeled by a permutation.
\begin{definition}[Token-symmetric kernel]\label{def:token-symmetric}
$K$ is token-symmetric if for any permutation $\pi : \cX \to \cX$ we have $K(\bx,\by) = K([\pi(x_1),\ldots,\pi(x_k)], [\pi(y_1),\ldots,\pi(y_k)])$.
\end{definition}
Token-symmetry is a mild condition, as most network architectures used in practice (including transformers) have token-symmetric neural tangent kernels at initialization. We emphasize that token-symmetry is not sufficient for good test loss since MLPs are a counterexample (see Appendix~\ref{app:mlp-failure}.)

To state the sufficient condition for good test loss, let $\{\bz_1,\ldots,\bz_r\} = \supp(\mutempl)$ be the template distribution support. Define also the set $\cR = 
\cup_{i \in[k],j\in [r]}\{z_{j,i}\}$ of tokens that appear in the templates. Finally, define $\bN \in \R^{r \times r}$ by
\begin{align}\label{eq:app-n-definition}
N_{ij} = K(\sub(\bz_i,s), \sub(\bz_j,s'))\,,
\end{align}
where $s,s' : \cW \to \cX$ are substitution maps satisfying
\begin{align}\label{eq:app-disjoint-substitution maps}
s(\cW) \cap s'(\cW) = 0 \quad \mbox{ and }\quad s(\cW) \cap \cR = s'(\cW) \cap \cR = \emptyset.
\end{align}
One can check that because of the token-symmetry of the kernel $K$, the matrix $\bN$ is uniquely-defined regardless of the substitution maps $s,s'$ chosen, as long as they satisfy \eqref{eq:app-disjoint-substitution maps}.

\begin{lemma}[It suffices for $\bN$ to be nonsingular]\label{lem:app-kernel-symbolic-gotu-suff-cond}
If $K$ is a token-symmetric kernel, and $\bN$ is nonsingular, then kernel ridge regression achieves vanishing test loss.

Formally, there are constants $c,C > 0$ and ridge regularization parameter $\lambda > 0$ depending only on $\mutempl$, $\sigma$, $|\cW|$, $\|\bN^{-1}\|$ and $\|K\|_{\infty} = \max_{\bx} K(\bx,\bx)$, such that for any $\bx$ matching a template $\bz \in \supp(\mutempl)$ the kernel ridge regression estimator $\hat{f}$ in \eqref{eq:kernel-gradient-flow-solution} with kernel $K$ satisfies
\begin{align*}
|\hat{f}(\bx) - f_*(\bz)| \leq C\sqrt{\frac{\log(1/\delta)}{n}} + C \sqrt{\frac{1}{\rho}}\,,
\end{align*}
with probability at least $1 - \delta - \exp(-cn)$ over the random samples.
\end{lemma}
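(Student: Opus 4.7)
The plan is to reduce kernel ridge regression on the random empirical matrix $\hat\bK$ to ridge regression on an idealized block-structured matrix $\bK^*$ whose spectrum is governed by $\bN$, and then use the nonsingularity of $\bN$ to read off vanishing test error. \emph{Step 1 (idealization).} Let $\bE \in \{0,1\}^{n \times r}$ with $E_{i,j} = \mathbf{1}[i \in \cI_j]$. Set $\bK^* := \bE \bN \bE^T$ and, for a test input $\bx^{test}$ matching template $\bz_{j^*}$, set $\bk^*(\bx^{test}) := \bE \bN \be_{j^*}$. By the token-symmetry assumption on $K$, the entry $\hat K_{ab}$ exactly coincides with $N_{j_a, j_b}$ whenever the substitutions used for $\bx_a$ and $\bx_b$ have images that are pairwise disjoint and disjoint from $\cR$; the analogous statement holds for $\bk(\bx^{test})$ versus $\bk^*(\bx^{test})$. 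By the definition of $\rho$, each substitution hits a given token with probability at most $1/\rho$, so a union bound over the $O(|\cW|^2 + |\cW||\cR|) = O(1)$ offending collisions shows that any fixed pair is ``bad'' with probability $O(1/\rho)$, yielding $\|\hat\bK - \bK^*\|_{\mathrm{op}} = O(n\|K\|_\infty/\rho)$ and $\|\bk - \bk^*\|_2 = O(\|K\|_\infty\sqrt{n/\rho})$ with probability at least $1 - e^{-cn}$ via row-sum Chernoff.

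\emph{Step 2 (idealized estimator equals per-template average).} Writing $\bM := \bE^T \bE = \diag(|\cI_1|,\ldots,|\cI_r|)$ and $\tilde\bE := \bE \bM^{-1/2}$, the matrix $\bK^* + \lambda \bI$ acts as $\lambda \bI$ on $\mathrm{col}(\bE)^\perp$ and as $\tilde\bN + \lambda \bI$ on $\mathrm{col}(\bE)$, where $\tilde\bN := \bM^{1/2} \bN \bM^{1/2}$. A Chernoff bound gives $|\cI_j| = \mutempl(\bz_j) n (1 + o(1))$, so $\tilde\bN$ is invertible with $\lambda_{\min}(\tilde\bN) = \Theta(n/\|\bN^{-1}\|)$. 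Using $\bk^*(\bx^{test}) \in \mathrm{col}(\bE)$ and the identity $(\tilde\bN + \lambda \bI)^{-1} \tilde\bN = \bI - \lambda(\tilde\bN + \lambda \bI)^{-1}$ yields
\begin{align*}
\by^T (\bK^* + \lambda \bI)^{-1} \bk^*(\bx^{test}) = \by^T \bE \bM^{-1} \be_{j^*} - \lambda \, \by^T \tilde\bE (\tilde\bN + \lambda \bI)^{-1} \bM^{-1/2} \be_{j^*}\,.
\end{align*}
The first piece is exactly the per-template label average $|\cI_{j^*}|^{-1} \sum_{a \in \cI_{j^*}} y_a$, which by a Gaussian tail bound equals $f_*(\bz_{j^*}) + O(\sigma\sqrt{\log(1/\delta)/n})$; the second piece has magnitude $O(\lambda/n)$, i.e., $O(1/\sqrt n)$ whenever $\lambda = O(\sqrt n)$.

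\emph{Step 3 (perturbation).} The resolvent identity
\begin{align*}
\hat f - \hat f^* = -\by^T(\hat\bK + \lambda \bI)^{-1}(\hat\bK - \bK^*)(\bK^* + \lambda \bI)^{-1} \bk^* + \by^T(\hat\bK + \lambda \bI)^{-1}(\bk - \bk^*)\,,
\end{align*}
combined with the Step 1 norm bounds, the Step 2 observation that $(\bK^* + \lambda \bI)^{-1}\by^*$ has norm $O(1/\sqrt n)$ (for $\by^* = \bE \boldf_* \in \mathrm{col}(\bE)$ with $\boldf_* = (f_*(\bz_j))_{j \in [r]}$), and Gaussian concentration for the noise projection of $\by$, yields $|\hat f - \hat f^*| = O(\sqrt{1/\rho})$ for $\lambda \asymp \sqrt n$. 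Summing Steps 2 and 3 gives the claimed error rate.

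The main technical obstacle is that $(\hat\bK + \lambda \bI)^{-1}$ does not shrink uniformly: on the $r$-dimensional signal subspace $\mathrm{col}(\bE)$ it has norm $\Theta(1/n)$, whereas on $\mathrm{col}(\bE)^\perp$ it has norm $1/\lambda$, so the crude bound $\|(\hat\bK + \lambda \bI)^{-1}\|_{\mathrm{op}} \leq 1/\lambda$ is wasteful on the signal part. Handling the perturbation in Step 3 therefore requires treating the signal and noise projections of $\bk - \bk^*$ and $\bxi$ separately and a Davis--Kahan-type step to transfer the signal subspace structure from $\bK^*$ to $\hat\bK$. The choice $\lambda \asymp \sqrt n$ balances the idealized bias against this perturbation contribution, and all error terms simultaneously collapse to the rate $\sqrt{1/\rho} + \sigma\sqrt{\log(1/\delta)/n}$.
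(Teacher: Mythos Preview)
Your decomposition is essentially the paper's: the idealized block kernel $\bK^*=\bE\bN\bE^T$ is the paper's $\hat\bK^{ideal}$, your spectral analysis via $\tilde\bN=\bM^{1/2}\bN\bM^{1/2}$ matches the paper's eigenvalue computation for $\hat\bK^{ideal}$, and your resolvent identity in Step~3 is exactly the perturbation bound the paper uses. The substantive divergence is the choice of $\lambda$, and this is where your proposal has a gap.

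You set $\lambda\asymp\sqrt n$ so that the Step~2 bias $O(\lambda/n)$ is $O(n^{-1/2})$. But then the crude bound $\|(\hat\bK+\lambda\bI)^{-1}\|\le 1/\lambda$ on the left of the resolvent identity gives, for the second term alone, $\|\by\|\cdot\lambda^{-1}\cdot\|\bk-\bk^*\| = O(\sqrt n)\cdot O(n^{-1/2})\cdot O(\sqrt{n/\rho}) = O(\sqrt{n/\rho})$, not the claimed $O(\sqrt{1/\rho})$; the first resolvent term is no better. You correctly flag this and invoke a Davis--Kahan step plus signal/noise splitting, but you do not carry it out; as written, Step~3 is an outline of an obstacle rather than a proof, and it is not clear the subspace transfer closes the gap at this $\lambda$-scale.

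The paper sidesteps this entirely by taking $\lambda=\Theta(n)$ (specifically $\lambda\in[c'n,C'n]$ with $C'$ chosen small relative to $\min_j\mutempl(\bz_j)/\|\bN^{-1}\|$). The key point --- which your Step~2 already contains --- is that $\|(\bK^*+\lambda\bI)^{-1}\bk^*\|=O(1/\sqrt n)$ holds for all such $\lambda$, because $\bk^*$ lies in the $r$-dimensional signal subspace where $\bK^*$ has eigenvalues of magnitude $\Theta(n)$. Plugging this into the resolvent bound yields
\[
|\hat f-\hat f^*|\;\le\;\frac{\|\by\|}{\lambda}\Bigl(\|\bk-\bk^*\|+\|\hat\bK-\bK^*\|\cdot O(n^{-1/2})\Bigr)\;=\;\frac{O(\sqrt n)}{\Theta(n)}\cdot O\!\left(\sqrt{n/\rho}\right)\;=\;O(1/\sqrt\rho)
\]
directly, with no eigenspace-perturbation machinery. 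So the fix is to take $\lambda=\Theta(n)$: Step~3 then goes through with the elementary bounds you already have, and the Davis--Kahan detour becomes unnecessary.
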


The proof is in Appendix~\ref{app:suff-cond-proof}, but we develop an intuition here on why the nonsingularity of the matrix $\bN$ is important. Let $[n] = \cI_1 \sqcup \cI_2 \sqcup \dots \sqcup \cI_n$ be the partition of the samples such that if $i \in \cI_j$ then sample $(\bx_i,y_i)$ is drawn by substituting the wildcards of template $\bz_j$ with substitution map $s_i : \cW \to \cX$. We show that for any string $\bx$ matching template $\bz_j$, the kernel ridge regression solution \eqref{eq:kernel-gradient-flow-solution} is approximately equal to the average of the labels of the samples corresponding to template $j$, 
\begin{align}\label{eq:app-how-kernel-ridge-regression-succeeds}
\by^T(\hat\bK + \lambda \bI)^{-1} \bk(\bx) \approx \frac{1}{|\cI_j|} \sum_{i \in \cI_j} y_i \approx f_*(\bz_j)\,.
\end{align}
In order to see why this is true, consider the regime in which the sample diversity is very high, i.e., $\rho \gg 1$. Since $\rho$ is large, any particular token is highly unlikely to be substituted. This has the following implications:
\begin{itemize}
    \item For most sample pairs $i \neq i' \in [n]$, the maps $s_i$ and $s_{i'}$ have disjoint range: $s_i(\cW) \cap s'_i(\cW)$.
    \item For most samples $i \in [n]$, the substituted tokens are not in the templates: $s_i(\cW) \cap \cR = \emptyset$.
\end{itemize}
These are the same conditions as in \eqref{eq:disjoint-substitution maps}. So by the token-symmetry of the kernel, for most pairs of samples the empirical kernel matrix is given by $\bN$:
\begin{align*}
\hat{K}_{i,i'} := K(\bx_i,\bx_{i'}) = N_{j,j'} \mbox{ for most } i \in \cI_j, i' \in \cI_{j'}\,.
\end{align*}
So if $\bN$ is nonsingular, then $\hat{\bK}$ has $r$ large eigenvalues, and $n - r$ much smaller eigenvalues. This turns out to be sufficient for \eqref{eq:how-kernel-ridge-regression-succeeds} to hold. We refer the reader to Appendix~\ref{app:suff-cond-proof} for more details.

\subsection{Part 2. Analyzing the transformer random features kernel}\label{ssec:ktrans-satisfies-suff-cond}

We show that the transformer random features kernel $\Ktrans$ satisfies the sufficient condition of Lemma~\ref{lem:app-kernel-symbolic-gotu-suff-cond} for vanishing test loss. It is clear that the kernel is token-symmetric because the definition is invariant to the permutation relabelings of the tokens. The difficult part is to show that the matrix $\bN_{\mathsf{trans}} := \bN$ defined with kernel $K = \Ktrans$ in \eqref{eq:app-n-definition} is nonsingular. The main challenge is that the transformer kernel does not have a known closed-form solution because of the softmax terms in its definition \eqref{eq:transformer-rf-kernel}. Furthermore, the result is especially challenging to prove because it must hold for \textit{any} collection of disjoint templates $\bz_1,\ldots,\bz_r$.

We analyze the MLP layer and the attention layer of the transformer separately. We observe that a ``weak'' condition on $\Kattn$ can be lifted into the ``strong'' result that $\bN_{\mathsf{trans}}$ is nonsingular. Intuitively, as long as $\Kattn$ is not a very degenerate kernel, it is very unlikely that the MLP layer has the cancellations that would be needed to make $\bN_{\mathsf{trans}}$ nonsingular.
\begin{lemma}[Nonsingularity of $\bN_{\mathsf{trans}}$, restatement of Lemma~\ref{lem:n-trans-nonsingularity}]\label{lem:app-n-trans-nonsingularity}
Suppose for every non-identity permutation $\tau \in S_r \sm \{\mathrm{id}\}$,
\begin{align}\label{eq:app-weak-condition-n-attn}
\sum_{i \in [r]} \Kattn(\sub(\bz_i,s),\sub(\bz_i,s')) \neq \sum_{i \in [r]} \Kattn(\sub(\bz_i,s),\sub(\bz_{\tau(i)},s'))\,,
\end{align}
where $s,s'$ are the substitution maps in the definition of $\bN_{\mathsf{trans}}$ in \eqref{eq:app-disjoint-substitution maps}. Let the MLP layer's activation function be $\phi(t) = \cos(b_1t+b_2)$. Then for almost any choice of $b_1,b_2$ (except for a Lebesgue-measure-zero set), the matrix $\bN_{\mathsf{trans}}$ is nonsingular.
\end{lemma}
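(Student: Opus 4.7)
The plan is to turn $\det \bN_{\mathsf{trans}}$ into an explicit real-analytic function of $(b_1, b_2)$ and then exhibit a single point at which it does not vanish; real analyticity will then force its zero set to have Lebesgue measure zero. The first step is to evaluate the Gaussian expectation defining $\Ktrans$ at $\phi(t) = \cos(b_1 t + b_2)$. Using the product-to-sum identity together with $\E[\cos(aZ+c)] = \cos(c) e^{-a^2 \sigma^2/2}$ for $Z \sim N(0,\sigma^2)$, one obtains
\begin{align*}
\Ktrans(\bX,\bY) = \tfrac12 e^{-b_1^2(A+B-2C)/2} + \tfrac{\cos(2b_2)}{2} e^{-b_1^2(A+B+2C)/2},
\end{align*}
where $A = \Kattn(\bX,\bX)$, $B = \Kattn(\bY,\bY)$, $C = \Kattn(\bX,\bY)$. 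Setting $\alpha_i := \Kattn(\sub(\bz_i,s),\sub(\bz_i,s))$ (independent of $s$ by token-symmetry of $\Kattn$) and $\beta_{ij} := \Kattn(\sub(\bz_i,s),\sub(\bz_j,s'))$, this factors $\bN_{\mathsf{trans}} = D\, \tilde{\bM}(b_1,b_2)\, D$ with $D = \diag(e^{-b_1^2 \alpha_i/2})$ strictly positive and $\tilde{M}_{ij} = \tfrac12 (e^{b_1^2 \beta_{ij}} + \cos(2b_2) e^{-b_1^2 \beta_{ij}})$. Hence $\det \bN_{\mathsf{trans}}$ is real analytic in $(b_1,b_2) \in \R^2$ and it suffices to show it is not identically zero.

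To produce a point of non-vanishing, I would restrict to the line $b_2 = \pi/4$, which kills the second exponential pencil and reduces the problem to showing $\det\bigl([e^{b_1^2 \beta_{ij}}]_{i,j\in[r]}\bigr) \neq 0$ for some $b_1$. Writing $s = b_1^2$ and expanding by Leibniz,
\begin{align*}
\det\bigl([e^{s \beta_{ij}}]\bigr) = \sum_{\tau \in S_r} \sgn(\tau)\, e^{s\gamma_\tau}, \qquad \gamma_\tau := \sum_{i \in [r]} \beta_{i,\tau(i)},
\end{align*}
which is the ``Vandermonde-like'' signed exponential sum referenced in the proof sketch. Grouping terms by distinct values of $\gamma_\tau$, hypothesis \eqref{eq:app-weak-condition-n-attn} says $\gamma_\tau \neq \gamma_{\mathrm{id}}$ for every $\tau \neq \mathrm{id}$, so the coefficient of $e^{s\gamma_{\mathrm{id}}}$ equals $\sgn(\mathrm{id}) = 1$ with no cancellation possible.

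Since the exponentials $\{e^{sv}\}$ with distinct $v \in \R$ are linearly independent over $\R$, the displayed sum is a nonzero entire function of $s$; its zero set is discrete, so some $s^* > 0$ yields $\det \tilde{\bM}(\sqrt{s^*},\pi/4) \neq 0$. Combined with the real analyticity established earlier, this shows that $\det \bN_{\mathsf{trans}}$ vanishes only on a Lebesgue-null set of $(b_1,b_2) \in \R^2$. The main obstacle is isolating one uncancelled exponent in a signed sum over $S_r$ that otherwise admits intricate cancellations: hypothesis \eqref{eq:app-weak-condition-n-attn} is tailored precisely so that $\tau = \mathrm{id}$ contributes a uniquely attained exponent, and the choice $\phi(t) = \cos(b_1 t + b_2)$ is what makes the kernel a signed combination of exactly two exponential pencils in $b_1$, rendering this non-cancellation visible at the level of the determinant.
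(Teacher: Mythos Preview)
Your proposal is correct and follows essentially the same approach as the paper: evaluate the Gaussian integral for the cosine activation to obtain the two-term exponential formula, set $b_2=\pi/4$ to isolate a single exponential pencil, expand the determinant via Leibniz, and use linear independence of exponentials together with hypothesis \eqref{eq:app-weak-condition-n-attn} (which singles out $\gamma_{\mathrm{id}}$) to conclude non-vanishing, hence measure-zero zero set by real analyticity. Your explicit factorization $\bN_{\mathsf{trans}} = D\,\tilde{\bM}\,D$ and your isolation of the uniquely attained exponent $\gamma_{\mathrm{id}}$ are slightly cleaner than the paper's presentation, but the substance is identical.
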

This lemma is proved in Appendix~\ref{app:mlp-reduce-proof}, by explicitly evaluating the Gaussian integral, which is possible since the activation function is the cosine function. Although in our proof we use the cosine activation function, we conjecture that this result should morally hold for sufficiently generic non-polynomial activation functions. Next, we prove the condition on $\bN_{\mathsf{attn}}$.

\begin{lemma}[Non-degeneracy of $\Kattn$, restatement of Lemma~\ref{lem:kattn-nondegenerate}]\label{lem:app-kattn-nondegenerate}
The condition \eqref{eq:app-weak-condition-n-attn} holds for Lebesgue-almost any $\beta,\gamma$.
\end{lemma}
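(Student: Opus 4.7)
The plan is to exploit the identity theorem for real analytic functions in two variables. First I would establish that for any fixed pair of inputs $\bX,\bY$, the kernel $\Kattn(\bX,\bY)$ is real analytic in $(\beta,\gamma) \in \R^2$. This follows because the Gaussian covariance in \eqref{eq:transformer-rf-kernel} depends polynomially on $\gamma^2$, the softmax is an entire function of its argument, and the integrand $\smax(\beta \bm(\bX))^T(\bX\bY^T + \gamma^2 \bI)\smax(\beta\bm(\bY))$ is uniformly bounded by a polynomial in $\beta$ and $\gamma$ (since $\smax$ outputs probability vectors), so one may differentiate arbitrarily often under the Gaussian expectation. For each non-identity $\tau \in S_r$, the function
\[
F_\tau(\beta,\gamma) := \sum_{i \in [r]} \Kattn(\sub(\bz_i,s),\sub(\bz_i,s')) - \sum_{i \in [r]} \Kattn(\sub(\bz_i,s),\sub(\bz_{\tau(i)},s'))
\]
is then a finite sum of analytic functions, hence analytic on $\R^2$.

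Next I would invoke the identity theorem: if $F_\tau \not\equiv 0$, then its zero set is a proper real-analytic subvariety of $\R^2$ and therefore has two-dimensional Lebesgue measure zero. Since $S_r \setminus \{\mathrm{id}\}$ is finite, the union of the zero sets across all non-identity permutations is still measure zero, and the conclusion of the lemma follows. So the entire task reduces to exhibiting, for each non-identity $\tau$, at least one point $(\beta_0,\gamma_0) \in \R^2$ where $F_\tau(\beta_0,\gamma_0) \neq 0$, or equivalently one mixed partial derivative $\partial_\beta^a \partial_\gamma^b F_\tau(0,0)$ that is nonzero.

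To find such a derivative, I would Taylor expand $\Kattn$ around $(\beta,\gamma) = (0,0)$. Writing $\smax(\beta \bm) = \tfrac{1}{k}\bone + \tfrac{\beta}{k}(\bm - \bar{\bm}\bone) + O(\beta^2)$, one sees that the zeroth-order term is $\tfrac{1}{k^2}\bone^T \bX\bY^T \bone = \tfrac{1}{k^2}\sum_{i,j} \mathbf{1}[x_i = y_j]$, which merely counts matching position pairs and is invariant under many template relabelings, hence typically does not separate $\tau = \mathrm{id}$ from $\tau \neq \mathrm{id}$. The informative terms arise at higher order: powers of $\beta^2$ pull in the covariance structure $\bX\bX^T, \bY\bY^T, \bX\bY^T$ through Gaussian moments of $\bm(\bX)$ and $\bm(\bY)$, while powers of $\gamma^2$ contribute the identity perturbation inside both the covariance and the bilinear form. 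The key structural observation is that $\bX\bX^T$ (the incidence matrix of the sequence $\bx = \sub(\bz_i,s)$) records \emph{which positions share the same symbol}, and hence recovers the wildcard pattern of $\bz_i$ up to relabeling; so once enough $\bm$-moments are pulled down, the coefficients must depend on the template patterns $\bz_i$ themselves and not only on their length.

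The main obstacle will be the combinatorics of the Taylor coefficients: because the first few orders collapse into pattern-insensitive quantities, one expects to need mixed derivatives of fairly high total order (the authors report up to order $10$) before a coefficient appears whose value is \emph{different} when one permutes the right-hand templates by a nontrivial $\tau$. Concretely, I would write out the joint Taylor expansion in $\beta$ and $\gamma^2$, apply Isserlis' theorem to evaluate the Gaussian moments of $\bm(\bX)$ and $\bm(\bY)$ in terms of entries of $\bX\bX^T + \gamma^2 \bI$, $\bY\bY^T + \gamma^2 \bI$, and $\bX\bY^T + \gamma^2 \bI$, and then argue that the resulting polynomial in the incidence matrices distinguishes the paired sum $\sum_i \Kattn(\sub(\bz_i,s),\sub(\bz_i,s'))$ from any nontrivial permutation $\sum_i \Kattn(\sub(\bz_i,s),\sub(\bz_{\tau(i)},s'))$, using the fact that the templates $\bz_1,\ldots,\bz_r$ are pairwise disjoint (and hence pairwise distinct as wildcard patterns). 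Once a single such coefficient is shown to be nonzero for each non-identity $\tau$, the identity theorem step closes the argument.
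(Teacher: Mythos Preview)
Your outline is correct and matches the paper's approach: analyticity of $\Kattn$ in $(\beta,\gamma)$, the identity theorem to reduce to a single nonzero derivative per non-identity $\tau$, and a Taylor expansion at $(0,0)$ using Isserlis' theorem to pull down moments built from $\bX\bX^T$, $\bY\bY^T$, $\bX\bY^T$ and $\gamma^2\bI$. The paper indeed goes to mixed derivatives of total order up to $10$ ($\partial_\beta^6\partial_\gamma^4$).

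The one substantive idea your proposal does not anticipate is \emph{how} to show a coefficient separates $\tau=\mathrm{id}$ from $\tau\neq\mathrm{id}$. Rather than exhibiting, for each $\tau$, a single magic coefficient, the paper runs a five-stage incremental argument: at each stage a particular derivative is examined, and by a Cauchy--Schwarz inequality one shows that $g_{\mathrm{id}}(0,0)=g_\tau(0,0)$ at that order forces a structural constraint on $\tau$ (e.g., $[1^T X_i]_{\cR}=[1^T Y_{\tau(i)}]_{\cR}$, then $[X_i]_{[k]\times\cR}=[Y_{\tau(i)}]_{[k]\times\cR}$, then $1^TX_iX_i^T1=1^TY_{\tau(i)}Y_{\tau(i)}^T1$, then $1^TX_iX_i^T=1^TY_{\tau(i)}Y_{\tau(i)}^T$, and finally $X_iX_i^T=Y_{\tau(i)}Y_{\tau(i)}^T$). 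Each subsequent stage uses the constraint from the previous one to cancel all but one ``interesting'' term in the next derivative, and Cauchy--Schwarz on that term yields the next constraint with equality iff $\tau=\mathrm{id}$. The disjointness of templates is only invoked at the very end, to conclude that $X_iX_i^T=Y_{\tau(i)}Y_{\tau(i)}^T$ together with $[X_i]_{[k]\times\cR}=[Y_{\tau(i)}]_{[k]\times\cR}$ forces $\tau=\mathrm{id}$. Your plan to ``argue that the resulting polynomial distinguishes'' is the right destination, but this staged Cauchy--Schwarz mechanism is the actual engine, and without it the combinatorics you flag as the main obstacle would be hard to control.
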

The proof is in Appendix~\ref{app:attention-analysis-proof}. First, we prove the analyticity of the kernel $\Kattn$ in terms of the hyperparameters $\beta$ and $\gamma$ which control the softmax inverse temperature and the positional embeddings. Because of the identity theorem for analytic functions, it suffices to show at least one choice of hyperparameters $\beta$ and $\gamma$ satisfies \eqref{eq:app-weak-condition-n-attn} for all non-identity permutations $\tau$. Since $\Kattn$ does not have a closed-form solution, we find such a choice of $\beta$ and $\gamma$ by analyzing the Taylor-series expansion of $\Kattn$ around $\beta = 0$ and $\gamma = 0$ up to order-10 derivatives, which happens to suffice.

\subsection{Concluding the proof of Theorem~\ref{thm:transformers-succeed-at-template}}\label{ssec:transformers-succeed-proof}
By Lemma~\ref{lem:app-kernel-symbolic-gotu-suff-cond}, it suffices to prove the nonsingularity of the matrix $\bN_{\mathsf{trans}}$ defined in \eqref{eq:app-n-definition} with kernel $K = \Ktrans$. Lemma~\ref{lem:n-trans-nonsingularity} gives a condition for nonsingularity that holds for almost any $b_1,b_2$. Lemma~\ref{lem:kattn-nondegenerate} proves this condition for almost any $\beta,\gamma$. Therefore, Theorem~\ref{thm:transformers-succeed-at-template} follows.

\section{Sufficient condition for kernel method to generalize on unseen symbols (Proof of Lemma~\ref{lem:app-kernel-symbolic-gotu-suff-cond})}\label{app:suff-cond-proof}

We restate and prove Lemma~\ref{lem:app-kernel-symbolic-gotu-suff-cond}. Let $K$ be a token-symmetric kernel as in Definition~\ref{def:token-symmetric}. Let $\mutempl$ be a distribution supported on disjoint templates $\bz_1,\ldots,\bz_r$ and define $\cR = \cup_{i \in [r], j \in [k]} \{z_{i,j}\}$. Recall the definiton of the matrix $\bN \in \R^{r \times r}$ with 
\begin{align*}N_{i,i'} = K(\sub(\bz_i,s),\sub(\bz_{i'},s')).\end{align*} for substitution maps $s : \cW \to \cX$, $s' : \cW \to \cX$ satisfying $s(\cW) \cap s'(\cW) = s(\cW) \cap \cR = s'(\cW) \cap \cR = \emptyset.$ Recall that this is well-defined by the token-symmetry of the kernel $K$.

\begin{lemma}[Restatement of Lemma~\ref{lem:app-kernel-symbolic-gotu-suff-cond}]\label{lem:restatement-gotu-suff-cond}
Suppose that $K$ is token-symmetric and $\bN$ is nonsingular. Then there are constants $0 < c < C$ and $0 < c' < C'$ depending only on $\mutempl$, $\sigma$, $|\cW|$, $\|\bN^{-1}\|$ and $\|K\|_{\infty} = \max_{\bx} K(\bx,\bx)$ such that the following holds. Consider any regularization parameter $\lambda \in [c' n, C' n]$, and any string $\bx$ matching template $\bz \in \supp(\mutempl)$. Then with probability $\geq 1 - \delta - \exp(-cn)$, the kernel ridge regression estimator $\hat{f}$ achieves good accuracy on $\bx$:
\begin{align*}
|\hat{f}(\bx) - f_*(\bz)| \leq C\sqrt{\frac{\log(1/\delta)}{n}} + C\sqrt{\frac{1}{\rho}}\,.
\end{align*}
\end{lemma}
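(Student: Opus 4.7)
The plan is to introduce an idealized, exactly block-structured version $\bK^{\mathrm{id}}$ of the empirical kernel matrix $\hat\bK$, solve kernel ridge regression with $\bK^{\mathrm{id}}$ in closed form using the nonsingularity of $\bN$, and then bound the perturbation $\bE := \hat\bK - \bK^{\mathrm{id}}$ via the sample diversity $\rho$. Formally, partition $[n] = \cI_1 \sqcup \cdots \sqcup \cI_r$ by template identity and let $\bZ \in \{0,1\}^{n \times r}$ be the corresponding indicator matrix. Token-symmetry together with the disjointness condition defining $\bN$ implies that $\hat K_{i,i'} = N_{j,j'}$ whenever $i \in \cI_j$, $i' \in \cI_{j'}$, $i \ne i'$, and the substitution tokens of samples $i,i'$ together with the template tokens of $\bz_j,\bz_{j'}$ are pairwise disjoint; call such an $(i,i')$ \emph{good}, and the other pairs \emph{bad}. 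Setting $\bK^{\mathrm{id}} := \bZ \bN \bZ^\top$, the error $\bE = \hat\bK - \bK^{\mathrm{id}}$ is supported on the diagonal and on bad off-diagonal entries, each bounded by $2\|K\|_\infty$ in absolute value. For a test string $\bx$ matching $\bz_j$, define the idealized test vector $\bk^{\mathrm{id}}(\bx) := \bZ \bN \be_j$ analogously.

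\textbf{Controlling $\bE$ and the class sizes.} By the definition of $\rho$, $\Pr[t \in s(\cW)] \le 1/\rho$ for every fixed $t \in \cX$, so a union bound over the $|\cW|$ tokens of the ``other'' substitution map and the $\le kr$ template tokens in $\cR$ gives $\Pr[(i,i') \text{ bad}] = O(|\cW|/\rho)$. A Hoeffding bound then yields, with probability $1-\exp(-cn)$, that the number of bad off-diagonal entries is $O(n^2|\cW|/\rho)$, so splitting $\bE$ into its diagonal part $\bE_{\mathrm{diag}}$ and its off-diagonal part gives
\begin{align*}
\|\bE\|_{\mathrm{op}} \;\le\; \|\bE - \bE_{\mathrm{diag}}\|_F + \|\bE_{\mathrm{diag}}\|_{\mathrm{op}} \;=\; O\bigl(\|K\|_\infty\, n/\sqrt{\rho}\bigr).
\end{align*}
A Chernoff bound simultaneously yields $|\cI_j| \in [\tfrac12 p_j n,\, 2 p_j n]$ for each $j$, where $p_j := \mutempl(\bz_j) > 0$, with the same high probability.

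\textbf{Closed form for the idealized estimator.} Let $\bD := \bZ^\top \bZ = \diag(|\cI_1|,\ldots,|\cI_r|)$. The Woodbury identity gives
\begin{align*}
(\bK^{\mathrm{id}} + \lambda \bI)^{-1} \bZ \bN \;=\; \bZ (\lambda \bN^{-1} + \bD)^{-1}.
\end{align*}
Write $\by = \bZ \boldf_* + \bxi$ with $\boldf_* := (f_*(\bz_1),\ldots,f_*(\bz_r))^\top$ and $\bxi$ a $\sigma$-sub-Gaussian noise vector; then
\begin{align*}
\by^\top (\bK^{\mathrm{id}} + \lambda \bI)^{-1} \bk^{\mathrm{id}}(\bx) \;=\; \boldf_*^\top \bD (\lambda \bN^{-1} + \bD)^{-1}\be_j \;+\; \bxi^\top \bZ(\lambda \bN^{-1} + \bD)^{-1}\be_j.
\end{align*}
Choosing $\lambda = c' n$ with $c'$ small enough that $\lambda \|\bN^{-1}\| \le \tfrac12 \min_j |\cI_j|$, a Neumann expansion of $(\bD + \lambda \bN^{-1})^{-1}\bD$ around the identity shows the signal term equals $f_*(\bz_j) + O(\|\bN^{-1}\|/n)$, while the noise term is sub-Gaussian with variance proxy $O(\sigma^2/n)$ because $\|\bZ(\lambda \bN^{-1} + \bD)^{-1}\be_j\|_2 = O(1/\sqrt{n})$, contributing $O(\sigma \sqrt{\log(1/\delta)/n})$.

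\textbf{Transfer to $\hat\bK$ and main obstacle.} To pass from $\bK^{\mathrm{id}}$ back to $\hat\bK$, I would use the resolvent identity
\begin{align*}
(\hat\bK + \lambda \bI)^{-1} \;=\; (\bK^{\mathrm{id}} + \lambda \bI)^{-1} - (\hat\bK + \lambda \bI)^{-1} \bE (\bK^{\mathrm{id}} + \lambda \bI)^{-1},
\end{align*}
together with the companion $\bk(\bx) = \bk^{\mathrm{id}}(\bx) + \bdelta$, where $\|\bdelta\|_2 = O(\|K\|_\infty \sqrt{n/\rho})$ since at most $O(n/\rho)$ coordinates of $\bdelta$ are nonzero. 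Since $\|\bE\|_{\mathrm{op}}/\lambda = O(1/\sqrt{\rho})$ and $\|(\bK^{\mathrm{id}} + \lambda \bI)^{-1}\bk^{\mathrm{id}}(\bx)\|_2 = O(1/\sqrt{n})$, each perturbation changes the prediction by $O(\sqrt{1/\rho})$, yielding the claimed bound. The main obstacle I expect is that $\bE$ is \emph{not} centered: its diagonal is systematically off from the block model rather than averaging out, so naive matrix-concentration tools cannot alone deliver the clean $1/\sqrt{\rho}$ rate. The resolution is the split into $\bE_{\mathrm{diag}}$ (deterministic, operator norm $O(\|K\|_\infty)$, negligible against $\lambda = \Theta(n)$) and the random bad-pair part (bounded via Frobenius). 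A secondary subtlety is that the bad entries of $\bE$ and of $\bdelta$ are correlated through the shared substitution maps, but the deterministic $\ell_2$ bounds above suffice for the resolvent expansion without requiring independence.
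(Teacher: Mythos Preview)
Your proposal is correct and follows essentially the same route as the paper: both introduce the idealized block-structured matrix $\bZ\bN\bZ^\top$, analyze the ridge solution there (you via the Woodbury identity, the paper via the eigendecomposition of the rank-$r$ block matrix, which is equivalent), and then transfer back to $\hat\bK$ through a resolvent identity with the perturbation $\bE$ bounded in Frobenius norm by the count of ``bad'' sample pairs controlled by $\rho$. The paper's Claims~D.1--D.4 correspond one-to-one with your steps (including the diagonal/off-diagonal split of $\bE$ and the bound $\|(\hat\bK^{\mathrm{ideal}}+\lambda\bI)^{-1}\bv^{\mathrm{ideal}}(\bx)\|=O(1/\sqrt{n})$), so the only difference is cosmetic.
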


\begin{proof}
Note that some proofs of helper claims are deferred to Section~\ref{sec:suff-cond-helper-claims}. Let $(\bx_1,y_1),\ldots,(\bx_n,y_n)$ be the samples seen by the kernel method. We know from \eqref{eq:kernel-gradient-flow-solution} that kernel ridge regression outputs the estimator
\begin{align*}
\hat{f}(\bx) = \by^T(\hat\bK + \lambda \bI)^{-1} \bv(\bx)\,, \tag{Kernel ridge regression}
\end{align*}
where the empirical kernel matrix $\hat\bK \in \R^{n \times n}$ is
\begin{align*}
\hat{K}_{i,j} = K(\bx_i,\bx_j)\,,
\end{align*}
and $\by = [y_1,\ldots,y_n]$, and $\bv(\bx) = [K(\bx_1,\bx), \ldots, K(\bx_n, \bx)] \in \R^n$.

\paragraph{Idealized estimator when sample diversity is high} If the sample diversity is sufficiently high, then for most pairs of samples $i \neq i' \in [n]$, it will be the case that $\bx_i$ and $\bx_{i'}$ do not share any of the wildcard substitution tokens. In other words, the wildcard substitution map used to form $\bx_i$ will have disjoint range from the wildcard substitution map used to form $\bx_{i'}$. This means that we should expect the estimator $\hat{f}$ to perform similarly to the following idealized estimator:
\begin{align}\label{eq:idealized-estimator}
\hat{f}^{ideal}(\bx) = \by^T (\hat\bK^{ideal} + \lambda \bI)^{+}\bv^{ideal}(\bx)\,,
\end{align}
where $\hat{\bK}^{ideal} \in \R^{n \times n}$ and $\bv^{ideal}(\bx) \in \R^n$ are idealized versions of $\hat\bK$ and $\bv(\bx)$, formed below. They correspond to the limit of infinitely-diverse samples, when all token substitution maps have disjoint range. For each $j \in [r]$, let $\cI_j \subseteq [n]$ be the indices of samples $\bx_i$ formed by substituting from template $\bz_j$. For any $i \in \cI_j, i' \in \cI_{j'}$, let
\begin{align}
\hat{K}^{ideal}_{i,i'} = N_{j,j'}, \label{eq:idealized-empirical-kernel-matrix}
\end{align}
Also, similarly define $\bv^{ideal}(\bx) \in \R^n$. For any $i \in \cI_j$, let
\begin{align}
v_i^{ideal}(\bx) = K(\sub(\bz_j,s),\bx)\,, \label{eq:idealized-empirical-kernel-vector}
\end{align}
where $s : \cW \to \cX$ is a substitution map with $s(\cW) \cap \cR = s(\cW) \cap \{x_i\}_{i \in [k]} = \emptyset$, i.e., it does not overlap with the templates or with $\bx$ in the tokens substituted for the wildcards. The expressions \eqref{eq:idealized-empirical-kernel-matrix} and \eqref{eq:idealized-empirical-kernel-vector} are well-defined because of the token-symmetry of the kernel.

If the sample diversity is high, then we show that the idealized estimator $\hat{f}^{ideal}$ is indeed close to the kernel ridge regression solution $\hat{f}$.

\begin{claim}[Idealized estimator is good approximation to true estimator]\label{claim:idealized-estimator-good-approx}
Suppose $\|K\|_{\infty} = \max_{\bx} |K(\bx,\bx)| < \infty$. Then there are constants $C,c > 0$ depending only on $|\cW|, \|K\|_{\infty}, k, r$ such that the following holds. For any $\bx$, with probability at least $1 - \exp(-cn)$,
\begin{align*}
|\hat{f}^{ideal}(\bx) - \hat{f}(\bx)| \leq \frac{C}{\lambda} + \frac{Cn}{\lambda \sqrt{\rho}}\,,
\end{align*}
where $\rho$ is defined in Definition~\ref{def:overlap-parameter} and measures the diversity of the substitution map distribution.
\end{claim}

\paragraph{Analyzing the idealized estimator using its block structure} The matrix $\hat\bK^{ideal}$ has block structure with blocks $\cI_1,\ldots,\cI_r$. Namely, it equals $\hat{K}_{i,i'} = N_{j,j'}$ for all $i \in \cI_j, i' \in \cI_{j'}$. Similarly, $\bv^{ideal}(\bx)$ also has block structure with blocks $\cI_1,\ldots,\cI_r$. This structure allows us to analyze estimator $\hat{f}^{ideal}$ and to prove its accuracy.

In order to analyze the estimator, we prove the following technical claim. The interpretation of this claim is that if $\bx$ matches template $\bz_a$, then $\bv^{ideal}(\bx)$ is equal to any of the rows in $\hat\bK^{ideal}$ that correspond to template $a$. In other words, we should have $(\hat\bK^{ideal})^+ \bv^{ideal}(\bx) = \bone_{\cI_a} / |\cI_a|$, which is the indicator vector for samples that come from template $a$. The following technical claim is a more robust version of this observation.
\begin{claim}\label{claim:technical-idealized-row-inverse-bounds}
Let $\bx$ be a string that matches template $\bz_a$. Suppose that $0 < \lambda < \tau := \min_{j \in [r]} |\cI_j| / \|\bN^{-1}\|$.
Then $(\hat\bK^{ideal} + \lambda \bI)$ is invertible and the following are satisfied
\begin{align*}
\|(\hat\bK^{ideal} + \lambda \bI)^{-1} \bv^{ideal}(\bx)\| \leq \sqrt{\frac{1}{|\cI_a|}} 
(\frac{\tau}{\tau - \lambda})\,,
\end{align*}
and, letting $\bone_{\cI_a} \in \R^n$ be the indicator vector for set $\cI_a$,
\begin{align*}
\|\frac{\bone_{\cI_a}}{|\cI_a|} - (\hat\bK^{ideal} + \lambda \bI)^{-1}\bv^{ideal}(\bx)\| \leq \sqrt{\frac{1}{|\cI_a|}} (\frac{\tau}{\tau - \lambda} - 1)\,.
\end{align*}
\end{claim}

Using the above technical claim, we can prove that $\hat{f}^{ideal}$ is an accurate estimator. The insight is that since $(\hat\bK^{ideal} + \lambda \bI)^{-1} \bv^{ideal}(\bx)$ is approximately the indicator vector $\bones_{\cI_a} / |\cI_a|$ for samples corresponding to template $a$, the output of the idealized estimator is the average of the labels for samples corresponding to template $a$.

\begin{claim}[Idealized estimator gets vanishing test loss on unseen symbols]\label{claim:idealized-estimator-good-gotu}
There are $c, C > 0$ depending only on $|\cW|, \mutempl, \sigma,\|K\|_{\infty}$ such that the following holds for any $0 < \lambda < cn / \|\bN^{-1}\|$. Let $\bx$ be any string that matches template $\bz \in \mathrm{supp}(\mutempl)$. Then, for any $\delta > 0$, with probability $\geq 1 - \delta - \exp(-cn)$ over the random samples, the idealized estimator has error upper-bounded by
\begin{align*}
|\hat{f}^{ideal}(\bx) - f_*(\bz)| \leq C\sqrt{\frac{\log(1/\delta)}{n}}\,.
\end{align*}
\end{claim}
\begin{proof}[Proof of Claim~\ref{claim:idealized-estimator-good-gotu}]
Let $E_1$ be the event that $|\cI_j| \geq n \mutempl(\bz_j) / 2$ for all $j \in [r]$, i.e., all templates are well-represented in the dataset. By a Hoeffding bound, $$\P[E_1] \geq 1 - \exp(-cn).$$

Suppose that $\bx$ matches template $\bz_a$. By Claim~\ref{claim:technical-idealized-row-inverse-bounds}, under event $E_1$, there is a constant $C > 0$ such that
\begin{align*}
|\hat{f}^{ideal}(\bx) - f_*(\bz_a)| &= |\by^T(\hat\bK^{ideal} + \lambda \bI)^{-1}\bv^{ideal}(\bx) - f_*(\bz_a)| \\
&\leq |\by^T\frac{\bone_{\cI_a}}{|\cI_a|} - f_*(\bz_a)| + \sqrt{\frac{1}{|\cI_a|}} (\frac{\tau}{\tau - \lambda} - 1) \\
&\leq |\by^T\frac{\bone_{\cI_a}}{|\cI_a|} - f_*(\bz_a)| + C\sqrt{\frac{1}{n}} \,.
\end{align*}
We conclude since $\P[|\by^T\frac{\bones_{\cI_a}}{|\cI_a|} - f_*(\bz_a)| > C\sqrt{\frac{\log(1/\delta)}{n}} \mid E_1] \leq \delta$ by a tail bound for Gaussians.
\end{proof}

\paragraph{Putting the elements together to conclude the proof of the lemma} Combined, Claims~\ref{claim:idealized-estimator-good-approx} and \ref{claim:idealized-estimator-good-gotu} imply the lemma if we take $\lambda = \Theta(n)$, then we obtain error $O(\sqrt{\log(1/\delta)/n} + \sqrt{1/ \rho})$ with probability at least $1-\delta - \exp(-\Omega(n))$.
\end{proof}

\subsection{Deferred proofs of claims}\label{sec:suff-cond-helper-claims}
\begin{proof}[Proof of Claim~\ref{claim:technical-idealized-row-inverse-bounds}]
Let $\bw_1,\ldots,\bw_n$ be an orthogonal basis of eigenvectors for $\hat\bK^{ideal}$ with eigenvalues $\nu_1,\ldots,\nu_n$. Notice that these are also eigenvectors of $\hat\bK^{ideal} + \lambda \bI$. Because of the block structure of $\hat\bK^{ideal}$, its eigenvectors and eigenvalues have a simple form. Define
\begin{align*}
\bM = \diag([\sqrt{|\cI_1|},\ldots,\sqrt{|\cI_r|}]) \bN \diag([\sqrt{|\cI_1|},\ldots,\sqrt{|\cI_r|}])\,.
\end{align*}
The nonzero eigenvalues of $\hat\bK^{ideal}$ correspond to 
the nonzero eigenvalues of $\bM$, because for any eigenvector $\bu \in \R^r$ of $\bM$ there is a corresponding eigenvector of $\hat\bK^{ideal}$ with the same eigenvalue by letting each of the blocks $\cI_j$ consist of copies of the entry $u_j / \sqrt{|\cI_j|}$. Therefore, all nonzero eigenvalues of $\hat\bK^{-1}$ have magnitude at least 
$$|\nu_1|,\ldots,|\nu_n| \geq 1/\|\bM^{-1}\| \geq \min_{j \in [r]} |\cI_j| / \|\bN^{-1}\| =\tau > \lambda.$$

So $\hat\bK^{ideal} + \lambda \bI$ is invertible, which is the first part of the claim. Write $\frac{\bone_{\cI_a}}{|\cI_a|}$ in the eigenbasis as
\begin{align*}
\frac{\bone_{\cI_a}}{|\cI_a|} = \sum_{i} c_i \bw_i\,,
\end{align*}
for some coefficients $c_i$. By construction,
\begin{align*}
\bv^{ideal}(\bx) = \hat\bK^{ideal} \frac{\bone_{\cI_a}}{|\cI_a|} = \sum_i \nu_i c_i \bw_i\,,
\end{align*}
so 
\begin{align*}
\|(\hat\bK^{ideal} + \lambda \bI)^{-1} \bv^{ideal}(\bx)\|^2 &= \|\sum_i \frac{\nu_i}{\nu_i + \lambda} c_i \bw_i\|^2 = \sum_i (\frac{\nu_i}{\nu_i + \lambda})^2 c_i^2 \\
&\leq \max_i (\frac{\nu_i}{\nu_i + \lambda})^2 \frac{1}{|\cI_a|} \leq \max_i (\frac{\tau}{\tau - \lambda})^2\,.
\end{align*}
Similarly,
\begin{align*}
\|\frac{\bone_{\cI_a}}{|\cI_a|} - (\hat\bK^{ideal} + \lambda \bI)^{-1} \bv^{ideal}(\bx)\|^2 &= \|\sum_i (1-\frac{\nu_i}{\nu_i + \lambda}) c_i \bw_i\|^2 = \sum_i (1 - \frac{\nu_i}{\nu_i + \lambda})^2 c_i^2 \\
&\leq \max_i (1-\frac{\nu_i}{\nu_i + \lambda})^2 \frac{1}{|\cI_a|} \leq \max_i (1-\frac{\tau}{\tau - \lambda})^2\,.
\end{align*}
\end{proof}

\begin{claim}[Bound on difference between kernel regressions]\label{claim:kernel-regression-difference}
Suppose that $\hat\bK$ is p.s.d and that $(\hat\bK^{ideal} + \lambda \bI)^{-1} \bv^{ideal}(\bx)$ is well-defined. Then, for any $\lambda > 0$,
\begin{align*}
|\hat{f}^{ideal}(\bx) - \hat{f}(\bx)| \leq \frac{\|\by\|}{\lambda}(\|\bv^{ideal}(\bx) - \bv(\bx)\| + \|\hat\bK - \hat\bK^{ideal}\| \|(\hat\bK^{ideal} + \lambda \bI)^{-1} \bv^{ideal}(\bx)\|)
\end{align*}
\end{claim}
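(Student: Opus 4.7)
The plan is to estimate the difference $|\hat{f}^{ideal}(\bx) - \hat{f}(\bx)|$ by first pulling out $\by$ via Cauchy--Schwarz, then splitting the resulting vector difference into two pieces: one accounting for the change in the kernel evaluation vector $\bv$ and the other accounting for the change in the empirical kernel matrix $\hat\bK$. Specifically, I would start from
\begin{align*}
|\hat{f}^{ideal}(\bx) - \hat{f}(\bx)| = |\by^T[(\hat\bK^{ideal} + \lambda \bI)^{-1} \bv^{ideal}(\bx) - (\hat\bK + \lambda \bI)^{-1} \bv(\bx)]| \leq \|\by\| \cdot \|\bd\|,
\end{align*}
where $\bd := (\hat\bK^{ideal} + \lambda \bI)^{-1} \bv^{ideal}(\bx) - (\hat\bK + \lambda \bI)^{-1} \bv(\bx)$.

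Next, I would add and subtract $(\hat\bK + \lambda \bI)^{-1} \bv^{ideal}(\bx)$ to write
\begin{align*}
\bd = [(\hat\bK^{ideal} + \lambda \bI)^{-1} - (\hat\bK + \lambda \bI)^{-1}]\bv^{ideal}(\bx) + (\hat\bK + \lambda \bI)^{-1}[\bv^{ideal}(\bx) - \bv(\bx)].
\end{align*}
For the first term I would apply the standard resolvent identity $A^{-1} - B^{-1} = A^{-1}(B - A)B^{-1}$ with $A = \hat\bK^{ideal} + \lambda \bI$ and $B = \hat\bK + \lambda \bI$, obtaining the factorization $(\hat\bK + \lambda \bI)^{-1}(\hat\bK - \hat\bK^{ideal})(\hat\bK^{ideal} + \lambda \bI)^{-1}\bv^{ideal}(\bx)$. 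Then submultiplicativity of the operator norm and the p.s.d. assumption on $\hat\bK$, which gives $\|(\hat\bK + \lambda \bI)^{-1}\| \leq 1/\lambda$, yields the bound
\begin{align*}
\|\bd\| \leq \frac{1}{\lambda}\bigl(\|\hat\bK - \hat\bK^{ideal}\|\cdot\|(\hat\bK^{ideal} + \lambda \bI)^{-1}\bv^{ideal}(\bx)\| + \|\bv^{ideal}(\bx) - \bv(\bx)\|\bigr).
\end{align*}
Combining with the Cauchy--Schwarz step produces exactly the claimed inequality.

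There is no real obstacle here; the proof is a routine resolvent manipulation. The only subtlety is to make sure we do not need to control $\|(\hat\bK^{ideal} + \lambda \bI)^{-1}\|$ on its own (which could blow up if $\hat\bK^{ideal}$ is far from p.s.d. in the idealized limit): by keeping the factor $\|(\hat\bK^{ideal} + \lambda \bI)^{-1}\bv^{ideal}(\bx)\|$ intact and bounding it externally using Claim~\ref{claim:technical-idealized-row-inverse-bounds}, we sidestep any such issue. This is also why we use the resolvent identity in the direction that places $(\hat\bK + \lambda \bI)^{-1}$ (the one guaranteed to satisfy $\|\cdot\| \leq 1/\lambda$) on the outside.
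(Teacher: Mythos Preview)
Your proposal is correct and follows essentially the same route as the paper: add and subtract $(\hat\bK+\lambda\bI)^{-1}\bv^{ideal}(\bx)$, bound the $\bv$-difference term via $\|(\hat\bK+\lambda\bI)^{-1}\|\le 1/\lambda$, and use the resolvent identity on the matrix-difference term so that $(\hat\bK+\lambda\bI)^{-1}$ sits on the outside while $(\hat\bK^{ideal}+\lambda\bI)^{-1}\bv^{ideal}(\bx)$ stays intact. One cosmetic slip: with your choice $A=\hat\bK^{ideal}+\lambda\bI$, $B=\hat\bK+\lambda\bI$, the form $A^{-1}-B^{-1}=A^{-1}(B-A)B^{-1}$ puts $A^{-1}$ on the left; to get the factorization you actually wrote you need the equally valid variant $A^{-1}-B^{-1}=B^{-1}(B-A)A^{-1}$.
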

\begin{proof}[Proof of Claim~\ref{claim:kernel-regression-difference}]
By triangle inequality,
\begin{align*}
|\hat{f}(\bx) - \hat{f}^{ideal}(\bx)| &= \|\by^T(\hat\bK + \lambda \bI)^{-1}\bv(\bx) - \by^T(\hat\bK^{ideal} + \lambda \bI)^{-1} \bv^{ideal}(\bx)\| \\
&\stackrel{(a)}{\leq} \|\by\| \cdot \underbrace{\|(\hat\bK + \lambda \bI)^{-1}\bv(\bx) - (\hat\bK + \lambda \bI)^{-1} \bv^{ideal}(\bx)\|}_{\mbox{Term 1}} \\
&\quad + \|\by\| \cdot \underbrace{\|(\hat\bK + \lambda \bI)^{-1}\bv^{ideal}(\bx) - (\hat\bK^{ideal} + \lambda \bI)^{-1} \bv^{ideal}(\bx)\|}_{\mbox{Term 2}}
\end{align*}
The first term can be upper-bounded because $\|(\hat{\bK} + \lambda \bI)^{-1}\| \leq \|(\lambda \bI)^{-1}\| = 1/\lambda$, so
\begin{align*}
\mbox{Term 1} \leq \frac{\|\bv^{ideal}(\bx) - \bv(\bx)\|}{\lambda}
\end{align*}
The second term can be upper-bounded by
\begin{align*}
\mbox{Term 2} &= \|(\hat\bK + \lambda \bI)^{-1}((\hat\bK + \lambda \bI)(\hat\bK^{ideal} + \lambda \bI)^{-1} - (\hat\bK^{ideal} + \lambda \bI)(\hat\bK^{ideal} + \lambda \bI)^{-1})\bv^{ideal}(\bx)\| \\
&= \|(\hat{\bK} + \lambda \bI)^{-1}(\hat\bK - \hat\bK^{ideal}) (\hat\bK^{ideal} + \lambda \bI)^{-1} \bv^{ideal}(\bx)\| \\
&\leq \frac{1}{\lambda} \|\hat\bK - \hat\bK^{ideal}\| \|(\hat\bK^{ideal} + \lambda \bI)^{-1} \bv^{ideal}(\bx)\|\,.
\end{align*}
\end{proof}

\begin{proof}[Proof of Claim~\ref{claim:idealized-estimator-good-approx}]
Let $E_1$ be the event that $|\cI_j| \geq n \mutempl(\bz_j)$ for all $j \in [r]$. By Hoeffding, there is a constant $c > 0$ such that $\P[E_1] \geq 1 - \exp(-cn)$. By Claim~\ref{claim:technical-idealized-row-inverse-bounds}, under event $E_1$, there is a constant $C > 0$ such that
\begin{align}\label{eq:apply-claim-technical-for-good-approx}
\|(\hat\bK^{ideal} + \lambda \bI)^{-1} \bv^{ideal}(\bx)\| \leq \frac{C}{\sqrt{n}}\,.
\end{align}

Next, recall the parameter $\rho$ used to measure the spread of the substitution map distributions $\{\mu_{sub,\bz}\}_{\bz \in \supp(\mutempl)}$, as defined in \eqref{def:overlap-parameter}. For each $i \in [n]$, let $s_i : \cW \to \cX$ be the substitution map used to generate the sample $\bx_i$. Let $P_1$ be the number of samples $(i,i')$ such that their substitution maps overlap, or have range that overlaps with the regular tokens in the templates. Formally:
\begin{align*}
P_1 = |\{1 \leq i < i' \leq n : s_i(\cW) \cap s_{i'}(\cW) \neq \emptyset \mbox{ or } s_i(\cW) \cap \cR \neq \emptyset \mbox{ or } s_{i'}(\cW) \cap \cR \neq \emptyset\}|\,.
\end{align*}
Similarly, let $P_2$ be the number of samples that $(i,i')$ such that their substitution maps overlap with that used to generate $\bx$, or they overlap with the regular tokens in the templates:
\begin{align*}
P_2 = |\{1 \leq i \leq n : s_i(\cW) \cap \cR \neq \emptyset \mbox{ or } s_i(\cW) \cap \{x_j\}_{j \in [k]} \neq \emptyset\}|\,.
\end{align*}
By the definition of $\rho$, we can upper-bound the expected number of ``bad'' pairs $P_1$ and ``bad'' indices $P_2$ by:
\begin{align*}
\E[P_1] &\leq \left(\sum_{i,i' \in [n]} \sum_{w,w' \in \cW} \P[s_i(w) = s_{i'}(w')]\right) + n\sum_{i \in [n]} \sum_{t \in \cR} \P[t \in s_i(\cW)] \leq \frac{Cn^2}{\rho} + \frac{Cn}{\rho} \leq \frac{Cn^2}{\rho} \\
\E[P_2] &\leq \sum_{i \in [n]} \sum_{t \in \{x_j\}_{j \in [k]} \cup \cR} \P[t \in s_i(\cW)] \leq \frac{Cn}{\rho} \,.
\end{align*}

By Hoeffding's inequality, the event $E_2$ that $P_1 \leq \frac{Cn^2}{\rho}$ and $P_2 \leq \frac{C n}{ \rho} $ occurs with probability $\geq 1 - \exp(-cn)$. Under event $E_2$,
\begin{align}\label{eq:apply-frob-norm-bound}
\|\hat\bK - \hat\bK^{ideal}\| \leq C + Cn / \sqrt{\rho} \quad \mbox{ and } \quad \|\bv(\bx) - \bv^{ideal}(\bx)\| \leq C\sqrt{n / \rho}\,.
\end{align}
By Claim~\ref{claim:kernel-regression-difference} and \eqref{eq:apply-claim-technical-for-good-approx} and \eqref{eq:apply-frob-norm-bound}, under events $E_1, E_2$, and using that $\|\by\| \leq C\sqrt{n}$, we have
\begin{align*}
|\hat{f}^{ideal}(\bx) - \hat{f}(\bx)| \leq \frac{C\sqrt{n}}{\lambda}(C\sqrt{n / \rho} + (C+Cn / \sqrt{\rho})\frac{C}{\sqrt{n}}) \leq \frac{C}{\lambda} + \frac{Cn}{\lambda \sqrt{\rho}}\,.
\end{align*}

\subsection{Remark: explicit dependence on $\|\bN^{-1}\|$}
In the case that $\rho = \infty$, let us obtain explicit dependence on $\|\bN^{-1}\|$ in the bound of Lemma~\ref{lem:restatement-gotu-suff-cond}.

\begin{lemma}
Suppose that $K$ is token-symmetric and $\bN$ is nonsingular. Suppose also that $\rho = \infty$. Then there are constants $0 < c < C$ and $0 < c' < C'$ depending only on $\mutempl$, $\sigma$, $|\cW|$, and $\|K\|_{\infty} = \max_{\bx} K(\bx,\bx)$ such that the following holds. Consider any regularization parameter $\lambda \in [c' n / \|\bN^{-1}\|, C' n / \|\bN^{-1}\|]$, and any string $\bx$ matching template $\bz \in \supp(\mutempl)$. Then with probability $\geq 1 - \delta - \exp(-cn)$, the kernel ridge regression estimator $\hat{f}$ achieves good accuracy on $\bx$:
\begin{align*}
|\hat{f}(\bx) - f_*(\bz)| \leq C\sqrt{\frac{\log(1/\delta)}{n}} + C\frac{\|\bN^{-1}\|}{n}\,.
\end{align*}
\end{lemma}
\begin{proof}
First, by Claim~\ref{claim:idealized-estimator-good-approx}, we have
$|\hat{f}^{ideal}(\bx) - \hat{f}(\bx)| \leq \frac{C}{\lambda}$.
Next, by Claim~\ref{claim:idealized-estimator-good-gotu}, we have $|\hat{f}^{ideal}(\bx) - f_*(\bz)| \leq C \sqrt{\frac{\log(1/\delta)}{n}}$.
\end{proof}
\end{proof}

\section{Nonsingularity of random features after MLP layer (Proof of Lemma~\ref{lem:n-trans-nonsingularity})}\label{app:mlp-reduce-proof}

Consider a kernel $K_2$ formed from a kernel $K_1$ as follows:
\begin{align*}
K_2(\bx,\by) = \E_{u,v \sim \Sigma_1(\bx,\by)}[\phi(u)\phi(v)]\,,\quad \Sigma_1(\bx,\by) = \begin{bmatrix} K_1(\bx,\bx) & K_1(\bx,\by) \\ K_1(\bx,\by) & K_1(\by,\by) \end{bmatrix}.
\end{align*}

Here $\phi : \R \to \R$ is a nonlinear activation function. Such a random features kernel arises in a neural network architecture by appending an infinite-width MLP layer with Gaussian initialization to a neural network with random features with kernel $K_1$.

We wish to prove that a certain matrix $N \in \R^{r \times r}$ given by
\begin{align}\label{eq:N-for-lemma}
    N_{ij} = K_2(\bx_i, \by_j)\,,
\end{align}
is nonsingular, where $\bx_1,\ldots,\bx_r, \by_1,\ldots,\by_r$ are inputs. The intuition is that if $\phi$ is a ``generic'' activation function, then only a weak condition on $K_1$ is required for the matrix $N$ to be invertible. We provide a general lemma that allows us to guarantee the invertibility if the activation function is a shifted cosine, although we conjecture such a result to be true for most non-polynomial activation functions $\phi$. This is a generalization of Lemma~\ref{lem:n-trans-nonsingularity}, so it implies Lemma~\ref{lem:n-trans-nonsingularity}.

\begin{lemma}[Criterion for invertibility of $N$]\label{lem:reduce-to-mlp}
Consider the matrix $N \in \R^{r \times r}$ defined in \eqref{eq:N-for-lemma} where $\bx_1,\ldots,\bx_r$ and $\by_1,\ldots,\by_r$ are inputs. Suppose that for all nontrivial permutations $\tau \in S_r \sm \{\mathrm{id}\}$ we have 
\begin{align}\label{eq:diag-dominant-condition}
\sum_{i \in [r]} K_1(\bx_i,\by_i) \neq \sum_{i \in [r]} K_1(\bx_i,\by_{\tau(i))})\,.
\end{align}
Suppose also that the MLP activation function is $\phi(t) = \cos(kt + c)$ for two hyperparameters $k$, $c$. Then, $N$ is nonsingular for all $(k,c) \in \R^2$ except for a Lebesgue-measure-zero subset of $\R^2$. 
\end{lemma}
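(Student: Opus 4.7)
The plan is to compute $K_2$ in closed form using that $\phi$ is a shifted cosine, reduce the claim to the non-vanishing of a single entire function of $(k,c)$, and then specialize parameters to exploit \eqref{eq:diag-dominant-condition}.

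First, using the identity $\cos(A)\cos(B) = \frac{1}{2}(\cos(A-B) + \cos(A+B))$ together with $\E[\cos(\alpha X)] = e^{-\alpha^{2}\sigma^{2}/2}$ for $X \sim \cN(0,\sigma^{2})$, the Gaussian integral against the $2\times 2$ covariance (with the standard convention that its entries are $K_1(\bx,\bx),K_1(\bx,\by),K_1(\by,\bx),K_1(\by,\by)$) evaluates to
$$K_2(\bx,\by) = \tfrac{1}{2}\, e^{-k^{2}(K_1(\bx,\bx)+K_1(\by,\by))/2}\Bigl[e^{k^{2}K_1(\bx,\by)} + \cos(2c)\, e^{-k^{2}K_1(\bx,\by)}\Bigr].$$
Writing $a_{ij} := K_1(\bx_i,\by_j)$ and absorbing the $\bx_i,\by_j$-diagonal exponentials into positive diagonal matrices $D_x,D_y$, this gives $N = D_x\, M(k,c)\, D_y$ with
$$M_{ij}(k,c) = e^{k^{2}a_{ij}} + \cos(2c)\, e^{-k^{2}a_{ij}}.$$
So $N$ is singular iff $\Delta(k,c) := \det M(k,c) = 0$, and $\Delta$ is an entire function of $(k,c) \in \R^{2}$ (a polynomial in $\cos(2c)$ whose coefficients are finite sums of exponentials in $k^{2}$).

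The next step is to show $\Delta \not\equiv 0$. Specialize $c = \pi/4$ so $\cos(2c)=0$; then $M_{ij}(k,\pi/4) = e^{k^{2}a_{ij}}$, and the Leibniz expansion gives
$$\Delta(k,\pi/4) = \sum_{\tau \in S_r} \sgn(\tau)\, \exp\!\Bigl(k^{2}\sum_{i \in [r]} a_{i\tau(i)}\Bigr).$$
Group permutations by the value of the exponent $\sum_i a_{i\tau(i)}$. The coefficient of $\exp(k^{2}\beta^{*})$, where $\beta^{*} := \sum_{i} K_1(\bx_i,\by_i)$, equals $\sum_{\tau:\, \sum_i a_{i\tau(i)}=\beta^{*}} \sgn(\tau)$. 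Hypothesis \eqref{eq:diag-dominant-condition} states that no non-identity permutation attains the exponent $\beta^{*}$, so the only contributing term is $\tau = \mathrm{id}$, giving coefficient $+1$. Since distinct real exponentials in $k^{2}$ are linearly independent, this isolated nonzero coefficient forces $\Delta(\cdot,\pi/4) \not\equiv 0$ as a function of $k$, and therefore $\Delta$ is not identically zero on $\R^{2}$.

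Finally, an entire function on $\R^{2}$ that is not identically zero has zero set of Lebesgue measure zero (e.g., by Fubini combined with the fact that a nonzero analytic function of one variable has isolated zeros, applied to a slice where $\Delta$ does not vanish identically). This gives the conclusion. The main obstacle is the grouping/cancellation step: one must ensure that no combination of non-identity permutations could conspire to cancel the identity contribution at $\beta^{*}$, and this is exactly what \eqref{eq:diag-dominant-condition} rules out. The Gaussian integration and the analytic-zero-set argument are routine; the reduction from $N$ to the cleaner matrix $M$ through positive-diagonal factorization is the key simplification that makes the combinatorics transparent.
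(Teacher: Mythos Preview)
Your proof is correct and follows essentially the same approach as the paper: evaluate the Gaussian integral to get the closed form $K_2(\bx,\by)=\tfrac{1}{2}e^{-k^2(K_1(\bx,\bx)+K_1(\by,\by))/2}\bigl(e^{k^2K_1(\bx,\by)}+\cos(2c)e^{-k^2K_1(\bx,\by)}\bigr)$, specialize $c=\pi/4$, expand the determinant via Leibniz, and use linear independence of exponentials together with \eqref{eq:diag-dominant-condition} to isolate the identity term. The only cosmetic differences are that you factor $N=D_xMD_y$ explicitly (the paper absorbs the diagonal factors into a scalar prefactor on $\det N$) and that the paper spells out the linear-independence step via a Wronskian/Vandermonde computation while you invoke it directly.
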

\begin{proof}
Let $f(k,c) := \det(N)$. We wish to show that $\{(k,c) : f(k,c) = 0\}$ is a measure-zero set. By Claim~\ref{claim:cos-rf-fn}, is an analytic function of $c$ and $k$, and by the identity theorem for analytic functions~\citep{mityagin2020zero}, it suffices to show that $f \not\equiv 0$. Fixing $c = \pi/4$, by Claim~\ref{claim:cos-rf-fn},
$$K_2(\bx,\by) = \frac{1}{2}\exp(-\frac{k^2}{2}(K_1(\bx,\bx)+K_1(\by,\by)-2K_1(\bx,\by))).$$
Therefore
\begin{align*}
f(k,\pi/4) &= \sum_{\tau \in S_r} \sgn(\tau) \prod_{i \in [r]} 
K_2(\bx_i,\by_{\tau(i)}) \\
&= e^{-\frac{k^2}{2} (\sum_{i \in [r]} K_1(\bx_i,\bx_i) + K_1(\by_i,\by_i))} \sum_{\tau \in S_r} \sgn(\tau) \exp(k^2\sum_{i \in [r]} K_1(\bx_i,\by_{\tau(i)}))\,.
\end{align*}
It remains to prove that as a function of $k$ we have
\begin{align*}
\sum_{\tau \in S_r} \sgn(\tau) \exp(k^2\sum_{i \in [r]} K_1(\bx_i,\by_{\tau(i)})) \not\equiv 0\,,
\end{align*}
This holds because for any distinct $c_1,\ldots,c_l$ the functions $\exp(c_1 t),\ldots,\exp(c_l t)$ are linearly independent functions of $t$, since their Wronskian is a rescaled Vandermonde determinant
\begin{align*}
\left|\begin{matrix} \exp(c_1 t) & \dots & \exp(c_l t) \\
\frac{d}{dt} \exp(c_1 t) & \dots & \frac{d}{dt} \exp(c_l t) \\
\vdots & & \vdots \\
\frac{d^{l-1}}{dt^{l-1}} \exp(c_1 t) & \dots & \frac{d^{l-1}}{dt^{l-1}} \exp(c_l t)\end{matrix} \right| &= \exp(\sum_{i=1}^l c_i t) \left|\begin{matrix} 1 & \dots & 1 \\
c_1 & \dots & c_l \\
\vdots & & \vdots \\
c_1^{l-1} & \dots & c_l^{l-1}\end{matrix} \right| \\
&= \exp(\sum_{i=1}^l c_i t) \prod_{1 \leq i < j \leq l} (c_j - c_i) \not\equiv 0
\end{align*}

\end{proof}

Below is the technical claim used in the proof of the lemma.

\begin{claim}\label{claim:cos-rf-fn}
Let $U,V \sim N(0, \begin{bmatrix} a & \rho \\ \rho & b \end{bmatrix})$. Then for any $k,c \in \R$, 
\begin{align*}
\E[\cos(kU+c)\cos(kV+c)] = \frac{1}{2}e^{-\frac{1}{2} k^2 (a+b)}(e^{-k^2\rho}\cos(2c) + e^{k^2\rho})\,.
\end{align*}
\end{claim}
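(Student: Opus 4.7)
The plan is to reduce the product $\cos(kU+c)\cos(kV+c)$ to a sum of single cosines of linear combinations of $U$ and $V$, and then evaluate each resulting expectation via the characteristic function of a one-dimensional Gaussian. This turns a two-dimensional Gaussian integral into two one-dimensional ones, avoiding any tedious computation with the joint density.

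First, I would apply the product-to-sum identity $\cos A \cos B = \tfrac{1}{2}\cos(A-B) + \tfrac{1}{2}\cos(A+B)$ with $A = kU+c$ and $B = kV+c$, obtaining
\[
\cos(kU+c)\cos(kV+c) \;=\; \tfrac{1}{2}\cos(k(U-V)) \;+\; \tfrac{1}{2}\cos(k(U+V)+2c).
\]
Since $(U,V)$ is jointly centered Gaussian with covariance $\left[\begin{smallmatrix} a & \rho \\ \rho & b \end{smallmatrix}\right]$, the linear combinations $U-V$ and $U+V$ are univariate Gaussians with mean zero and variances $a+b-2\rho$ and $a+b+2\rho$, respectively.

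Next, I would use the characteristic-function identity $\E[e^{itX}] = e^{-\sigma^2 t^2/2}$ for $X \sim N(0,\sigma^2)$, which implies $\E[\cos(tX+\varphi)] = \cos(\varphi)\,e^{-\sigma^2 t^2/2}$ (the sine contribution vanishes by the symmetry of a centered Gaussian). Applying this with $t=k$, $\varphi=0$ to $U-V$, and with $t=k$, $\varphi=2c$ to $U+V$, yields
\[
\E[\cos(k(U-V))] = e^{-k^2(a+b-2\rho)/2}, \qquad \E[\cos(k(U+V)+2c)] = \cos(2c)\,e^{-k^2(a+b+2\rho)/2}.
\]

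Finally, I would substitute these into the decomposition from the first step and factor out the common exponential $e^{-k^2(a+b)/2}$ to obtain $\tfrac{1}{2}e^{-k^2(a+b)/2}\bigl(e^{k^2\rho} + \cos(2c)\,e^{-k^2\rho}\bigr)$, matching the claimed formula. There is no substantive obstacle here: the argument is a direct two-line computation once one notices the product-to-sum trick, which is the only non-routine idea.
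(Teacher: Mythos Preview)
Your proof is correct and is essentially the same approach as the paper's: both reduce to evaluating Gaussian characteristic functions. The only cosmetic difference is that the paper writes $\cos(kt+c)=(e^{ikt+ic}+e^{-ikt-ic})/2$ and expands the product into four complex-exponential terms before applying $\E[e^{i(\pm kU\pm kV)}]=e^{-\frac{1}{2}k^2(a+b\pm 2\rho)}$, whereas you apply the real product-to-sum identity first and then invoke the one-dimensional characteristic function on $U\pm V$; the two routes are equivalent and equally short.
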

\begin{proof}By Mathematica, we have the following Gaussian integrals
\begin{align*}
\E[e^{ikU+ikV}] &= \E[e^{-ikU-ikV}] = e^{-\frac{1}{2} k^2 (a+b+2 \rho )}\,, \\
\E[e^{ikU-ikV}] &= \E[e^{-ikU+ikV}] = e^{-\frac{1}{2} k^2 (a+b-2 \rho )}\,.
\end{align*}
Since $\cos(kt+c) = (e^{ikt+ic}+e^{-ikt-ic})/2$, 
\begin{align*}
\E[\cos(kU+c)\cos(kV+c)] &= \frac{1}{4}\E[(e^{ikU+ic}+e^{-ikU-ic})(e^{ikV+ic}+e^{-ikV-ic})] \\
&= \frac{1}{4}(e^{-\frac{1}{2} k^2 (a+b+2 \rho )} (e^{2ic}+e^{-2ic}) + 2e^{-\frac{1}{2} k^2 (a+b-2 \rho )}) \\
&= \frac{1}{2}e^{-\frac{1}{2} k^2 (a+b)}(e^{-k^2\rho}\cos(2c) + e^{k^2\rho})\,.
\end{align*}
\end{proof}

\section{Analysis of attention layer features (Proof of Lemma~\ref{lem:kattn-nondegenerate})}\label{app:attention-analysis-proof}

For any inputs $X,Y$, we write the kernel of the random features of the attention layer as
\begin{align*}\Kattn(X,Y) &= \E_{\bm(\bX),\bm(\bY)}[\smax(\beta \bm(\bX))^T (\bX \bY^T + \gamma^2 \bI) \smax(\beta \bm(\bY))] \nonumber \\
&\bm(\bX),\bm(\bY) \sim  N(\bzero, \begin{bmatrix} \bX \bX^T + \gamma^2 \bI & \bX \bY^T + \gamma^2 \bI \\ \bY \bX^T + \gamma^2 \bI & \bY \bY^T + \gamma^2 \bI \end{bmatrix})\,,
\end{align*}
as stated Section~\ref{ssec:transformer-kernel}; see also Section~\ref{app:transformer-kernel-deriv} for the derivation of this kernel in the infinite-width limit of the transformer architecture. For shorthand, we write $\kappa_{\bX,\bY}(\beta,\gamma) = \Kattn(\bX,\bY)$ to emphasize the attention kernel's dependence on the hyperparameters $\beta$ and $\gamma$ which control the softmax's inverse temperature and the weight of the positional embeddings, respectively.

We prove Lemma~\ref{lem:kattn-nondegenerate}, which is that $\Kattn$ satisfies the property \eqref{eq:weak-condition-n-attn} required by Lemma~\ref{lem:n-trans-nonsingularity} for the transformer random features kernel to succeed at the template task.

Namely, consider any disjoint templates $\bz_{1},\ldots,\bz_{r}$ and two substitution maps $s, s' : \cW \to \cX$
\begin{itemize}
    \item that have disjoint range: $s(\cW) \cap s'(\cW) = \emptyset$,
    \item and the substituted tokens do not overlap with any of the tokens in the templates: $s(\cW) \cap \cR = s'(\cW) \cap \cR = \emptyset$ where $\cR = \cup_{i \in [r], j \in [k]}\{z_j^{(i)}\}$.
\end{itemize}

Then we define $\bX_i, \bY_i \in \R^{k \times m}$ to be the strings (where we abuse notation slightly by viewing them as matrices with one-hot rows) after substituting $\bz_{i}$ by $s,s'$ respectively:
\begin{center}
\begin{tabular}{cccc}
$\bX_i = \sub(\bz_{i}, s)$ & $\bY_i = \sub(\bz_{i}, s')$\,.
\end{tabular}
\end{center}
\begin{lemma}[Restatement of Lemma~\ref{lem:kattn-nondegenerate}]\label{lem:kattn-diagonal-dominant}
Define $g_{\tau}(\beta,\gamma) = \sum_{i \in [r]} \kappa_{\bX_i,\bY_{\tau(i)}}(\beta,\gamma)$. Then for all but a Lebesgue-measure-zero set of $(\beta,\gamma) \in \R^2$ we have $g_{\mathrm{id}}(\beta,\gamma) \neq g_{\tau}(\beta,\gamma)$ for all permutations $\tau \neq \mathrm{id}$.
\end{lemma}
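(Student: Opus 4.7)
The plan, following the strategy signaled by the text, has three stages: establish real-analyticity of $\kappa_{\bX,\bY}$ in the hyperparameters $(\beta,\gamma) \in \R^2$, reduce via the identity theorem to showing $g_{\mathrm{id}} \not\equiv g_\tau$ for each fixed $\tau \neq \mathrm{id}$ individually, and then verify that non-identity by examining the Taylor expansion of $\Kattn$ around $(\beta,\gamma) = (0,0)$.

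For the first stage, I would prove that $\kappa_{\bX,\bY}(\beta,\gamma)$ is real-analytic on all of $\R^2$. The integrand is an analytic function of $(\beta,\gamma)$ for each fixed value of $(\bm(\bX),\bm(\bY))$ because softmax is entrywise analytic and the prefactor $\bX\bY^T + \gamma^2\bI$ is polynomial; meanwhile, the Gaussian covariance depends polynomially on $\gamma$. Since softmax outputs lie in the simplex and the prefactor has at most polynomial growth in $\gamma$, dominated convergence justifies differentiation under the integral sign at every order, giving real-analyticity of each $g_\tau$ on $\R^2$. The identity theorem for real-analytic functions of several variables (a real-analytic function whose zero set has positive Lebesgue measure must vanish identically) then reduces the problem, via a finite union over $S_r \setminus \{\mathrm{id}\}$, to showing $g_{\mathrm{id}} - g_\tau \not\equiv 0$ for each individual $\tau \neq \mathrm{id}$.

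For the second stage, I would Taylor expand $\kappa_{\bX_i,\bY_j}(\beta,\gamma)$ around $(0,0)$. At the base point, $\smax(\bfzero) = k^{-1}\bone_k$, so $\kappa_{\bX_i,\bY_j}(0,0) = k^{-2}\bone_k^T \bX_i\bY_j^T \bone_k$. Under the disjointness assumptions on the substitution maps $s,s'$ (namely $s(\cW) \cap s'(\cW) = \emptyset$ and $s(\cW), s'(\cW)$ avoid $\cR$), a short case analysis shows that $(\bX_i\bY_j^T)_{a,b} = 1$ iff $z_{i,a} = z_{j,b} \in \cX$, while $\bX_i\bX_j^T$ additionally records matching wildcards. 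Consequently, every Taylor coefficient of $\kappa_{\bX_i,\bY_j}$ in $(\beta,\gamma)$ is a polynomial in a finite collection of purely template-combinatorial $\{0,1\}$-valued incidence matrices determined by the pair $(\bz_i,\bz_j)$: derivatives in $\beta$ bring down Jacobians and Hessians of softmax evaluated at $\bfzero$, and derivatives in $\gamma$ multiply by $\bI$ and by powers of the covariance blocks $\bX\bX^T, \bY\bY^T, \bX\bY^T$.

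The main obstacle is the combinatorial verification in the third stage: exhibiting, for every nontrivial $\tau$, some Taylor coefficient of $g_{\mathrm{id}} - g_\tau$ that is nonzero. The essential leverage is pairwise template disjointness: for $\tau = \mathrm{id}$ the sum $\sum_i \kappa_{\bX_i,\bY_i}$ aggregates the full within-template incidence structure of each $\bz_i$, whereas for any $\tau \neq \mathrm{id}$ there is an index $i$ with $\tau(i) \neq i$, and by disjointness the cross-incidence of $(\bz_i,\bz_{\tau(i)})$ differs from the self-incidence of $\bz_i$. The plan is then to compute low-order Taylor coefficients of $\kappa$ symbolically---the text indicates that derivatives up to order $10$ suffice---and isolate one whose value summed over the identity pairing differs from its value summed over every other permutation. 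Once such a coefficient is exhibited, the framework of the first paragraph immediately converts it into the desired almost-everywhere conclusion.
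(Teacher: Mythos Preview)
Your framework matches the paper's exactly: analyticity of $\kappa_{\bX,\bY}$ in $(\beta,\gamma)$, reduction via the identity theorem to $g_{\mathrm{id}}\not\equiv g_\tau$ for each fixed $\tau\neq\mathrm{id}$, and inspection of Taylor coefficients at $(0,0)$. Your first two stages are fine and essentially reproduce the paper's setup.

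The gap is in the third stage. You argue that for some $i$ with $\tau(i)\neq i$, disjointness makes the cross-incidence of $(\bz_i,\bz_{\tau(i)})$ differ from the self-incidence of $\bz_i$, and conclude that some Taylor coefficient of the \emph{sum} $g_{\mathrm{id}}-g_\tau$ must be nonzero. That inference is not justified: each Taylor coefficient is a sum over all $i\in[r]$ of a polynomial in several incidence matrices ($\bX_i\bY_{\tau(i)}^T$, $\bX_i\bX_i^T$, $\bY_{\tau(i)}\bY_{\tau(i)}^T$, traces and products thereof), and nothing in your argument rules out cancellations across the sum. Knowing that one summand changes does not tell you the total changes.

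The paper closes this gap with a specific mechanism you do not mention: a \emph{staged Cauchy--Schwarz} argument across five carefully chosen derivatives ($\kappa(0,0)$, $\partial_\beta^2\partial_\gamma^2$, $\partial_\beta^4$, $\partial_\beta^4\partial_\gamma^2$, $\partial_\beta^6\partial_\gamma^4$). At each stage, the relevant coefficient is shown to be uniquely \emph{maximized} by $\tau=\mathrm{id}$ via Cauchy--Schwarz, and the equality case forces a structural constraint (e.g.\ $[1^T\bX_i]_\cR=[1^T\bY_{\tau(i)}]_\cR$, then $[\bX_i]_{[k]\times\cR}=[\bY_{\tau(i)}]_{[k]\times\cR}$, then $1^T\bX_i\bX_i^T1=1^T\bY_{\tau(i)}\bY_{\tau(i)}^T1$, and so on). Each new constraint is used to simplify the next derivative so that only one ``interesting'' term survives, to which Cauchy--Schwarz is applied again. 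After five stages one obtains $\bX_i\bX_i^T=\bY_{\tau(i)}\bY_{\tau(i)}^T$ together with $[\bX_i]_{[k]\times\cR}=[\bY_{\tau(i)}]_{[k]\times\cR}$, which forces $\tau=\mathrm{id}$. This maximization structure is what handles the sum over $i$ globally; your local ``one incidence differs'' observation does not.
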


No closed-form expression is known for $\kappa_{\bX,\bY}(\beta,\gamma)$, so our approach is to analyze its Taylor series expansion around $\beta = \gamma = 0$. Our proof proceeds in stages, where, in each stage, we examine a higher derivative and progressively narrow the set of $\tau$ that might possibly have $g_{\tau}(\beta,\gamma) = g_{\mathrm{id}}(\beta,\gamma)$. In Section~\ref{ssec:derivatives-list}, we list certain low-order derivatives of $\kappa_{\bX,\bY}(\beta,\gamma)$ that will be sufficient for our analysis. In Section~\ref{ssec:derivative-term-simplification}, we analyze some of the terms in these expressions. In Section~\ref{ssec:kattn-diagonal-dominant-proof} we put the previous lemmas together to prove Lemma~\ref{lem:kattn-diagonal-dominant}.

To avoid notational overload, in this section we will not use bolded notation to refer to the matrices $\bX$, $\bY$, but rather use the lowercase $X,Y$.

\subsection{Low-order derivatives of attention kernel}\label{ssec:derivatives-list}
In the following table we collect several relevant derivatives of $\pd{^i}{\beta^i} \pd{^j}{\gamma^j} \kappa_{X,Y}(0,0)$ for $i \leq 6$ and $j \leq 4$. For each $i$, $j$ we use $c_1,c_2,\ldots$ to denote constants that depend only on $k$, and on the derivative $i,j$ being computed. Certain constants that are important for the proof are provided explicitly. These derivatives were computed using a Python script available in our code. The colors are explained in Section~\ref{ssec:derivative-term-simplification}.
\definecolor{cblind1}{HTML}{CC79A7}
\definecolor{cblind2}{HTML}{0072B2}
\definecolor{cblind3}{HTML}{D55E00}
\definecolor{cblind4}{HTML}{009E73}
\definecolor{cblind5}{HTML}{FFBE0B}

\begin{longtable}{p{0.2\linewidth} >{\raggedright\arraybackslash}p{0.8\linewidth}<{}}
\toprule \textbf{Derivative} & \textbf{Expansion} \\* \midrule
$ \kappa_{X,Y}(0,0) = $ & $ +c_{1}{\color{cblind1} {1^TXY^T 1}} $  \\* 
\midrule
$ \frac{\partial^{2}}{\partial \beta^{2}}\frac{\partial^{2}}{\partial \gamma^{2}}\kappa_{X,Y}(0,0) = $ & $ +c_{1}{\color{cblind1} {1^TXY^T 1}} $ $ +c_{2}{\color{cblind3} {tr(XY^T)}} $  \\* 
\midrule
$ \frac{\partial^{4}}{\partial \beta^{4}}\kappa_{X,Y}(0,0) = $ & $ +c_{1}{\color{cblind1} {1^TXY^T 1}} $ $ +c_{2}{\color{cblind1} {1^TXX^TXY^T1}} $ $ +c_{3}{\color{cblind1} {1^TXY^TYY^T1}} $ $ +c_{4}{\color{cblind1} {1^TXX^TXX^TXY^T1}} $ $ +c_{5}({\color{cblind1} {1^TXY^T 1}})({\color{cblind2} {1^TXX^T1}}) $ $ +c_{6}{\color{cblind1} {1^TXY^TYX^TXY^T1}} $ $ +c_{7}({\color{cblind1} {1^TXY^T 1}})({\color{cblind1} {1^TXY^T 1}}) $ $ +c_{8}{\color{cblind1} {1^TYX^TXY^TYY^T1}} $ $ +c_{9}({\color{cblind1} {1^TXY^T 1}})({\color{cblind2} {1^TYY^T1}}) $ $ +c_{10}({\color{cblind1} {1^TXX^TXY^T1}})({\color{cblind2} {1^TXX^T1}}) $ $ +c_{11}({\color{cblind1} {1^TXY^TYY^T1}})({\color{cblind2} {1^TXX^T1}}) $ $ +c_{12}({\color{cblind1} {1^TXY^T 1}})({\color{cblind1} {1^TXX^TXY^T1}}) $ $ +c_{13}({\color{cblind1} {1^TXY^TYY^T1}})({\color{cblind1} {1^TXY^T 1}}) $ $ +c_{14}({\color{cblind1} {1^TXX^TXY^T1}})({\color{cblind2} {1^TYY^T1}}) $ $ +c_{15}({\color{cblind1} {1^TXY^TYY^T1}})({\color{cblind2} {1^TYY^T1}}) $ $ +c_{16}({\color{cblind1} {1^TXY^T 1}})({\color{cblind2} {1^TXX^T1}})({\color{cblind2} {1^TXX^T1}}) $ $ +c_{17}({\color{cblind1} {1^TXY^T 1}})(1^TXX^TXX^T1) $ $ +c_{18}({\color{cblind1} {1^TXY^T 1}})({\color{cblind1} {1^TXY^T 1}})({\color{cblind2} {1^TXX^T1}}) $ $ +c_{19}({\color{cblind1} {1^TXY^T 1}})({\color{cblind1} {1^TXY^T 1}})({\color{cblind1} {1^TXY^T 1}}) $ $ +c_{20}({\color{cblind1} {1^TXY^T 1}})({\color{cblind2} {1^TXX^T1}})({\color{cblind2} {1^TYY^T1}}) $ $ +c_{21}({\color{cblind1} {1^TXY^T 1}})({\color{cblind1} {1^TXY^T 1}})({\color{cblind2} {1^TYY^T1}}) $ $ +c_{22}({\color{cblind1} {1^TXY^T 1}})({\color{cblind2} {1^TYY^T1}})({\color{cblind2} {1^TYY^T1}}) $ $ +c_{23}({\color{cblind1} {1^TXY^T 1}})(1^TYY^TYY^T1) $  \\* 
\midrule
$ \frac{\partial^{4}}{\partial \beta^{4}}\frac{\partial^{2}}{\partial \gamma^{2}}\kappa_{X,Y}(0,0) = $ & $ +c_{1}{\color{cblind1} {1^TXY^T 1}} $ $ +c_{2}{\color{cblind3} {tr(XY^T)}} $ $ +c_{3}{\color{cblind1} {1^TXX^TXY^T1}} $ $ +c_{4}{\color{cblind1} {tr(XX^TXY^T)}} $ $ +c_{5}{\color{cblind1} {1^TXY^TYY^T1}} $ $ +c_{6}{\color{cblind1} {tr(XY^TYY^T)}} $ $ +c_{7}({\color{cblind1} {1^TXY^T 1}})({\color{cblind2} {1^TXX^T1}}) $ $ +c_{8}({\color{cblind3} {tr(XY^T)}})({\color{cblind2} {1^TXX^T1}}) $ $ +c_{9}({\color{cblind1} {1^TXY^T 1}})({\color{cblind1} {1^TXY^T 1}}) $ $ +c_{10}({\color{cblind1} {1^TXY^T 1}})({\color{cblind3} {tr(XY^T)}}) $ $ +c_{11}({\color{cblind1} {1^TXY^T 1}})({\color{cblind2} {1^TYY^T1}}) $ $ +c_{12}{\color{cblind3} {1^TXY^TXY^T1}} $ $ +c_{13}({\color{cblind3} {tr(XY^T)}})({\color{cblind2} {1^TYY^T1}}) $ $ +c_{14}{\color{cblind3} {1^TYX^TYY^T1}} $ $ +c_{15}{\color{cblind3} {1^TXX^TYX^T1}} $ $ +c_{16}{\color{cblind4} {1^TXX^TYY^T1}} $ $ +c_{17}({\color{cblind2} {1^TYY^T1}})({\color{cblind2} {1^TXX^T1}}) $  \\* 
\midrule
$ \frac{\partial^{6}}{\partial \beta^{6}}\frac{\partial^{4}}{\partial \gamma^{4}}\kappa_{X,Y}(0,0) = $ & $ +c_{1}{\color{cblind1} {1^TXY^T 1}} $ $ +c_{2}{\color{cblind3} {tr(XY^T)}} $ $ +c_{3}{\color{cblind1} {1^TXX^TXY^T1}} $ $ +c_{4}{\color{cblind1} {tr(XX^TXY^T)}} $ $ +c_{5}{\color{cblind1} {1^TXY^TYY^T1}} $ $ +c_{6}{\color{cblind1} {tr(XY^TYY^T)}} $ $ +c_{7}({\color{cblind1} {1^TXY^T 1}})({\color{cblind2} {1^TXX^T1}}) $ $ +c_{8}({\color{cblind3} {tr(XY^T)}})({\color{cblind2} {1^TXX^T1}}) $ $ +c_{9}({\color{cblind3} {tr(XY^T)}})({\color{cblind1} {1^TXY^T 1}}) $ $ +c_{10}({\color{cblind1} {1^TXY^T 1}})({\color{cblind2} {1^TYY^T1}}) $ $ +c_{11}({\color{cblind1} {1^TXY^T 1}})({\color{cblind1} {1^TXY^T 1}}) $ $ +c_{12}{\color{cblind3} {1^TXY^TXY^T1}} $ $ +c_{13}({\color{cblind3} {tr(XY^T)}})({\color{cblind2} {1^TYY^T1}}) $ $ +c_{14}{\color{cblind3} {1^TXX^TYX^T1}} $ $ +c_{15}{\color{cblind3} {1^TYX^TYY^T1}} $ $ +c_{16}{\color{cblind3} {tr(XY^T XY^T)}} $ $ +c_{17}({\color{cblind3} {tr(XY^T)}})({\color{cblind3} {tr(XY^T)}}) $ $ +c_{18} $ $ +c_{19}{\color{cblind2} {1^TXX^T1}} $ $ +c_{20}1^TXX^TXX^T1 $ $ +c_{21}{\color{cblind4} {1^TXX^TYY^T1}} $ $ +c_{22}{\color{cblind2} {1^TYY^T1}} $ $ +c_{23}({\color{cblind2} {1^TXX^T1}})({\color{cblind2} {1^TXX^T1}}) $ $ +c_{24}({\color{cblind2} {1^TYY^T1}})({\color{cblind2} {1^TXX^T1}}) $ $ +c_{25}{\color{cblind5}{ tr(XX^T YY^T)}} $ $ +c_{26}1^TYY^TYY^T1 $ $ +c_{27}({\color{cblind2} {1^TYY^T1}})({\color{cblind2} {1^TYY^T1}}) $  \\* 
\midrule
\end{longtable}

Furthermore, 
\begin{itemize}
    \item in the expression for $\kappa_{X,Y}(0,0)$ we have $c_1 = 1/k^2 > 0$,
    \item in the expression for $\frac{\partial^{2}}{\partial \beta^{2}}\frac{\partial^{2}}{\partial \gamma^{2}} \kappa_{X,Y}(0,0)$, we have $c_2 = 8/k^2 > 0$,
    \item in the expression for $\frac{\partial^{4}}{\partial \beta^{4}}\kappa_{X,Y}(0,0)$, we have $c_{20} = 24/k^6 > 0$,
    \item in the expression for $\frac{\partial^{4}}{\partial \beta^{4}}\frac{\partial^{2}}{\partial \gamma^{2}}\kappa_{X,Y}(0,0)$, we have $c_{16} = 48/k^4 > 0$,
    \item and in the expression for $\frac{\partial^{6}}{\partial \beta^{6}}\frac{\partial^{4}}{\partial \gamma^{4}}\kappa_{X,Y}(0,0)$, we have $c_{25} = 17280/k^4 > 0$.
\end{itemize}

\subsection{Simplifying terms}\label{ssec:derivative-term-simplification}

Let $X \in \R^{k \times m}$ and $Y \in \R^{k \times m}$ be matrices with one-hot rows (i.e., all entries are zero except for one).

For the submatrix corresponding to rows $S$ and columns $T$, we use the notation $[X]_{S \times T} \in \R^{S \times T}$. If $\bv$ is a vector, then the subvector consisting of indices $I$ is $[\bv]_I$.

Let $\cR \subseteq [m]$ be a set containing the intersection of the column support of $X$ and $Y$: i.e., for all $i \in [m] \sm \cR$, either $[X]_{[k] \times i} = \bzero$ or $[Y]_{[k] \times i} = \bzero$. We analyze the terms in the expressions of Section~\ref{ssec:derivatives-list} below.

\subsubsection{Assuming $[1^TX]_{\cR} = [1^TY]_{\cR}$}\label{sssec:cblind-1-terms}

Suppose that $[1^TX]_{\cR} = [1^TY]_{\cR}$. Then any of the {\color{cblind1} pink} terms can be written as a function of only $X$ or only $Y$.

\begin{itemize}
\item $ {\color{cblind1} 1^T X Y^T 1} = \|[1^TX]_{\cR}\|^2$
\item $ {\color{cblind1} 1^TXX^TXY^T1} = 1^TX\diag(1^TX)Y^T1 =  (1^TX)^{\odot 2} \cdot (1^T Y) = \|[1^TX]_{\cR}\|_3^3$
\item $ {\color{cblind1} 1^TXY^TYY^T1} = 1^TX\diag(1^TY)Y^T1 = (1^TX) \cdot (1^TY)^{\odot 2} = \|[1^TX]_{\cR}\|_3^3$
\item $ {\color{cblind1} 1^TXX^TXX^TXY^T1} = 1^TX\diag(1^TX)\diag(1^TX)Y^T1 = \|[1^TX]_{\cR}\|_4^4$
\item $ {\color{cblind1} 1^TXY^TYX^TXY^T1} = 1^TX \diag(1^TY)\diag(1^TX)Y^T1 = \|[1^TX]_{\cR}\|_4^4$
\item $ {\color{cblind1} 1^TYX^TXY^TYY^T1} = 1^TY \diag(1^TX)\diag(1^TY)Y^T1 = \|[1^TX]_{\cR}\|_4^4$
\item $ {\color{cblind1} \trace(XX^TXY^T)} = \trace(X\diag(1^TX)Y^T) = \sum_{i \in [k]} \sum_{v \in [m]} X_{iv} (1^TX)_v Y_{iv} = \sum_{i \in [k]} \sum_{v \in \cR} X_{iv} (1^TX)_v = 1^T X \diag(1^TX) 1_{\cR} = \|[1^TX]_{\cR}\|^2$
\item $ {\color{cblind1} \trace(XY^TYY^T)} = \|[1^TY]_{\cR}\|^2 = \|[1^TX]_{\cR}\|^2$
\end{itemize}

\subsubsection{Assuming $[X]_{[k] \times \cR} = [Y]_{[k] \times \cR}$}\label{sssec:cblind-3-terms}
Suppose that $X_{[k] \times \cR} = Y_{[k] \times \cR}$ (i.e., the restriction of $X$ and $Y$ to the $\cR$ rows is equal). Then any of the {\color{cblind3} orange} terms can be written as a function of only $X$ or only $Y$.
\begin{itemize}
    \item ${\color{cblind3} tr(XY^T)} = \sum_{v \in [m]} \sum_{i \in [k]} X_{iv} Y_{iv} = \sum_{v \in \cR} \sum_{i \in [k]} X_{iv}^2 = 1^T X 1_{\cR} = 1^T Y 1_{\cR}$
    \item ${\color{cblind3}  1^TXY^TXY^T1} = \sum_{a,b,c \in [k]} 1(x_a = y_b) 1(x_b = y_c) = 1^T X_{[k] \times \cR} (Y_{[k] \times \cR})^T X_{[k] \times \cR} (Y_{[k] \times \cR})^T1 $
    
    $= 1^T X_{[k] \times \cR} (X_{[k] \times \cR})^T X_{[k] \times \cR} (X_{[k] \times \cR})^T$ 
\item ${\color{cblind3}  1^TXX^TYX^T1} = \sum_{a,b,c} 1(x_a = x_b)1(y_b = x_c) = \sum_{a,b,c} 1(x_a = x_b) 1(y_b = x_c \in \cR)$

$ = \sum_{a,b,c} 1(x_a = x_b \in \cR) 1(y_b = x_c \in \cR) = \sum_{a,b,c} 1(x_a = x_b \in \cR)1(x_b = x_c \in \cR) = $

$1^TX_{[k] \times \cR}(X_{[k] \times \cR})^T X_{[k] \times \cR} (X_{[k] \times \cR})^T 1$
\item ${\color{cblind3}  1^TYX^TYY^T1} $ $= 1^TX_{[k] \times \cR}(X_{[k] \times \cR})^T X_{[k] \times \cR} (X_{[k] \times \cR})^T 1$
\item ${\color{cblind3}  \trace(XY^T XY^T)} = \sum_{a,b} 1(x_a = y_b)1(x_b=y_a) = \sum_{a,b} 1(x_a = y_b \in \cR)1(x_b = y_a \in \cR) = \sum_{a,b} 1(x_a = x_b \in \cR) = \trace((X_{[k] \times \cR})(X_{[k] \times \cR})^T)$
\end{itemize}

\subsubsection{Assuming $1^TXX^T1 = 1^TYY^T1$}\label{sssec:cblind-2-terms}
Suppose that $1^TXX^T1 = 1^TYY^T1$. Then any of the {\color{cblind2} blue} terms can be written as a function of only $X$ or only $Y$.
\begin{itemize}
    \item ${\color{cblind2} 1^TXX^T1} = 1^TYY^T1$
    \item ${\color{cblind2} 1^TYY^T1} = 1^TXX^T1$
\end{itemize}

\subsubsection{Assuming $1^TXX^T = 1^TYY^T$}\label{sssec:cblind-4-terms}
Suppose that $1^TXX^T = 1^TYY^T$. Then any of the {\color{cblind4} teal} terms can be written as a function of only $X$ or only $Y$.
\begin{itemize}
    \item ${\color{cblind4} 1^TXX^TYY^T1} = \|1^TXX^T\|^2 = \|1^TYY^T\|^2$
\end{itemize}

\subsection{Proof of Lemma~\ref{lem:kattn-diagonal-dominant}}\label{ssec:kattn-diagonal-dominant-proof}
We combine the above calculations to prove Lemma~\ref{lem:kattn-diagonal-dominant}.
\begin{proof}
By the technical Lemma~\ref{lem:kattn-analytic}, we know that $g_{\tau}(\beta,\gamma)$ is an analytic function for each $\tau$. Therefore, by the identity theorem for analytic functions \citep{mityagin2020zero}, it suffices to show that for each $\tau \in S_r \setminus \{\mathrm{id}\}$ we have $g_{id}(\beta,\gamma) \not\equiv g_{\tau}(\beta,\gamma)$.

\textit{\ul{Stage 1. Matching regular token degree distributions}}.

\begin{claim}
If $g_{id}(0,0) = g_{\tau}(0,0)$, then $[1^T X_i]_{\cR} = [1^T Y_{\tau(i)}]_{\cR}$ for all $i \in [r]$.
\end{claim}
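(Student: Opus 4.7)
The plan is to unfold $\kappa_{X,Y}(0,0)$ using the table in Section~\ref{ssec:derivatives-list}, which gives $\kappa_{X,Y}(0,0) = c_{1} \cdot 1^T X Y^T 1$ with $c_1 = 1/k^2 > 0$. So the hypothesis $g_{\mathrm{id}}(0,0) = g_{\tau}(0,0)$ becomes
\begin{equation*}
\sum_{i \in [r]} 1^T X_i Y_i^T 1 \;=\; \sum_{i \in [r]} 1^T X_i Y_{\tau(i)}^T 1.
\end{equation*}
Since $X_i,Y_j$ have one-hot rows, the quantity $1^T X_i Y_j^T 1$ equals $\sum_{v \in [m]} (1^T X_i)_v (1^T Y_j)_v$, i.e.\ the inner product of token-count vectors. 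By the disjointness of substitution ranges ($s(\cW)\cap s'(\cW)=\emptyset$ and both are disjoint from $\cR$), the only coordinates $v$ that contribute to this inner product are those in $\cR$, so $1^T X_i Y_j^T 1 = \langle [1^T X_i]_{\cR}, [1^T Y_j]_{\cR}\rangle$.

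Next I would observe that $[1^T X_i]_{\cR}$ is actually independent of the substitution map: its $v$-th coordinate counts the number of positions $a \in [k]$ with $(\bz_i)_a = v \in \cR$, because positions that are wildcards in $\bz_i$ get substituted to tokens in $s(\cW) \subseteq \cX \sm \cR$ and contribute nothing. Analogously for $Y_{\tau(i)}$. Writing $d_i \in \R^{\cR}$ for this common "regular-token degree vector" of template $\bz_i$, we have $[1^T X_i]_{\cR} = d_i$ and $[1^T Y_{\tau(i)}]_{\cR} = d_{\tau(i)}$.

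So the hypothesis reduces to the scalar identity $\sum_i \|d_i\|^2 = \sum_i \langle d_i, d_{\tau(i)}\rangle$. The key algebraic identity I would then invoke is
\begin{equation*}
\sum_{i \in [r]} \|d_i - d_{\tau(i)}\|^2 \;=\; \sum_i \|d_i\|^2 + \sum_i \|d_{\tau(i)}\|^2 - 2\sum_i \langle d_i, d_{\tau(i)}\rangle \;=\; 2\Bigl(\sum_i \|d_i\|^2 - \sum_i \langle d_i, d_{\tau(i)}\rangle\Bigr),
\end{equation*}
where the last equality uses that $\tau$ is a permutation so $\sum_i \|d_{\tau(i)}\|^2 = \sum_i \|d_i\|^2$. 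The hypothesis forces the right-hand side to vanish, giving $d_i = d_{\tau(i)}$ and hence $[1^T X_i]_{\cR} = [1^T Y_{\tau(i)}]_{\cR}$ for every $i$, as desired.

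There is no real obstacle here: the only delicate points are (i) checking that the derivative table really yields the clean closed form $c_1 \cdot 1^T X Y^T 1$ with a nonzero coefficient, and (ii) being careful that the disjointness hypotheses on $s, s', \cR$ kill all cross-terms outside $\cR$; both are bookkeeping.
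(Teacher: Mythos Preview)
Your proof is correct and follows the same outline as the paper: compute $\kappa_{X,Y}(0,0)=c_1\,1^T X Y^T 1$, restrict to $\cR$ via the disjointness of $s(\cW),s'(\cW),\cR$, and identify $[1^T X_i]_{\cR}=[1^T Y_i]_{\cR}=d_i$ as the template's regular-token degree vector. The only difference is the last step: the paper bounds $\sum_i \langle d_i,d_{\tau(i)}\rangle \le \sum_i \|d_i\|^2$ via two applications of Cauchy--Schwarz and then analyzes the equality case, whereas your expansion $\sum_i \|d_i-d_{\tau(i)}\|^2 = 2\bigl(\sum_i\|d_i\|^2-\sum_i\langle d_i,d_{\tau(i)}\rangle\bigr)$ is a cleaner route to the same conclusion.
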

\begin{proof}
From the table in Section~\ref{ssec:derivatives-list}, there is a positive constant $c_1 > 0$ such that
\begin{align*}
g_{\tau}(0,0) &= c_1\sum_{i \in [r]} 1^T X_i Y_{\tau(i)}^T 1 = c_1\sum_{i \in [r]} [1^T X_i]_{\cR} [Y_{\tau(i)}^T 1]_{\cR} \\
&\stackrel{(a)}{\leq} \sum_{i \in [r]} \|[1^T X_i]_{\cR}\| \|[1^T Y_{\tau(i)}]_{\cR}\| \\
&\stackrel{(b)}{\leq} \sqrt{\sum_{i \in [r]} \|[1^T X_i]_{\cR}\|^2}\sqrt{\sum_{i \in [r]} \|[1^T Y_{\tau(i)}]_{\cR}\|^2} \\
&= \sum_{i \in [r]} \|[1^T X_i]_{\cR}\|^2\,,
\end{align*}
where (a) is by Cauchy-Schwarz and holds with equality if and only if $[1^T X_i]_R \propto [1^T Y_{\tau(i)}]_R$ for all $i$. Similarly (b) is by Cauchy-Schwarz and holds with equality if and only if $\|[1^TX_i]_R\| = \|[1^TY_{\tau(i)}]_R\|$ for all $i$. Notice that (a) and (b) hold with equality if $\tau = \mathrm{id}$, since $[1^T X_i]_R = [1^T Y_i]_R$ for all $i$.
\end{proof}

\textit{\ul{Stage 2. Matching regular token positions}}.
\begin{claim}
If $\pd{^2}{\beta^2} \pd{^2}{\gamma^2} g_{\tau}(0,0) = \pd{^2}{\beta^2} \pd{^2}{\gamma^2} g_{\mathrm{id}}(0,0)$ and $[1^T X_i]_{\cR} = [1^T Y_{\tau(i)}]_{\cR}$ for all $i \in [r]$, then we must have $[X_i]_{[k] \times \cR} = [Y_{\tau(i)}]_{[k] \times \cR}$ for all $i \in [r]$.
\end{claim}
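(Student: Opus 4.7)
The first step is to extract from the derivative table in Section~\ref{ssec:derivatives-list} the explicit form of $\frac{\partial^2}{\partial \beta^2}\frac{\partial^2}{\partial \gamma^2}\kappa_{X,Y}(0,0)$, which contains only two terms: a pink contribution $c_1 \cdot 1^T X Y^T 1$ and an orange contribution $c_2 \cdot \mathrm{tr}(XY^T)$, with $c_2 = 8/k^2 > 0$. Summing over $i \in [r]$, the hypothesis becomes an identity between the two sums evaluated at $\tau$ versus at $\mathrm{id}$.

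The second step is to kill the pink contribution using the Stage~1 hypothesis. Because $s(\cW)\cap s'(\cW)=\emptyset$, and because $X_i$ (resp.\ $Y_{\tau(i)}$) has wildcard entries only in $s(\cW)$ (resp.\ $s'(\cW)$), the bilinear form $1^T X_i Y_{\tau(i)}^T 1$ picks up only the columns indexed by $\cR$ and equals $[1^T X_i]_\cR^\top [1^T Y_{\tau(i)}]_\cR$ (cf.\ Section~\ref{sssec:cblind-1-terms}). By the Stage~1 hypothesis this is $\|[1^T X_i]_\cR\|^2$, the same value it takes for $\tau=\mathrm{id}$. Dividing through by $c_2>0$, the remaining content of the assumption is
\[\sum_{i \in [r]} \mathrm{tr}(X_i Y_{\tau(i)}^T) = \sum_{i \in [r]} \mathrm{tr}(X_i Y_i^T).\]

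The third step, which is the combinatorial heart of the claim, interprets $\mathrm{tr}(X_i Y_{i'}^T)$ for matrices with one-hot rows drawn from disjoint wildcard alphabets: it counts positions $j\in[k]$ where the $j$-th rows of the two matrices coincide, and disjointness forces any such matching position to satisfy $j \in R_i \cap R_{i'}$ and $z_{i,j} = z_{i',j}$, where $R_i = \{j : z_{i,j} \in \cX\}$. In particular $\mathrm{tr}(X_i Y_i^T) = |R_i|$, and Stage~1 already yields $|R_i|=|R_{\tau(i)}|$ (since $[1^T X_i]_\cR$ records the multiset of regular tokens of $\bz_i$). Hence one has the chain
\[\sum_i \mathrm{tr}(X_i Y_{\tau(i)}^T) \;\leq\; \sum_i |R_i \cap R_{\tau(i)}| \;\leq\; \sum_i |R_i| \;=\; \sum_i \mathrm{tr}(X_i Y_i^T),\]
and the displayed equality forces both bounds to be tight. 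Tightness of the first gives $z_{i,j}=z_{\tau(i),j}$ for every $j\in R_i\cap R_{\tau(i)}$; tightness of the second (combined with $|R_i|=|R_{\tau(i)}|$) gives $R_i = R_{\tau(i)}$. These two conclusions together say exactly that $[X_i]_{[k]\times\cR}=[Y_{\tau(i)}]_{[k]\times\cR}$, since the regular-column restriction of each matrix records the template's regular tokens in their positions.

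I do not anticipate a genuine obstacle: the argument is essentially a nonnegativity-plus-saturation step once the right derivative has been isolated. The only item requiring care is confirming that the sign of $c_2$ and the cancellation of the pink term single out the order-$(2,2)$ derivative as the correct one to probe at this stage, but this is immediate from the explicit formula in the table.
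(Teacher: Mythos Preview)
Your proof is correct. The first two steps coincide with the paper's argument: both isolate the $\partial^2_\beta \partial^2_\gamma$ derivative, use Stage~1 to rewrite the pink term $1^T X_i Y_{\tau(i)}^T 1$ as $\|[1^T X_i]_{\cR}\|^2$ (hence $\tau$-independent), and reduce the hypothesis to $\sum_i \mathrm{tr}(X_i Y_{\tau(i)}^T) = \sum_i \mathrm{tr}(X_i Y_i^T)$.

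The divergence is in how the trace identity is exploited. The paper applies Cauchy--Schwarz directly to the flattened vectors $([X_i]_{av})_{i,a,v \in \cR}$ and $([Y_{\tau(i)}]_{av})_{i,a,v \in \cR}$, obtains the upper bound $\sum_i 1^T X_i 1_{\cR}$, and then needs the CLS token to pin the proportionality constant in the equality case to $c=1$. Your route is combinatorial: you read $\mathrm{tr}(X_i Y_{\tau(i)}^T)$ as the count $|\{j \in R_i \cap R_{\tau(i)} : z_{i,j} = z_{\tau(i),j}\}|$, bound it by $|R_i \cap R_{\tau(i)}| \leq |R_i|$, and close the argument by pulling $|R_i| = |R_{\tau(i)}|$ from the Stage~1 hypothesis rather than from the CLS token. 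Both are sound; yours is slightly more self-contained for this particular stage (no appeal to CLS), while the paper's Cauchy--Schwarz template is what it reuses uniformly across all five stages.
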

\begin{proof}
For a constant $c_2 > 0$,
\begin{align*}
\pd{^2}{\beta^2} \pd{^2}{\gamma^2} g_{\tau}(0,0) &= \sum_{i \in [r]} c_1 {\color{cblind1} 1^TX_iY_{\tau(i)}^T1} + c_2 {\color{cblind3} \trace(X_i Y_{\tau(i)}^T)} \\
&= \left( c_1\sum_{i \in [r]} \|[1^TX_i]_{\cR}\|^2\right) + \left(c_2\sum_{i \in [r]} {\color{cblind3} \trace(X_i (Y^{\tau(i)})^T)}\right)\,,
\end{align*}
by the calculation in Section~\ref{sssec:cblind-1-terms}.
The first sum does not depend on $\tau$, so we analyze the second sum. Here,
\begin{align*}
c_2\sum_{i\in[r]} {\color{cblind3} \trace(X_iY_{\tau(i)}^T)} &= c_2\sum_{i\in[r]} \sum_{a \in [k]} [X_iY_{\tau(i)}^T]_{aa} \\
&= c_2\sum_{i\in[r]} \sum_{v \in \cR} \sum_{a \in [k]} [X_i]_{av} [Y_{\tau(i)}]_{av} \\
&\stackrel{(a)}{\leq} c_2 \sqrt{(\sum_{i\in[r]} \sum_{v \in \cR} \sum_{a \in [k]} ([X_i]_{av})^2) (\sum_{i\in[r]} 
\sum_{v \in \cR} \sum_{a \in [k]} ([Y_{\tau(i)}]_{av})^2} \\
&= c_2 \sum_{i \in [r]} 1^TX_i 1_{\cR}\,,
\end{align*}
where (a) is by Cauchy-Schwarz and holds with equality if and only if $X_{av}^{(i)} = cY_{av}^{(\tau(i))}$ for some constant $c$. We must have $c = 1$ because of the CLS token, so (a) holds with equality if and only if $[X_i]_{[k] \times \cR} = [Y_{\tau(i)}]_{[k] \times \cR}$ for all $i \in [r]$. Specifically (a) holds with equality if $\tau = \mathrm{id}$.
\end{proof}

\textit{\ul{Stage 3. Matching wildcard token degree histogram norm}}.
\begin{claim}
Suppose that $[1^T X_i]_{\cR} = [1^T Y_{\tau(i)}]_{\cR}$, and that $\pd{^4}{\beta^4} g_{\tau}(0,0) = \pd{^4}{\beta^4} g_{\mathrm{id}}(0,0)$. Then $1^TX_iX_i^T1 = 1^TY_{\tau(i)}Y_{\tau(i)}^T1$ for all $i \in [r]$.
\end{claim}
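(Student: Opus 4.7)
The plan follows the same three-move pattern as Stages 1 and 2: exploit Stage 1's conclusion to make many pieces $\tau$-invariant, pin down the single irreducibly $\tau$-dependent term in $\pd{^4}{\beta^4}\kappa_{X,Y}(0,0)$, and close by Cauchy--Schwarz.

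First, Stage 1 gives $[1^TX_i]_{\cR}=[1^TY_{\tau(i)}]_{\cR}$; combined with the fact that $X_j$ and $Y_j$ agree on the $\cR$-profile (both substitute the same template $\bz_j$, with wildcards substituted by disjoint maps), this yields $[1^TX_i]_{\cR} = [1^TX_{\tau(i)}]_{\cR}$. In particular, every ``pink'' factor simplified in Section~\ref{sssec:cblind-1-terms} is a function of the $\cR$-multidegree of $\bz_i$ alone, and hence is $\tau$-invariant; abbreviate $a_i := 1^TX_iY_{\tau(i)}^T 1 = \|[1^TX_i]_{\cR}\|^2$ (note $a_i = a_{\tau(i)}$). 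Furthermore, the ``blue'' factor satisfies $b_j := 1^TX_jX_j^T 1 = \sum_t n_t(\bz_j)^2 = 1^TY_jY_j^T 1$, because $X_j$ and $Y_j$ have identical token-count multisets. Thus the conclusion of Stage 3 is exactly $b_i = b_{\tau(i)}$.

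Second, go through the twenty-three summands of $\pd{^4}{\beta^4}\kappa_{X,Y}(0,0)$ from Section~\ref{ssec:derivatives-list} with $X\leftarrow X_i$, $Y\leftarrow Y_{\tau(i)}$, applying Sections~\ref{sssec:cblind-1-terms}--\ref{sssec:cblind-2-terms}. After simplification, the only way a summand can genuinely depend on $\tau$ is through a factor of $b_{\tau(i)} = 1^TY_{\tau(i)}Y_{\tau(i)}^T 1$, or through $1^TYY^TYY^T 1 = \sum_t n_t(\bz_{\tau(i)})^3$. Every such summand except one has the form ($\tau$-invariant quantity)$\,\times\, b_{\tau(i)}^k$ (or with $\sum_t n_t^3$ in place of $b_{\tau(i)}^k$), and the reindexing $j=\tau(i)$ together with the $\tau$-invariance of the pink factors and of $a_i$ collapses the sum over $i$ to its $\tau=\mathrm{id}$ value. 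The unique exception is the $c_{20}$ term,
\begin{align*}
c_{20}\sum_{i\in[r]} (1^TX_iY_{\tau(i)}^T 1)(1^TX_iX_i^T 1)(1^TY_{\tau(i)}Y_{\tau(i)}^T 1) \;=\; c_{20}\sum_{i\in[r]} a_i\,b_i\,b_{\tau(i)},
\end{align*}
where $b_i$ and $b_{\tau(i)}$ appear together and reindexing fails.

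Using $c_{20}=24/k^6>0$, the identity $\pd{^4}{\beta^4}g_{\tau}(0,0) = \pd{^4}{\beta^4}g_{\mathrm{id}}(0,0)$ therefore collapses to
\begin{align*}
\sum_{i\in[r]} a_i\, b_i\, b_{\tau(i)} \;=\; \sum_{i\in[r]} a_i\, b_i^2.
\end{align*}
Applying Cauchy--Schwarz to $u_i := \sqrt{a_i}\,b_i$ and $v_i := \sqrt{a_i}\,b_{\tau(i)}$, together with $\sum_i v_i^2 = \sum_j a_{\tau^{-1}(j)} b_j^2 = \sum_j a_j b_j^2$ (by $\tau$-invariance of $a$), forces $u_i = v_i$, i.e., $b_i = b_{\tau(i)}$ for every $i$ with $a_i>0$; since each template ends with [CLS], we have $a_i\geq 1>0$ and the stage is complete. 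The main obstacle is the bookkeeping: verifying term-by-term, using Sections~\ref{sssec:cblind-1-terms}--\ref{sssec:cblind-2-terms}, that the $c_{20}$ summand is the \emph{only} one surviving as genuinely $\tau$-dependent, so that the resulting identity reduces to a single clean Cauchy--Schwarz equality case.
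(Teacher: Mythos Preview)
Your proof is correct and follows essentially the same approach as the paper: isolate the $c_{20}$ summand as the only genuinely $\tau$-dependent piece of $\partial^4_\beta g_\tau(0,0)$, then apply Cauchy--Schwarz with equality to force $1^TX_iX_i^T1 = 1^TY_{\tau(i)}Y_{\tau(i)}^T1$. The only cosmetic difference is in the bookkeeping step: the paper writes each non-$c_{20}$ term directly as a function of $X_i$ alone or of $Y_{\tau(i)}$ alone (using $[1^TX_i]_\cR = [1^TY_{\tau(i)}]_\cR$ to move pink factors to whichever side is convenient), while you instead first deduce $[1^TX_i]_\cR = [1^TX_{\tau(i)}]_\cR$ and then reindex; both routes collapse to the same Cauchy--Schwarz identity on $\sum_i a_i b_i b_{\tau(i)}$.
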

\begin{proof}
Use $[1^T X_i]_{\cR} = [1^T Y_{\tau(i)}]_{\cR}$ and the calculations in Section~\ref{sssec:cblind-1-terms} for the {\color{cblind1} pink} terms. Every term of $\pd{^4}{\beta^4} g_{\tau}(0,0)$ can be written as depending only on one of $X_i$ or $Y_{\tau(i)}$, with the exception of the $c_{20}$ term. 
Namely, we have
\begin{align*}
\pd{^4}{\beta^4} g_{\tau}(0,0) &= \sum_{i \in [r]} a(X_i) + b(Y_{\tau(i)}) \\
&\qquad\qquad + c_{20} ({\color{cblind1} 1^TX_iY_{\tau(i)}^T1})({\color{cblind2} 1^TX_iX_i^T})({\color{cblind2} 1^TY_{\tau(i)}Y_{\tau(i)}^T1})\,,
\end{align*}
for some functions $a,b$. Since $\tau$ is a permutation, only the term with coefficient $c_{20}$ depends on $\tau$. Here, $c_{20} > 0$. This term corresponds to
\begin{align*}
&c_{20}\sum_{i \in [r]} (1^TX_{i}Y_{\tau(i)}^T1)(1^TX_iX_i^T1)(1^TY_{\tau(i)}Y_{\tau(i)}^T1) \\
&= c_{20}\sum_{i \in [r]} \|[1^TX_i]_{\cR}\|\|1^TY_{\tau(i)}]_{\cR}\|(1^TX_iX_i^T1)(1^TY_{\tau(i)}Y_{\tau(i)}^T1) \\
&\stackrel{(a)}{\leq} \sqrt{(\sum_{i \in [r]} \|[1^TX_i]_{\cR}\|^2 (1^TX_iX_i^T1)^2)(\sum_{i \in [r]}\|1^TY_{\tau(i)}]_{\cR}\|^2(1^TY_{\tau(i)}Y_{\tau(i)}^T1)^2} \\
&= \sum_{i \in [r]} \|[1^TX_i]_{\cR}\|^2 (1^TX_iX_i^T1)^2
\end{align*}
where (a) is by Cauchy-Schwarz and holds with equality if and only if $\|[1^TX_i]_{\cR}\|^21^TX_iX_i1 = c\|[1^TY_{\tau(i)}]_{\cR}\|^21^TY_{\tau(i)}Y_{\tau(i)}^T1$ for all $i$ and some constant $c$. This constant $c = 1$ because the former is a permutation of the latter over $i \in [r]$. Since $\|[1^T X_i]_{\cR}\|^2 = \|[1^T Y_i]_{\cR}\|^2 \geq 1$ by assumption and since we have the CLS token, we know that (a) holds with equality if and only if $1^TX_iX_i^T1 = 1^TY_{\tau(i)}Y_{\tau(i)}^T1$ for all $i \in [r]$. This is the case for $\tau = \mathrm{id}$ by construction of $X_i$ and $Y_i$.
\end{proof}

\textit{\ul{Stage 4. Matching wildcard degree distributions}}.
\begin{claim}
Suppose that $[X_i]_{[k] \times \cR} = [Y_{\tau(i)}]_{[k] \times \cR}$ and $1^TX_iX_i^T1 = 1^TY_{\tau(i)}Y_{\tau(i)}^T1$ for all $i \in [r]$. Suppose also that $\pd{^4}{\beta^4}\pd{^2}{\gamma^2} g_{\tau}(0,0) = \pd{^4}{\beta^4} \pd{^2}{\gamma^2} g_{\mathrm{id}}(0,0)$. Then $1^TX_iX_i^T = 1^TY_{\tau(i)}Y_{\tau(i)}^T$ for all $i \in [r]$.
\end{claim}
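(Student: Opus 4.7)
The plan is to follow the pattern of the earlier stages: read off the expansion of $\pd{^4}{\beta^4}\pd{^2}{\gamma^2}\kappa_{X,Y}(0,0)$ from Section~\ref{ssec:derivatives-list}, use the color-class simplifications of Section~\ref{ssec:derivative-term-simplification} to collapse every summand except one, and then extract the required constraint via Cauchy--Schwarz applied to what remains. Under the inherited hypotheses $[X_i]_{[k]\times\cR}=[Y_{\tau(i)}]_{[k]\times\cR}$ (which in particular forces $[1^TX_i]_{\cR}=[1^TY_{\tau(i)}]_{\cR}$) and $1^TX_iX_i^T1=1^TY_{\tau(i)}Y_{\tau(i)}^T1$, every pink, orange, and blue summand of $\kappa_{X_i,Y_{\tau(i)}}(0,0)$ rewrites (by Sections~\ref{sssec:cblind-1-terms},~\ref{sssec:cblind-3-terms},~\ref{sssec:cblind-2-terms}) as a quantity depending on only $X_i$ or only on $Y_{\tau(i)}$; summed over $i\in[r]$, each such contribution is $\tau$-invariant since $\tau$ is a permutation. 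Inspecting the table, the unique summand that is not thereby neutralized is the teal one, $c_{16}\cdot 1^TXX^TYY^T1$ with $c_{16}=48/k^4>0$, so
\begin{align*}
\pd{^4}{\beta^4}\pd{^2}{\gamma^2} g_{\tau}(0,0) - \pd{^4}{\beta^4}\pd{^2}{\gamma^2} g_{\mathrm{id}}(0,0)
= c_{16}\sum_{i\in[r]}\bigl(1^TX_iX_i^T Y_{\tau(i)}Y_{\tau(i)}^T1 - 1^TX_iX_i^T Y_iY_i^T1\bigr).
\end{align*}

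Next I would use the crucial algebraic fact that $X_iX_i^T=Y_iY_i^T$ for every $i\in[r]$: by injectivity of the substitution maps $s$ and $s'$, both $(a,b)$-entries equal $1(z_a^{(i)}=z_b^{(i)})$ and thus depend only on the template $\bz_i$. Setting $w_i := X_iX_i^T 1 = Y_iY_i^T 1 \in \R_{\geq 0}^{k}$ (which is entrywise $\geq 1$, hence nonzero), the displayed difference becomes $c_{16}\bigl(\sum_{i}\langle w_i, w_{\tau(i)}\rangle - \sum_{i}\|w_i\|^2\bigr)$. Cauchy--Schwarz applied pointwise yields $\langle w_i,w_{\tau(i)}\rangle\leq\|w_i\|\,\|w_{\tau(i)}\|$, with equality iff $w_i$ and $w_{\tau(i)}$ are parallel (with nonnegative proportionality, by nonnegativity); summing and applying Cauchy--Schwarz once more,
\begin{align*}
\sum_{i}\|w_i\|\,\|w_{\tau(i)}\| \leq \sqrt{\sum_{i}\|w_i\|^2}\sqrt{\sum_{i}\|w_{\tau(i)}\|^2} = \sum_{i}\|w_i\|^2,
\end{align*}
since $\tau$ permutes $[r]$. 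The hypothesis of the claim together with $c_{16}>0$ forces both Cauchy--Schwarzes to be tight: tightness of the outer one forces $\|w_i\|$ to be constant along cycles of $\tau$, and tightness of the pointwise one then upgrades the parallelism to equality $w_i=w_{\tau(i)}$ for every $i$. Since $X_iX_i^T$ is symmetric, this is precisely $1^TX_iX_i^T = 1^TY_{\tau(i)}Y_{\tau(i)}^T$, as required.

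The only slightly delicate part is the exhaustive bookkeeping in the first step: one must verify against the table that every monomial in $\pd{^4}{\beta^4}\pd{^2}{\gamma^2}\kappa_{X,Y}(0,0)$ falls into one of the four color classes and that the stage-2 and stage-3 hypotheses genuinely neutralize the $\tau$-dependence of every non-teal summand after summation over $i$, so that the teal term alone becomes the gate through which the equality hypothesis passes and into which Cauchy--Schwarz can bite.
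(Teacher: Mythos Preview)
Your proposal is correct and follows essentially the same approach as the paper: isolate the teal term $c_{16}\,1^TXX^TYY^T1$ via the color-class simplifications, then apply Cauchy--Schwarz. Your version is in fact slightly more explicit than the paper's, since you spell out the key identity $X_iX_i^T=Y_iY_i^T$ (both equal the template incidence matrix) and run the two-step Cauchy--Schwarz with the vectors $w_i=X_iX_i^T1$, whereas the paper compresses this into a single line invoking that $\{X_iX_i^T\}_i$ is a permutation of $\{Y_{\tau(i)}Y_{\tau(i)}^T\}_i$.
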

\begin{proof}
Similarly to the proof of the previous claim, because of the calculations in Sections~\ref{sssec:cblind-1-terms}, \ref{sssec:cblind-3-terms} and \ref{sssec:cblind-2-terms} for the {\color{cblind1} pink}, {\color{cblind3} orange}, and {\color{cblind2} blue} terms, respectively, we can write $\pd{^4}{\beta^4} \pd{^2}{\gamma^2}$ as a sum of terms that each depends on either $X_i$ or $Y_{\tau(i)}$, plus
$\sum_{i \in [r]} c_{16} {\color{cblind4} 1^TX_iX_i^TY_{\tau(i)}Y_{\tau(i)}^T1}$. This latter sum is the only term that depends on $\tau$, and the constant $c_{16}$ satisfies $c_{16} > 0$. Similarly to the previous claim, by Cauchy-Schwarz
\begin{align*}
\sum_{i \in [r]} c_{16} 1^TX_iX_i^TY_{\tau(i)}Y_{\tau(i)}^T1 &\leq \sum_{i \in [r]} c_{16} \|1^TX_iX_i^T\|\|Y_{\tau(i)}Y_{\tau(i)}^T1\|\,,
\end{align*}
with equality if and only if $1^TX_iX_i^T = 
1^TY_{\tau(i)}Y_{\tau(i)}^T$ for all $i$, since $\{X_iX_i^T\}_i$ is a permutation of $\{Y_{\tau(i)}Y_{\tau(i)}^T\}_i$. This condition holds for $\tau = \mathrm{id}$.
\end{proof}

\textit{\ul{Stage 5. Matching wildcard positions}}.
\begin{claim}
Suppose that $[X_i]_{[k] \times \cR} = [Y_{\tau(i)}]_{[k] \times \cR}$ and $1^TX_iX_i^T = 1^TY_{\tau(i)}Y_{\tau(i)}^T$ for all $i \in [r]$. Suppose also that $\pd{^6}{\beta^6}\pd{^4}{\gamma^4} g_{\tau}(0,0) = \pd{^6}{\beta^6} \pd{^4}{\gamma^4} g_{\mathrm{id}}(0,0)$. Then $X_i X_i^T = Y_{\tau(i)}Y_{\tau(i)}^T$ for all $i \in [r]$.
\end{claim}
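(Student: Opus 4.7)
The plan is to follow the strategy of Stages 3 and 4 at one higher order of the Taylor expansion, using the coefficient $\pd{^6}{\beta^6}\pd{^4}{\gamma^4}$. Starting from the tabulated expansion of $\pd{^6}{\beta^6}\pd{^4}{\gamma^4}\kappa_{X,Y}(0,0)$ in Section~\ref{ssec:derivatives-list} and substituting $X=X_i$, $Y=Y_{\tau(i)}$ followed by summing over $i \in [r]$, I will classify each of the twenty-seven terms and argue that all but one reduce, under the accumulated hypotheses from Stages 1--4, to an expression depending only on $X_i$ alone or only on $Y_{\tau(i)}$ alone, so that its sum over $i$ is invariant under the permutation $\tau$.

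Specifically, the pink, orange, blue, and teal terms are handled by the identities in Sections~\ref{sssec:cblind-1-terms}, \ref{sssec:cblind-3-terms}, \ref{sssec:cblind-2-terms}, and \ref{sssec:cblind-4-terms}, which are applicable because Stages 1--4 supply respectively $[1^TX_i]_\cR = [1^TY_{\tau(i)}]_\cR$, $[X_i]_{[k]\times\cR} = [Y_{\tau(i)}]_{[k]\times\cR}$, $1^TX_iX_i^T1 = 1^TY_{\tau(i)}Y_{\tau(i)}^T1$, and $1^TX_iX_i^T = 1^TY_{\tau(i)}Y_{\tau(i)}^T$. The three uncolored terms $c_{18}$, $c_{20} 1^TXX^TXX^T1$, and $c_{26} 1^TYY^TYY^T1$ already depend only on $X_i$, or $Y_{\tau(i)}$, or neither. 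The only remaining $\tau$-coupled summand is the yellow one, whose coefficient $c_{25} = 17280/k^4 > 0$ is strictly positive:
\[
\pd{^6}{\beta^6}\pd{^4}{\gamma^4}\, g_{\tau}(0,0) \;=\; \Phi \;+\; c_{25} \sum_{i\in[r]} \trace\!\bigl(X_iX_i^T\, Y_{\tau(i)}Y_{\tau(i)}^T\bigr),
\]
with $\Phi$ independent of $\tau$.

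The equality-of-derivatives hypothesis then collapses to
\[
\sum_{i\in[r]} \langle X_iX_i^T,\, Y_{\tau(i)}Y_{\tau(i)}^T\rangle_F \;=\; \sum_{i\in[r]} \|X_iX_i^T\|_F^2,
\]
where I use that $Y_iY_i^T = X_iX_i^T$ because both are the wildcard-incidence matrix of the template $\bz_i$. I will then apply Cauchy--Schwarz twice: first entrywise, $\langle X_iX_i^T, Y_{\tau(i)}Y_{\tau(i)}^T\rangle_F \le \|X_iX_i^T\|_F\,\|Y_{\tau(i)}Y_{\tau(i)}^T\|_F$ with equality iff the two PSD matrices are positive scalar multiples of each other; then on the sum over $i$, using that $\tau$ permutes $[r]$ and that $\|Y_jY_j^T\|_F = \|X_jX_j^T\|_F$, to upper-bound by $\sum_i \|X_iX_i^T\|_F^2$. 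Since $\tau=\mathrm{id}$ already saturates both inequalities, equality forces $X_iX_i^T = Y_{\tau(i)}Y_{\tau(i)}^T$ for every $i \in [r]$ (the per-$i$ proportionality constants must all equal $1$, since the Frobenius norms also agree across $\tau$).

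The main source of friction is the bookkeeping in the first step: one must verify term by term that each colored monomial in the twenty-seven-term expansion really does factor, given the accumulated hypotheses, into an expression in $X_i$ alone or $Y_{\tau(i)}$ alone, so that its sum over $i$ is genuinely $\tau$-invariant and can be absorbed into $\Phi$. Once this routine verification is complete, the Cauchy--Schwarz step is structurally identical to Stage~4 and the claim follows.
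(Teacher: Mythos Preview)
Your proposal is correct and follows essentially the same approach as the paper's own proof: reduce via the color-coded identities in Sections~\ref{sssec:cblind-1-terms}--\ref{sssec:cblind-4-terms} to isolate the single $\tau$-coupled yellow term $c_{25}\sum_i\trace(X_iX_i^TY_{\tau(i)}Y_{\tau(i)}^T)$, then apply Cauchy--Schwarz and use $Y_iY_i^T=X_iX_i^T$ to characterize equality. Your write-up is in fact slightly more explicit than the paper's (you spell out the second Cauchy--Schwarz on the sum over $i$ and the argument that the proportionality constants must be $1$), but the structure is identical.
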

\begin{proof}
Write $\pd{^6}{\beta^6}\pd{^4}{\gamma^4} g_{\tau}(0,0)$ as a sum of terms each depending only on either $X_i$ or $Y_{\tau(i)}$ by using the calculations in Sections~\ref{sssec:cblind-1-terms}, \ref{sssec:cblind-2-terms}, \ref{sssec:cblind-3-terms}, and \ref{sssec:cblind-4-terms} to handle the {\color{cblind1} pink}, {\color{cblind3} orange}, {\color{cblind2} blue}, and {\color{cblind4} teal} terms, plus (for $c_{25} > 0$),
\begin{align*}
\sum_{i \in [r]} c_{25} {\color{cblind5} \trace(X_iX_i^TY_{\tau(i)}Y_{\tau(i)}^T)} \leq \sum_{i \in [r]} c_{25} \|X_iX_i^T\|_F \|Y_{\tau(i)}Y_{\tau(i)}^T\|_F\,,
\end{align*}
with equality if and only if $X_iX_i^T = Y_{\tau(i)}Y_{\tau(i)}^T$ for all $i \in [r]$. This equality holds if $\tau = \mathrm{id}$, concluding the claim.
\end{proof}
Combine the above four claims to conclude that if $g_{\tau}(\beta,\gamma) \equiv g_{\mathrm{id}}(\beta,\gamma)$, then we have $X_iX_i^T = Y_{\tau(i)}Y_{\tau(i)}^T$ and $[X_i]_{[k] \times \cR} = [Y_{\tau(i)}]_{[k] \times \cR}$ for all $i$, so $\tau = \mathrm{id}$.
\end{proof}

\section{Analyticity of attention kernel (technical result)}\label{ssec:kattn-analytic-proof}
We prove the analyticity of $\kappa_{\bX,\tbX}(\beta,\gamma) = \Kattn^{\beta,\gamma}(\bX,\tbX)$ as function of $\beta$ and $\gamma$. 
\begin{lemma}[Analyticity of $\Kattn$]\label{lem:kattn-analytic}
For any $\bX,\tbX$, the function $\kappa_{\bX,\tbX}$ is analytic in $\R^2$.
\end{lemma}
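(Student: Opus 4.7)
The plan is to realize the kernel as an expectation against parameter-free Gaussians, so that the only $(\beta,\gamma)$-dependence sits inside a manifestly analytic integrand. A direct covariance computation shows that the pair $(\bm(\bX),\bm(\tbX))$ has the same law as $(\bX\bg+\gamma\bh,\,\tbX\bg+\gamma\bh)$, where $\bg \sim N(\bzero,\bI_m)$ and $\bh \sim N(\bzero,\bI_k)$ are independent. Under this reparametrization,
\begin{align*}
\kappa_{\bX,\tbX}(\beta,\gamma) = \E_{\bg,\bh}\bigl[\smax(\beta(\bX\bg+\gamma\bh))^T(\bX\tbX^T+\gamma^2\bI)\smax(\beta(\tbX\bg+\gamma\bh))\bigr],
\end{align*}
and the underlying Gaussian measure no longer depends on $(\beta,\gamma)$. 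Next I would verify that for each fixed $(\bg,\bh)$, the integrand $F(\beta,\gamma;\bg,\bh)$ is an entire function of $(\beta,\gamma)\in\C^2$. The prefactor $\bX\tbX^T+\gamma^2\bI$ is a polynomial, and each softmax entry $e^{\beta z_i}/\sum_l e^{\beta z_l}$, viewed as a function of $(\beta,\gamma)$ with the fixed vector $\bz=\bX\bg+\gamma\bh\in\R^k$, is holomorphic wherever the denominator is nonzero; since the denominator is strictly positive on $\R^2$, it extends holomorphically to a complex neighborhood of every real point.

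The main step is to commute the expectation with the Taylor expansion at an arbitrary real base point $(\beta_0,\gamma_0)$. Via the chain rule (or Fa\`a di Bruno), every partial $\partial_\beta^a\partial_\gamma^b F(\beta_0,\gamma_0;\bg,\bh)$ can be written as a finite sum of products of (i) partial derivatives of $\smax$, which are polynomials in the softmax outputs and are therefore uniformly bounded by a constant depending only on $k$ and the order, and (ii) polynomial expressions in $\beta_0,\gamma_0,\bg,\bh$ of total degree at most $a+b$ in $(\bg,\bh)$. This yields a bound of the form
\begin{align*}
|\partial_\beta^a\partial_\gamma^b F(\beta_0,\gamma_0;\bg,\bh)| \leq C_{a,b}\cdot(1+\|\bg\|+\|\bh\|)^{a+b+2},
\end{align*}
which is uniformly Gaussian-integrable. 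Dominated convergence then transfers derivatives of all orders inside the expectation, giving $\kappa_{\bX,\tbX}\in C^\infty(\R^2)$.

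To upgrade smoothness to analyticity I would sharpen the constants $C_{a,b}$ to decay like $a!\,b!\,R^{-(a+b)}$ for some radius $R>0$ depending only on $\bX,\tbX,\beta_0,\gamma_0$. This follows from Cauchy's estimates applied to the complex extension of $F$ on a small bidisk around $(\beta_0,\gamma_0)$, on which the softmax denominators stay bounded away from zero. Since the polynomial multipliers in $(\bg,\bh)$ have bounded Gaussian $L^1$-norm, integrating preserves the factorial decay, and the Taylor series of $\kappa_{\bX,\tbX}$ at $(\beta_0,\gamma_0)$ converges on a bidisk of positive radius. The main obstacle is precisely this last point: controlling the holomorphic extension of softmax uniformly on a complex bidisk while $(\bg,\bh)$ ranges over all of $\R^m\times\R^k$. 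The fix is to shrink the complex radius so that on the bidisk the imaginary parts of $\beta(\bX\bg+\gamma\bh)$ stay small enough, relative to the real parts, to keep $|\sum_l e^{\beta(\bX\bg+\gamma\bh)_l}|$ bounded below by a positive function of $(\bg,\bh)$ whose reciprocal still has all Gaussian moments; equivalently, one can bypass the complex extension by pushing the chain-rule derivative bound to factorial form directly, which is elementary but more bookkeeping-heavy.
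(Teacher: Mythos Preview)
Your proposal is correct and follows essentially the same strategy as the paper: reparametrize the Gaussians so that the measure is $(\beta,\gamma)$-free, then derive factorial-type derivative bounds on the integrand and differentiate under the expectation. The paper carries out your second route (direct real-derivative bookkeeping rather than complex Cauchy estimates) by building a small calculus of what it calls ``$(\tau_1,\tau_2)$-type'' bounds---quantitative radii of convergence preserved under composition, sum, and product---which it applies first to $\smax$ and then to the full integrand to obtain exactly the factorial bound you are after.
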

\begin{proof}
Note that we can write
\begin{align*}
\bm := \bm(\bX) = \bX \bzeta + \gamma \bp, \quad \tbm := \bm(\tbX) = \tbX \tbzeta + \gamma \bp\,,
\end{align*}
where $\bzeta,\tbzeta \sim \cN(0,I_m)$ and $\bp \sim \cN(0,I_k)$ are independent Gaussians. So we can rewrite $\kappa_{\bX,\tbX}$ as
\begin{align*} 
\kappa_{\bX,\tbX}(\beta,\gamma) = \E_{\bzeta,\tbzeta,\bp}[f( \beta, \gamma; \bzeta,\tbzeta,\bp)],
\end{align*}
where
\begin{align*}
f(\beta,\gamma; \bzeta,\tbzeta, \bp) = \bs^T (\bX \tbX^T + \gamma^2 \bI) \tbs\,.
\end{align*}
and
\begin{align*}
\bs = \smax(\beta \bX \bzeta + \beta \gamma \bp)^T, \quad  \tbs = \smax(\beta \tbX \tbzeta + \beta \gamma \bp)\,.
\end{align*}

The main obstacle is to prove the technical Lemma~\ref{lem:transformer-integrand-derivative-bounds}, which states that for any $k_1,k_2$, we have 
\begin{align*}
    \E_{\bzeta,\tbzeta,\bp}[|\frac{\partial^{k_1}}{\partial \beta^{k_1}} \frac{\partial^{k_2}}{\partial \gamma^{k_2}} f(\beta,\gamma;\bzeta,\tbzeta,\bp)|] \leq C(1+\gamma^2) k_1! k_2! (C(|\beta| + |\gamma|)^{k_1+k_2})
\end{align*}
So by smoothness of $f$ and dominated convergence, we know that we can differentiate under the integral sign, and
\begin{align*}
|\frac{d^{k_1}}{d\beta^{k_1}} \frac{d^{k_2}}{d\gamma^{k_2}} \kappa_{\bX,\bX'}(\beta,\gamma)| &= |\E_{\bzeta,\tbzeta,\bp}[\frac{\partial^{k_1}}{\partial \beta^{k_1}} \frac{\partial^{k_2}}{\partial \gamma^{k_2}} f(\beta,\gamma;\bX,\tbX,\bzeta,\tbzeta,\bp)]| \\
&\leq C(1+\gamma^2) k_1! k_2! (C(|\beta| + |\gamma|)^{k_1+k_2})\,.
\end{align*}
Because of the bound on the derivatives and its smoothness, $\kappa_{\bX,\bX'}(\beta,\gamma)$ is real-analytic.
\end{proof}

The proof of the technical bound in Lemma~\ref{lem:transformer-integrand-derivative-bounds} is developed in the subsections below.

\subsection{Technical lemmas for quantifying power series convergence}

In order to show that the values of the attention kernel are real-analytic functions of in terms of $\beta,\gamma$, we will need to make quantitative certain facts about how real-analyticity of is preserved under compositions, products, and sums. For this, we introduce the notion of the convergence-type of a real-analytic function.

\begin{definition}[Quantifying power series convergence in real-analytic functions]
Let $U \subseteq \R^m$ be an open set. We say that a real-analytic function $f : U \to \R$ has $(\tau_1,\tau_2)$-type for functions $\tau_1 : U \to \R_{> 0}$ and $\tau_2 : U \to \R_{> 0}$ if the following holds. For any $\bzeta_0$, consider the power series of $f$ around $\bzeta_0$,
\begin{align*}
\sum_{\mu} a_{\bzeta_0,\mu}(\bzeta - \bzeta_0)^{\mu}\,.
\end{align*}
Then for any $\bzeta$ such that $\|\bzeta - \bzeta_0\|_{\infty} \leq \tau_1(\bzeta_0)$ this power series converges absolutely.
\begin{align*}
\sum_{\mu \mbox{ s.t. } |\mu| \geq 1} |a_{\bzeta_0,\mu}||\bzeta - \bzeta_0|^{\mu} \leq \tau_2(\bzeta_0)\,.
\end{align*}
\end{definition}

We provide rules for how convergence type is affected by compositions, products, and sums.

\begin{lemma}[Composition rule for type; quantitative version of Proposition~2.2.8 of \citep{krantz2002primer}]\label{lem:quant-comp-rule}
Let $U \subseteq \R^m$ and let $V \subseteq \R$ be open. Let $f_1,\ldots,f_n : U \to V$ be real-analytic with $(\tau_1,\tau_2)$-type, and let $g : V^n \to \R$ be real-analytic with $(\sigma_1,\sigma_2)$-type. Then the composition $h = g \circ (f_1,\ldots,f_n)$ is real-analytic with $(\min(\tau_1, (\sigma_1 \circ f) \cdot \frac{\tau_1}{\tau_2}), \sigma_2 \circ f)$-type.
\end{lemma}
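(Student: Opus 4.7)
The plan is to mimic the classical proof that compositions of real-analytic functions are real-analytic, while carefully tracking the quantitative radii and sum bounds encoded by the type functions. Fix an arbitrary $\bzeta_0 \in U$ and set $\by_0 = (f_1(\bzeta_0),\ldots,f_n(\bzeta_0)) \in V^n$. Write the power series
\[
f_i(\bzeta) - f_i(\bzeta_0) = \sum_{|\nu|\geq 1} a^{(i)}_\nu (\bzeta - \bzeta_0)^\nu, \qquad g(\by) - g(\by_0) = \sum_{|\mu|\geq 1} b_\mu (\by - \by_0)^\mu.
\]
By hypothesis $\sum_{|\nu|\geq 1} |a^{(i)}_\nu| |\bzeta - \bzeta_0|^\nu \leq \tau_2(\bzeta_0)$ whenever $\|\bzeta - \bzeta_0\|_\infty \leq \tau_1(\bzeta_0)$, and analogously $\sum_{|\mu|\geq 1} |b_\mu| |\by - \by_0|^\mu \leq \sigma_2(\by_0)$ whenever $\|\by - \by_0\|_\infty \leq \sigma_1(\by_0)$.

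The quantitative heart of the argument is choosing how much to shrink $\tau_1(\bzeta_0)$ so that the substituted inner series stay inside the outer disk of convergence around $\by_0$. If $\|\bzeta - \bzeta_0\|_\infty \leq s\,\tau_1(\bzeta_0)$ for some $s \in [0,1]$, then for each $i$, using $s^{|\nu|} \leq s$ when $|\nu|\geq 1$,
\[
|f_i(\bzeta) - f_i(\bzeta_0)| \leq \sum_{|\nu|\geq 1} |a^{(i)}_\nu|\, s^{|\nu|} \,\tau_1(\bzeta_0)^{|\nu|} \leq s\,\tau_2(\bzeta_0).
\]
Requiring $s\,\tau_2(\bzeta_0) \leq \sigma_1(\by_0)$ selects $s = \min(1,\sigma_1(\by_0)/\tau_2(\bzeta_0))$, which carves out exactly the neighborhood $\|\bzeta - \bzeta_0\|_\infty \leq \min\bigl(\tau_1(\bzeta_0),\sigma_1(\by_0)\tau_1(\bzeta_0)/\tau_2(\bzeta_0)\bigr)$ claimed in the lemma.

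With this radius in hand, I would assemble the formal power series for $h$ around $\bzeta_0$ by substituting each $f_i$'s expansion into $g$'s, expanding $\prod_i (f_i(\bzeta) - f_i(\bzeta_0))^{\mu_i}$, and collecting like powers of $(\bzeta - \bzeta_0)^\lambda$. Majorizing coefficient-wise and applying the triangle inequality gives
\[
\sum_{|\lambda|\geq 1} |c_\lambda|\,|\bzeta - \bzeta_0|^\lambda \leq \sum_{|\mu|\geq 1} |b_\mu| \prod_i \left(\sum_{|\nu|\geq 1} |a^{(i)}_\nu|\,|\bzeta - \bzeta_0|^\nu\right)^{\mu_i} \leq \sum_{|\mu|\geq 1} |b_\mu|\,(s\,\tau_2(\bzeta_0))^{|\mu|} \leq \sigma_2(\by_0),
\]
where the final inequality uses $s\,\tau_2(\bzeta_0) \leq \sigma_1(\by_0)$ together with the $g$-side absolute-convergence bound. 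This simultaneously delivers absolute convergence, the required sum bound $\sigma_2(\by_0) = (\sigma_2 \circ f)(\bzeta_0)$, and (by Fubini for unordered sums over the absolutely convergent double series) the identification of the formal power series with $h(\bzeta) - h(\bzeta_0)$. I expect the only subtle point to be the $s^{|\nu|} \leq s$ linearization that produces the clean factor $\tau_1/\tau_2$ in the first component of the radius; everything else is bookkeeping once absolute convergence licenses free rearrangement.
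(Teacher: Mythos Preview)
Your proposal is correct and follows essentially the same argument as the paper: fix $\bzeta_0$, set $s=\min(1,\sigma_1(\by_0)/\tau_2(\bzeta_0))$, use $s^{|\nu|}\le s$ to bound the inner absolute series by $s\,\tau_2(\bzeta_0)\le\sigma_1(\by_0)$, then feed this into the outer absolute series to get the $\sigma_2(\by_0)$ bound and invoke absolute convergence to justify rearrangement. The only cosmetic difference is that the paper names your $s$ as $\rho$ and states the inner bound directly on the absolute series rather than prefacing it with $|f_i(\bzeta)-f_i(\bzeta_0)|$.
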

\begin{proof}
Fix some $\bzeta_0$ and let $\by_0 = [f_1(\bzeta_0),\ldots,f_n(\bzeta_0)]$, and let $a^{(i)}_{\bzeta_0,\mu}$ be the coefficients of the power series expansion for $f_i$ around $\bzeta_0$. Define $\rho = \min(1, \sigma_1(y_0) / \tau_2(\bzeta_0))$. Then, for any $\bzeta$ such that $\|\bzeta - \bzeta_0\|_{\infty} \leq \rho \tau_1(\bzeta_0)$ and $i \in [n]$ we have
\begin{align*}
    \sum_{\mu \mbox{ s.t. } |\mu| \geq 1} |a^{(i)}_{\bzeta_0,\mu}| |\bzeta - \bzeta_0|^{\mu} &\leq \sum_{\mu \mbox{ s.t. } |\mu| \geq 1} |a^{(i)}_{\bzeta_0,\mu}| \rho^{|\mu|} \tau_1(\bzeta_0)^{|\mu|} \leq \rho \tau_2(\bzeta_0) \leq \sigma_1(y_0)\,.
\end{align*}
So, letting $\sum_{\nu}^{\infty} b_{\by_0,\nu} (\by - \by_0)^\nu$ 
be the series expansion of $g$ around $\by_0$, we have the following absolute convergence
\begin{align*}
\sum_{\nu, \mbox{ s.t. }|\nu| \geq 1}^{\infty} b_{\by_0,\nu}\prod_{i=1}^n\left|\sum_{\mu \mbox{ s.t. } |\mu| \geq 1} |a^{(i)}_{\bzeta_0,\mu}| |\bzeta - \bzeta_0|^{\mu}\right|^{\nu_i} \leq \sigma_2(y_0)\,.
\end{align*}
So we may rearrange the terms of \begin{align*}
\sum_{\nu}^{\infty} b_{\by_0,\nu}\prod_{i=1}^n\left(\sum_{\mu \mbox{ s.t. } |\mu| \geq 1} a^{(i)}_{\bzeta_0,\mu} (\bzeta - \bzeta_0)^{\mu}\right)^{\nu_i}\,.
\end{align*} as we please, and we get an absolutely convergent series for $g \circ f$ around $\bzeta_0$.
\end{proof}

\begin{lemma}[Sum and product rules for type]\label{lem:quant-sum-product-rule}
Let $f : \R^m \to \R$ and $g : \R^m \to \R$ be real-analytic functions of $(\tau_1,\tau_2)$-type and $(\sigma_1,\sigma_2)$-type respectively. Then $h = f + g$ is real-analytic of $(\min(\tau_1,\sigma_1),\tau_2 + \tau_2)$-type, and $h = fg$ is real-analytic of $(\min(\tau_1,\sigma_1), \tau_2 \sigma_2 + \tau_2 |g| + |f| \sigma_2)$-type
\end{lemma}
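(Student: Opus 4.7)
The plan is to exploit the absolute convergence of the power series of $f$ and $g$ to derive the corresponding convergence of the sum and product series, with all bookkeeping done at an arbitrary expansion point $\bzeta_0 \in \R^m$. Fix such a $\bzeta_0$ and let $\sum_\mu a_\mu (\bzeta-\bzeta_0)^\mu$ and $\sum_\mu b_\mu (\bzeta-\bzeta_0)^\mu$ denote the series of $f$ and $g$ at $\bzeta_0$. Write $\rho_1 = \min(\tau_1(\bzeta_0),\sigma_1(\bzeta_0))$ for shorthand, and restrict attention to $\bzeta$ with $\|\bzeta-\bzeta_0\|_\infty \le \rho_1$, so that both defining convergence bounds are in force.

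For the sum $h = f+g$, the series at $\bzeta_0$ is simply the termwise sum $\sum_\mu (a_\mu + b_\mu)(\bzeta-\bzeta_0)^\mu$. By the triangle inequality,
\begin{equation*}
\sum_{|\mu|\ge 1} |a_\mu + b_\mu|\,|\bzeta-\bzeta_0|^\mu \;\le\; \sum_{|\mu|\ge 1} |a_\mu|\,|\bzeta-\bzeta_0|^\mu \;+\; \sum_{|\mu|\ge 1} |b_\mu|\,|\bzeta-\bzeta_0|^\mu \;\le\; \tau_2(\bzeta_0) + \sigma_2(\bzeta_0),
\end{equation*}
giving the desired $(\min(\tau_1,\sigma_1),\,\tau_2+\sigma_2)$-type (reading the stated bound as $\tau_2+\sigma_2$ rather than the apparent typo $\tau_2 + \tau_2$).

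For the product $h = fg$, the cleanest approach is to split off the constant term. Write $f = f(\bzeta_0) + \tilde f$ and $g = g(\bzeta_0) + \tilde g$, where $\tilde f, \tilde g$ are the degree-$\ge 1$ tails. Then $fg - f(\bzeta_0)g(\bzeta_0) = f(\bzeta_0)\tilde g + g(\bzeta_0)\tilde f + \tilde f\tilde g$. The coefficients of the last product are given by the Cauchy product $\sum_{\mu'+\mu''=\mu} a_{\mu'} b_{\mu''}$, and by the triangle inequality followed by unfolding the Cauchy convolution,
\begin{equation*}
\sum_{|\mu|\ge 1}\Bigl|\!\!\sum_{\mu'+\mu''=\mu} a_{\mu'} b_{\mu''}\Bigr|\,|\bzeta-\bzeta_0|^\mu \;\le\; \Bigl(\sum_{|\mu'|\ge 1}|a_{\mu'}||\bzeta-\bzeta_0|^{\mu'}\Bigr)\Bigl(\sum_{|\mu''|\ge 1}|b_{\mu''}||\bzeta-\bzeta_0|^{\mu''}\Bigr) \;\le\; \tau_2(\bzeta_0)\sigma_2(\bzeta_0).
\end{equation*}
Adding the contributions from $f(\bzeta_0)\tilde g$ and $g(\bzeta_0)\tilde f$ yields the overall bound $|f(\bzeta_0)|\sigma_2(\bzeta_0) + |g(\bzeta_0)|\tau_2(\bzeta_0) + \tau_2(\bzeta_0)\sigma_2(\bzeta_0)$, which is exactly the $(\min(\tau_1,\sigma_1),\,\tau_2\sigma_2 + \tau_2|g| + |f|\sigma_2)$-type claimed.

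The only subtlety — and the main obstacle worth flagging — is justifying the rearrangement of the double sum into a sum over $|\mu|\ge 1$ of Cauchy-convolved coefficients. This is legitimate because the double series of absolute values converges (bounded by $\tau_2(\bzeta_0)\sigma_2(\bzeta_0)$), so Fubini/Tonelli for counting measure lets us regroup by total multi-index freely. Everything else is direct bookkeeping, and no further analytic input beyond the hypotheses is needed.
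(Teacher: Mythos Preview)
Your proof is correct and is precisely the direct verification the paper has in mind; the paper's own proof is simply the sentence ``Both of these are straightforward from the definition,'' and you have filled in exactly those straightforward details (including correctly interpreting the $\tau_2+\tau_2$ typo as $\tau_2+\sigma_2$).
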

\begin{proof}
Both of these are straightforward from the definition.

\end{proof}

\begin{lemma}[Derivative bound based on type]\label{lem:quant-derivative-bound-based-on-type}
Let $f : \R^m \to \R$ be real-analytic with $(\tau_1,\tau_2)$-type. Then, for any multi-index $\mu$,
\begin{align*}
|\frac{\partial^{|\mu|}}{\partial \bzeta^{\mu}} f(\bzeta_0)| \leq \frac{\tau_2(\bzeta_0)}{\tau_1(\bzeta_0)^{|\mu|}} \mu!
\end{align*}
\end{lemma}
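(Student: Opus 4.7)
The plan is to use a Cauchy-type estimate, extracting the derivative bound from the absolute convergence of the Taylor series guaranteed by the $(\tau_1,\tau_2)$-type property. The first step is to identify the Taylor coefficients with scaled derivatives: by real-analyticity, the power series expansion of $f$ around $\bzeta_0$ must be the Taylor series, so
\[
a_{\bzeta_0,\mu} = \frac{1}{\mu!} \frac{\partial^{|\mu|}}{\partial \bzeta^{\mu}} f(\bzeta_0)\,.
\]
Hence it suffices to show $|a_{\bzeta_0,\mu}| \leq \tau_2(\bzeta_0)/\tau_1(\bzeta_0)^{|\mu|}$ for every multi-index $\mu$ with $|\mu|\geq 1$ (the $|\mu|=0$ case being vacuous or handled by the constant term separately).

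Next, I would choose the test point $\bzeta$ that saturates the polydisc. Concretely, pick any $\bzeta$ with $|\bzeta_i - \bzeta_{0,i}| = \tau_1(\bzeta_0)$ for every coordinate $i$, so that $\|\bzeta-\bzeta_0\|_\infty = \tau_1(\bzeta_0)$ and $|(\bzeta-\bzeta_0)^{\mu}| = \tau_1(\bzeta_0)^{|\mu|}$. By the definition of $(\tau_1,\tau_2)$-type, the series $\sum_{\nu:\, |\nu|\geq 1} |a_{\bzeta_0,\nu}|\,|\bzeta-\bzeta_0|^{\nu}$ converges and is bounded by $\tau_2(\bzeta_0)$. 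Since this is a sum of nonnegative terms, every single term is at most the total, so
\[
|a_{\bzeta_0,\mu}|\,\tau_1(\bzeta_0)^{|\mu|} \;\leq\; \tau_2(\bzeta_0)\,,
\]
which rearranges to $|a_{\bzeta_0,\mu}| \leq \tau_2(\bzeta_0)/\tau_1(\bzeta_0)^{|\mu|}$.

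Combining with the first step yields
\[
\Bigl|\frac{\partial^{|\mu|}}{\partial \bzeta^{\mu}} f(\bzeta_0)\Bigr| \;=\; \mu!\,|a_{\bzeta_0,\mu}| \;\leq\; \frac{\tau_2(\bzeta_0)}{\tau_1(\bzeta_0)^{|\mu|}}\,\mu!\,,
\]
as required. There is no real obstacle here: the content of the lemma is essentially the textbook Cauchy estimate for coefficients of a multivariate power series, repackaged in the notation of convergence-type. The only subtlety is the bookkeeping that the $(\tau_1,\tau_2)$-type condition constrains only $|\mu|\geq 1$ terms, so the bound is trivially tight when read as a statement about higher derivatives, and the zeroth-order case may need to be interpreted separately (or excluded).
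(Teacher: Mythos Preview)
Your proof is correct and follows essentially the same approach as the paper: both identify the power series coefficients with $\frac{1}{\mu!}\partial^{|\mu|}f/\partial\bzeta^\mu$, plug in the boundary point $\|\bzeta-\bzeta_0\|_\infty=\tau_1(\bzeta_0)$ into the type definition to get $\sum_{|\mu|\ge 1}|a_{\bzeta_0,\mu}|\tau_1(\bzeta_0)^{|\mu|}\le\tau_2(\bzeta_0)$, and then use nonnegativity of the summands to extract the single-term bound. Your remark on the $|\mu|\ge 1$ bookkeeping is also in line with the paper, which only claims the bound for $|\mu|\ge 1$.
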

\begin{proof}
Let $a_{\bzeta_0,\mu}$ be the coefficients of the power series of $f$ at $\bzeta_0$. Since $f$ is of $(\tau_1,\tau_2)$-type, we have
\begin{align*}
\sum_{\mu \mbox{ s.t. } |\mu| \geq 1} |a_{\bzeta_0,\mu}| |\tau_1(\bzeta_0)|^{|\mu|} \leq \tau_2(\bzeta_0)\,.
\end{align*}
Since all terms in the sum are nonnegative, for all $\mu$ with $|\mu| \geq 1$,
\begin{align*}
|a_{\bzeta_0,\mu}| \leq \tau_2(\bzeta_0) \cdot (1 / \tau_1(\bzeta_0))^{|\mu|}\,.
\end{align*}
The lemma follows by Remark~2.2.4 of \cite{krantz2002primer}, which states $\frac{\partial^{|\mu|}}{\partial \bzeta^{\nu}} f(\bzeta_0)| = |a_{\bzeta_0,\mu}| \mu!$.
\end{proof}

\subsection{Application of technical lemmas to attention kernel}
We now use the above general technical lemmas to specifically prove that the attention kernel is analytic in terms of $\beta$ and $\gamma$.

\begin{lemma}\label{lem:smax-real-analyticity-type}
For any $j \in [m]$, the function $f : \R^m \to \R$ given by $f(\bzeta) = \smax(\bzeta)_j$ is real-analytic of $(1/(2e^2),1)$-type
\end{lemma}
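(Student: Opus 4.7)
The plan is to obtain the Taylor expansion of $\smax(\bzeta)_j$ at an arbitrary $\bzeta_0$ in a form amenable to a uniform majorant bound. Using the shift-invariance of softmax, write $\bu = \bzeta - \bzeta_0$ and let $p_i = \smax(\bzeta_0)_i$, so
\begin{align*}
\smax(\bzeta)_j \;=\; \frac{p_j \, e^{u_j}}{\sum_i p_i e^{u_i}} \;=\; \frac{p_j \, e^{u_j}}{1 + s(\bu)}, \qquad s(\bu) \;:=\; \sum_i p_i(e^{u_i} - 1).
\end{align*}
Note $s(0) = 0$, $p_i \geq 0$, and $\sum_i p_i = 1$, so the $p_i$ depend on $\bzeta_0$ but the constants derived below will not.

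The key step is a formal-power-series majorant argument. Expand $(1+s(\bu))^{-1} = \sum_{k \geq 0}(-s(\bu))^k$ and multiply by the Taylor series of $e^{u_j}$. Since $e^{u_i} - 1$ has all nonnegative Taylor coefficients, so do $s(\bu)$ and each $s(\bu)^k$, and hence the absolute value of every Taylor coefficient of $(-s(\bu))^k$ equals the corresponding coefficient of $s(\bu)^k$. Writing $\sum_\mu a_{\bzeta_0,\mu}\bu^\mu$ for the Taylor series of $\smax(\bzeta)_j$ about $\bzeta_0$, this yields the coefficient-wise majorization
\begin{align*}
\sum_\mu |a_{\bzeta_0,\mu}|\,|\bu|^\mu \;\leq\; p_j\, e^{r} \sum_{k \geq 0}\bigl(e^{r} - 1\bigr)^k \;=\; \frac{p_j\, e^{r}}{2 - e^{r}},
\end{align*}
where $r = \|\bu\|_\infty$, valid provided $e^r < 2$. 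Since the $|\mu|=0$ term contributes $|a_{\bzeta_0,0}| = p_j$, and $p_j \leq 1$, one obtains
\begin{align*}
\sum_{|\mu| \geq 1} |a_{\bzeta_0,\mu}|\,|\bu|^\mu \;\leq\; \frac{2(e^r - 1)}{2 - e^r}.
\end{align*}

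It then remains to check that the right-hand side is at most $1$ whenever $r \leq 1/(2e^2)$; this rearranges to $e^r \leq 4/3$, i.e.\ $r \leq \ln(4/3)$, which holds with a comfortable margin since $1/(2e^2) < 1/8 < 1/4 < \ln(4/3)$. The only subtle point in the whole argument is the majorant inequality at the level of formal power series, but this is immediate from the triangle inequality applied to the alternating geometric expansion of $(1+s)^{-1}$, together with the nonnegativity of the Taylor coefficients of $s$ and $e^{u_j}$.
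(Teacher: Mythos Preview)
Your argument is correct. Both your proof and the paper's rest on expanding the reciprocal as a geometric series, but they are organized differently. The paper writes $\smax(\bzeta)_j = g(h(\bzeta))$ with $g(y)=1/y$ and $h(\bzeta)=\sum_i e^{\zeta_i-\zeta_j}$, establishes separately that $g$ has $(y_0/2,\,1/y_0)$-type and $h$ has $(1,\,e^2 h)$-type, and then applies the composition rule (Lemma~\ref{lem:quant-comp-rule}) to obtain type $(1/(2e^2),\,1/h)$, hence $(1/(2e^2),1)$ since $h\ge 1$. You instead carry out the majorant bound directly on the alternating expansion $p_j e^{u_j}\sum_k(-s)^k$, which is more self-contained (no composition-rule machinery needed) and even yields a sharper sufficient radius $r\le \ln(4/3)$ along the way, at the price of being less modular for reuse in the subsequent lemmas. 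One small point worth stating explicitly: your identification of the Taylor coefficients of $f$ with those of the formal series $p_j e^{u_j}\sum_k(-s)^k$ relies on $s(\mathbf{0})=0$, so that for each fixed multi-index $\mu$ only the terms $k\le|\mu|$ contribute and the coefficient-wise sum is finite.
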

\begin{proof}
Write $f = g \circ h$ for $g : \R_{> 0} \to \R$ and $h : \R^k \to \R_{> 0}$ given by $g(y) = 1 / y$, and $h(\bzeta) = \sum_{i=1}^m e^{\zeta_i - \zeta_j}$.

The power expansion of $g(y)$ around $y_0 \in \R_{> 0}$, is given by
\begin{align*}
g(y) = \sum_{k=0}^{\infty} \frac{(-1)^{k+1}}{y_0^{k+1}} (y - y_0)^k\,,
\end{align*}
so one can see that $g$ is of $(\rho_1,\rho_2)$-type for $\rho_1(y_0) = y_0 / 2$ and $\rho_2(y_0) = 1 / y_0$\,.
Finally, write the series expansion for $h(\bzeta)$ around $\bzeta_0$
\begin{align*}
h(\bzeta) &= 1 +  e^{-\zeta_j}\sum_{i \in [m] \sm \{j\}} e^{\zeta_i} = 1 + \sum_{i \in [m] \sm \{j\}}(\sum_{l=0}^{\infty} e^{-\zeta_{0,j}} \frac{(\zeta_{0,j}-\zeta_j)^l}{l!})(\sum_{k=0}^{\infty} e^{\zeta_{0,i}} \frac{(\zeta_i - \zeta_{0,i})^k}{k!})
\end{align*}
Note that this expansion converges absolutely for all $\bzeta$, as the absolute series is
\begin{align*}
&1 + \sum_{i \in [m] \sm \{j\}}(\sum_{l=0}^{\infty} e^{-\zeta_{0,j}} \frac{|\zeta_{0,j}-\zeta_j|^l}{l!})(\sum_{k=0}^{\infty} e^{\zeta_{0,i}} \frac{|\zeta_i - \zeta_{0,i}|^k}{k!}) \\
&= 1 + \sum_{i \in [m] \sm \{j\}} e^{-\zeta_{0,j} + \zeta_{0,i} + |\zeta_i - \zeta_{0,i}| + |\zeta_j - \zeta_{0,j}|} \\
&\leq e^{2\|\bzeta - \bzeta_0\|_{\infty}} h(\bzeta)\,.
\end{align*}
Specifically, $h$ is of $(1,e^2 h)$-type. So by the composition rule of Lemma~\ref{lem:quant-comp-rule}, it must be that $f$ is real-analytic of $(\tau_1,\tau_2)$-type for $\tau_1 = \min(1, (\rho_1 \circ h) \cdot \frac{1}{e^2 h}) = 1/(2e^2)$ and $\tau_2 = \rho_2 \circ h = 1 / h \leq 1$.
\end{proof}

\begin{lemma}\label{lem:smax-real-analyticity-beta-lambda-type}
For any $j \in [m]$ and $\bX,\bzeta,\bp$, the function $f : \R^2 \to \R$ given by $f(\beta, \gamma) = \smax(\beta \bX \bzeta + \beta \gamma \bp)_j$ is real-analytic of $(\min(1, 1/(2e^2 \|\bX\bzeta\|_{\infty} + 2e^2(|\beta| + |\gamma|)\|\bp\|_{\infty}), 1)$-type.
\end{lemma}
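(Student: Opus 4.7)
The plan is to view $f(\beta,\gamma)=\smax(\beta\bX\bzeta+\beta\gamma\bp)_j$ as the composition $f=g\circ h$, where $g:\R^m\to\R$ is given by $g(\bu)=\smax(\bu)_j$ and $h:\R^2\to\R^m$ is the vector of polynomial maps $h_i(\beta,\gamma)=\beta(\bX\bzeta)_i+\beta\gamma\, p_i$. Lemma~\ref{lem:smax-real-analyticity-type} already supplies the type of $g$, namely $(1/(2e^2),1)$-type uniformly on $\R^m$, so the bulk of the remaining work is to quantify the type of the polynomial inner map $h$ and then to invoke the composition rule (Lemma~\ref{lem:quant-comp-rule}).

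For each $i\in[m]$, the polynomial $h_i$ has an exact finite Taylor expansion around any $(\beta_0,\gamma_0)$, whose nonconstant part consists of only three monomials: $[(\bX\bzeta)_i+\gamma_0 p_i](\beta-\beta_0)$, $[\beta_0 p_i](\gamma-\gamma_0)$, and $p_i(\beta-\beta_0)(\gamma-\gamma_0)$. For $\|(\beta,\gamma)-(\beta_0,\gamma_0)\|_\infty\le r$ with $r\le 1$, the absolute sum of these terms is at most $r\bigl(\|\bX\bzeta\|_\infty+(|\beta_0|+|\gamma_0|+1)\|\bp\|_\infty\bigr)$. Requiring this to be bounded by $1$ and optimizing over $r$, each $h_i$ is of $(\tau_1^{(h)},1)$-type with $\tau_1^{(h)}(\beta_0,\gamma_0)=\min\!\bigl(1,\,1/(\|\bX\bzeta\|_\infty+(|\beta_0|+|\gamma_0|+1)\|\bp\|_\infty)\bigr)$.

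Finally, feed these types into Lemma~\ref{lem:quant-comp-rule} with $\sigma_1\equiv 1/(2e^2)$, $\sigma_2\equiv 1$, $\tau_1=\tau_1^{(h)}$, $\tau_2\equiv 1$. Since $\sigma_1\le 1$, the composition rule returns type $\bigl(\min(\tau_1^{(h)},(1/(2e^2))\tau_1^{(h)}),\,1\bigr)=\bigl((1/(2e^2))\tau_1^{(h)},\,1\bigr)$, which is precisely the expression in the statement of the lemma (up to absorbing the $+1$ term in the denominator into the $(|\beta|+|\gamma|)\|\bp\|_\infty$ factor).

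There is no genuine obstacle here: the inner map is a degree-two polynomial so its power series around any point has only three nonzero non-constant coefficients, and the outer function's type was already worked out in Lemma~\ref{lem:smax-real-analyticity-type}. The only care required is bookkeeping of the polynomial's type so that the constants match the claim.
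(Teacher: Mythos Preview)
Your approach is correct and essentially identical to the paper's: decompose $f=g\circ h$ with $g(\bu)=\smax(\bu)_j$ and $h(\beta,\gamma)=\beta\bX\bzeta+\beta\gamma\bp$, use Lemma~\ref{lem:smax-real-analyticity-type} for $g$, bound the type of the polynomial $h$ directly, and apply the composition rule (Lemma~\ref{lem:quant-comp-rule}). The only cosmetic difference is that the paper records $h$ as having $(1,\ \|\bX\bzeta\|_\infty+(|\beta|+|\gamma|)\|\bp\|_\infty)$-type (i.e., $\tau_1=1$, $\tau_2=A$), whereas you normalize to $\tau_2=1$ by shrinking $\tau_1$; after composition this yields $\min(1,1/(2e^2A))$ for the paper versus $(1/(2e^2))\min(1,1/A')$ for you, which differ only by harmless constants (and your $+1$ observation in the denominator is in fact a correction the paper's quick bound omits).
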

\begin{proof}
Write $f = g \circ h$ for $g : \R^m \to \R$ and $h : \R^2 \to \R^m$ given by $g(\bv) = \smax(\bv)_j$ and $h(\beta, \gamma) = \beta \bX \bzeta + \beta \gamma \bp$. We know from Lemma~\ref{lem:smax-real-analyticity-type} that $g$ is real-analytic of $(1/(2e^2),1)$-type. And it is easy to see that $h$ is real-analytic of $(1, \|\bX\bzeta\|_{\infty} + (|\beta| + |\gamma|)\|\bp\|_{\infty})$-type. Apply the composition rule of Lemma~\ref{lem:quant-comp-rule} to conclude.
\end{proof}

\begin{lemma}\label{lem:transformer-integrand}
For any $\bX,\tbX,\bzeta,\tbzeta,\bp$, the function $f : \R^2 \to \R$ given by $f(\beta,\gamma) = \smax(\beta \bX \bzeta + \beta \gamma \bp)^T(\bX \tbX^T + \gamma^2 \bI)\smax(\beta \tbX \tbzeta + \beta \gamma \bp)$ is real-analytic and of type \begin{align*}
(\min(1, \frac{1}{2e^2}\frac{1}{\|\bX \bzeta\|_{\infty} + (|\beta| + |\gamma|)\|\bp\|_{\infty}}, \frac{1}{2e^2}\frac{1}{\|\tbX \tbzeta\|_{\infty} + (|\beta| + |\gamma|)\|\bp\|_{\infty}}), C(1 + \gamma^2))\,,
\end{align*}
where $C$ is a constant depending on the context length $k$.
\end{lemma}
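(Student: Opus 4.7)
The plan is to decompose $f$ as a bilinear form and then apply the sum and product rules of Lemma~\ref{lem:quant-sum-product-rule} on top of Lemma~\ref{lem:smax-real-analyticity-beta-lambda-type}, which already provides the type of each individual softmax entry. Writing $\bs(\beta,\gamma) = \smax(\beta\bX\bzeta + \beta\gamma\bp)$, $\tbs(\beta,\gamma) = \smax(\beta\tbX\tbzeta + \beta\gamma\bp)$, and $\bM(\gamma) = \bX\tbX^T + \gamma^2 \bI$, we have the finite expansion
\begin{align*}
f(\beta,\gamma) = \sum_{i,j \in [k]} s_i(\beta,\gamma)\, M_{ij}(\gamma)\, \tilde s_j(\beta,\gamma),
\end{align*}
so it suffices to establish the type of each summand and then add them using the sum rule.

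First I would handle the polynomial factor $M_{ij}(\gamma)$. Since $[\bX\tbX^T]_{ij} \in [-k,k]$ and the only $\gamma$-dependence is the added $\gamma^2$ on the diagonal, the Taylor expansion of $M_{ij}$ around any $(\beta_0,\gamma_0)$ terminates at degree two, and a direct calculation shows $M_{ij}$ is of $(1,\, c_k(1+\gamma^2))$-type for a constant $c_k$ depending only on $k$. Second, Lemma~\ref{lem:smax-real-analyticity-beta-lambda-type} supplies that each $s_i$ is of $(\tau_1^{(s)},1)$-type with
\begin{align*}
\tau_1^{(s)}(\beta,\gamma) = \min\!\Bigl(1,\; \tfrac{1}{2e^2(\|\bX\bzeta\|_\infty + (|\beta|+|\gamma|)\|\bp\|_\infty)}\Bigr),
\end{align*}
and analogously each $\tilde s_j$ is of $(\tau_1^{(\tilde s)},1)$-type, with $\tau_1^{(\tilde s)}$ defined using $\tbX\tbzeta$ in place of $\bX\bzeta$.

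Third, I would combine these using the product rule twice per summand. Applying the product rule to $s_i \cdot M_{ij}$, and using the pointwise bounds $|s_i|\le 1$ and $|M_{ij}| \le k+\gamma^2$, yields $(\min(\tau_1^{(s)},1),\; c'_k(1+\gamma^2))$-type; applying it again against $\tilde s_j$ gives each summand the type $(\min(\tau_1^{(s)},\tau_1^{(\tilde s)}),\; c''_k(1+\gamma^2))$. The $\tau_1$ here is exactly the minimum claimed in the statement, since the polynomial factor's radius is $1$ and never becomes the binding constraint below the $s_i,\tilde s_j$ radii. Finally, summing the $k^2$ terms via the sum rule keeps $\tau_1$ the same and only multiplies $\tau_2$ by at most $k^2$, which is absorbed into the final constant $C$.

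The main bookkeeping obstacle is just tracking the $\tau_2$-bound carefully through a three-factor product and the $k^2$-fold sum; there is no genuine analytic difficulty, because (i) the softmax coordinates are uniformly bounded by $1$, (ii) the polynomial factor contributes a $\tau_1 = 1$ that never binds, and (iii) the $(1+\gamma^2)$ growth rate is preserved under products with factors of type $O(1)$ and sums of boundedly many terms.
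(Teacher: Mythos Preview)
Your proposal is correct and follows essentially the same approach as the paper: decompose $f$ as a finite sum of products $s_i M_{ij}\tilde s_j$, invoke Lemma~\ref{lem:smax-real-analyticity-beta-lambda-type} for the softmax coordinates, compute the type of the polynomial entries $M_{ij}$ directly, and combine everything via the product and sum rules of Lemma~\ref{lem:quant-sum-product-rule} using the bound $|s_i|,|\tilde s_j|\le 1$. Your write-up is in fact more careful than the paper's terse version, which just gestures at these same ingredients.
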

\begin{proof}
Each entry of $(\bX \tbX^T + \gamma \bI)$ is real-analytic in $\gamma$ and of $(1, \gamma)$-type. So by combining with Lemma~\ref{lem:smax-real-analyticity-beta-lambda-type} the product rule and sum rule (Lemma~\ref{lem:quant-sum-product-rule}), and the fact that each entry of the $\smax$ is at most one.
\end{proof}

As a consequence, we can bound the derivatives of $f(\beta,\gamma;\bX,\tbX,\bzeta,\tbzeta,\bp) = \smax(\beta \bX \bzeta + \beta \gamma \bp)^T(\bX \tbX^T + \gamma^2 \bI)\smax(\beta \tbX \tbzeta + \beta \gamma \bp)$, which was what we needed to prove Lemma~\ref{lem:kattn-analytic}.
\begin{lemma}\label{lem:transformer-integrand-derivative-bounds}
For any $k_1,k_2 \geq 0$,
\begin{align*}
&|\frac{\partial^{k_1}}{\partial \beta^{k_1}} \frac{\partial^{k_2}}{\partial \gamma^{k_2}} f(\beta,\gamma;\bX,\tbX,\bzeta,\tbzeta,\bp)| \\
&\leq C(1 + \gamma^2) \max(1, ((2e^2)(\|\bX \bzeta\|_{\infty} + \|\tbX \tbzeta\|_{\infty} + (|\beta| + |\gamma|)\|\bp\|_{\infty}))^{k_1 + k_2}) k_1! k_2!\,.
\end{align*}
\end{lemma}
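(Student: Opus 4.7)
The lemma is essentially a direct consequence of the two preceding results: Lemma~\ref{lem:transformer-integrand} (which certifies the $(\tau_1,\tau_2)$-type of $f$ viewed as a function of $(\beta,\gamma)$ with all other arguments fixed) together with Lemma~\ref{lem:quant-derivative-bound-based-on-type} (which converts a type bound into a bound on mixed partial derivatives). No new analytic input is needed; the work is purely bookkeeping.

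The plan is as follows. First, I fix $\bX,\tbX,\bzeta,\tbzeta,\bp$ and invoke Lemma~\ref{lem:transformer-integrand} to conclude that $(\beta,\gamma) \mapsto f(\beta,\gamma;\bX,\tbX,\bzeta,\tbzeta,\bp)$ is real-analytic of $(\tau_1,\tau_2)$-type on $\R^2$, where
\[
\tau_1(\beta,\gamma) \;=\; \min\!\Bigl(1,\; \tfrac{1}{2e^2}\tfrac{1}{\|\bX\bzeta\|_\infty + (|\beta|+|\gamma|)\|\bp\|_\infty},\; \tfrac{1}{2e^2}\tfrac{1}{\|\tbX\tbzeta\|_\infty + (|\beta|+|\gamma|)\|\bp\|_\infty}\Bigr)
\]
and $\tau_2(\beta,\gamma) = C(1+\gamma^2)$. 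Second, I apply Lemma~\ref{lem:quant-derivative-bound-based-on-type} at the point $(\beta,\gamma)$ with multi-index $\mu = (k_1,k_2)$ (so $|\mu| = k_1+k_2$ and $\mu! = k_1!\,k_2!$), yielding
\[
\Bigl|\tfrac{\partial^{k_1}}{\partial\beta^{k_1}}\tfrac{\partial^{k_2}}{\partial\gamma^{k_2}} f(\beta,\gamma;\bX,\tbX,\bzeta,\tbzeta,\bp)\Bigr| \;\le\; \frac{\tau_2(\beta,\gamma)}{\tau_1(\beta,\gamma)^{k_1+k_2}}\, k_1!\,k_2!.
\]

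The remaining step is the routine algebraic simplification of $1/\tau_1$. Using the elementary inequality $\max(1, a, b) \le \max(1, a+b)$ for $a,b \ge 0$, I bound
\[
\frac{1}{\tau_1(\beta,\gamma)} \;\le\; \max\!\bigl(1,\; 2e^2\bigl(\|\bX\bzeta\|_\infty + \|\tbX\tbzeta\|_\infty + 2(|\beta|+|\gamma|)\|\bp\|_\infty\bigr)\bigr),
\]
and absorb the extra factor of $2$ in front of $(|\beta|+|\gamma|)\|\bp\|_\infty$ into the overall constant $C$ (which is allowed by the lemma statement, as the $C$ there may depend on $k$ but not on the inputs). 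Substituting $\tau_2 = C(1+\gamma^2)$ and raising the above bound on $1/\tau_1$ to the $(k_1+k_2)$-th power yields exactly the claimed inequality. There is no genuine obstacle here; the only subtlety is to keep track of the $\max(1,\cdot)$ to avoid dividing by a quantity smaller than $1$, which is why Lemma~\ref{lem:transformer-integrand} already built the $\min(1,\cdot)$ into $\tau_1$.
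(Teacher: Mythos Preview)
Your approach is exactly the paper's: the paper's entire proof is ``Direct consequence of Lemma~\ref{lem:quant-derivative-bound-based-on-type} and Lemma~\ref{lem:transformer-integrand},'' and you have correctly spelled out what that means.

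There is one small slip in your final algebraic step. You bound $1/\tau_1 = \max(1, 2e^2 A, 2e^2 B)$ (with $A = \|\bX\bzeta\|_\infty + (|\beta|+|\gamma|)\|\bp\|_\infty$ and $B$ the analogue with $\tbX\tbzeta$) via $\max(1,a,b)\le\max(1,a+b)$, picking up an extra factor of $2$ on the $(|\beta|+|\gamma|)\|\bp\|_\infty$ term, and then propose to absorb that $2$ into $C$. But that $2$ sits \emph{inside} the quantity raised to the power $k_1+k_2$, so absorbing it would force $C$ to scale like $2^{k_1+k_2}$, which is not allowed (and would in fact break the application to real-analyticity in Lemma~\ref{lem:kattn-analytic}). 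The fix is immediate: since both $A$ and $B$ are already bounded by $Q := \|\bX\bzeta\|_\infty + \|\tbX\tbzeta\|_\infty + (|\beta|+|\gamma|)\|\bp\|_\infty$, you get $1/\tau_1 \le \max(1, 2e^2 Q)$ directly, with no extra factor to absorb.
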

\begin{proof}Direct consequence of Lemma~\ref{lem:quant-derivative-bound-based-on-type} and Lemma~\ref{lem:transformer-integrand}.
\end{proof}

\section{Derivation of transformer kernel}\label{app:transformer-kernel-deriv}

We state the transformer architecture and informally derive its random features kernel in the infinite-width limit.

\subsection{Transformer architecture}
We consider a depth-1 transformer architecture (without skip connections or layernorm, for simplicity). This architecture has $H$ heads, each with parameters $\bW_{K,h},\bW_{Q,h}, \bW_{V,h}, \bW_{O,h} \in R^{d_{head} \times d_{emb}}$, and embedding layer $\bW_E \in \R^{m \times d_{emb}}$, positional embeddings $\bP \in \R^{k \times d_{emb}}$, an MLP layer with parameters $\bW_A, \bW_B \in \R^{d_{mlp} \times d_{emb}}$, and a final unembedding layer with weights $\bw_U \in \R^{d_{emb}}$. The network takes in $\bX \in \R^{k \times m}$ and outputs
\begin{align*}
\ftrans(\bX;\btheta) &=   \bw_U^T \bz_2 
\tag{Unembedding}
\end{align*}
where
\begin{align*}
\bz_2 &= \frac{1}{\sqrt{d_{mlp}}}\bW_B^T\sigma(\frac{1}{\sqrt{d_{emb}}}\bW_A \bz_1) \in \R^{d_{emb}} \tag{MLP layer}\\
\bz_1 &= \frac{1}{\sqrt{H}} \sum_{h \in [H]} \bA_h^T \be_k \in \R^{d_{emb}} \tag{Attention layer output at CLS token} \\
\bA_h &= \smax(\frac{\beta \bZ_0 \bW_{K,h}^T \bW_{Q,h} \bZ_0^T}{ d_{emb} \sqrt{d_{head}}}) \bZ_0 \frac{\bW_{V,h}^T \bW_{O,h}}{\sqrt{d_{head} d_{emb}}} \in \R^{k \times d_{emb}}\tag{Attention heads} \\
\bZ_0 &= \bX \bW_E + \gamma \bP \in \R^{k \times d_{emb}}\,.\tag{Embedding layer}
\end{align*}
Here $\beta, \gamma \geq 0$ are two hyperparameters that control the inverse temperature of the softmax and the strength of the positional embeddings, respectively. Note that only the output of the attention layer at the final $k$th position CLS token is used, since this is a depth-1 network. The $\smax$ is a softmax applied row-wise.

\subsection{Random features kernel}
The derivation of this kernel assumes that every string $\bx$ ends with a special [CLS] classification token that does not appear elsewhere in the string. We choose that initialization so that each of the entries of  the intermediate representations $\bZ_0$, $\bz_1,\bz_2$ is of order $\Theta(1)$. In order to accomplish this, we initialize $\bW_E$, $\bP$, $\bW_{K,h}, \bW_{Q,h}, \bW_{V,h}, \bW_{O,h}, \bW_{A}, \bW_B$ with i.i.d. $N(0,1)$ entries.

We also initialize $\bw_U = 0$, and only train $\bw_U$ while maintaining the rest of parameters at initialization. The random features kernel corresponding to training $\bw_U$ is
\begin{align*}
\hatKtrans(\bX,\bY) = \bz_2(\bX)^T \bz_2(\bY) / d_{emb}\,,
\end{align*}
where we view $\bz_2$ as a function of the input (either $\bX$ or $\bY$), and depending on the randomly-initialized parameters of the network.

In the limit of infinitely-many heads $H$, infinite embedding dimension $d_{emb}$ and MLP dimension $d_{mlp}$ and head dimension $d_{head}$, the kernel $\hatKtrans$ tends to a deterministic limit $\Ktrans$, which can be recursively computed (see, e.g., \cite{jacot2018neural}). Assuming that the final token of both $\bX$ and $\bY$ is the same token (i.e., a CLS token), the deterministic limiting kernel $\Ktrans$ is given by:
\begin{align}\label{eq:transformer-rf-kernel-appendix}
\Ktrans(\bX,\bY) &= \E_{u,v}[\sigma(u)\sigma(v)] \mbox{ for } u,v \sim N(\bzero,\begin{bmatrix} \Kattn(\bX,\bX) & \Kattn(\bX,\bY) \\ \Kattn(\bY,\bX) & \Kattn(\bY,\bY) \end{bmatrix}) \\
\mbox{ where } \Kattn(\bX,\bY) &= \E_{\bm(\bX),\bm(\bY)}[\smax(\beta \bm(\bX))^T (\bX \bY^T + \gamma^2 \bI) \smax(\beta \bm(\bY))] \nonumber \\
\bm(\bX),\bm(\bY) &\sim  N(\bzero, (1 + \gamma^2) \begin{bmatrix} \bX \bX^T + \gamma^2 \bI & \bX \bY^T + \gamma^2 \bI \\ \bY \bX^T + \gamma^2 \bI & \bY \bY^T + \gamma^2 \bI \end{bmatrix})\,. \nonumber
\end{align}

Notice that the covariance matrix in the above definition of the distribution of $\bm(\bX),\bm(\bY)$ is rescaled compared to that in the main text in Section~\ref{ssec:transformer-kernel}, but this is inessential, since we can simply reparametrize $\beta$ as $\beta \mapsto \beta / \sqrt{1 + \gamma^2}$ to recover the expression in the main text.

\subsection{Informal derivation}

We provide an informal derivation of \eqref{eq:transformer-rf-kernel-appendix} below. Informally, by law of large numbers we have the following almost sure convergence
\begin{align*}
\hatKtrans(\bX,\bY) &= \frac{\bz_2(\bX)^T \bz_2(\bY)}{d_{emb}} = \frac{\sigma(\frac{1}{\sqrt{d_{emb}}}\bW_A\bz_1(\bX))^T \bW_B \bW_B^T \sigma(\frac{1}{\sqrt{d_{emb}}}\bW_A \bz_1(\bY))}{d_{emb} d_{mlp}} \\
&\stackrel{d_{emb} \to \infty}{\to} \frac{\sigma(\frac{1}{\sqrt{d_{emb}}}\bW_A\bz_1(\bX))^T\sigma(\frac{1}{\sqrt{d_{emb}}}\bW_A \bz_1(\bY))}{d_{mlp}} \\
&\stackrel{d_{mlp} \to \infty}{\to} \E_{u,v}[\sigma(u)\sigma(v)] \mbox{ for } u,v \sim N(\bzero,\begin{bmatrix} \Kattn(\bX,\bX) & \Kattn(\bX,\bY) \\ \Kattn(\bY,\bX) & \Kattn(\bY,\bY) \end{bmatrix})\, \\
&:= \Ktrans(\bX,\bY)\,,
\end{align*}
where $\Kattn$ is the kernel corresponding to the attention layer in the infinite-width limit, defined as:
\begin{align*}
\hatKattn(\bX,\bY) &:= \frac{\bz_1^T(\bX) \bz_1^T(\bY)}{d_{emb}} = \frac{\sum_{h,h' \in [H]} \be_k^T \bA_h(\bX) \bA_{h'}(\bY)^T \be_k}{Hd_{emb}} \\
&= \frac{1}{H d_{head} d_{emb}^2}\sum_{h,h' \in [H]} \be_k^T \smax(\frac{\beta \bZ_0(\bX) \bW_{K,h}^T \bW_{Q,h} \bZ_0(\bX)^T}{ d_{emb} \sqrt{d_{head}}}) \bZ_0(\bX) \bW_{V,h}^T \bW_{O,h} \\
&\quad\quad\quad\quad\qquad\qquad \cdot \bW_{O,h'}^T \bW_{V,h'} \bZ_0(\bY)^T \smax(\frac{\beta \bZ_0(\bY) \bW_{K,h'}^T \bW_{Q,h'} \bZ_0(\bY)^T}{ d_{emb} \sqrt{d_{head}}})^T \be_k \\
&\stackrel{d_{head}\to\infty,d_{emb} \to \infty}{\to} \frac{1}{H}\sum_{h \in [H]} \be_k^T \smax(\frac{\beta \bZ_0(\bX) \bW_{K,h}^T \bW_{Q,h} \bZ_0(\bX)^T}{ d_{emb} \sqrt{d_{head}}}) (\bX\bY^T + \gamma^2 \bI)  \\
&\quad\quad\quad\quad\qquad\qquad\qquad\qquad\cdot \smax(\frac{\beta \bZ_0(\bY) \bW_{K,h}^T \bW_{Q,h} \bZ_0(\bY)^T}{ d_{emb} \sqrt{d_{head}}})^T \be_k \\
&\stackrel{H \to \infty}{\to} \E[ \be_k^T \smax(\frac{\beta \bZ_0(\bX) \bW_{K,h}^T \bW_{Q,h} \bZ_0(\bX)^T}{ d_{emb} \sqrt{d_{head}}}) (\bX\bY^T + \gamma^2 \bI)   \\
&\quad\quad\quad\quad\qquad\qquad\qquad\qquad\cdot\smax(\frac{\beta \bZ_0(\bY) \bW_{K,h}^T \bW_{Q,h} \bZ_0(\bY)^T}{ d_{emb} \sqrt{d_{head}}})^T \be_k] \\
&= \E[ \smax(\frac{\beta \be_k^T \bZ_0(\bX) \bW_{K,h}^T \bW_{Q,h} \bZ_0(\bX)^T}{ d_{emb} \sqrt{d_{head}}}) (\bX\bY^T + \gamma^2 \bI)   \\
&\quad\quad\quad\quad\qquad\qquad\qquad\qquad\cdot\smax(\frac{\beta  \be_k^T \bZ_0(\bY) \bW_{K,h}^T \bW_{Q,h} \bZ_0(\bY)^T}{ d_{emb} \sqrt{d_{head}}})^T] \\
&\stackrel{d_{emb} \to \infty, d_{head} \to \infty}{\to} \E_{\bm(\bX),\bm(\bY)}[\smax(\beta \bm(\bX))^T (\bX \bY^T + \gamma^2 \bI) \smax(\beta \bm(\bY))] \\
&:= \Kattn(\bX,\bY)\,,
\end{align*}
where 
\begin{align*}
\bm(\bX),\bm(\bY) \sim  N(\bzero, (1 + \gamma^2) \begin{bmatrix} \bX \bX^T + \gamma^2 \bI & \bX \bY^T + \gamma^2 \bI \\ \bY \bX^T + \gamma^2 \bI & \bY \bY^T + \gamma^2 \bI \end{bmatrix})\,,\end{align*}
because due to the randomness in $\bW_{K,h}$ and $\bW_{Q,h}$ we have that 
\begin{align*}\frac{   \bZ_0(\bX) \bW_{Q,h}^T \bW_{K,h} \bZ_0(\bX)^T\be_k}{ d_{emb} \sqrt{d_{head}}}\end{align*} and \begin{align*}\frac{   \bZ_0(\bY) \bW_{Q,h}^T \bW_{K,h} \bZ_0(\bY)^T\be_k}{ d_{emb} \sqrt{d_{head}}}\end{align*} are jointly Gaussian with covariance:
\begin{align*}
\Sigma(\bX,\bY) = \E_{\bW_{K,h}, \bW_{Q,h},\bW_E,\bP}&[\frac{  \bZ_0(\bX) \bW_{Q,h}^T \bW_{K,h} \bZ_0(\bX)^T\be_k}{ d_{emb} \sqrt{d_{head}}} \frac{\be_k^T \bZ_0(\bY) \bW_{K,h}^T \bW_{Q,h} \bZ_0(\bY)^T}{ d_{emb} \sqrt{d_{head}}}]\,,.
\end{align*}
Since this is an expectation over products of jointly Gaussian variables, for any $i,j \in [k]$ we can calculate:
\begin{align*}
\Sigma_{i,j}(\bX,\bY) &= \E_{\bW_E,\bP}[ \frac{1}{ d_{emb}^2}
\sum_{r,s \in [d_{emb}]}[\bZ_0(\bX)]_{ir} [\bZ_0(\bY)]_{js} \trace(\bZ_0(\bX)^T\be_k\be_k^T \bZ_0(\bY)) ] \\
&= \E_{\bW_E,\bP}[ \frac{1}{ d_{emb}^2}
\sum_{r,s,t \in [d_{emb}]}[\bZ_0(\bX)]_{ir} [\bZ_0(\bY)]_{js} [\bZ_0(\bX)]_{kt}[\bZ_0(\bY)]_{kt} ] \\ 
&= \E_{\bW_E,\bP}[ \frac{1}{ d_{emb}^2}
\sum_{r,s,t \in [d_{emb}]}[\bX \bW_E + \gamma \bP]_{ir} [\bY \bW_E + \gamma \bP]_{js} [\bX \bW_E + \gamma \bP]_{kt}[\bY\bW_E + \gamma \bP]_{kt} ] \\
&\stackrel{(a)}{=} \frac{1}{ d_{emb}^2}
\sum_{r,s \in [d_{emb}]}\E_{\bW_E,\bP}[[\bX \bW_E + \gamma \bP]_{ir} [\bY \bW_E + \gamma \bP]_{js}] \\
&\quad\quad\qquad\qquad\quad \cdot \sum_{t \in [d_{emb}]}\E_{\bW_E,\bP}[[\bX \bW_E + \gamma \bP]_{kt}[\bY\bW_E + \gamma \bP]_{kt}] + O(1/d_{emb}) \\
&= \frac{1}{ d_{emb}}
\sum_{r,s \in [d_{emb}]}\E_{\bW_E,\bP}[[\bX \bW_E + \gamma \bP]_{ir} [\bY \bW_E + \gamma \bP]_{js}] \cdot (1 + \gamma^2) + O(1/d_{emb}) \\
&\stackrel{(a)}{=} \frac{1}{ d_{emb}}
\sum_{r \in [d_{emb}]}\E_{\bW_E,\bP}[[\bX \bW_E + \gamma \bP]_{ir} [\bY \bW_E + \gamma \bP]_{jr}] \cdot (1 + \gamma^2) + O(1/d_{emb}) \\
&= [\bX \bY^T]_{ij} + \gamma^2 \delta_{ij} \cdot (1 + \gamma^2) + O(1/d_{emb})\,,
\end{align*}
where in (a) we use that $[\bX \bW_E + \gamma \bP]_{ab}$ and $[\bY \bW_E + \gamma \bP]_{ab}$ are independent of $[\bX \bW_E + \gamma \bP]_{cd}$ and $[\bY \bW_E + \gamma \bP]_{cd}$ unless $b = d$. So
\begin{align*}\Sigma(\bX,\bY) \stackrel{d_{emb} \to \infty}{\to} (1 + \gamma^2) \cdot (\bX \bY^T + \gamma^2 \bI)\,.\end{align*}

\section{MLPs fail to generalize on unseen symbols}\label{app:mlp-failure}
A natural question is whether classical architectures such as the MLP architecture (a.k.a., fully-connected network) would exhibit the same emergent reasoning properties when trained with enough data. In this section, we prove a negative result: an SGD-trained or Adam-trained MLP will not reach good test performance on the template task. This is in sharp contrast to the positive result for transformers proved in the previous section.

\paragraph{MLP architecture} The input to the MLP is a concatenation of the token one-hot encodings. The MLP alternates linear transformations and nonlinear elementwise activations. Formally, the MLP has weights $\btheta = \{\bW_1,\ldots,\bW_L,\bw\}$  and outputs
\begin{align}
\fMLP(\bx; \btheta) &= \bw^T \bz_L(\bx;\btheta) \in \R\, \quad \mbox{ where} \label{eq:def-mlp} \\ 
\bz_{\ell}(\bx;\btheta) &= \phi(\bW_{\ell}\bz_{\ell-1}(\bx;\btheta)) \in \R^d \quad \mbox{ for } \ell \geq 1 \nonumber \\ \bz_{0}(\bx;\btheta) &= \bz_0(\bx) = [\be_{x_1},\ldots,\be_{x_k}] \in \R^{km}\,. \nonumber
\end{align}

We consider training the MLP with SGD.
\begin{definition}[One-pass SGD training] The learned weights $\btheta^t$ after $t$ steps of SGD training are the random weights given by initializing $\btheta^0$ so that each of 
$\bW_1^0,\ldots,\bW_L^0,\bw^0$ have i.i.d. Gausian entries, and then updating with $\btheta^t = \btheta^{t-1} - \eta_t \nabla_{\btheta} (\fMLP(\bx^t;\btheta) - y^t)^2 \mid_{\btheta = \btheta^{t-1}}$ for $(\bx^t,y^t) \sim \cD$ and some step size $\eta_t > 0$.    
\end{definition}

We show that SGD-trained MLPs fail at the template task since they do not generalize well in the case when the templates consist only of wildcard tokens. In words, if the template labels $f_*$ are a non-constant function, the MLP will not reach arbitrarily low error no matter how many training steps are taken. Let $\cX_{uns} \subset \cX$ be the subset of tokens not seen in the train data. We assume that $|\cX_{uns}| \geq k$, which guarantees that for any template there is at least one string matching it where all the wildcards are substituted by tokens in $\cX_{uns}$. Under this condition:
\begin{theorem}[Failure of MLPs at generalizing on unseen symbols]\label{thm:mlps-fail}
Suppose that the label function $f_*$ is non-constant, and that all templates in the support of $\mutempl$ consist only of wildcards: $\bz \in \cW^k$ for all $\bz \in \supp(\mutempl)$. Then, for any SGD step $t$ there is a string $\bx \in (\cX_{uns})^k$ that matches a template $\bz \in \supp(\mutempl)$ such that
\begin{align*}
\E_{\btheta^t}[(\fMLP(\bx;\btheta^t) - f_*(\bz))^2] \geq c > 0\,,
\end{align*}
where $c$ is constant that depends only on $\mutempl$ and $f_*$.
\end{theorem}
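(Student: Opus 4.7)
The plan is to exploit the fact that for an MLP with concatenated one-hot inputs, the columns of $\bW_1$ indexed by unseen tokens are never touched by SGD; this will force the expected output on any string of unseen tokens to be a single constant $\mu$, which then contradicts the non-constancy of $f_*$. First I would observe that for any training string $\bx^t \in \cX^k$ drawn from $\cD$, every token of $\bx^t$ is seen, so the one-hot encoding $\bz_0(\bx^t) \in \R^{km}$ vanishes in every coordinate of the form $(i-1)m + v$ with $v \in \cX_{uns}$. Since the chain rule gives $\partial L / \partial \bW_1[:,j] \propto \bz_0(\bx^t)[j]$, the column $\bW_1[:,j]$ stays exactly at its random Gaussian initialization throughout SGD whenever $j$ indexes an unseen-token position; moreover these columns never enter the training loss, so they remain probabilistically independent of $\bW_2^{(t)}, \ldots, \bW_L^{(t)}, \bw^{(t)}$ and of the seen columns of $\bW_1^{(t)}$.

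Next I would lift this into a distributional invariance on $(\cX_{uns})^k$. For any $\bx \in (\cX_{uns})^k$, the first-layer pre-activation equals $\bW_1^{(t)} \bz_0(\bx) = \sum_{i=1}^k \bW_1^{(0)}[:, (i-1)m + x_i]$, and the $k$ indices $(i-1)m + x_i$ lie in distinct position-blocks, hence are pairwise distinct. So the sum is a sum of $k$ independent fresh Gaussian columns with law $\cN(\bzero, c\bI)$ for a constant $c$ depending only on the initialization scale, independent of $\bx$. Propagating through the remaining layers --- which depend only on the trained weights, themselves independent of $\bW_1^{(0)}$'s unseen columns --- shows that the full law of $\fMLP(\bx; \btheta^t)$ is identical for every $\bx \in (\cX_{uns})^k$. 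In particular, $\mu := \E_{\btheta^t}[\fMLP(\bx; \btheta^t)]$ is one and the same scalar for all such $\bx$.

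Finally I would combine this with non-constancy of $f_*$. Since every template lies in $\cW^k$ and $|\cX_{uns}| \geq k$, any $\bz \in \supp(\mutempl)$ admits an injective substitution $s : \cW \to \cX_{uns}$ producing $\sub(\bz,s) \in (\cX_{uns})^k$ matching $\bz$ (the condition $s(\cW) \cap \{z_i\}_{i \in [k]} = \emptyset$ is automatic because $\{z_i\} \subseteq \cW$ is disjoint from $\cX$). Choose $\bz_1, \bz_2 \in \supp(\mutempl)$ with $f_*(\bz_1) \neq f_*(\bz_2)$ and corresponding $\bx_1, \bx_2 \in (\cX_{uns})^k$; Jensen's inequality gives $\E[(\fMLP(\bx_i; \btheta^t) - f_*(\bz_i))^2] \geq (\mu - f_*(\bz_i))^2$, and the triangle inequality forces at least one of these two values to be $\geq \tfrac{1}{4}(f_*(\bz_1) - f_*(\bz_2))^2$, yielding the theorem with $c = \tfrac{1}{4}\max_{\bz,\bz' \in \supp(\mutempl)}(f_*(\bz) - f_*(\bz'))^2 > 0$.

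The main conceptual content is the unseen-column invariance; everything afterwards is essentially forced. The one place to be careful is the independence claim in the second paragraph --- namely that the concatenated-one-hot places column indices for different positions into disjoint blocks, so that the $k$ Gaussian summands are truly independent regardless of whether tokens in $\bx$ repeat across positions. No deeper analysis of the hidden layers is required beyond noting that they act as a deterministic (given the trained weights) function of $\bW_1^{(t)} \bz_0(\bx)$.
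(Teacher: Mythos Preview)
Your proof is correct and reaches the same conclusion as the paper, but through a more concrete mechanism. The paper establishes the key invariance $\fMLP(\bx_1;\btheta^t) \stackrel{d}{=} \fMLP(\bx_2;\btheta^t)$ for $\bx_1,\bx_2 \in (\cX_{uns})^k$ via an abstract permutation-equivariance argument: it constructs a permutation $\Pi$ on $\R^{km}$ that maps $\bz_0(\bx_1)$ to $\bz_0(\bx_2)$ while fixing every $\bz_0(\tbx)$ with $\tbx \in (\cX_{seen})^k$, and then uses two couplings to deduce $\btheta^t \stackrel{d}{=} \{\bW_1^t\Pi, \bW_2^t,\ldots,\bw^t\}$. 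You instead observe directly that the unseen columns of $\bW_1$ receive zero gradient throughout training, hence remain frozen at their i.i.d.\ Gaussian initialization and independent of all trained weights; since the $k$ relevant column indices lie in distinct position-blocks, the first-layer pre-activation on any $\bx \in (\cX_{uns})^k$ is a fresh $\cN(\bzero,kc\bI)$ draw independent of the rest of $\btheta^t$, giving the same distributional invariance. Your route is more elementary and self-contained (no coupling machinery), while the paper's route is the standard permutation-invariance technique from \cite{ng2004feature,shamir2018distribution,li2020convolutional,abbe2022initial} and makes the extension to Adam, minibatch SGD, and gradient flow transparent as a black-box symmetry property---though your frozen-column observation extends to these algorithms equally well, since the gradient on those columns is identically zero. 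The final step (Jensen plus the $\tfrac{1}{4}(f_*(\bz_1)-f_*(\bz_2))^2$ bound from two templates with different labels) is identical in both proofs.
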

The proof relies on the key observation that SGD-training of MLPs satisfies a permutation invariance property \citep{ng2004feature}. This property guarantees that MLP cannot consistently distinguish between the unseen tokens, and therefore, in expectation over the weights $\btheta^t$, outputs the same value for any sequence $\bx \in (\cX_{uns})^k$. We make four remarks.

\begin{remark}
MLPs are universal approximators \citep{cybenko1989approximation}, so there are choices of weights $\btheta$ such that $\fMLP(\cdot;\btheta)$ has good generalization on unseen symbols. The theorem proves that these weights are not found by SGD.
\end{remark}

\begin{remark}
The theorem does not assume that training is in the NTK regime, i.e., it holds even for nonlinear training dynamics.
\end{remark}

\begin{remark}
The theorem also holds for training with Adam, gradient flow, and minibatch-SGD, since the permutation-invariance property of MLP training also holds for these.
\end{remark}

\begin{remark} As a sanity check, we verify that MLP kernel does not meet the sufficient condition for generalizing on unseen symbols from Lemma~\ref{lem:informal-kernel-symbolic-gotu-suff-cond}. The kernel for an MLP is an inner product kernel of the form $\KMLP(\bx,\bx') = \kappa(\sum_{i=1}^k 1(x_i = x_i'))$ for a function $\kappa : \R \to \R$. Therefore, the matrix $\bN \in \R^{r \times r}$ has all of its entries equal to $N_{ij} = \kappa(0)$, so it is singular and the condition of Lemma~\ref{lem:informal-kernel-symbolic-gotu-suff-cond} is not met.
\end{remark}

We now prove Theorem~\ref{thm:mlps-fail}. We first show that trained MLPs cannot differentiate between tokens in the set $\cX_{uns}$. Let $\cX = \cX_{seen} \sqcup \cX_{uns}$ be the partition of tokens into those seen and not seen in the train data. Here $\cX_{seen}$ is defined as the smallest set such that $\bx \in \cX_{seen}^k$ almost surely for $(\bx,y) \sim \cD$. 
\begin{lemma}[Trained MLPs cannot distinguish unseen tokens]\label{lem:mlps-do-not-distinguish-unseen-tokens}
For any number of SGD steps $t$, and any learning rate schedule $\eta_1,\ldots,\eta_t$, the learned MLP estimator cannot distinguish between sequences of unseen tokens. Formally, for any $\bx_1, \bx_2 \in \cX_{uns}^k$, we have
\begin{align*}
\E_{\btheta^t}[\fMLP(\bx_1; \btheta^t)] = \E_{\btheta^t}[\fMLP(\bx_2; \btheta^t)]\,.
\end{align*}
\end{lemma}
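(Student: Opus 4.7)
The key structural observation is that the input $\bz_0(\bx) = [\be_{x_1},\ldots,\be_{x_k}] \in \R^{km}$ has block-sparse structure: it selects exactly one column of $\bW_1 \in \R^{d \times km}$ from each of the $k$ blocks of $m$ columns (indexed by position and token). Writing $\bW_{1,i}^{(u)}$ for the column of $\bW_1$ corresponding to token $u \in \cX$ at position $i \in [k]$, we have
\begin{align*}
\bW_1 \bz_0(\bx) \;=\; \sum_{i=1}^k \bW_{1,i}^{(x_i)}.
\end{align*}
Everything that happens downstream in the MLP depends on the first layer only through this sum.

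The plan is to show that (i) for every unseen token $u \in \cX_{uns}$ and every position $i \in [k]$, the column $\bW_{1,i}^{(u)}$ is never touched by SGD, so it remains distributed as its i.i.d.\ Gaussian initialization throughout training and is independent of all other weights in $\btheta^t$; and then (ii) leverage the block-sparsity above to conclude that the law of $\fMLP(\bx; \btheta^t)$ is the same for every $\bx \in \cX_{uns}^k$. For (i): at each SGD step the training example $(\bx^t, y^t)$ satisfies $\bx^t \in \cX_{seen}^k$ almost surely by definition of $\cX_{seen}$. By the chain rule, $\nabla_{\bW_{1,i}^{(u)}} \fMLP(\bx^t;\btheta)$ is proportional to the $u$-th coordinate of $\be_{x^t_i}$, which vanishes whenever $u \ne x^t_i$; since $u \in \cX_{uns}$ and $x^t_i \in \cX_{seen}$, this is always zero. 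An induction on $t$ shows these columns remain at their random Gaussian initialization and, being independent at initialization from the remaining weights evolved via training on seen tokens, stay jointly independent of those weights at every step.

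For (ii), fix any $\bx_1, \bx_2 \in \cX_{uns}^k$. Condition on the trained part of $\btheta^t$ (i.e., all weights other than the ``unseen columns'' $\{\bW_{1,i}^{(u)} : i \in [k], u \in \cX_{uns}\}$). Under this conditioning, the random vectors $(\bW_{1,i}^{(x_{1,i})})_{i \in [k]}$ and $(\bW_{1,i}^{(x_{2,i})})_{i \in [k]}$ each consist of $k$ independent Gaussian vectors drawn from the same initialization distribution (one from each position-block), and hence have identical joint laws. Consequently $\bW_1 \bz_0(\bx_1)$ and $\bW_1 \bz_0(\bx_2)$ are equal in distribution conditional on the trained weights; feeding them through the deterministic rest of the network and multiplying by $\bw^t$ preserves this equality in law. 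Taking expectations over both the trained weights and the unseen columns yields $\E[\fMLP(\bx_1;\btheta^t)] = \E[\fMLP(\bx_2;\btheta^t)]$.

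The main subtlety is not a hard calculation but a bookkeeping one: one must carefully argue that the ``unseen columns'' remain jointly Gaussian and independent of the rest throughout an adaptive training process, i.e., that conditioning on the trajectory of $\bx^t$ (which depends on the unseen columns only trivially) does not induce dependence. This follows because the SGD update map never reads from the unseen columns when $\bx^t \in \cX_{seen}^k$, so the trained weights $\btheta^t \setminus \{\bW_{1,i}^{(u)}\}$ are a measurable function of the initialization of the seen columns and the training sample sequence alone. Once this independence is in hand, the rest of the argument goes through regardless of the nonlinearity $\phi$, the depth $L$, or the learning-rate schedule, which is also why the same proof extends to Adam, minibatch SGD, and gradient flow as mentioned in the remarks.
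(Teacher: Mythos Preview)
Your argument is correct, and in fact it proves the stronger statement (also obtained in the paper) that $\fMLP(\bx_1;\btheta^t)$ and $\fMLP(\bx_2;\btheta^t)$ are equal in distribution, not just in expectation. However, your route is genuinely different from the paper's. The paper argues via \emph{permutation equivariance}: it constructs a permutation $\Pi \in \R^{km \times km}$ that swaps the one-hot coordinates of $\bx_1$ and $\bx_2$ while fixing all coordinates in $\cX_{seen}$, and then uses two couplings to show that $\btheta^t \stackrel{d}{=} \{\bW_1^t \Pi, \bW_2^t,\ldots,\bW_L^t,\bw^t\}$, from which $\fMLP(\bx_1;\btheta^t) \stackrel{d}{=} \fMLP(\bx_2;\btheta^t)$ follows directly. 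Your approach instead isolates the ``unseen columns'' of $\bW_1$, observes that their gradients are identically zero on every training example (since the relevant one-hot coordinate vanishes), and concludes they remain frozen at their i.i.d.\ initialization and independent of all trained weights; the equality in law then drops out because any $\bx \in \cX_{uns}^k$ selects one independent column per position-block.

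The trade-offs: your argument is more elementary and transparent---it pinpoints exactly which weights are untrained and why---and avoids the coupling machinery. The paper's permutation argument is the one that connects to the prior literature it cites (Ng, Shamir, Li et al., Abbe et al.), and it abstracts away from the block structure of the input encoding: it only needs that the initialization law and the training dynamics are equivariant under $\Pi$, which is why it ports immediately to Adam, minibatch SGD, and gradient flow without re-checking which coordinates have zero gradient. Your argument also ports to those settings for the same underlying reason (the update never reads the unseen columns), so in this particular setup both approaches are equally general; they simply emphasize different invariants of the training process.
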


\begin{proof}[Proof of Lemma~\ref{lem:mlps-do-not-distinguish-unseen-tokens}]
The proof of this result is based on a well-known permutation-invariance property of MLPs trained by SGD. This property has previously been used to show sample complexity lower bounds for learning with SGD-trained MLPs \citep{ng2004feature,li2020convolutional}, as well as time-complexity lower bounds \citep{shamir2018distribution,abbe2022initial,abbe2022non}. In this lemma, we use the permutation invariance property to show poor out-of-distribution generalization of SGD-trained MLPs.

First, construct a permutation $\Pi \in \R^{km \times km}$ such that $\Pi \bz_0(\bx_1) = \bz_0(\bx_2)$, but which also satisfies that for any $\tbx \in (\cX_{seen})^k$ we have $\Pi \bz_0(\tbx) = \bz_0(\tbx)$. This permutation can be easily constructed since neither $\bx_1$ nor $\bx_2$ contains tokens in $\cX_{seen}$. Next, define the following network $\fMLP^{\Pi}$, analogously to \eqref{eq:def-mlp} but with the first-layer inputs permuted by $\Pi$
\begin{align*}
\fMLP^{\Pi}(\bx; \btheta) &= \bw^T \bz_L^{\Pi}(\bx;\btheta) \in \R\, \quad \mbox{ where}  \\ 
\bz_{\ell}^{\Pi}(\bx;\btheta) &= \phi(\bW_{\ell}\bz_{\ell-1}^{\Pi}(\bx;\btheta)) \in \R^d \quad \mbox{ for } \ell \geq 1 \nonumber \\ \bz_{0}^{\Pi}(\bx;\btheta) &= \bz_0^{\Pi}(\bx) = \Pi [\be_{x_1},\ldots,\be_{x_k}] \in \R^{km}\,. \nonumber
\end{align*}

Now let us couple the weights $\btheta^0,\ldots,\btheta^t$ from SGD training of $\fMLP$ on dataset $\cD$, with the weights $\btheta^{\Pi,0},\ldots,\btheta^{\Pi,t}$ from SGD training of $\fMLP^{\Pi}$ on dataset $\cD$. The coupling is performed inductively on the time step, and we can maintain the property that $\btheta^{\tau} = \btheta^{\Pi,\tau}$ for all $t$. For the base case $\tau = 0$, we set $\btheta^0 = \btheta^{\Pi,0}$. For the inductive step, $\tau \geq 1$, we update the weights with the gradient from some sample $(\bx^{\tau}, y^{\tau})$. Since $\bx^{\tau} \in (\cX^{seen})^k$ almost surely, we know that $\bz_0(\bx^{\tau}) = \bz_0^{\Pi}(\bx^{\tau})$ almost surely, which means that $\btheta^{\tau} = \btheta^{\Pi,\tau}$ almost surely. We conclude the equality in distribution of the weights
\begin{align}\label{eq:weight-distrib-1}
\btheta^{t} \stackrel{d}{=} \btheta^{\Pi,t}\,.
\end{align}
Next, let us inductively couple the weights $\btheta^0,\ldots,\btheta^t$ with the weights $\btheta^{\Pi,0},\ldots,\btheta^{\Pi,t}$ in a different way, so as to guarantee that for any time $0 \leq \tau \leq t$, we have
\begin{align*}
\bW_1^{\tau} = \bW_1^{\Pi,\tau} \Pi \mbox{ and } \bW_{\ell}^{\tau} = \bW_{\ell}^{\Pi,\tau} \mbox{ for all } 2 \leq \ell \leq L \mbox{ and } \bw^{\tau} = \bw^{\Pi,\tau}\,.
\end{align*}almost surely. The base case $\tau = 0$ follows because the distribution of $\bW_1^{0}$ and $\bW_1^{\Pi,0}$ is equal and is also invariant to permutations since it is Gaussian. For the inductive step, couple the sample updates so that SGD draws the same sample $(\bx^{\tau}, y^{\tau}) \sim \cD$. One can see from the chain rule that the invariant is maintained. We conclude the equality in distribution of the weights
\begin{align}\label{eq:weight-distrib-2} \btheta^{t} = \{\bW_1^{t},\ldots,\bW_L^{t},\bw^{t}\} \stackrel{d}{=} \{\bW_1^{\Pi,t} \Pi,\bW_2^{\Pi,t},\ldots,\bW_L^{\Pi,t},\bw^{\Pi,t}\}\end{align}

Combining \eqref{eq:weight-distrib-1} and \eqref{eq:weight-distrib-2}, we get
\begin{align*}
\btheta^{t} = \{\bW_1^{t},\ldots,\bW_L^{t},\bw^{t}\} \stackrel{d}{=} \{\bW_1^{t} \Pi,\bW_2^{t},\ldots,\bW_L^{t},\bw^{t}\}\,,
\end{align*}
which,since $\Pi \bz_0(\bx_1) = 
\bz_0(\bx_2)$, immediately implies
\begin{align*}
\fMLP(\bx_1;\btheta^t) = \fMLP(\bx_2; \{\bW_1^{t} \Pi,\bW_2^{t},\ldots,\bW_L^{t},\bw^{t}\}) \stackrel{d}{=} \fMLP(\bx_2; \btheta^t)\,,
\end{align*}
which proves the lemma.
\end{proof}

Theorem~\ref{thm:mlps-fail} follows as a consequence. Note that the key lemma proved above only relied on a permutation invariance property of SGD on MLPs that also holds for Adam training, gradient flow training, and SGD with minibatch (see \cite{li2020convolutional}). Therefore, the result holds for training with those algorithms as well, beyond just SGD.
\begin{proof}[Proof of Theorem~\ref{thm:mlps-fail}]
Pick any two templates $\bz,\bz' \in \supp(\mutempl)$ such that $f_*(\bz) \neq f_*(\bz')$. Recall that $\bz,\bz' \in \cW^k$ by assumption. Since we assumed that $|\cX_{uns}| \geq k$, there are strings $\bx,\bx' \in \cX_{uns}^k$ matching templates $\bz$ and $\bz'$, respectively. Furthermore, by Lemma~\ref{lem:mlps-do-not-distinguish-unseen-tokens}, if we define $a = \E_{\btheta^t}[\fMLP(\bx;\btheta^t)] = \E_{\btheta^t}[\fMLP(\bx';\btheta^t)]$, we have
\begin{align*}
\max(\E_{\btheta^t}[(\fMLP(\bx;\btheta^t)& - f_*(\bz))^2], \E_{\btheta^t}[(\fMLP(\bx';\btheta^t) - f_*(\bz'))^2]) \\
&\geq \max((a - f_*(\bz))^2, (a - f_*(\bz'))^2) \\
&\geq \frac{1}{4} (f_*(\bz) - f_*(\bz'))^2 = c > 0\,.
\end{align*}
\end{proof}

\section{Deferred details for next-token-prediction template tasks}\label{app:symbolic-label}

\subsection{Definition of next-token-prediction template tasks}
In next-token-prediction template tasks, the output is a token in $\cX$, with the cross-entropy loss for multiclass classification. The formal definition of these tasks is:

\begin{definition}[Multi-class prediction version of template]
The data distribution $\cD_{multiclass} = \cD_{multiclass}(\mutempl,\{\mu_{sub,\bz}\},f_*)$ is specified by: (i) a template distribution $\mutempl$ supported on $(\cX \cup \cW)^k$; (ii) for each template $\bz$, a distribution $\mu_{sub,\bz}$ over substitution maps $s : \cW \to \cX$; (iii) a labelling function $f_* : \supp(\mutempl) \to \cX \cup \cW$. A sample $(\bx,y) \in \cX^k \times \cX$ drawn from $\cD_{multiclass}$ is drawn by taking $\bx = \sub(\bz,s)$ and $y = \sub(f_*(\bz),s)$, where $\bz \sim \mutempl$ and $s \sim \mu_{sub,\bz}$.
\end{definition}

\subsection{Failure of transformers to copy and modification that succeeds}
We provide the deferred proofs for Section~\ref{sec:template-multiclass}.

\paragraph{Attention layer architecture} For simplicity in this section we consider a transformer with the attention layer only, since the MLP layer does not play a role in the ability to copy unseen symbols. Our architecture has $H$ heads with parameters $\bW_{K,h},\bW_{Q,h}, \bW_{V,h}, \bW_{O,h} \in \R^{d_{head} \times d_{emb}}$, an embedding/unembedding layer $\bW_E \in \R^{m \times d_{emb}}$, positional embeddings $\bP \in \R^{k \times d_{emb}}$, an MLP layer with parameters $\bW_A, \bW_B \in \R^{d_{mlp} \times d_{emb}}$, a final unembedding layer , and an activation function $\phi$. The network takes in $\bX \in \R^{k \times m}$ and outputs
\begin{align*}
\fattn(\bX;\btheta) &= \bW_E \bz_1 \in \R^m
\tag{Unembedding layer}
\end{align*}
where
\begin{align*}
\bz_1 &= \sum_{h \in [H]} \bA_h^T\be_k \\
\bA_h &= \smax(\beta \bZ_0 \bW_{K,h}^T \bW_{Q,h} \bZ_0^T) \bZ_0 \bW_{V,h}^T \bW_{O,h} \in \R^{k \times d_{emb}}\tag{Attention heads} \\
\bZ_0 &= \bX \bW_E + \gamma \bP \in \R^{k \times d_{emb}}\,.\tag{Embedding layer}
\end{align*}
and we tie the embedding and unembedding weights, as often done in practice, for example in GPT-2 \citep{brown2020language}.
Here $\beta, \gamma \geq 0$ are two hyperparameters that control the inverse temperature of the softmax and the strength of the positional embeddings, respectively.

\paragraph{Simplification in our case} We consider here a next-token prediction setup, where there is no final [CLS] token appended to the string. Namely, given a string $\bx \in \cX^k$, this is inputted to the network as a stacked matrix of one-hot vectors for the tokens of the string $\bX = [\be_{x_1},\ldots,\be_{x_k}]$. We study a very basic template task: template ``$\alpha$'' labeled by $\alpha$, where $\alpha$ is a wildcard. An example dataset generated from this template could be $\{(A,A), (B,B), (C,C)\}$, where $A,B,C \in \cX$ are tokens. Because the template has length $k = 1$, $\bX \in \R^{k \times m}$ is a one-hot vector encoding the input token. Furthermore, the softmax output is always a $1 \times 1$ matrix with the entry 1, so the architecture simplifies to
\begin{align}\label{eq:simplified-attention}
\fattn(\bX;\btheta) =   
\bW_E(\sum_{h \in [H]} \bW_{O,h}^T \bW_{V,h}) (\bW_E^T\bX^T + \gamma \bP^T)\,.
\end{align}
We initialize the entries of $\bP$ and $\bW_E$ be i.i.d. $N(0,1/d_{emb})$, the entries of $\bW_{O,h}$ be $N(0,1/(d_{emb}))$, and the entries of $\bW_{V,h}$ be $N(0,1/d_{head})$, so that as $d_{emb} \to \infty$ the variance of the output vanishes as $O(1/d_{emb})$ as in the mean-field scaling \citep{mei2018mean,mei2019mean,sirignano2022mean,chizat2018global,rotskoff2018parameters,yang2021tensor}. 

\paragraph{Derivation of kernels driving dynamics at small times} Despite the simplicity of the task, the architecture does not generalize well on unseen symbols. Our evidence for this will be by analyzing the early times of training. For these times, the dynamics are governed by the neural tangent kernel (NTK) of the network at initialization \citep{jacot2018neural,chizat2019lazy}. Let us derive the neural tangent kernel of this architecture. This is a network with output of dimension $m$, so for each $i,j \in [m]$ we will derive
$K_{ij,O}(\bX,\bX'), K_{ij,V}(\bX,\bX'), K_{ij,P}(\bX,\bX'), K_{ij,E}(\bX,\bX')$ which give the dynamics at small times for training the $\{\bW_{O,h}\}_{h \in [H]}$, the $\{\bW_{V,h}\}_{h \in [H]}$, the $\bW_P$, and the $\bW_E$ weights at small times, respectively. Writing $\bW_E = [\bw_{E,1},\ldots,\bw_{E,m}]^{\top}$, by the law of large numbers,

\begin{align*}  
K_{ij,O}(\bX,\bX') &= \sum_{h \in [H]} \left(\pd{[\fattn(\bX;\btheta)]_i}{\bW_{O,h}}\right)^T\left(\pd{[\fattn(\bX';\btheta)]_j}{\bW_{O,h}}\right) \\
&\propto \frac{1}{H}\sum_{h \in [H]} (\bX \bW_E + \gamma \bP)  \bW_{V,h}^T \bW_{V,h} (\bW_E^T \bX^T + \gamma \bP^T) \bw_{E,i}^T \bw_{E,j} \\
&\stackrel{d_{head} \to \infty, d_{emb} \to \infty}{\to} \delta_{ij}(\delta_{x_1,x_1'} + \gamma^2)
\end{align*}

\begin{align*}  
K_{ij,V}(\bX,\bX') &= \sum_{h \in [H]} \left(\pd{[\fattn(\bX;\btheta)]_i}{\bW_{V,h}}\right)^T\left(\pd{[\fattn(\bX';\btheta)]_j}{\bW_{V,h}}\right) \\
&\propto \frac{d_{emb}}{d_{head}} \sum_{h \in [H]} \bw_{E,i}^T \bW_{O,h}^T \bW_{O,h} \bw_{E,j} (\bX \bW_E + \gamma \bP)^T (\bX' \bW_E + \gamma \bP) \\
&\stackrel{d_{head} \to\infty}{\to} \bw_{E,i}^T \bw_{E,j} (\bX \bW_E + \gamma \bP)^T (\bX' \bW_E + \gamma \bP) \\
&\stackrel{d_{emb} \to\infty}{\to} \delta_{ij} (\delta_{x_1,x_1'} + \gamma^2)
\end{align*}

\begin{align*}
K_{ij,P}(\bX,\bX') &= \left(\pd{[\fattn(\bX;\btheta)]_i}{\bP}\right)^T\left(\pd{[\fattn(\bX';\btheta)]_j}{\bP}\right) = \gamma^2 \bw_{E,i}^{\top} \bw_{E,j} \stackrel{d_{emb} \to \infty} \to \gamma^2 \delta_{ij}
\end{align*}

\begin{align*}
K_{ij,E}(\bX,\bX') &= \left(\pd{[\fattn(\bX;\btheta)]_i}{\bW_E}\right)^T\left(\pd{[\fattn(\bX';\btheta)]_j}{\bW_E}\right) \\
&= \delta_{ij} (\bX \bW_E + \gamma \bP)(\sum_{h \in [H]} \bW_{V,h}^T \bW_{O,h})(\sum_{h \in [H]} \bW_{O,h}^T \bW_{V,h})(\bW_E^T(\bX')^T + \gamma \bP^T) \\
&\quad+ \delta_{x_1,x_1'} \bw_{E,i}^T (\sum_{h \in [H]} \bW_{O,h}^T \bW_{V,h}) (\sum_{h \in [H]} \bW_{V,h}^T \bW_{O,h}) \bw_{E,j}^T \\
&\quad+\delta_{i,x_1'} \bw_{E,j}^T (\sum_{h \in [H]} \bW_{O,h}^T \bW_{V,h}) (\sum_{h \in [H]} \bW_{O,h}^T \bW_{V,h})(\bw_{E,x_1} + \gamma \bP^T)\\ 
&\quad+ \delta_{x_1,j}  \bw_{E,i}^T (\sum_{h \in [H]} \bW_{O,h}^T \bW_{V,h}) (\sum_{h \in [H]} \bW_{O,h}^T \bW_{V,h})(\bw_{E,x'_1} + \gamma \bP^T)\\
&\stackrel{d_{head} \to \infty,d_{emb} \to \infty, H \to \infty}\to \delta_{ij} (2\delta_{x_1,x_1'} + \gamma^2)\,,
\end{align*}
since only the first two terms do not vanish as the embedding dimension and number of heads go to infinity.

\paragraph{Training loss and testing loss} Let $(x_1,y_1),\ldots,(x_n,y_n) \in \cX \times \cX$ be a training set of data points drawn from this task, where due to the structure of the template task each of the context strings is length-1 and we have $x_i = y_i$. We will test the model on a data point $(x^{test}, y^{test})$, which does not appear in the test set: i.e., $x^{test} = y^{test} \not\in \{x_1,\ldots,x_n\}$.

The training loss is given by
\begin{align*}
\cL_{train}(\btheta) = \frac{1}{n} \sum_{i=1}^n \ell(\fattn(x_i; \btheta),y_i)\,,
\end{align*}
where $\ell$ is the cross-entropy loss, and the test loss is given by
\begin{align*}
\cL_{test}(\btheta) = \ell(\fattn(x^{test}), y^{test})\,.
\end{align*}

\begin{theorem}
For any learning rates $\eta_O, \eta_V, \eta_P, \eta_E$ such that $|\pd{\cL_{train}}{t}| = O(1)$ as $d_{emb},d_{head}$, and $H \to \infty$, we have $|\pd{\cL_{test}}{t}| \leq o(1)$. In other words, the error for generalization on unseen symbols does not decrease during training for infinite-width transformers.
\end{theorem}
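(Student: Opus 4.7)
The plan is to reduce both time derivatives at $t=0$ to bilinear forms in the neural tangent kernel (NTK) at initialization, and then exploit the block structure of the aggregate kernel together with the high-dimensional orthogonality of the unseen embedding $\bw_{E,x^{test}}$. Under the mean-field scaling specified, each entry of the pre-softmax output $\fattn(\bX;\btheta^0)$ has variance $O(H/d_{emb})$ and hence vanishes as $d_{emb}\to\infty$, so the softmax probabilities at initialization are asymptotically uniform and the cross-entropy gradients reduce to $g_{a,i}\to 1/m-\delta_{i,x_a}$. Consequently
\[
\tfrac{d\cL_{test}}{dt}\Big|_{t=0} = -\tfrac{1}{n}\sum_{a}\sum_{i,j} g^{test}_i\, K^{\mathrm{tot}}_{ij}(x^{test},x_a)\, g_{a,j},
\]
and an analogous expression holds for $\cL_{train}$, where $K^{\mathrm{tot}}=\sum_{G\in\{O,V,P,E\}}\eta_G K^G$ aggregates the four parameter-group NTKs already derived above, each of the block-diagonal form $K^G_{ij}(\bX,\bX')\to \delta_{ij}(\alpha^G\delta_{x,x'}+\beta^G)$.

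The central structural observation is that, because $x^{test}$ does not appear in any training input, the aggregate kernel between the test point and every training point collapses to $K^{\mathrm{tot}}_{ij}(x^{test},x_a)=\delta_{ij}\beta^\eta$, where $\beta^\eta:=\sum_G \eta_G \beta^G$ depends only on $\gamma$ and the learning rates and is blind to the identity of $x^{test}$. Substituting into the NTK dynamics gives $d\fattn_i(x^{test})/dt\big|_{t=0} = -(\beta^\eta/n)(n/m - n_i)$, where $n_i$ counts training points labeled $i$. For every unseen label $i$, including the correct label $i=x^{test}$ (for which $n_i=0$), this evaluates to the same value $-\beta^\eta/m$: the training update treats $x^{test}$ exactly like every other unseen class and carries no signal for the correct label. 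A one-line summation over $i$ then yields the closed form $d\cL_{test}/dt\big|_{t=0} = \beta^\eta/m$, while the parallel bookkeeping for the train loss, which additionally receives diagonal $(a,a)$ contributions of magnitude $\alpha^\eta+\beta^\eta$, gives a rate on the order of $-\alpha^\eta(m-1)/(mn) - \beta^\eta(1-n/m)/n$.

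It remains to show that the hypothesis $|d\cL_{train}/dt|_{t=0}=O(1)$ forces $\beta^\eta/m\to 0$ as $d_{emb},d_{head},H\to\infty$. To do this I would re-interpret the dynamics through the effective parameter $M=\sum_h \bW_{O,h}^{\top}\bW_{V,h}$: the ``$\alpha^G\delta_{x,x'}$'' contributions are driven by training updates that push $M$ in the span of rank-one matrices of the form $\bw_{E,x_a}(\bw_{E,x_a}+\gamma\bp)^{\top}$, whereas the residual ``$\beta^G$'' contributions live only in the positional channel. Lowering $\cL_{test}$ would require $M$ to evolve in the ``copy'' direction $\bw_{E,x^{test}}\bw_{E,x^{test}}^{\top}$; but since $\bw_{E,x^{test}}$ is an independent Gaussian vector in $\R^{d_{emb}}$ whose overlap with every training embedding $\bw_{E,x_a}$ and with $\bp$ is $O(1/\sqrt{d_{emb}})$, the projection of $dM/dt$ onto this direction is $o(1)$. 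The main obstacle is to turn this qualitative orthogonality statement into the quantitative bound the theorem requires: one has to track how each $\eta_G$ interacts with the different powers of $H$, $d_{emb}$, $d_{head}$ appearing in $K^G$, and show that every channel through which $\beta^\eta$ could contribute a non-vanishing amount to $d\cL_{test}/dt$ is controlled by the train-loss constraint via the high-dimensional orthogonality of $\bw_{E,x^{test}}$, uniformly over arbitrary non-negative learning rates.
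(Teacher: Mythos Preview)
Your reduction to the NTK at initialization, the identification of the block-diagonal limiting structure $K^G_{ij}\to \delta_{ij}(\alpha^G\delta_{x,x'}+\beta^G)$, and the resulting closed form $d\cL_{test}/dt\big|_{t=0}=\beta^\eta/m$ are all correct and coincide with the paper's computation exactly; the paper arrives at the same quantity, written as $C/m$.

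Where you diverge from the paper is in the final paragraph. You frame bounding $\beta^\eta$ as ``the main obstacle'' and propose to argue via high-dimensional orthogonality of the unseen embedding $\bw_{E,x^{test}}$ against the training directions in the effective parameter $M$. This detour is unnecessary: that orthogonality has already been fully consumed in deriving the deterministic kernel limits---it is precisely what makes the cross term $\delta_{x_i,x^{test}}$ vanish and collapses $K^{\mathrm{tot}}_{ij}(x^{test},x_a)$ to $\delta_{ij}\beta^\eta$. Once you have the kernel limits, no further random-matrix reasoning about $\bw_{E,x^{test}}$ is required.

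The paper dispatches the last step directly. It observes that the (pre-limit) train-loss derivative is a sum over parameter groups $G$ of terms of the form $\eta_G\cdot(\text{dimension factor})_G\cdot[c_1\alpha^G+c_2\beta^G]$, and simply asserts that boundedness forces $\eta_G\cdot(\text{dimension factor})_G=O(1)$ for each $G$ individually (e.g.\ $\eta_O=O(1/H)$, $\eta_V=O(d_{emb}/d_{head})$, etc.). With those effective rates fixed at $O(1)$, the test-loss derivative is the same bookkeeping and equals $C/m$. The reason the ``each group separately'' conclusion is legitimate---which neither you nor the paper spells out---is that with nonnegative learning rates, nonnegative $\alpha^G,\beta^G$, and distinct training tokens so that $n<m$, your own train-loss expression has both coefficients $c_1=(m-1)/(mn)>0$ and $c_2=(1-n/m)/n>0$; hence all contributions share the same sign and cannot cancel. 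That single positivity observation is the entire content of your ``main obstacle.''

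One final remark: the paper's proof actually concludes with $d\cL_{test}/dt\big|_{t=0}=C/m\geq 0$, i.e.\ the test loss does not decrease, rather than the literal $o(1)$ of the statement (which would additionally require $m\to\infty$). Your $\beta^\eta/m$ already delivers exactly this; there is nothing smaller to chase.
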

\begin{proof}

Consider training with gradient flow with learning rates $\eta_O,\eta_V,\eta_P,\eta_E$ on the parameters $\{\bW_{O,h}\}_{h \in [H]}$, $\{\bW_{V,h}\}_{h \in [H]}$, $\bW_P$, and $\bW_E$, respectively. In the limit as $d_{emb} \to \infty$ we have $\fattn(\bX;\btheta_0) \to 0$, so
\begin{align*}
\pd{\cL_{train}}{\btheta} \mid_{\btheta = \btheta_0} = \frac{1}{n} \sum_{i=1}^n (\frac{1}{m} \bones - \be_{x_i})^{T} \pd{\fattn(\bX_i; \btheta)}{\btheta} \mid_{\btheta = \btheta_0}\,.
\end{align*}
So at time $t = 0$, the training loss decreases as
\begin{align*}
\pd{\cL_{train}}{t} \mid_{t = 0} &\to -\frac{1}{n^2} \sum_{i,i' \in [n]} \sum_{j,j' \in [m]} (1/m - \delta_{j,x_i}) (1/m - \delta_{j',x_{i'}}) \\
&\qquad\qquad \cdot (\eta_V K_{jj',V}(\bX_i,\bX_{i'}) + \eta_O K_{jj',O}(\bX_i,\bX_{i'}) \\
&\qquad\qquad\qquad + \eta_P K_{jj',P}(\bX_i,\bX_{i'}) + \eta_E K_{jj',E}(\bX_i,\bX_{i'})).
\end{align*}
So we must take $\eta_O = O(1/H), \eta_V = O(d_{emb} / d_{head})$, $\eta_P = O(1)$, and $\eta_E = O(1)$ for us to have $\pd{\cL_{train}}{t} = O(1)$ be bounded by a constant that does not grow with $d_{emb}$, $d_{head}$, and $H$.

Under these choices of learning rates, the test loss on token $x^{test}$ which is not in the training dataset $\{x_1,\ldots,x_n\}$, evolves as
\begin{align*}
\pd{\cL_{test}}{t} \mid_{t = 0} &\to -\frac{1}{n} \sum_{i \in [n]} \sum_{j,j' \in [m]} (1/m - \delta_{j,x_i}) (1/m - \delta_{j',x^{test}}) \\
&\qquad\qquad \cdot (\eta_V K_{jj',V}(\bX_i,\bX^{test}) + \eta_O K_{jj',O}(\bX_i,\bX^{test}) \\
&\qquad\qquad\qquad + \eta_P K_{jj',P}(\bX_i,\bX^{test}) + \eta_E K_{jj',E}(\bX_i,\bX^{test})) \\
&\to -\frac{1}{n} \sum_{i \in [n]} \sum_{j,j' \in [m]} (1/m - \delta_{j,x_i}) (1/m - \delta_{j',x^{test}}) \\
&\qquad\qquad \cdot ((\frac{d_{head}}{d_{emb}}\eta_V  + H\eta_O) \delta_{j,j'} (\delta_{x_i,x^{test}} + \gamma^2) \\
&\qquad\qquad\qquad + \eta_P \gamma^2 \delta_{j,j'} + 2H\eta_E \delta_{j,j'}(\delta_{x_i,x^{test}} + \gamma^2)) \\
&= -\frac{\gamma^2}{n} \sum_{i \in [n]} \sum_{j \in [m]} (1/m - \delta_{j,x_i}) (1/m - \delta_{j,x^{test}}) \cdot (\frac{d_{head}}{d_{emb}}\eta_V  + H\eta_O + \eta_P + 2\eta_E) \\
&= -\frac{C}{n} \sum_{i \in [n]} \sum_{j \in [m]} (1/m - \delta_{j,x_i}) (1/m - \delta_{j,x^{test}}) \\
&= -C / m + C/m + C/m = C/m \geq 0.
\end{align*}

\end{proof}

On the other hand, now we consider the $\fattn$ architecture where in each head we replace $\bW_{V,h}^T\bW_{O,h}$ with $\bW_{V,h}^T\bW_{O,h} + b_h\bI$, where $b_h$ is a trainable parameter and $\bI \in \R^{d_{emb} \times d_{emb}}$ is the identity matrix:
\begin{align*}
\fattn'(\bX;\btheta) &= \bW_E \bz_1 \in \R^m
\tag{Unembedding layer}
\end{align*}
where
\begin{align*}
\bz_1' &= \sum_{h \in [H]} (\bA_h')^T\be_k \\
\bA_h' &= \smax(\beta \bZ_0 \bW_{K,h}^T \bW_{Q,h} \bZ_0^T) \bZ_0 (\bW_{V,h}^T \bW_{O,h} {\color{blue} + b_h \bI}) \in \R^{k \times d_{emb}}\tag{Attention heads} \\
\bZ_0 &= \bX \bW_E + \gamma \bP \in \R^{k \times d_{emb}}\,.\tag{Embedding layer}
\end{align*}
Again, for the case of $k = 1$ that we consider, the network simplifies considerably to
\begin{align}\label{eq:simplified-attention-modified}
\fattn'(\bX;\btheta) =   
\bW_E(\sum_{h \in [H]} \bW_{O,h}^T \bW_{V,h} {\color{blue} 
 + b_h \bI}) (\bW_E^T\bX^T + \gamma \bP^T)\,.
\end{align}
We initialize $b_h = 0$ for all $h$, so that the neural tangent kernels $K_{ij,O}, K_{ij,V}, K_{ij,P}, K_{ij,E}$ are the same as above. Now we also have a neural tangent kernel for training the parameters $\{b_h\}_{h \in [H]}$:
\begin{align*}
K_{ij,b}(\bX,\bX') &= \sum_{h \in [H]} \pd{[\fattn(\bX;\btheta)]_i}{b_h} \pd{[\fattn(\bX';\btheta)]_j}{b_h} \\
&\propto \bw_{E,i}^{\top}(\bW_E^T \bX^T + \gamma \bP^T) (\bX\bW_E + \gamma \bP^T) \bw_{E,j} \\
&\stackrel{d_{emb} \to \infty}{\to} \delta_{i,x_1} \delta_{j,x'_1}
\end{align*}

We prove that under this parametrization the test loss does decrease with training, which shows that adding this trainable identity scaling allows transformers to succeed at this task.
\begin{theorem}
There is a choice of learning rates $\eta_b,\eta_V,\eta_O,\eta_E,\eta_P$ such that as $d_{emb},d_{head},H \to \infty$ we have $|\pd{\cL_{train}}{t}| \mid_{t = 0} = O(1)$ and $-\pd{\cL_{test}}{t} \mid_{t = 0} = \Omega(1)$.
\end{theorem}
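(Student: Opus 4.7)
The plan is to exploit the newly introduced parameters $\{b_h\}_{h \in [H]}$ by setting $\eta_O = \eta_V = \eta_P = \eta_E = 0$ and training only the $b_h$'s, at a rate $\eta_b$ of the right scale. The only substantive thing to establish is that the additional NTK term $K_{ij,b}$ coming from the $b_h$ parameters has a qualitatively different token structure than the four NTKs already computed in the proof of Theorem~\ref{thm:copy-failure}, and that this structure simultaneously drives the training and the test loss down.

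Since $\fattn'$ is linear in each $b_h$, one gets $\pd{\fattn'(\bX;\btheta)_i}{b_h} = \bw_{E,i}^T(\bW_E^T\bX^T + \gamma\bP^T)$, which is independent of $h$. Under the $N(0, 1/d_{emb})$ initialization of $\bW_E$ and $\bP$, concentration of the inner product of independent Gaussian vectors gives $\bw_{E,i}^T\bW_E^T\bX^T \to \delta_{i,x_1}$, while the positional cross terms (involving $\bw_{E,i}^T\bp_1$ of variance $O(1/d_{emb})$) vanish. Therefore
\begin{align*}
K_{ij,b}(\bX,\bX') \;=\; \sum_{h\in[H]}\pd{\fattn'(\bX;\btheta)_i}{b_h}\pd{\fattn'(\bX';\btheta)_j}{b_h} \;\to\; H\,\delta_{i,x_1}\,\delta_{j,x_1'}.
\end{align*}
This is the crucial qualitative point: the four pre-existing kernels are all proportional to $\delta_{ij}$ and couple the two inputs through a Kronecker $\delta_{x_1,x_1'}$, which vanishes whenever $x^{test}$ is unseen. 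The new kernel instead has the \emph{rank-one} token structure $\delta_{i,x_1}\delta_{j,x_1'}$ which factorizes between the two arguments, so signal from seen training tokens transfers cleanly to the unseen test token.

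Now choose $\eta_b$ with $H\eta_b = \Theta(1)$. Since $\fattn'(\bX;\btheta_0)\to 0$ in the mean-field scaling, the softmax--cross-entropy gradient at initialization simplifies to $\pd{\ell}{\fattn'_j} = 1/m - \delta_{j,y}$, and a direct chain-rule expansion of the time derivative via the NTK yields
\begin{align*}
-\pd{\cL_{train}}{t}\Big|_{0} &\;=\; \frac{H\eta_b}{n^2}\sum_{i,i'}\sum_{j,j'}(1/m - \delta_{j,x_i})(1/m - \delta_{j',x_{i'}})\,\delta_{j,x_i}\delta_{j',x_{i'}} \;=\; H\eta_b(1-1/m)^2, \\
-\pd{\cL_{test}}{t}\Big|_{0} &\;=\; \frac{H\eta_b}{n}\sum_{i}\sum_{j,j'}(1/m - \delta_{j,x_i})(1/m - \delta_{j',x^{test}})\,\delta_{j,x_i}\delta_{j',x^{test}} \;=\; H\eta_b(1-1/m)^2,
\end{align*}
using the elementary identity $\sum_j(1/m - \delta_{j,x})\delta_{j,x} = 1/m - 1$ for any $x \in \cX$. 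The factorization of $K_{ij,b}$ decouples the $j$ and $j'$ summations, which is exactly why the test computation is insensitive to whether $x^{test}$ appeared in training; both quantities are therefore $\Theta(1)$, proving the theorem. The only delicate step in carrying this out is the NTK derivation for $b$ and checking that the $O(\gamma)$ and $O(\gamma^2)$ positional cross terms vanish in the infinite-width limit; once that is in hand the rest is a direct combinatorial expansion of the bilinear NTK form.
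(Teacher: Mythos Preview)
Your proposal is correct and follows essentially the same route as the paper: set all learning rates to zero except $\eta_b = \Theta(1/H)$, compute the new NTK term $K_{ij,b}\to H\,\delta_{i,x_1}\delta_{j,x_1'}$, and exploit its factorized token structure to show both the train and test loss derivatives equal $H\eta_b(1-1/m)^2$. Your write-up is in fact slightly more explicit than the paper's, since you carry out the train-loss computation in full and spell out why the positional cross terms vanish.
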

\begin{proof}
Training just the parameters $\{b_h\}_{h \in [H]}$ with learning rate $\eta_b$ (keeping the learning rates $\eta_V,\eta_O,\eta_P,\eta_E = 0$, so the training loss decreases as
\begin{align*}
\pd{\cL_{train}}{t} \mid_{t = 0} \to -\frac{\eta_b}{n^2} \sum_{i,i' \in [n]} \sum_{j, j' \in [m]} (1/m - \delta_{j,x_i})(1/m - \delta_{j',x_{i'}}) K_{jj',b}(\bX_i, \bX_{i'})\,,
\end{align*}
so we should take $\eta_b = \Theta(1/H)$ for the train loss have derivative on the order of $\Theta(1)$. The test loss decreases as:
\begin{align*}
\pd{\cL_{test}}{t} \mid_{t = 0} &\to -\frac{\eta_b}{n} \sum_{i \in [n]} \sum_{j,j' \in [m]} (1/m - \delta_{j,x_i}) (1/m - \delta_{j',x^{test}}) K_{jj',b}(\bX_i,\bX^{test}) \\
&\to -\frac{H\eta_b}{n} \sum_{i \in [n]} \sum_{j,j' \in [m]} (1/m - \delta_{j,x_i}) (1/m - \delta_{j',x^{test}}) \delta_{j,x_i}\delta_{j',x^{test}} \\
&= -\frac{H\eta_b}{n} \sum_{i \in [n]} (1/m - 1) (1/m - 1) \\
&= -H \eta_b (1-1/m)^2 \\
&= \Omega(1)\,,
\end{align*}
for $\eta_b = \Omega(H)$, as $d_{emb}, H \to \infty$.
\end{proof}

\end{document}